\documentclass[12pt]{article}

\usepackage{amsmath}
\usepackage{times}
\usepackage[dvipsone]{graphicx}
\usepackage{color}
\usepackage{multirow}
\usepackage[authoryear]{natbib}
\usepackage{rotating}
\usepackage{bbm}
\usepackage{latexsym}
\usepackage{epstopdf}   

\usepackage{romannum}
\AtBeginDocument{\pagenumbering{arabic}}

\makeatletter
\@addtoreset{equation}{section}
\makeatother
\renewcommand{\theequation}{\arabic{section}.\arabic{equation}}

\newcommand{\captionfonts}{\normalsize}

\makeatletter
\long\def\@makecaption#1#2{%
  \vskip\abovecaptionskip
  \sbox\@tempboxa{{\captionfonts #1: #2}}%
  \ifdim \wd\@tempboxa >\hsize
    {\captionfonts #1: #2\par}
  \else
    \hbox to\hsize{\hfil\box\@tempboxa\hfil}%
  \fi
  \vskip\belowcaptionskip}
\makeatother

\usepackage{amsmath}
\usepackage{mathrsfs}

\usepackage{algorithm}
\usepackage{algorithmic}

\usepackage{amsthm}

\usepackage{pmat}
\usepackage{subfig}
\usepackage{caption}

\usepackage{amssymb}

\usepackage[retainorgcmds]{IEEEtrantools}

\renewcommand{\theequationdis}{{\normalfont (\theequation)}}
\renewcommand{\theIEEEsubequationdis}{{\normalfont (\theIEEEsubequation)}}

\makeatletter
\newcommand*{\rom}[1]{\expandafter\@slowromancap\romannumeral #1@}
\makeatother

\newtheorem{thm}{Theorem}
\newtheorem{lem}{Lemma}
\newtheorem{dfn}{Definition}
\newtheorem{prp}{Proposition}
\newtheorem*{rmk}{Remark}
\newtheorem{rmk-1}{Remark}
\newtheorem{rmk-2}{Remark}
\newtheorem{rmk-3}{Remark}
\newtheorem{rmk-4}{Remark}
\newtheorem{rmk-5}{Remark}
\newtheorem{rmk-6}{Remark}
\newtheorem{rmk-7}{Remark}
\newtheorem{rmk-8}{Remark}
\newtheorem{cl}{Corollary}

\newtheorem{assm}{Assumption}

\usepackage[hyperindex,breaklinks]{hyperref}
\hypersetup{
           breaklinks=true,   
           colorlinks=true,   
           pdfusetitle=true,  
        }
\usepackage{setspace}
\begin{document}

\ \vspace{20mm}\\

{\LARGE \flushleft  On the Principles of Deep Feedforward ReLU Networks}

\ \\
{\bf \large Changcun Huang}\\
{cchuang@mail.ustc.edu.cn}\\
{Shuitu Institute of Applied Mathematics, Chongqing 400700, P.R.C}\\
%


\thispagestyle{empty}
\markboth{}{NC instructions}
\ \vspace{-0mm}\\
%
\begin{center} {\bf Abstract} \end{center}
The architecture of deep feedforward neural networks is ubiquitous in deep learning, either as a whole system or as a subnetwork of other architectures, and thus its mechanism is a key ingredient of the black box of neural networks. On the basis of the simplest two-layer ReLU network, this paper systematically studies the mechanism of deep feedforward ReLU networks with multiple hidden layers and successfully explains the training solution obtained by the back-propagation algorithm. The concept of a path, especially in terms of the relationships between paths, plays a central role in uncovering the mystery of the black box. It is shown that a unit of a deep ReLU network can form a piecewise linear manifold to divide the input space, instead of a hyperplane of the two-layer case. How to efficiently use the hidden-layer units to produce both linear functions and partitions of the input space is also a central problem. The principles of a two-layer ReLU network can be generalized to the deeper case to a large extent, such as multiple strict partial orders and continuity restriction. The combination of the basic and simple principles proposed can yield complicated instantiations including the training solutions, and in this sense the black box of deep feedforward ReLU networks is revealed.


\ \\[-2mm]
{\bf Keywords:} ReLU, black box, deep learning, training solution, mutilayer perceptron.

\section{Introduction}
The principles of ReLU networks with one-hidden layer have been unravelled by the author's work \citet*{Huang2024}, and it's natural to ask whether they can be generalized to the deep-layer case. We expect that the clue from the simplest neural network could lead to the understanding of more complex architectures. This paper will prove the feasibility of this direction.

\subsection{Background}
A deep feedforward neural network or a multilayer perceptron (MLP) \citep*{Haykin2009,Du2022} with multiple hidden layers is ubiquitous, either as a whole model (e.g., \citet*{Jagtap2020}, \citet*{Raissi2019}) or as a component of other models (e.g., \citet*{Vaswani2017}, \citet*{Girin2022}), so its mechanism is fundamental in deep learning. The multilayer structure analogous to a MLP also exists in the brain's neural networks \citep*{Kandel2021} and thus the study of MLPs may be beneficial to neural science.

The black box of MLPs is also important for practical considerations. The safety \citep*{Bengio2025,Hendrycks2025} and energy consumption \citep*{Strubell2019, Argerich2024} of AI is a big problem especially after the advent of large language models (e.g., ChatGPT). The clarity of the mechanism of deep learning is a urgent demand in developing interpretable AI systems, controlling potential AI risks and designing economic AI models.

The main challenge is the complexity of network architectures, such as large depth, various connections and huge amount of parameters, seemingly beyond the capability of the usual scientific research methodology. Two main streams exist for this problem: one is by experimental observations, popular among computer scientists, such as \citet*{Erhan2010}, \citet*{Glorot2010}, \citet*{Zhang2016}, and \citet*{Ramanujan2020}; and the other is by pure mathematical deduction usually done by mathematicians (e.g.,\citet*{Yarotsky2017}, \citet*{Shen2021}, \citet*{DeVore2021}, \citet*{Daubechies2022}, \citet*{Guth2024}, and \citet*{Yang2025}). The former doesn't reach a unified theoretical framework, while the latter is nearly not relevant to applications. Thus, a theory for the black box of deep learning is still absent.

\subsection{Research Methodology}
This paper is on the basis of \citet*{Huang2024} for two-layer ReLU networks and can incorporate it into an integrated deduction system. However, due to the studies at different times, we here present the new concepts and results in a separate article. The research methodology is of theoretical physics and the ultimate goal is to use deductive theory to explain experiments---the training solution of a deep feedforward ReLU network obtained by the back-propagation algorithm \citep*{Rumelhart1986}.

The conclusions, in terms of a theorem, proposition or corollary, are not unintentionally listed dull facts, but are carefully chosen by the philosophy that they should be as simple and less as possible and simultaneously can explain as many experimental phenomena as possible.

A theorem in this paper is usually an abstract and general conclusion and may not be directly applied, while a corollary is more concrete and related to applications. A proposition is somewhat less important or general than a theorem and cannot be a consequence of a theorem. We often give a remark after a conclusion or concept to associate it with applications, to explain its meaning, or to remind the readers of its potential usefulness and generalization.

The conclusions are established not only for fitting experiments, but also for the completeness and neatness of a deduction system, for which the theoretical framework includes more contents than the experimental phenomena presently given.

To evaluate whether our theory is successful, all the criteria for theoretical physics in scientific philosophy can be employed, such as simplicity (or parsimony), consistency, and effectiveness in explaining and predicting  experimental or natural phenomena \citep*{Gauch2003, Bunge1973, Simon2026}.

\subsection{Paper Organization}
The major concern is to implement a desired partition $\mathcal{P}$ of input space $U=[0,1]^n$ as well as a desired piecewise linear function $g(\boldsymbol{x})$ over $\mathcal{P}$ via a deep feedforward ReLU network $\mathfrak{N}$, for which the principles used should be applicable to the training solution.

The paper is organized for that purpose: Section 2 investigates the partition of $U$; Section 3 gives the principles of function construction; Section 4 generalizes section 3 for explaining experiments; Section 5 proposes the mechanism of multiple outputs; Section 6 studies the univariate-function approximation and section 7 is for the multivariate case; Section 8 explains training solutions; Section 9 summarizes the principles of the black box of deep ReLU networks; Section 10 concludes this paper by a discussion. A more detailed abstract of a section will be given at the beginning of each section.

\section{Partition of Input Space}
The partition of $U$ via a two-layer ReLU network is through $n-1$-dimensional hyperplanes derived from its hidden-layer units. The case of network $\mathfrak{N}$ is different in that a unit of the hidden layers (except for the first one) can form a piecewise linear manifold to divide $U$; and the associated concepts and results are the main contents of this section. Section 2.1 is of some preliminaries. Section 2.2 proposes the basic principles of a piecewise linear manifold obtained by $\mathfrak{N}$. Section 2.3 summarizes the methods to produce piecewise linear manifolds. Section 2.4 generalizes the positive-output (or zero-output) region of a hyperplane to that of a piecewise linear manifold. Section 2.5 studies the properties of the regions of a partition.

\subsection{Preliminaries}
Let $\sigma(x)=\max\{x, 0\}$ be the activation function of a ReLU; when the variable $x$ becomes a vector $\boldsymbol{x}=[x_1, x_2, \dots, x_n]^T$, $\sigma(\boldsymbol{x}) = [\sigma(x_1), \sigma(x_2), \dots, \sigma(x_n)]^T$.
\begin{dfn}[Deep feedforward ReLU network]
A $\Phi$-layer \textsl{deep feedforward ReLU network}, denoted by product $\mathfrak{N}_m:=n\prod_{i=1}^{\Phi}m_{i}m'$, is a network with $n$-dimensional input and m-dimensional output, whose hidden layers and output layer are composed of ReLUs and linear units, respectively, which includes $\Phi$ layers (regardless of the input and output layers) for $\Phi \ge 2$ with the $i$th one having $m_{i}$ units, and which is fully connected between adjacent layers without other links. The symbol $'$ in the expression represents the different type of output-layer units. The index $i$ of a layer is called the \textsl{depth} of that layer and $\Phi$ is the depth of $\mathfrak{N}$.

Especially, the case of one-dimensional output can be written as
\begin{equation}
\mathfrak{N} = n\prod_{i=1}^{\Phi}m_{i}1'.
\end{equation}
Let $u_{ij}$ be the $j$th unit of the $i$th layer of $\mathfrak{N}$. The output of $u_{ij}$ can be expressed as $\sigma(\boldsymbol{w}_{ij}^T\boldsymbol{x}^{(i-1)} + b_{ij})$, where $\boldsymbol{x}^{(i-1)}$ is the input vector of the $i$th layer for $i\ge2$ and $\boldsymbol{x}^{(0)} = \boldsymbol{x} \in [0,1]^n$. We call $\boldsymbol{w}_{ij}$ the \textsl{input-weight vector} of $u_{ij}$ and its entries the \textsl{input weights}, and call $b_{ij}$ the bias of $u_{ij}$. All the input-weight vectors of the $i$th layer comprise the \textsl{weight matrix} of this layer, denoted by
\begin{equation}
W_i =
\begin{pmat}({})
\boldsymbol{w}_{i1}, \boldsymbol{w}_{i2}, \dots, \boldsymbol{w}_{im_{i}} \cr
\end{pmat},
\end{equation}
which is a matrix of size $m_{i-1} \times m_{i}$; similarly, $\boldsymbol{b}_i = [b_{i1}, b_{i2}, \dots, b_{im_i}]^T$ is the bias vector. Then the output of the $i$th layer can be represented as
\begin{equation}
\boldsymbol{x}^{(i)} = \sigma(W_i^T\boldsymbol{x}^{(i-1)} + \boldsymbol{b}_i).
\end{equation}
The $j$th row $\boldsymbol{v}_{j}$ of $W_{i+1}$, namely
\begin{equation}
\boldsymbol{v}_{j}(k) = W_{i+1}(j, k),
\end{equation}
or its transpose version $\boldsymbol{v}_{j}^T$, is called the \textsl{output-weight vector} of $u_{ij}$, whose entries are also called \textsl{output weights}. The input-weight vector of the linear unit of the output layer of $\mathfrak{N}$ is denoted by $\boldsymbol{w}_{\Phi+1}$.
\end{dfn}

\begin{figure}[!t]
\captionsetup{justification=centering}
\centering
\includegraphics[width=2.4in, trim = {2.5cm 2.5cm 4.5cm 3.1cm}, clip]{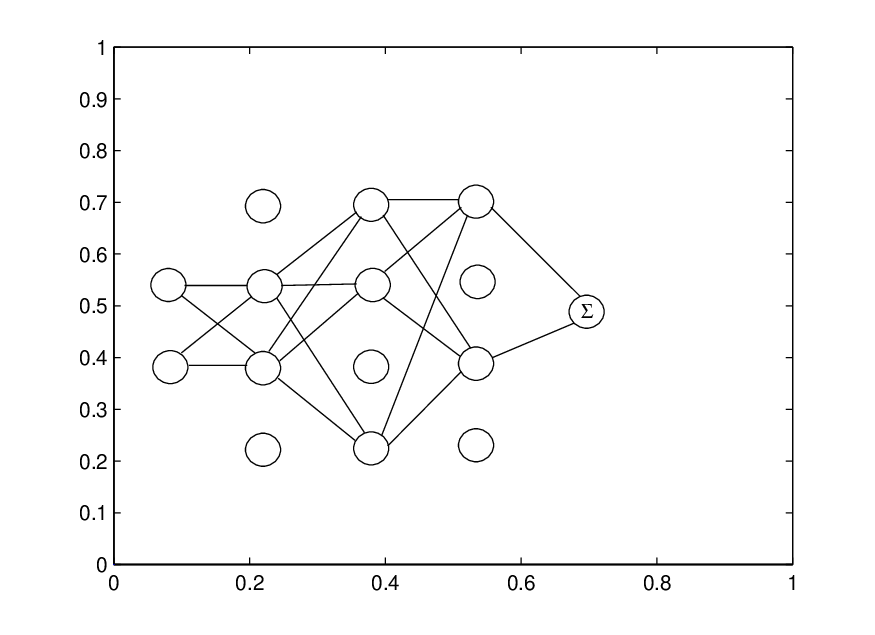}
\caption{A path of deep neural networks.}
\label{Fig.1}
\end{figure}

\begin{rmk}
This paper mainly investigates the single-output case of equation 2.1 whose results can be easily generalized to the multiple-output case (see section 5.1). The notation $\mathfrak{N}$ will be used throughout this paper and the expression of equation 2.1 will not be explicitly given unless otherwise stated.
\end{rmk}

A point $\boldsymbol{x} \in U=[0,1]^n$ activates a unit $\mathcal{U}$ of $\mathfrak{N}$ means that the output of $\mathcal{U}$ with respect to $\boldsymbol{x}$ is positive. Let $L$ be a hyperplane derived from $\mathcal{U}$ and when $\boldsymbol{x}$ activates $\mathcal{U}$ we also say $\boldsymbol{x}$ activates $L$. The closure operation for a region can be found in \citet*{Huang2024}; intuitively speaking, the closure of region $R$ is the set including both $R$ and its boundary.
\begin{dfn}[Region, path and partition]
Suppose that a set $\mathscr{R} \subset U$ divided by $\mathfrak{N}$ activates the same subset $\mathscr{P}$ of the units of $\mathfrak{N}$. The closure $\bar{\mathscr{R}}$ of $\mathscr{R}$ is called a \textsl{region} and $\mathscr{P}$ together with their links are called a \textsl{path} of $\mathfrak{N}$. The set $\bar{\mathscr{R}}-\mathscr{R}$ is the \textsl{boundary} of $\mathscr{R}$. The set of all the regions obtained by $\mathfrak{N}$ is called a \textsl{partition} of input space $U$.
\end{dfn}

\noindent
\textbf{Example}. Figure \ref{Fig.1} gives an example of a path, including a subset of the units as well as the links connecting them.

\begin{dfn}[Regions of a layer]
A region of the $\nu$th layer of network $\mathfrak{N}$ (or path $\mathcal{P}$) is the one formed by a subset of $\mathfrak{N}$ (or $\mathcal{P}$) up to the $\nu$th layer.
\end{dfn}

\begin{prp}[A foundation of Partitions]
Let $\mathcal{N} = m_{\nu}\cdot 1$ be an arbitrary two-layer subnetwork of network $\mathfrak{N}$. Denote by $\mathcal{U}$ the unit of the second layer of $\mathcal{N}$. Suppose that network $\mathfrak{N}$ up to the $\nu$th layer partitions $U$ into a set $R_{\nu} = \{r_j: 1\le j \le \zeta\}$ of regions, with $r_j$ corresponding to path $p_j$. To some fixed $r_j$, let
\begin{equation}
y = \sigma(\boldsymbol{w}^T\boldsymbol{x}^{(\nu)}+b)
\end{equation}
be the output of $\mathcal{U}$ in path $p_j$. Equation 2.5 yields a hyperplane $\boldsymbol{w}^T\boldsymbol{x}^{(\nu)}+b=0$ of the input space, denoted by $\mathcal{L}$, since $\boldsymbol{x}^{(\nu)}$ is a linear function of $\boldsymbol{x} \in U$. Suppose that $\mathcal{L}$ is $n-1$-dimensional. Then to region dividing, there are three possibilities of the relationship between $\mathcal{L}$ and $r_j$: (1) $\mathcal{L}$ divides $r_j$ into two parts; (2) all the points of $r_j$ activate $\mathcal{L}$; (3) $\mathcal{L}$ can not be activated by any point of $r_j$.
\end{prp}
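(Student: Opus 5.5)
The plan has three steps: show that $\boldsymbol{x}^{(\nu)}$ is affine on the fixed region $r_j$, pull $\mathcal{L}$ back to an honest hyperplane in $U$, and then sort the points of $r_j$ by the sign of one affine function.

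\textbf{Step 1: affinity on $r_j$.} Fix $r_j$ and its path $p_j$. For every point in the interior part $\mathscr{R}_j$ of $r_j$, the same subset of units up to the $\nu$th layer is active. So on $\mathscr{R}_j$ each ReLU of layers $1,\dots,\nu$ acts either as the identity or as the zero map, and this choice does not change across the region. Composing layer by layer, $\boldsymbol{x}^{(i)} = D_i(W_i^T\boldsymbol{x}^{(i-1)}+\boldsymbol{b}_i)$ with a fixed diagonal $0/1$ matrix $D_i$. By induction on $i \le \nu$, this gives $\boldsymbol{x}^{(\nu)} = A_j\boldsymbol{x} + \boldsymbol{c}_j$ for a constant matrix $A_j$ and vector $\boldsymbol{c}_j$ determined by $p_j$.

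By continuity of the network map, the same affine expression holds on the closure $r_j=\bar{\mathscr{R}}_j$. Hence the pre-activation of $\mathcal{U}$ restricted to $r_j$ is
\begin{equation*}
z(\boldsymbol{x}) = \boldsymbol{w}^T A_j \boldsymbol{x} + \boldsymbol{w}^T\boldsymbol{c}_j + b =: \tilde{\boldsymbol{w}}^T\boldsymbol{x} + \tilde b .
\end{equation*}
This is the precise meaning of ``$\mathcal{L}$ is a hyperplane of the input space'' in equation 2.5. The hypothesis that $\mathcal{L}$ is $n-1$-dimensional means exactly $\tilde{\boldsymbol{w}} \neq \boldsymbol{0}$.

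\textbf{Step 2: the trichotomy.} Define $S_+=\{\boldsymbol{x}\in r_j : z(\boldsymbol{x})>0\}$ and $S_-=\{\boldsymbol{x}\in r_j : z(\boldsymbol{x})\le 0\}$; these are a disjoint cover of $r_j$. Exactly one of the following holds.
\begin{itemize}
\item If $S_-=\emptyset$, every point of $r_j$ gives $y>0$, so every point activates $\mathcal{L}$. This is case (2).
\item If $S_+=\emptyset$, no point of $r_j$ activates $\mathcal{L}$. This is case (3).
\item If both are nonempty, the hyperplane $\mathcal{L}$ separates $r_j$ into the two nonempty pieces $r_j\cap\{z\ge0\}$ and $r_j\cap\{z\le 0\}$, which meet only along $\mathcal{L}\cap r_j$. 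This is case (1).
\end{itemize}
Because $\tilde{\boldsymbol{w}}\neq\boldsymbol{0}$, $\mathcal{L}$ is a genuine hyperplane, so ``divides'' here has its usual meaning: cutting by an affine half-space.

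\textbf{Main obstacle.} The delicate point is degenerate contact in case (1). A region $r_j$ may touch $\mathcal{L}$ only along its boundary, or may meet $\{z>0\}$ in a set with empty interior. In those situations ``divides into two parts'' should be read as producing two regions with nonempty interior. I would handle this as follows.
\begin{itemize}
\item Use that $r_j$ is a closure of a set with nonempty interior, in fact a convex polytope cut out by finitely many affine inequalities.
\item Absorb the case where the two pieces meet only in a zero-measure set into (2) or (3). This is done by stating the three possibilities up to the boundary $\mathcal{L}\cap r_j$, consistent with the convention that regions are closures.
\end{itemize}
Convexity of $r_j$ then gives that $\{z>0\}\cap \mathrm{int}\, r_j$ and $\{z<0\}\cap \mathrm{int}\, r_j$ are both nonempty open sets exactly when $\mathcal{L}$ passes through the interior of $r_j$. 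This yields the clean trichotomy.
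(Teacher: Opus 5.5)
Your argument is correct and is the explicit version of what the paper dismisses as ``obvious by the preceding definitions'': on the region fixed by $p_j$ the map $\boldsymbol{x}\mapsto\boldsymbol{x}^{(\nu)}$ is affine, so the pre-activation of $\mathcal{U}$ is a single affine function with nonzero gradient, and the three cases are just its possible sign patterns on $r_j$. Your convexity-based treatment of boundary contact, where case (1) is read as $\mathcal{L}$ meeting the interior of $r_j$, adds rigor the paper omits. It tacitly assumes $r_j$ is full-dimensional, which the paper also takes for granted.
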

\begin{proof}
The conclusion is obvious by the preceding definitions.
\end{proof}

\begin{dfn}[Unit classification for partitions]
Unit $\mathcal{U}$ of proposition 1 is called a local unit, global unit and inactivated unit of region $r_j$ for cases $(1)$, $(2)$ and $(3)$, respectively. We also say that $r_j$ partially activates, completely activates and inactivates $\mathcal{U}$ for the three cases, respectively.
\end{dfn}

\begin{cl}[Multiple hyperplanes of a unit]
Under case $(1)$, the hyperplanes generated by $\mathcal{U}$ for different regions of $R_{\nu}$ could be distinct, or $\mathcal{U}$ could yield more than one hyperplane dividing different regions.
\end{cl}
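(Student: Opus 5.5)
The plan is to read off the hyperplane of $\mathcal{U}$ restricted to each region of $R_{\nu}$ and show that it depends on the region. On a fixed region $r_j$ with path $p_j$, every unit of the first $\nu$ layers is either active on all of $r_j$ or inactive on all of it. Hence $\boldsymbol{x}^{(\nu)}$ restricted to $r_j$ has the form $A_j\boldsymbol{x}+\boldsymbol{c}_j$. Here $A_j = D_{\nu}W_{\nu}^T\cdots D_1W_1^T$, where each $D_i$ is the diagonal $0/1$ matrix recording which units of layer $i$ belong to $p_j$, and $\boldsymbol{c}_j$ is built from the biases in the same way. The hyperplane of equation 2.5 on $r_j$ is therefore
\begin{equation*}
(A_j^T\boldsymbol{w})^T\boldsymbol{x} + \boldsymbol{w}^T\boldsymbol{c}_j + b = 0 .
\end{equation*}
Its normal $A_j^T\boldsymbol{w}$ and offset depend on $j$ through the activation pattern $p_j$.

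The key step is to show that two regions $r_j \neq r_k$ generally have different patterns, and that this changes the hyperplane. By Definition 2, distinct regions come from distinct activated subsets, so $p_j \neq p_k$. Some unit $u$ of layer $\le\nu$ therefore lies in one path but not the other, which gives different matrices $D_i$ and so different $(A_j,\boldsymbol{c}_j)$ and $(A_k,\boldsymbol{c}_k)$. For generic weights this difference carries over to $\mathcal{L}_j \neq \mathcal{L}_k$.

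Because the statement says ``could'', it is enough to exhibit one instance. I would take $n=1$ and $\nu=2$ with two first-layer units $\sigma(x)$ and $\sigma(x-1/2)$. This gives two regions, $[0,1/2]$ and $[1/2,1]$. On the first region $\boldsymbol{x}^{(1)}=(x,0)$, and on the second it is $(x,x-1/2)$. Now let $\mathcal{U}$ be $\sigma(w_1x^{(1)}_1+w_2x^{(1)}_2+b)$ with weights chosen so that its zero set is $x=1/4$ on the first region and $x=3/4$ on the second. This needs $w_1\cdot 1/4 + b = 0$ and $w_1\cdot 3/4 + w_2\cdot 1/4 + b = 0$, for example $w_1=1$, $b=-1/4$, $w_2=-2$. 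Then $\mathcal{U}$ is a local unit (case (1)) of both regions, with two distinct hyperplanes. Since $\mathcal{U}$ is a single unit dividing two different regions by two different hyperplanes, this also establishes the second phrasing: $\mathcal{U}$ can yield more than one hyperplane.

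The main obstacle is the genericity claim in the key step. It can fail when $D_j$ and $D_k$ differ only on units whose contribution to $\boldsymbol{w}$ is killed, for example by a zero output weight. This is exactly why the corollary says ``could'' rather than ``must'', so the explicit construction is what I would rely on. The general computation serves only as the explanation of why the phenomenon occurs.
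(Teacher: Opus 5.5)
Your proposal is correct and follows the paper's own reasoning: the path on each region fixes the affine form of $\boldsymbol{x}^{(\nu)}$, so $\boldsymbol{w}^T\boldsymbol{x}^{(\nu)}+b=0$ pulls back to a region-dependent hyperplane. Your explicit instance checks out (pre-activation $x-\tfrac14$ on $[0,\tfrac12]$ and $-x+\tfrac34$ on $[\tfrac12,1]$), and it makes the ``could'' rigorous, which the paper's one-line proof only asserts. One cosmetic fix: under Proposition 1's convention your example has $\nu=1$, since $R_\nu$ comes from the first layer and $\mathcal{U}$ sits in layer $\nu+1=2$, not $\nu=2$.
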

\begin{proof}
The reason is that each $r_j$ corresponds to a unique path $p_j$ manifested by distinct $\boldsymbol{x}^{(\nu)}$ of equation 2.5, resulting in possibly different hyperplanes of the input space.
\end{proof}

\begin{prp}[Recurrence formula of the number of regions]
Let $r_{ij}$ for $i=1,2,\dots,\Phi$ and $j=1,2,\dots,\zeta_i$ be the $j$th region of the $i$th layer of $\mathfrak{N}$, with $\zeta_i$ the number of the regions. Then the following recurrence formula
\begin{equation}
\zeta_{\nu} = \sum_{j=1}^{\zeta_{\nu-1}}\xi_{\nu j}
\end{equation}
holds for $\nu \ge 2$, where $\xi_{\nu j}$ is the number of the subregions of $r_{\nu-1,j}$ divided by the hyperplanes of the units of the subsequent $\nu$th layer.
\end{prp}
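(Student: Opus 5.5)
The plan is to show that the regions of the $\nu$th layer are exactly the nonempty pieces obtained by subdividing each region of the $(\nu-1)$th layer, each piece arising from one and only one parent. Formula 2.6 then follows by summing over parents. I will use the definition of a region of a layer (a set of inputs activating the same subset of units up to that layer, together with its closure), Proposition 1, and Corollary 1.

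\textbf{Step 1: restriction to layer $\nu-1$.} Take any region $r_{\nu k}$ of the $\nu$th layer and let $\mathscr{P}$ be the set of units, up to layer $\nu$, that it activates. Restricting $\mathscr{P}$ to layers $1,\dots,\nu-1$ gives an activation pattern $\mathscr{P}'$. All points of the open set underlying $r_{\nu k}$ share $\mathscr{P}'$, so they lie in the open set of a single region $r_{\nu-1,j}$. This gives a well-defined map $\pi$ from the regions of layer $\nu$ to the regions of layer $\nu-1$. Two $\nu$th-layer regions with different parents have different patterns up to layer $\nu-1$, so they are distinct. The regions of layer $\nu$ are therefore the disjoint union over $j$ of the fibers $\pi^{-1}(r_{\nu-1,j})$.

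\textbf{Step 2: each fiber has $\xi_{\nu j}$ elements.} Fix $r_{\nu-1,j}$ with its path $p_j$. On this region, $\boldsymbol{x}^{(\nu-1)}$ is an affine function of $\boldsymbol{x}$. Hence, by Proposition 1 applied with the $(\nu-1)$th layer feeding each unit $u_{\nu i}$, each unit of layer $\nu$ determines one hyperplane $\mathcal{L}_{\nu i}^{(j)}$ restricted to $r_{\nu-1,j}$; by Corollary 1 these hyperplanes may differ from one parent region to another. Each such unit is local, global or inactivated on $r_{\nu-1,j}$. A point of $r_{\nu-1,j}$ has its activation pattern on layer $\nu$ determined by which sides of the $\mathcal{L}_{\nu i}^{(j)}$ it lies on. So the $\nu$th-layer regions inside $r_{\nu-1,j}$ correspond exactly to the subregions of $r_{\nu-1,j}$ cut out by these hyperplanes, and there are $\xi_{\nu j}$ of them by definition. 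Global and inactivated units do not cut $r_{\nu-1,j}$, so only local units contribute to the subdivision.

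\textbf{Step 3: sum.} Summing the fiber sizes over $j=1,\dots,\zeta_{\nu-1}$ gives equation 2.6.

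\textbf{Main obstacle.} The delicate point is the bookkeeping of closures and boundaries. Distinct subregions must not be identified, and no region may be double counted across parents. The approach is to work throughout with the open sets $\mathscr{R}$ defined by strict activation patterns, which are pairwise disjoint, and to take closures only at the end. Degenerate cases, where a hyperplane is not $(n-1)$-dimensional or meets $r_{\nu-1,j}$ only in a boundary set of measure zero, are handled by noting that such hyperplanes do not create a new nonempty open subregion. They therefore fall under case (2) or (3) of Proposition 1, so the count $\xi_{\nu j}$ is unaffected.
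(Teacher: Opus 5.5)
Your proposal is correct and is essentially the paper's argument. The paper records the recurrence as an immediate consequence of Proposition 1, and your Steps 1--3 spell out exactly that each $\nu$th-layer region has a unique parent $r_{\nu-1,j}$, which the parent-dependent hyperplanes of the $\nu$th-layer units cut into its children. One fact you use implicitly could be stated: each activation-pattern set is an intersection of half-spaces (strict for active units, non-strict for inactive ones, so these sets are convex but not open in general), which guarantees that within a parent each realized pattern gives exactly one cell, so counting cells and counting children agree.
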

\begin{proof}
This conclusion is simply the consequence of proposition 1.
\end{proof}
\begin{cl}[Monotonic increase]
The number of regions of each layer of $\mathfrak{N}$ monotonically grows with respect to depth $i$ due to possible region subdivisions, namely, $\zeta_{i+1} \ge \zeta_{i}$ for $i = 1, 2, \dots, \Phi-1$.
\end{cl}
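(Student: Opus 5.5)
The plan is to derive the corollary directly from the recurrence formula of proposition 2, $\zeta_{\nu} = \sum_{j=1}^{\zeta_{\nu-1}}\xi_{\nu j}$, with $\nu = i+1$. The whole argument reduces to a lower bound on each summand: I will show $\xi_{i+1,j} \ge 1$ for every region $r_{ij}$ of the $i$th layer. Summing over $j=1,\dots,\zeta_i$ then gives $\zeta_{i+1} \ge \zeta_i$ at once.

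To establish $\xi_{i+1,j} \ge 1$, I fix a region $r_{ij}$ with path $p_j$. By proposition 1, each unit $\mathcal{U}$ of the $(i+1)$th layer has one of three relations to $r_{ij}$: global, inactivated, or local. If it is local, its hyperplane $\mathcal{L}$ splits $r_{ij}$ into two parts. A unit whose hyperplane is degenerate, meaning $\boldsymbol{w}^T\boldsymbol{x}^{(i)}+b$ is constant on $r_{ij}$, acts like a global or an inactivated unit, so it does not split the region. In every case the units of the $(i+1)$th layer either leave $r_{ij}$ intact or refine it into subregions, and these subregions cover $r_{ij}$. Since $r_{ij}$ is nonempty (it is the closure of a nonempty set $\mathscr{R}$), at least one subregion survives, so $\xi_{i+1,j}\ge 1$.

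The point that needs some care is that distinct regions of layer $i$ must not merge at layer $i+1$. Otherwise the refined partition could in principle have fewer regions than the coarse one. This is excluded by the definition of region and path. Each region of layer $i+1$ corresponds to a path, namely a set of activated units up to depth $i+1$. Restricting that path to depths $1,\dots,i$ determines a unique path of layer $i$. Hence every region of layer $i+1$ lies inside exactly one region of layer $i$, and the regions of layer $i+1$ are counted disjointly across the $r_{ij}$, which is what the recurrence of proposition 2 encodes.

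The main obstacle is the boundary and closure bookkeeping. A piece of $r_{ij}$ cut off by $\mathcal{L}$ might be lower-dimensional, or two pieces might coincide after taking closures. I would handle this by counting subregions through their open interiors, that is, the sets $\mathscr{R}$ of points sharing an activation pattern, rather than through their closures. I would also note that $r_{ij}$ always retains a nonempty activation-pattern set at layer $i+1$, namely the pattern of any fixed point of $\mathscr{R}$. This guarantees $\xi_{i+1,j}\ge1$ without any geometric assumption.
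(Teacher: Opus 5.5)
Your proposal is correct and takes the same route as the paper, which proves the corollary in one line: it combines the recurrence $\zeta_{\nu}=\sum_{j}\xi_{\nu j}$ of proposition 2 with the trichotomy of proposition 1, so that every region of one layer survives as at least one subregion of the next, i.e.\ $\xi_{\nu j}\ge 1$. Your extra bookkeeping (degenerate hyperplanes, non-merging of regions via restriction of paths, and counting activation-pattern sets rather than closures) makes explicit what the paper leaves implicit.
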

\begin{proof}
The conclusion is by equation 2.6 and proposition 1.
\end{proof}

\begin{cl}[Exponential growth]
In equation 2.6, suppose that $\xi_{\nu j} \ge \mathcal{C}$ for each $\nu$ or the number of the subregions of each $r_{ij}$ is not less than a constant $\mathcal{C}$, and that $\zeta_1 \ge \mathcal{C}$. Then the number of the regions grows exponentially with respect to depth $\nu$ in terms of $\zeta_{\nu} \ge \mathcal{C}^{\nu}$.
\end{cl}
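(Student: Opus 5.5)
The plan is a direct induction on the depth $\nu$, using the recurrence formula of proposition 2 (equation 2.6) as the only structural input. The base case is $\nu = 1$. There the hypothesis $\zeta_1 \ge \mathcal{C}$ is exactly $\zeta_1 \ge \mathcal{C}^1$, so nothing needs to be shown.

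For the inductive step, assume $\zeta_{\nu-1} \ge \mathcal{C}^{\nu-1}$ for some $\nu \ge 2$. By equation 2.6, $\zeta_{\nu} = \sum_{j=1}^{\zeta_{\nu-1}} \xi_{\nu j}$. Each term satisfies $\xi_{\nu j} \ge \mathcal{C}$ by hypothesis, so
\begin{equation*}
\zeta_{\nu} \ge \zeta_{\nu-1}\,\mathcal{C} \ge \mathcal{C}^{\nu-1}\cdot\mathcal{C} = \mathcal{C}^{\nu}.
\end{equation*}
The second inequality multiplies the inductive hypothesis by $\mathcal{C}$. This needs $\mathcal{C} \ge 0$, which is automatic because $\mathcal{C}$ bounds the number of subregions, a nonnegative integer, from below. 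In the meaningful case we in fact have $\mathcal{C} \ge 1$: by proposition 1 every region $r_{\nu-1,j}$ is at least retained as one subregion, which is also why corollary 2 holds.

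There is no substantial obstacle. The only point needing care is the indexing. Equation 2.6 is stated for $\nu \ge 2$, so the hypothesis on $\zeta_1$ must be supplied separately, and the statement's "$\xi_{\nu j} \ge \mathcal{C}$ for each $\nu$" should be read as holding for every $2 \le \nu \le \Phi$ and every $j$. With that reading the induction closes for all $\nu = 1, \dots, \Phi$.
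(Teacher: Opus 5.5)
Your induction is the paper's argument: equation 2.6 together with $\xi_{\nu j} \ge \mathcal{C}$ gives $\zeta_\nu \ge \mathcal{C}\zeta_{\nu-1}$, and iterating from $\zeta_1 \ge \mathcal{C}$ gives $\zeta_\nu \ge \mathcal{C}^\nu$. One correction: $\mathcal{C} \ge 0$ does not follow from $\mathcal{C}$ being a lower bound on a nonnegative count, and without it the statement fails (with $\mathcal{C}=-5$ both hypotheses hold trivially, yet $\zeta_2 \ge 25$ fails for small networks), so you must take $\mathcal{C} \ge 0$ as a tacit hypothesis, as the paper implicitly does.
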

\begin{proof}
The condition of this corollary and equation 2.6 give $\zeta_{\nu} \ge \mathcal{C}\zeta_{\nu-1}$ with $\zeta_1 \ge \mathcal{C}$, implying this conclusion.
\end{proof}

\subsection{Properties of Knots}

\begin{dfn}[Knot and adjacent regions]
To a set $R = \{r_1, r_2, \dots, r_{\zeta}\}$ of the regions of $U$ partitioned by network $\mathfrak{N}$, two regions $r_{\nu}$ and $r_{\mu}$ for $1 \le \nu, \mu \le \zeta$ are said to be \textsl{adjacent}, if the dimensionality $\dim(r_{\nu} \cap r_{\mu})=n-1$; we write $r_{\nu} \frown r_{\mu}$ to denote this adjacent relationship. A \textsl{knot} $\mathcal{K}$ is a part of an $n-1$-dimensional hyperplane that separates $r_{\nu}$ and $r_{\mu}$, or $\mathcal{K} = r_{\nu} \cap r_{\mu}$.
\end{dfn}

\begin{dfn}[Adjacent paths]
Two paths $p_1$ and $p_2$ of $\mathfrak{N}$ are \textsl{adjacent} if they differ from each other only in one unit $\mathcal{U}$, that is, $\mathcal{U} \in p_2$ (or $p_1$) but $\mathcal{U} \notin p_1$ (or $p_2$).
\end{dfn}

\begin{lem}
Let $p_1$ and $p_2$ be two adjacent paths of $\mathfrak{N}$ with a unique unit $\mathcal{U} \notin p_1$ but $\mathcal{U} \in p_2$, corresponding to regions $r_1$ and $r_2$, respectively. Suppose that $\mathcal{U}$ is in the $\nu$th layer of $p_2$ and yields a knot $\mathcal{K}$. Let $\mathscr{U} \in p_1 \cap p_2$ be a unit generating knots in both the two paths, denoted by $k_1$ and $k_2$, respectively; suppose that $\mathscr{U}$ is in a layer whose depth
\begin{equation}
\mu > \nu.
\end{equation}
Let $\boldsymbol{w}^T\boldsymbol{x} + b = 0$, $\boldsymbol{w}_1^T\boldsymbol{x} + b_1 = 0$ and $\boldsymbol{w}_2^T\boldsymbol{x} + b_2 = 0$ be the equations of $\mathcal{K}$, $k_1$ and $k_2$, respectively. Then we have
\begin{equation}
\boldsymbol{w}_2^T\boldsymbol{x} + b_2 = \boldsymbol{w}_1^T\boldsymbol{x} + b_1 + \lambda(\boldsymbol{w}^T\boldsymbol{x} + b),
\end{equation}
in which
\begin{equation}
\lambda = (W_{\nu+2}W_{\nu+3}\dots \boldsymbol{w}_{\mu})^T\boldsymbol{v}
\end{equation}
where $W_{k}$ for $k=\nu+2, \nu+3, \dots, \mu-1$ is the input matrix of the $k$th layer of $p_2$, $\boldsymbol{w}_{\mu}$ is the input-weight vector of $\mathscr{U}$ and $\boldsymbol{v}$ is the output-weight vector of $\mathcal{U}$ with size $m_{\nu+1} \times 1$.
\end{lem}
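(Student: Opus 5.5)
Since $p_1$ and $p_2$ differ only in $\mathcal{U}$, every layer up to and including layer $\nu$ has the same active units in both paths, with one exception: $\mathcal{U}$ is inactive in $p_1$ and active in $p_2$. Beyond layer $\nu$ the two paths again share the same active units. So I will compare the two affine maps layer by layer. Within each region, the output of every active unit is an affine function of $\boldsymbol{x}$, and that of every inactive unit is identically zero. Write the pre-activation of $\mathcal{U}$ in terms of $\boldsymbol{x}$ as $\boldsymbol{w}^T\boldsymbol{x}+b$; this is exactly the left side of the equation of $\mathcal{K}$, which is the knot $\mathcal{U}$ generates, using Proposition 1 for the hyperplane of a unit within a path. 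Then the layer-$\nu$ output vectors satisfy
\[
\boldsymbol{x}^{(\nu)}_{p_2}(\boldsymbol{x})=\boldsymbol{x}^{(\nu)}_{p_1}(\boldsymbol{x})+(\boldsymbol{w}^T\boldsymbol{x}+b)\,\boldsymbol{e},
\]
where $\boldsymbol{e}$ is the coordinate vector of $\mathcal{U}$ in layer $\nu$. The remaining coordinates coincide because the earlier layers and the other units of layer $\nu$ are identical in the two paths.

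Next I propagate this difference forward, using that every subsequent layer is the same affine map restricted to the same active set in both paths. At layer $\nu+1$ the pre-activations differ by $W_{\nu+1}^T\boldsymbol{e}\,(\boldsymbol{w}^T\boldsymbol{x}+b)=\boldsymbol{v}(\boldsymbol{w}^T\boldsymbol{x}+b)$, where $\boldsymbol{v}$ is the output-weight vector of $\mathcal{U}$ (the row of $W_{\nu+1}$ indexed by $\mathcal{U}$, of size $m_{\nu+1}\times 1$). Since the active sets at layer $\nu+1$ agree, the outputs differ by $D_{\nu+1}\boldsymbol{v}(\boldsymbol{w}^T\boldsymbol{x}+b)$, where $D_{\nu+1}$ is the diagonal $0/1$ mask of active units. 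The paper's convention is that $W_k$ denotes the input matrix of layer $k$ \emph{of $p_2$}, i.e. with inactive units' columns zeroed, so I absorb these masks into the $W_k$.

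Inducting through layers $\nu+2,\dots,\mu-1$, the outputs of layer $\mu-1$ differ by $(W_{\nu+2}\cdots W_{\mu-1})^T\boldsymbol{v}\,(\boldsymbol{w}^T\boldsymbol{x}+b)$. Finally, the pre-activation of $\mathscr{U}$ in layer $\mu$ is $\boldsymbol{w}_\mu^T\boldsymbol{x}^{(\mu-1)}+b_\mu$. In $p_1$ this is $\boldsymbol{w}_1^T\boldsymbol{x}+b_1$, and in $p_2$ it is that expression plus
\[
\boldsymbol{w}_\mu^T(W_{\nu+2}\cdots W_{\mu-1})^T\boldsymbol{v}\,(\boldsymbol{w}^T\boldsymbol{x}+b)=\lambda(\boldsymbol{w}^T\boldsymbol{x}+b),
\]
with $\lambda=(W_{\nu+2}\cdots W_{\mu-1}\boldsymbol{w}_\mu)^T\boldsymbol{v}$, a scalar. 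These pre-activations define the knots $k_1$ and $k_2$, which gives equation 2.8 with $\lambda$ as in 2.9. When $\mu=\nu+1$ the product is empty and $\lambda=\boldsymbol{w}_\mu^T\boldsymbol{v}$, which is just the entry of $\boldsymbol{w}_\mu$ at $\mathcal{U}$.

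The main obstacle is bookkeeping rather than mathematics, and it has two parts. First, I must justify that the active sets of $p_1$ and $p_2$ really agree in all layers other than $\nu$. This holds by the definition of adjacent paths, since they differ in exactly one unit. Second, I must keep the masks and transposes consistent with the paper's convention that $W_k$ is $m_{k-1}\times m_k$ and restricted to $p_2$. I would state explicitly that the restriction means zeroing the columns of inactive units. I would also note that the equations of $k_1$ and $k_2$ are taken as the full affine pre-activation expressions rather than rescaled hyperplane equations, so that 2.8 is an identity of affine functions.
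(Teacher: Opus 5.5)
Your argument is the paper's proof written out in full. The paper linearizes along $p_2$ and gets $\lambda$ by treating the output $\boldsymbol{w}^T\boldsymbol{x}+b$ of $\mathcal{U}$ as a one-dimensional input to the subnetwork of $p_2$ from layer $\nu+1$ to layer $\mu$. That is exactly what your layer-by-layer propagation of the difference $(\boldsymbol{w}^T\boldsymbol{x}+b)\boldsymbol{e}$ computes.

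Two bookkeeping points need fixing, both in the step you yourself flagged as the obstacle.

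\textbf{The mask at layer $\nu+1$.} Zeroing only the columns of the $W_k$ does not absorb the mask $D_{\nu+1}$. Your own induction gives $\lambda=(W_{\nu+2}D_{\nu+2}\cdots W_{\mu-1}D_{\mu-1}\boldsymbol{w}_\mu)^T D_{\nu+1}\boldsymbol{v}$. Dropping $D_{\nu+1}$ would let a unit of layer $\nu+1$ that is inactive in both paths contribute through its entry of $\boldsymbol{v}$. Fix this in one of two ways:
\begin{itemize}
\item replace $\boldsymbol{v}$ by $D_{\nu+1}\boldsymbol{v}$; or
\item restrict rows as well as columns, $D_{k-1}W_kD_k$, which is what the paper's phrase ``subnetwork of $p_2$'' implicitly does.
\end{itemize}

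\textbf{The edge case.} The product $W_{\nu+2}\cdots W_{\mu-1}$ is empty when $\mu=\nu+2$, and then $\lambda=\boldsymbol{w}_\mu^TD_{\nu+1}\boldsymbol{v}$. For $\mu=\nu+1$, the expression $\boldsymbol{w}_\mu^T\boldsymbol{v}$ is ill-typed, since $\boldsymbol{w}_\mu$ has length $m_\nu$ and $\boldsymbol{v}$ has length $m_{\nu+1}$. The value you name there, $\boldsymbol{w}_\mu^T\boldsymbol{e}$ (the weight from $\mathcal{U}$ to $\mathscr{U}$), is the correct one. Note, though, that formula 2.9 as printed only makes sense for $\mu\ge\nu+2$.
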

\begin{proof}
The output $\sigma(\boldsymbol{w}_2^T\boldsymbol{x} + b_2)$ of $\mathscr{U}$ in $p_2$ differing from that of $p_1$ is due to the introduction of $\mathcal{U}$ in a shallower layer; denote this output difference by $\Delta$. Under path $p_2$, because all of its units are activated by $r_2$, the activation functions can be regarded as linear type; thus, $\Delta$ is in terms of $\lambda\sigma(\boldsymbol{w}^T\boldsymbol{x}+b)=\lambda(\boldsymbol{w}^T\boldsymbol{x}+b)$ and the parameter $\lambda$ is obtained by regarding the output of $\mathcal{U}$ as an one-dimensional input to the subnetwork of $p_2$ from the $\nu+1$th layer to the $\mu$th layer.
\end{proof}

\begin{figure}[!t]
\captionsetup{justification=centering}
\centering
\subfloat[Examples of theorem 1 and corollary 4.]{\includegraphics[width=2.5in, trim = {2.8cm 1.5cm 3.0cm 1.0cm}, clip]{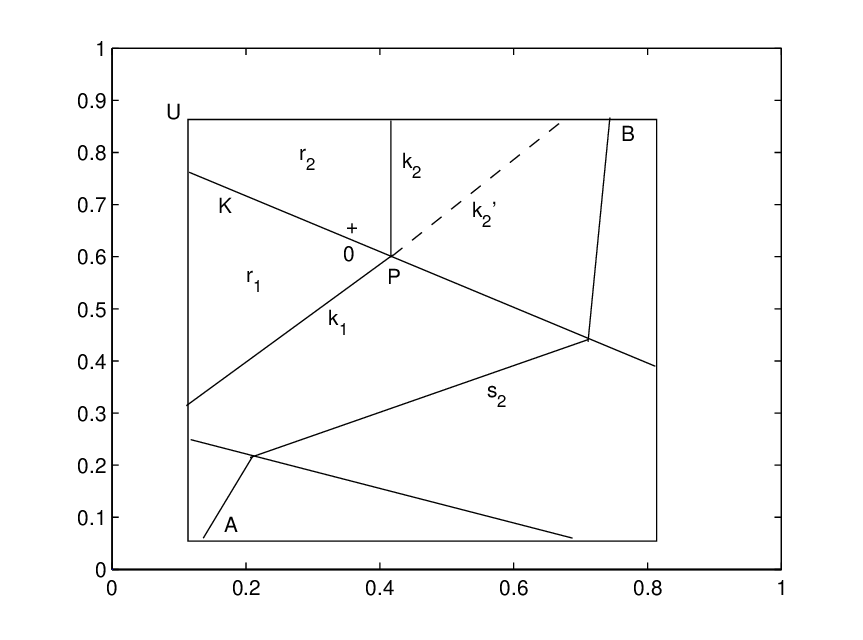}} \quad
\subfloat[Examples of theorem 2 and proposition 3.]{\includegraphics[width=2.5in, trim = {2.8cm 1.5cm 3.0cm 1.0cm}, clip]{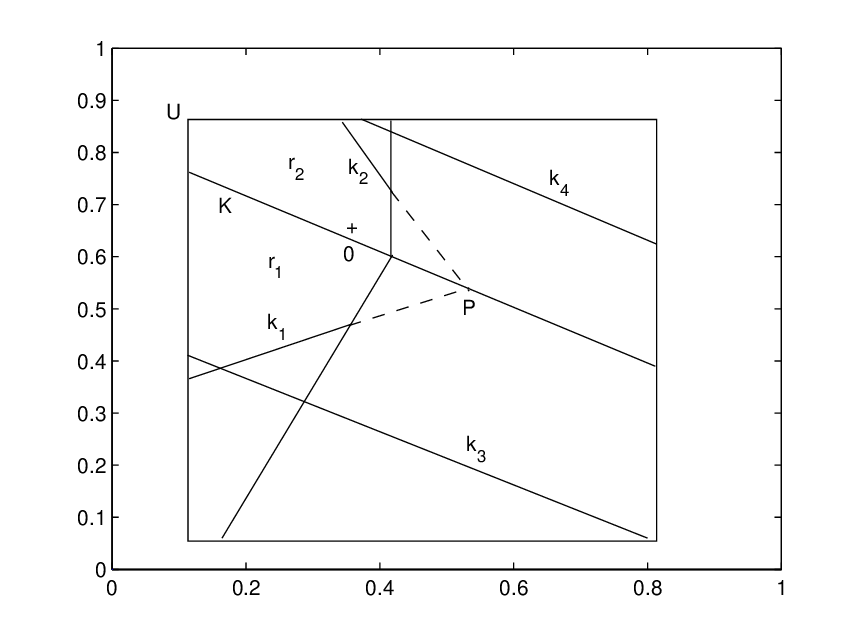}}
\caption{Knots from the same unit.}
\label{Fig.2}
\end{figure}

\begin{thm}[Knots from the same unit-\Romannum{1}]
Notations being from lemma 1, suppose that the input-dimensionality $n \ge 2$. Then if
\begin{equation}
k_1 \cap \mathcal{K} = \mathfrak{P} \ne \emptyset
\end{equation}
and $\lambda \ne 0$,
then
\begin{equation}
\mathcal{K} \cap k_1 \cap k_2 = \mathfrak{P},
\end{equation}
that is, $k_2$ passes through the intersection of $\mathcal{K}$ and $k_1$. And also, $k_2 \cap \mathcal{K} = \mathfrak{P}$ and $k_1 \cap k_2 = \mathfrak{P}$.
\end{thm}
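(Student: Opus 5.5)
The plan is to reduce everything to the affine identity of lemma 1, equation 2.8:
\begin{equation*}
\boldsymbol{w}_2^T\boldsymbol{x} + b_2 = \boldsymbol{w}_1^T\boldsymbol{x} + b_1 + \lambda(\boldsymbol{w}^T\boldsymbol{x} + b).
\end{equation*}
Write $\ell(\boldsymbol{x}) = \boldsymbol{w}^T\boldsymbol{x}+b$, $\ell_1(\boldsymbol{x})=\boldsymbol{w}_1^T\boldsymbol{x}+b_1$ and $\ell_2(\boldsymbol{x})=\boldsymbol{w}_2^T\boldsymbol{x}+b_2$, so that $\ell_2 = \ell_1 + \lambda \ell$ identically on $\mathbb{R}^n$. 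Throughout I will treat the knots $\mathcal{K}$, $k_1$, $k_2$ through the zero sets of $\ell$, $\ell_1$, $\ell_2$. Each knot is a piece of the corresponding hyperplane, so the intersection $\mathfrak{P}$ consists of points lying on all the relevant hyperplanes.

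\textbf{Step 1: $\mathfrak{P}\subseteq k_2$.} Take any $\boldsymbol{x}\in\mathfrak{P}=k_1\cap\mathcal{K}$. Then $\ell(\boldsymbol{x})=0$ and $\ell_1(\boldsymbol{x})=0$, so the identity gives $\ell_2(\boldsymbol{x})=0$. Hence $\boldsymbol{x}$ lies on the hyperplane of $k_2$. This shows that every point of $\mathfrak{P}$ is on $k_2$, so $\mathfrak{P}\subseteq \mathcal{K}\cap k_1\cap k_2$. The reverse inclusion $\mathcal{K}\cap k_1\cap k_2\subseteq \mathcal{K}\cap k_1=\mathfrak{P}$ is trivial, which yields equation 2.11.

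\textbf{Step 2: the pairwise intersections.} For $k_2\cap\mathcal{K}$: if $\ell(\boldsymbol{x})=0$ and $\ell_2(\boldsymbol{x})=0$, then the identity forces $\ell_1(\boldsymbol{x})=0$, so $k_2\cap\mathcal{K}\subseteq k_1\cap\mathcal{K}=\mathfrak{P}$. Combined with Step 1 this gives equality. For $k_1\cap k_2$: if $\ell_1(\boldsymbol{x})=\ell_2(\boldsymbol{x})=0$, then $\lambda\ell(\boldsymbol{x})=0$, and since $\lambda\neq 0$ we get $\ell(\boldsymbol{x})=0$. So $k_1\cap k_2\subseteq\mathfrak{P}$, and again Step 1 gives equality. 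This is the only place where $\lambda\ne0$ is used. The hypothesis $n\ge 2$ is what makes a nonempty $\mathfrak{P}$ an $(n-2)$-dimensional object rather than a degenerate coincidence of points. I would also note that when $\lambda\neq 0$ the three hyperplanes are distinct, because $k_1$ and $\mathcal{K}$ are not parallel given that they meet in $\mathfrak{P}$ while being distinct knots. This is not strictly needed for the set identities, however.

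\textbf{Main obstacle.} The delicate point is the passage from hyperplanes to knots, which are only pieces of hyperplanes restricted to the regions $r_1$ and $r_2$. I would handle it by working at the level of the full hyperplanes $\{\ell=0\}$, $\{\ell_1=0\}$ and $\{\ell_2=0\}$, where the argument above is airtight. Then I would interpret $\mathfrak{P}$ as the common $(n-2)$-dimensional locus on the shared boundary $\mathcal{K}=r_1\cap r_2$. Points of $\mathfrak{P}$ lie in the closures of both regions, so on $\mathcal{K}$ the linear expressions of both paths are valid there, by continuity of the network output across the knot. Consequently the hyperplane statement transfers to the knots themselves. I expect this continuity and closure argument to need the most care. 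The algebra itself is immediate from lemma 1.
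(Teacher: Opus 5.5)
Your proof is correct and matches the paper's: the paper also takes $\boldsymbol{x}_0\in\mathfrak{P}$ and uses equation 2.8 to get $\mathfrak{P}\subset k_2$, then obtains $k_2\cap\mathcal{K}\subset k_1$ and $k_1\cap k_2\subset\mathcal{K}$ from the same identity (the latter tacitly needing $\lambda\ne 0$, which you make explicit), and closes each equality with the reverse inclusions. The paper works only with the hyperplane equations and never addresses the knot-versus-hyperplane issue you flag, so your extra care there goes beyond its argument; note that the fact you actually need is continuity of $\mathscr{U}$'s pre-activation on the closed regions, not continuity of the network output.
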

\begin{proof}
Let $\boldsymbol{x}_0 \in \mathfrak{P}$ be a point of $\mathfrak{P}$ of equation 2.10. Under the notations of lemma 1, equation 2.10 means that
\begin{equation}
\begin{cases}
\boldsymbol{w}_1^T\boldsymbol{x}_0 + b_1 &= 0 \\
\boldsymbol{w}^T\boldsymbol{x}_0 + b &= 0
\end{cases}
\end{equation}
has a solution of $\boldsymbol{x}_0$ and $\boldsymbol{x}_0 \in \mathfrak{P}$. Equations 2.8 and 2.12 imply $\boldsymbol{w}_2^T\boldsymbol{x}_0 + b_2 = 0$, so $\mathfrak{P} \subset k_2$ and equation 2.11 follows.

Similarly, $\mathfrak{Q}=(k_2 \cap \mathcal{K}) \subset k_1$ is obtained by equation 2.8. Then to each $\boldsymbol{x} \in \mathfrak{Q}$, we have $\boldsymbol{x} \in \mathcal{K} \cap k_1$, implying $\mathfrak{Q} \subset \mathfrak{P}$. The previous result $\mathfrak{P} \subset k_2$ and the condition $\mathfrak{P} \in \mathcal{K}$ of equation 2.10 give $\mathfrak{P} \subset \mathfrak{Q}$. Thus, $\mathfrak{Q} = \mathfrak{P}$, that is, $k_2 \cap \mathcal{K} = \mathfrak{P}$. Equation 2.8 also yields $\mathfrak{R}=k_1 \cap k_2 \subset \mathcal{K}$; by equation 2.10 we have $\mathfrak{R} \subset \mathfrak{P}$; equations 2.10 and 2.8 also imply $\mathfrak{P} \subset \mathfrak{R}$; and hence $\mathfrak{R} = \mathfrak{P}$ or $k_1 \cap k_2=\mathfrak{P}$. Figure \ref{Fig.2}a shows an example of this theorem when the input is two-dimensional.
\end{proof}

The next two corollaries are the direct consequence of theorem 1 and the first corollary is a special case of the second one.

\begin{cl}[Piecewise linear curve of a unit]
When the input of network $\mathfrak{N}$ is two-dimensional, a knot is a line segment with a direction. Then theorem 1 means that introducing unit $\mathcal{U}$ in $p_1$ could change the direction of knot $k_1$ of $\mathscr{U}$. Recursive applications of theorem 1 could yield a continuous piecewise linear curve generated by the same unit $\mathscr{U}$---that is, a unit of $\mathfrak{N}$ can produce a continuous piecewise linear curve instead of a single line of the two-layer case.
\end{cl}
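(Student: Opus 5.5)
The plan is to specialize Theorem 1 to $n=2$ and then apply it repeatedly along a chain of adjacent paths. For $n=2$ every hyperplane $\boldsymbol{w}^T\boldsymbol{x}+b=0$ is a line, and a knot, as part of a line separating two adjacent regions, is a line segment. Its direction is determined by the normal vector $\boldsymbol{w}$. Take the setting of Lemma 1 with $\lambda\ne0$ and $k_1\cap\mathcal{K}=\mathfrak{P}\ne\emptyset$. Two lines meet in at most one point unless they coincide, so $\mathfrak{P}$ is a single point $\boldsymbol{x}_0$. Theorem 1 then says $k_2$ also passes through $\boldsymbol{x}_0$. By equation 2.8 the normal of $k_2$ is $\boldsymbol{w}_2=\boldsymbol{w}_1+\lambda\boldsymbol{w}$. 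If $\mathcal{K}$ and $k_1$ are not parallel, $\boldsymbol{w}$ is not a multiple of $\boldsymbol{w}_1$. Hence for $\lambda\ne0$, $\boldsymbol{w}_2$ is not a multiple of $\boldsymbol{w}_1$, and $k_2$ has a different direction from $k_1$. So inserting $\mathcal{U}$ into $p_1$ bends the knot of $\mathscr{U}$ at the point $\boldsymbol{x}_0\in\mathcal{K}$.

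To see that the two segments are joined, note that $r_1$ and $r_2$ are adjacent across $\mathcal{K}$. The zero set of $\mathscr{U}$ is $k_1$ on the $r_1$ side and $k_2$ on the $r_2$ side. Both lines contain $\boldsymbol{x}_0$, so the zero set of $\mathscr{U}$ restricted to $r_1\cup r_2$ is a polyline with a vertex at $\boldsymbol{x}_0$. Continuity also follows directly: the preactivation of $\mathscr{U}$ is a continuous function of $\boldsymbol{x}$, since it is a composition of affine maps and $\sigma$. Its zero set therefore cannot jump across $\mathcal{K}$, and equation 2.8 identifies the two linear pieces on $\mathcal{K}$.

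For the recursive statement, I would take a sequence of regions $r_1\frown r_2\frown\dots\frown r_s$ whose successive paths are adjacent, differing by units $\mathcal{U}_1,\dots,\mathcal{U}_{s-1}$, each shallower than $\mathscr{U}$. At each step I apply the argument above to the pair $(r_t,r_{t+1})$. This gives a new piece $k_{t+1}$ that meets $k_t$ on the shared knot $\mathcal{K}_t$. Concatenating the pieces produces a continuous piecewise linear curve that is the zero set of the single unit $\mathscr{U}$. In a two-layer network, by contrast, every path contributes the same $\boldsymbol{w}_\mu$ on the raw input, so $\lambda$ is effectively absent and the knot is one straight line.

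The main obstacle is that the corollary is phrased with "could", so only possibility needs to be shown, but one must still check that the hypotheses of Theorem 1 can hold at every step. The requirements are $\lambda\ne0$, nonparallel $\mathcal{K}_t$ and $k_t$, and an intersection point lying on the common boundary segment rather than merely on the extended lines. I would handle this by noting that $\lambda$ in equation 2.9 is a polynomial in the weights that is not identically zero. Transversality is likewise a generic condition, so the required configuration can be realized by choosing the weights appropriately. The remaining degenerate cases give a straight continuation or no continuation at all, which is consistent with the weak form of the claim.
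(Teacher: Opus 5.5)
Your proposal is correct and follows the paper's route. The paper treats this corollary as a direct consequence of Theorem 1 applied recursively along adjacent paths, illustrated by Figure 2a, and your check that $\boldsymbol{w}_2=\boldsymbol{w}_1+\lambda\boldsymbol{w}$ is not parallel to $\boldsymbol{w}_1$ (when $\lambda\ne 0$ and $\mathcal{K}$ is not parallel to $k_1$) makes explicit what the paper leaves implicit; for the existence claim, though, a single explicit witness would be cleaner than the genericity appeal, e.g. the second-layer preactivation $\sigma(x_2)+\sigma(x_1-\tfrac12)-\tfrac12$, whose zero set is $x_2=\tfrac12$ for $x_1<\tfrac12$ and $x_1+x_2=1$ for $x_1>\tfrac12$.
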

\noindent
\textbf{Example}. Figure \ref{Fig.2}a gives some examples of corollary 4. Knot $k_1$ changes its direction to become $k_2$ due to a new activated unit generating knot $K$ in a shallower layer; the dashed knot $k_2'$ is the one preserving the direction of $k_1$ for two-layer ReLU networks. When theorem 1 is repeatedly applied, a continuous piecewise linear curve such as $s_2$ from points $A$ to $B$ can be produced by a single unit.

\begin{cl}[Piecewise linear manifold of a unit]
To theorem 1, a knot $k_1$ could change its normal vector to become $k_2$ when intersecting the knot $\mathcal{K}$ produced by a unit $\mathcal{U}$ in a shallower layer; and if the similar operation is repeatedly done, a continuous piecewise linear manifold dividing $U$ could be formed by the single unit $\mathscr{U}$.
\end{cl}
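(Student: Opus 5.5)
The plan is to apply theorem 1 inductively along a chain of regions traversed by the zero set of $\mathscr{U}$, and to check that consecutive hyperplane pieces glue along common $n-2$-dimensional faces. Fix the unit $\mathscr{U}$ in layer $\mu$. Suppose that in path $p_1$ (region $r_1$) it generates a knot $k_1$ with equation $\boldsymbol{w}_1^T\boldsymbol{x}+b_1=0$. Suppose also that $k_1$ meets the boundary of $r_1$ along a knot $\mathcal{K}$ produced by a unit $\mathcal{U}$ of a shallower layer $\nu<\mu$, with the adjacent region $r_2$ having path $p_2=p_1\cup\{\mathcal{U}\}$.

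Lemma 1 gives the affine relation 2.8: $\boldsymbol{w}_2^T\boldsymbol{x}+b_2=\boldsymbol{w}_1^T\boldsymbol{x}+b_1+\lambda(\boldsymbol{w}^T\boldsymbol{x}+b)$. Because $n\ge2$, $\mathfrak{P}=k_1\cap\mathcal{K}$ is (generically) an $n-2$-dimensional piece of the hyperplane of $\mathcal{K}$. Theorem 1 then shows that $k_2$ contains exactly $\mathfrak{P}$ on $\mathcal{K}$. Hence the zero set of $\mathscr{U}$ in $r_1\cup r_2$ is the union $k_1\cup k_2$, and the two pieces are glued along $\mathfrak{P}$. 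When $\lambda\neq0$ and $\boldsymbol{w}$ is not parallel to $\boldsymbol{w}_1$, the normal vector $\boldsymbol{w}_2=\boldsymbol{w}_1+\lambda\boldsymbol{w}$ differs in direction from $\boldsymbol{w}_1$. This is the "change of normal vector" asserted in the corollary.

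Next I will iterate. The zero set of $\mathscr{U}$ is the preimage of $0$ under the continuous function $\boldsymbol{x}\mapsto \boldsymbol{w}_\mu^T\boldsymbol{x}^{(\mu-1)}+b_\mu$, which is piecewise linear over the partition of layer $\mu-1$. Each time the zero set crosses a knot of a shallower unit into an adjacent region, the preceding paragraph applies with the new pair of adjacent paths. This yields a sequence of hyperplane pieces $k_1,k_2,\dots$, each glued to the next along an $n-2$-dimensional face. By induction on the number of crossings, the union is connected and continuous, since each gluing preserves continuity. It is locally a hyperplane piece inside each region, so it is a continuous piecewise linear manifold of dimension $n-1$. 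A simpler route to continuity is to observe directly that the zero set of a continuous piecewise linear function with nonzero slopes on each piece is such a manifold; theorem 1 then supplies the explicit local gluing description.

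Finally, this manifold divides $U$: the pre-activation of $\mathscr{U}$ is continuous, so its positive and negative sets are separated by the zero set, just as a hyperplane does in the two-layer case. The statement is phrased with "could", so it suffices to exhibit this mechanism. The main obstacle is the non-generic case: when $\lambda=0$ the piece does not bend, and the pre-activation could vanish on a full-dimensional set, so the zero set would not be an $n-1$-manifold. I would handle this by noting that the corollary only asserts possibility, and by requiring, as proposition 1 does, that each knot be $n-1$-dimensional, with $\lambda\neq0$ and $\boldsymbol{w}\nparallel\boldsymbol{w}_1$ at each crossing.
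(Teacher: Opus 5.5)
Your route is the paper's own. Relation 2.8 of lemma 1 together with theorem 1, applied wherever the zero set of $\mathscr{U}$ meets a shallower knot, is exactly the ``recursive application of theorem 1'' behind corollaries 4 and 5. Your observation that the zero set of the continuous piecewise linear pre-activation $s(\boldsymbol{x})=\boldsymbol{w}_\mu^T\boldsymbol{x}^{(\mu-1)}+b_\mu$ separates $\{s>0\}$ from $\{s<0\}$ justifies ``dividing $U$'' more cleanly than the paper does.

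The gap is the passage to a \emph{manifold}. Hyperplane pieces glued pairwise along $(n-2)$-dimensional sets form a manifold only if no more than two pieces meet at the same place. That fails when the zero set of $\mathscr{U}$ runs through an intersection of two or more shallower knots, and none of your exclusions ($\lambda\neq 0$, $\boldsymbol{w}\nparallel\boldsymbol{w}_1$, $(n-1)$-dimensional knots) rules this out. Concretely, take first-layer units $\sigma(x_1)$, $\sigma(x_2)$, $\sigma(x_1-\tfrac12)$, $\sigma(x_2-\tfrac12)$. Give $\mathscr{U}$ in the second layer the pre-activation $x_1-x_2-2\sigma(x_1-\tfrac12)+2\sigma(x_2-\tfrac12)=|x_2-\tfrac12|-|x_1-\tfrac12|$ on $U$. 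Each linear piece has gradient $(\pm1,\pm1)\neq\boldsymbol{0}$. Between any two adjacent quadrants, relation 2.8 holds with $\lambda=\pm2$ and $\boldsymbol{w}\nparallel\boldsymbol{w}_1$, and theorem 1 holds with $\mathfrak{P}=\{(\tfrac12,\tfrac12)\}$. Yet the zero set is the pair of diagonals of $U$, an ``X'' that is not a manifold at its centre. So the lemma behind your ``simpler route'' (a continuous piecewise linear function with nonzero slope on every piece has a manifold as zero set) is false, and your induction breaks at the same point.

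What is missing is a transversality (regular-value) condition. For every cell, knot, or lower-dimensional intersection of knots of the partition formed by layers $1,\dots,\mu-1$ that meets the zero set, the restriction of $s$ to it must be nonconstant. For $n=2$ this means the zero set contains no knot segment and avoids all intersection points of shallower knots.

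Under this condition, take a point $p$ of the zero set, let $\boldsymbol{y}$ be coordinates along the smallest cell, knot, or knot intersection containing $p$, and let $\boldsymbol{z}$ be transverse coordinates. Continuity across the pieces around $p$ forces $s=c+\boldsymbol{a}^T\boldsymbol{y}+g(\boldsymbol{z})$ with $g$ piecewise linear and $g(\boldsymbol{0})=0$. The condition says $\boldsymbol{a}\neq\boldsymbol{0}$, so the zero set is locally a graph over an $(n-1)$-dimensional space, i.e.\ locally a manifold. At a point interior to a single knot this reduces to your two-piece gluing.

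The condition holds for generic parameters of $\mathscr{U}$. Since the corollary only asserts that such a manifold \emph{could} be formed, adding it completes your argument.

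Two side remarks on your stated obstacle. $\lambda=0$ is no obstruction to being a manifold; the piece simply does not bend. And vanishing of $s$ on a full-dimensional set has nothing to do with $\lambda$.
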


\begin{rmk}
From corollaries 4 and 5, we can see an essential difference between a two-layer ReLU network $\mathcal{N}$ and deep network $\mathfrak{N}$. A unit of the hidden layer of $\mathcal{N}$ generates a hyperplane partitioning $U$, while a unit of the hidden layers of $\mathfrak{N}$ can yield a piecewise linear manifold instead. The latter is more flexible in forming a partition of $U$ that fits the geometric feature of data or functions.
\end{rmk}

\begin{thm}[Knots from the same unit-\Romannum{2}]
Under lemma 1, let $L$, $l_1$ and $l_2$ be the hyperplanes that the knots $K$, $k_1$ and $k_2$ lie on, respectively. If $l_1 \cap L = \mathfrak{P} \ne \emptyset$, we have $L \cap l_1 \cap l_2 = \mathfrak{P}$. Moreover, if $k_1 \cap \mathcal{K} = \emptyset$, then $k_2 \cap \mathcal{K} = \emptyset$.
\end{thm}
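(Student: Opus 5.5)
The plan is to reuse the linear identity of lemma 1, equation 2.8, but now applied to the full hyperplanes instead of the knot segments. Write the affine functions
\[
f(\boldsymbol{x}) = \boldsymbol{w}^T\boldsymbol{x}+b,\quad f_1(\boldsymbol{x})=\boldsymbol{w}_1^T\boldsymbol{x}+b_1,\quad f_2(\boldsymbol{x})=\boldsymbol{w}_2^T\boldsymbol{x}+b_2 .
\]
Then $L=\{f=0\}$, $l_1=\{f_1=0\}$ and $l_2=\{f_2=0\}$. Lemma 1 gives $f_2=f_1+\lambda f$ identically on $\mathbb{R}^n$, since the paths $p_1,p_2$ fix these affine expressions once the activation pattern is fixed.

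\textbf{First claim.} Take any $\boldsymbol{x}_0\in l_1\cap L=\mathfrak{P}$. Then $f_1(\boldsymbol{x}_0)=f(\boldsymbol{x}_0)=0$, so $f_2(\boldsymbol{x}_0)=0$, hence $\mathfrak{P}\subset l_2$ and $L\cap l_1\cap l_2\supset\mathfrak{P}$. The reverse inclusion $L\cap l_1\cap l_2\subset L\cap l_1=\mathfrak{P}$ is trivial. This is exactly the argument of theorem 1 with knots replaced by hyperplanes, and it needs neither $\lambda\ne0$ nor $n\ge2$. The same manipulation shows $l_2\cap L=\mathfrak{P}$ as well, because on $L$ we have $f_2=f_1$.

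\textbf{Second claim.} Suppose $k_1\cap\mathcal{K}=\emptyset$; I must show $k_2\cap\mathcal{K}=\emptyset$. Argue by contradiction: take $\boldsymbol{x}_0\in k_2\cap\mathcal{K}$. Then $f(\boldsymbol{x}_0)=0$ and $f_2(\boldsymbol{x}_0)=0$, so $f_1(\boldsymbol{x}_0)=0$, i.e.\ $\boldsymbol{x}_0\in l_1\cap\mathcal{K}$. The difficulty, and the main obstacle, is that $\boldsymbol{x}_0$ lies on the hyperplane $l_1$ but need not lie on the knot $k_1$, which is only the part of $l_1$ realised within the region $r_1$ where path $p_1$ is active.

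To close this gap I will use the geometry of the adjacent paths. The knot $\mathcal{K}$ is the common boundary of $r_1$ and $r_2$. Since $\boldsymbol{x}_0\in\mathcal{K}\subset\bar r_1\cap\bar r_2$, and $\boldsymbol{x}_0$ is also on $k_2\subset\bar r_2$, the point sits on the boundary of $r_1$. On $\mathcal{K}$ the output of $\mathcal{U}$ is zero, so the pre-activations of $\mathscr{U}$ computed via $p_1$ and via $p_2$ agree there; this is the identity $f_2=f_1$ on $L$. Consequently, near $\boldsymbol{x}_0$ the zero set of $\mathscr{U}$ crosses into $\bar r_1$ along $l_1$, so $\boldsymbol{x}_0\in k_1$, where knots are taken as closed sets, consistent with $\mathcal{K}=r_\nu\cap r_\mu$ being defined through closures. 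This contradicts $k_1\cap\mathcal{K}=\emptyset$.

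I expect this identification of $\boldsymbol{x}_0$ as a point of the knot $k_1$, rather than merely of $l_1$, to be the delicate step. If it resists, I would fall back to the continuity of the network function across $\mathcal{K}$: the pre-activation of $\mathscr{U}$ is continuous, equals $f_1$ on $\bar r_1$ and $f_2$ on $\bar r_2$, and vanishes at $\boldsymbol{x}_0$, so $\boldsymbol{x}_0$ is a zero of $\mathscr{U}$ on $\bar r_1$, i.e.\ on $k_1$.
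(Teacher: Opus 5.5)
Your proposal is correct and follows the paper's route: the first claim is the Theorem 1 argument applied to the full hyperplanes through equation 2.8, and the second uses $f_2=f_1$ on $\mathcal{K}\subset L$ to pass between $l_1$ and $l_2$, as the paper's one-line proof does. The step you flag as delicate, $l_1\cap\mathcal{K}\subset k_1$, is one the paper takes for granted when it writes that $k_1\cap\mathcal{K}=\emptyset$ gives $\mathfrak{P}\cap\mathcal{K}=\emptyset$. Your fallback settles it: $\mathcal{K}=r_1\cap r_2\subset r_1$, and $r_1$ lies in the closed half-space $\{f_1\ge 0\}$ because $\mathscr{U}$ is active on it, so $r_1\cap l_1$ is precisely the face that $\mathscr{U}$ cuts out of $r_1$, namely $k_1$.
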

\begin{proof}
The first conclusion is analogous to theorem 1. If $k_1 \cap \mathcal{K} = \emptyset$, then $(l_1 \cap L=\mathfrak{P}) \cap \mathcal{K} = \emptyset$ and the second conclusion follows. In Figure \ref{Fig.2}b, knots $K$, $k_1$ and $k_2$ visually demonstrate this theorem.
\end{proof}

\begin{cl}[Multiple piecewise linear manifolds of a unit]
Notations from lemma 1, it is possible that the unit $\mathscr{U}$ generates two piecewise linear manifolds that have no common point.
\end{cl}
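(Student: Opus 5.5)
The statement is an existence claim, so I will prove it by exhibiting a configuration, relying on theorem 2 as the key tool. The idea is that, unlike corollaries 4 and 5, where $k_1$ and $k_2$ are glued together at $\mathfrak{P} \subset \mathcal{K}$, a knot $k_1$ that does not meet $\mathcal{K}$ cannot be continued across $\mathcal{K}$. Theorem 2 then forces the piece $k_2$ in $r_2$ not to meet $\mathcal{K}$ either. The knots of $\mathscr{U}$ on the two sides of $\mathcal{K}$ are therefore disconnected from each other.

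First, I will fix the setting of lemma 1 with $n = 2$ and adjacent paths $p_1, p_2$. The regions $r_1$ and $r_2$ are separated by the knot $\mathcal{K}$ of $\mathcal{U}$ in layer $\nu$, and $\mathscr{U}$ lies in layer $\mu > \nu$. I will choose parameters so that in $p_1$ the hyperplane $l_1$ of $\mathscr{U}$ meets $r_1$ in a segment $k_1$ lying strictly inside $r_1$ and disjoint from $\mathcal{K}$. By equation 2.8, the hyperplane $l_2$ in $p_2$ is $\boldsymbol{w}_1^T\boldsymbol{x}+b_1+\lambda(\boldsymbol{w}^T\boldsymbol{x}+b)=0$. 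Choosing the output weights of $\mathcal{U}$ via equation 2.9 so that $\lambda \ne 0$, I will arrange that $l_2$ also cuts $r_2$ in a nonempty segment $k_2$. For instance, I can take $l_1$ parallel to $L$, so that $l_1 \cap L = \emptyset$. Then $l_2$ is parallel to both, and a suitable $\lambda$ translates $l_2$ into the interior of $r_2$.

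Theorem 2 then gives $k_2 \cap \mathcal{K} = \emptyset$. Since every point of $k_1$ lies in $r_1$ away from $\mathcal{K}$, and every point of $k_2$ lies in $r_2$ away from $\mathcal{K}$, we have $k_1 \cap k_2 \subset r_1 \cap r_2 = \mathcal{K}$. Hence $k_1 \cap k_2 = \emptyset$. If further shallower units subdivide $r_1$ or $r_2$, corollary 5 lets each of $k_1$ and $k_2$ grow into its own piecewise linear manifold. Each of these stays within its side of $\mathcal{K}$, so the two manifolds have no common point.

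The main obstacle is realizability. I need actual weights for $\mathfrak{N}$ in which $\mathcal{U}$ is active only on $r_2$, $\mathscr{U}$ is active in both paths, and the intermediate units of layers $\nu+1,\dots,\mu-1$ stay active on both regions. This is needed so that lemma 1 applies and $\lambda$ is nonzero. I would first try the smallest network, $\nu = 1$ and $\mu = 3$, with a single always-active unit in layer 2 passing on a positive combination of the outputs of layer 1. I would then verify the required activation patterns directly on $U = [0,1]^2$.
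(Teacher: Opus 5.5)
Your proposal is correct and follows the paper's route: the paper also uses the second conclusion of theorem 2 to keep the knots $k_1$ and $k_2$ of $\mathscr{U}$ separated by $\mathcal{K}$, and corollary 5 to let each knot independently grow into its own piecewise linear manifold. Your parallel configuration and realizability plan add detail the paper leaves implicit. One small correction: under the paper's definitions $k_1$ lies on the boundary of $r_1$, not strictly inside it, since $\mathscr{U}$ is active throughout $r_1$; this does not affect your argument.
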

\begin{proof}
This corollary is by the second conclusion of theorem 2 as well as corollary 5. When the separated knots $k_1$ and $k_2$ of theorem 2 change their normal vectors independently by corollary 5, two piecewise linear manifolds can be formed.
\end{proof}

\begin{prp}[Knots from the same unit-\Romannum{3}]
Notations as in lemma 1, if $k_1$ is parallel to $\mathcal{K}$ or $k_1 \parallel \mathcal{K}$, then $k_2 \parallel \mathcal{K}$.
\end{prp}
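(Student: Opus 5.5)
The plan is to work directly with the linear identity of lemma 1, equation 2.8:
\[
\boldsymbol{w}_2^T\boldsymbol{x} + b_2 = \boldsymbol{w}_1^T\boldsymbol{x} + b_1 + \lambda(\boldsymbol{w}^T\boldsymbol{x} + b).
\]
I would read parallelism of two knots as parallelism of the hyperplanes they lie on, as in theorem 2. Concretely, $k_1 \parallel \mathcal{K}$ means the normal vectors are proportional, so $\boldsymbol{w}_1 = \alpha\boldsymbol{w}$ for some scalar $\alpha \ne 0$; here $\boldsymbol{w} \neq \boldsymbol{0}$ and $\boldsymbol{w}_1 \neq \boldsymbol{0}$ because both knots are $n-1$-dimensional. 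If one prefers the convention that the hyperplanes also be distinct, then in addition $b_1 \ne \alpha b$.

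First I compare the linear parts of equation 2.8, which gives $\boldsymbol{w}_2 = \boldsymbol{w}_1 + \lambda\boldsymbol{w} = (\alpha+\lambda)\boldsymbol{w}$. Thus the normal vector of $k_2$ is a scalar multiple of that of $\mathcal{K}$. Since $k_2$ is assumed to be a genuine knot, i.e. an $n-1$-dimensional piece of a hyperplane, we have $\boldsymbol{w}_2 \ne \boldsymbol{0}$, so $\alpha+\lambda \ne 0$ and $k_2 \parallel \mathcal{K}$ follows.

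Second, if distinctness is part of the meaning of parallel, I check the constant terms: $b_2 = b_1 + \lambda b$. Then $b_2 - (\alpha+\lambda)b = b_1 - \alpha b \ne 0$, so the hyperplane of $k_2$ does not coincide with that of $\mathcal{K}$. Alternatively, this step can be obtained from theorem 2: if the hyperplane $l_2$ of $k_2$ met $L$, then $l_2 \cap L \subset l_1$ by equation 2.8. That would contradict $l_1 \cap L = \emptyset$ unless $l_2\cap L=\emptyset$.

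The only delicate point, which I expect to be the main obstacle, is the degenerate case $\alpha + \lambda = 0$. There $\boldsymbol{w}_2=\boldsymbol{0}$ and equation 2.8 would not define a hyperplane at all. I would dispose of it by appealing to the hypothesis inherited from lemma 1 that $\mathscr{U}$ generates a knot $k_2$ in $p_2$, which forces $\boldsymbol{w}_2 \neq \boldsymbol{0}$. I would also note in passing that the case $\lambda=0$ is trivial, since then $k_2$ and $k_1$ lie on the same hyperplane.
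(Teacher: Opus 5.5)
Your proposal is correct and follows the paper's proof: substitute the proportional normal $\boldsymbol{w}_1 = c\boldsymbol{w}$ into equation 2.8, which gives $\boldsymbol{w}_2 = (c+\lambda)\boldsymbol{w}$, so $k_2 \parallel \mathcal{K}$. Your extra checks go beyond what the paper writes: ruling out $c+\lambda=0$ via lemma 1's hypothesis that $\mathscr{U}$ generates a knot in $p_2$, and showing $b_2-(c+\lambda)b = b_1 - cb \ne 0$ so the hyperplanes stay distinct.
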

\begin{proof}
In this case, equation 2.8 becomes $\boldsymbol{w}_2^T\boldsymbol{x} + b_2 = c\boldsymbol{w}^T\boldsymbol{x} + b_1 + \lambda(\boldsymbol{w}^T\boldsymbol{x} + b)$ with $c$ a constant, and the conclusion follows. In Figure \ref{Fig.2}b, the knots $k_3$ and $k_4$ with $k_3 \parallel \mathcal{K}$ and $k_4 \parallel \mathcal{K}$ give an example.
\end{proof}

\subsection{Principles of Knot Production}
The purpose of knot production is to control the partition of the input space to be a desired one, which is one of two main mechanisms of deep feedforward ReLU networks.

\begin{thm}[General principle of knot production]
Under some path $\mathcal{P}$ of network $\mathfrak{N}$, the knot production obeys the following two rules : (1) A unit $\mathcal{U}$ of the $\nu$th layer of $\mathcal{P}$ generates a knot when it is partially activated by a region $\mathscr{R}$ of the $\nu-1$th layer; (2) When $\mathscr{R}$ completely activates $\mathcal{U}$, even $\mathcal{U}$ leads to a hyperplane $\mathcal{L}$ dividing $U$, it cannot yield a knot through $\mathcal{L}$.
\end{thm}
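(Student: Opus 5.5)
The plan is to argue directly from the definitions of region, path and knot, together with proposition 1 and the unit classification of definition 5. Fix the path $\mathcal{P}$ and a region $\mathscr{R}$ of the $\nu-1$th layer of $\mathcal{P}$. On $\mathscr{R}$ every unit of $\mathcal{P}$ up to depth $\nu-1$ has a fixed activation pattern. Hence $\boldsymbol{x}^{(\nu-1)}$ is an affine function of $\boldsymbol{x}$ on $\mathscr{R}$, and the pre-activation $\boldsymbol{w}^T\boldsymbol{x}^{(\nu-1)}+b$ of $\mathcal{U}$ restricted to $\mathscr{R}$ is an affine function $\ell(\boldsymbol{x})$. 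Its zero set is the hyperplane $\mathcal{L}$ of proposition 1.

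For rule (1), assume $\mathscr{R}$ partially activates $\mathcal{U}$, i.e.\ case (1) of proposition 1. Then $\mathcal{L}$ splits $\mathscr{R}$ into $\mathscr{R}^+=\{\ell>0\}\cap\mathscr{R}$ and $\mathscr{R}^-=\{\ell\le 0\}\cap\mathscr{R}$, and both have nonempty interior, since $\ell$ is affine and takes both signs on the $n$-dimensional set $\mathscr{R}$. Points of $\mathscr{R}^+$ activate $\mathcal{U}$ and points of $\mathscr{R}^-$ do not. The two sets therefore correspond to paths of depth $\nu$ that differ exactly in $\mathcal{U}$, so they are adjacent paths in the sense of definition 7, giving two distinct regions of the $\nu$th layer. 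Their common boundary is $\mathcal{L}\cap\mathscr{R}$, which is $(n-1)$-dimensional because $\mathcal{L}$ meets the interior of $\mathscr{R}$. By definition 6 the two regions are adjacent, and their intersection is a knot lying on $\mathcal{L}$, which is generated by $\mathcal{U}$.

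For rule (2), assume $\mathscr{R}$ completely activates $\mathcal{U}$, i.e.\ case (2). Then $\ell>0$ on $\mathscr{R}$ (up to its boundary), so every point of $\mathscr{R}$ activates $\mathcal{U}$. The activation set of units up to depth $\nu$ is the same throughout $\mathscr{R}$: it is the old set plus $\mathcal{U}$. Consequently $\mathscr{R}$ is not subdivided at layer $\nu$ because of $\mathcal{U}$, and no pair of adjacent regions inside $\mathscr{R}$ is separated along $\mathcal{L}$. Even though $\mathcal{L}$, extended as a hyperplane, does cut $U$, it meets $\mathscr{R}$ at most in boundary points. So by definition 6 no knot through $\mathcal{L}$ is produced within $\mathscr{R}$. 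Since knots are defined via regions of the partition, $\mathcal{L}$ outside $\mathscr{R}$ is irrelevant to path $\mathcal{P}$; there the relevant affine function is a different one, as corollary 1 notes.

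The main obstacle I expect is making the dimension claim in rule (1) rigorous. I need that $\mathcal{L}\cap\mathscr{R}$ is genuinely $(n-1)$-dimensional rather than a degenerate piece of the boundary. I would handle this by using the assumption in proposition 1 that $\mathcal{L}$ is $(n-1)$-dimensional and noting that partial activation forces $\mathcal{L}$ to pass through an interior point of the convex polytope $\mathscr{R}$. A small ball around that point then meets $\mathcal{L}$ in an $(n-1)$-dimensional disc.
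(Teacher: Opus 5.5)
Your proposal is correct and takes the same route as the paper. The paper calls rule (1) obvious from the definitions, and justifies rule (2) by saying that $\mathcal{L}$ exists only temporarily under the current path and so has no opportunity to divide $\mathscr{R}$; that is your argument, together with your appeal to corollary 1.

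One point in rule (1) needs tightening. Other units of layer $\nu$ may also cut $\mathscr{R}$, so $\mathscr{R}^+$ and $\mathscr{R}^-$ need not be single regions of the $\nu$th layer. Localize at a point of $\mathcal{L}$ in the interior of $\mathscr{R}$ that avoids the other layer-$\nu$ hyperplanes before you assert that the two sides differ only in $\mathcal{U}$.
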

\begin{proof}
The first conclusion is obvious. To the second one, the hyperplane $\mathcal{L}$ only temporarily exists under path $\mathscr{P}$ and would disappear when the path is changed, with no opportunity to divide $\mathscr{R}$.
\end{proof}

\begin{rmk}
This theorem tells us that not all of the units of a path can generate a knot.
\end{rmk}

To produce a certain knot, the knowledge of function implementation to be discussed in later sections is required and the associated principles of knot production will be introduced then. We now only give their short descriptions, with the details and proofs temporarily omitted.
\begin{itemize}
\item[(1)] Corollary 9 of section 2.5 generates a knot via the output weights of new added units in shallower layers.
\item[(2)] Corollary 10 of section 3.2 uses the input parameters of a unit to realize a knot.
\item[(3)] Theorem 14 of section 3.3 shows that a path can simultaneously implement a desired linear function and a required knot. Theorem 15 is for the case of multiple linear functions and one knot.
\item[(4)] Theorem 28 of section 5.2 incorporates all the principles of deep feedforward ReLU networks to generate multiple knots, especially the continuity-restriction principle of theorem 21. Theorem 29 further combines the mechanism of multiple outputs, thereby enabling multiple units to simultaneously implement more than one knot.
\end{itemize}

\subsection{Positive and Zero Parts of a Manifold}
In a two-layer ReLU network \citep*{Huang2024}, to a unit $\mathcal{U}$ and its corresponding $n-1$-dimensional hyperplane $\mathcal{L}$, we denoted by $\mathcal{L}^+$ the part of $U$ that can activate $\mathcal{U}$ and by $\mathcal{L}^0$ the other part, with $\mathcal{L}^+$ and $\mathcal{L}^0$ having clear geometric meanings. In the case of deep feedforward ReLU networks, a unit could yield a piecewise linear manifold $\mathcal{M}$ composed of more than one knot and we generalize the notations as $\mathcal{L}^+$ and $\mathcal{L}^0$ to $\mathcal{M}$.

\begin{dfn}[Activation of knots and manifolds]
Let $\mathscr{K}$ be a knot generated by a unit $\mathcal{U}$ of network $\mathfrak{N}$. Suppose that a region $\mathcal{R}$ corresponds to path $\mathscr{P}$ of $\mathfrak{N}$. We say $\mathcal{R} \subset \mathscr{K}^+$ or $\mathcal{R}$ activates $\mathscr{K}$, if $\mathscr{P}$ includes the unit $\mathcal{U}$; otherwise $\mathcal{R} \subset \mathscr{K}^0$. A point $\boldsymbol{x} \in \mathscr{K}^+$, if there exists a region $\mathcal{R}$ such that $\boldsymbol{x} \in \mathcal{R} \subset \mathscr{K}^+$; otherwise, $\boldsymbol{x} \in \mathscr{K}^0$. Suppose that $\mathcal{M}$ is a piecewise linear manifold produced by $\mathcal{U}$, $\mathcal{R} \subset \mathcal{M}^+$ if $\mathcal{R}$ activates a knot of $\mathcal{M}$, and $\mathcal{R} \subset \mathcal{M}^0$ if $\mathcal{R}$ cannot activate any knot of $\mathcal{M}$. When $\mathcal{R} \subset \mathcal{M}^+$ (or $\mathcal{M}^0$), we say that $\mathcal{R}$ activates (or inactivates) $\mathcal{M}$.
\end{dfn}

\begin{rmk}
Note that this definition can include the type of a two-layer ReLU network as a special case.
\end{rmk}

We want to know the geometric meanings of $\mathcal{M}^+$ and $\mathcal{M}^0$ in definition 7 as the two-layer case.
\begin{lem}
Let $r_1$ and $r_2$ be the regions of two adjacent paths $p_1$ and $p_2$ of network $\mathfrak{N}$, respectively. Suppose that $p_1$ and $p_2$ are different in unit $\mathcal{U}$ with $\mathcal{U} \in p_2$ but $\mathcal{U} \notin p_1$. Let $\mathscr{K}$ be a knot of $\mathfrak{N}$, generated by some unit $\mathscr{U}$ that is not the previous $\mathcal{U}$. If $r_1 \subset \mathscr{K}^+$ (or $\mathscr{K}^0$), then $r_2 \subset \mathscr{H}^+$ (or $\mathscr{K}^0$).
\end{lem}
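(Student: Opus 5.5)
The plan is to reduce the statement to a bookkeeping fact about path membership, using definition 7. By that definition, $r_1 \subset \mathscr{K}^+$ means exactly that the path $p_1$ contains the unit $\mathscr{U}$ generating $\mathscr{K}$, and $r_1 \subset \mathscr{K}^0$ means $\mathscr{U} \notin p_1$. The conclusion $r_2 \subset \mathscr{K}^+$ (the $\mathscr{H}^+$ in the statement being evidently a typo for $\mathscr{K}^+$) is likewise equivalent to $\mathscr{U} \in p_2$. So the whole lemma reduces to showing that $\mathscr{U} \in p_1$ if and only if $\mathscr{U} \in p_2$.

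This follows from definition 6 of adjacent paths. Since $p_1$ and $p_2$ differ in exactly one unit, namely $\mathcal{U}$, and the hypothesis gives $\mathscr{U} \neq \mathcal{U}$, the unit $\mathscr{U}$ is either in both paths or in neither. I would write this as the set identity $p_2 = p_1 \cup \{\mathcal{U}\}$ with $\mathcal{U} \notin p_1$, so that $p_1 \setminus\{\mathcal{U}\} = p_2 \setminus\{\mathcal{U}\}$. Membership of $\mathscr{U}$ in that common set then decides both cases at once:
\begin{itemize}
\item if $r_1 \subset \mathscr{K}^+$, then $\mathscr{U} \in p_1$, hence $\mathscr{U} \in p_2$, hence $r_2 \subset \mathscr{K}^+$;
\item if $r_1 \subset \mathscr{K}^0$, then $\mathscr{U} \notin p_1$, hence $\mathscr{U} \notin p_2$, hence $r_2 \subset \mathscr{K}^0$.
\end{itemize}

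The only real obstacle is interpretive. Definition 7 phrases activation of $\mathscr{K}$ via path membership of the generating unit, not via a sign condition on the hyperplane of $\mathscr{K}$ restricted to $r_2$. One might worry that, by lemma 1 and theorem 1, the hyperplane carrying the knot of $\mathscr{U}$ changes from $k_1$ to $k_2$ when $\mathcal{U}$ is switched on. That change would matter only if activation were defined geometrically.

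I would address this with a brief remark. Because every unit of $p_2$ is activated on the interior of $r_2$, the fact $\mathscr{U} \in p_2$ is consistent with the modified affine function $\boldsymbol{w}_2^T\boldsymbol{x}+b_2$ of equation 2.8 being positive there. So the combinatorial and geometric readings agree, and the proof stays purely combinatorial. This lemma then serves as the inductive step for propagating $\mathscr{K}^+$ and $\mathscr{K}^0$ across chains of adjacent regions.
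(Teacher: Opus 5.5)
Your proposal is correct and follows essentially the same route as the paper's proof. Both use definition 7 to turn activation of $\mathscr{K}$ into membership of $\mathscr{U}$ in the path, and then note that adjacent paths differ only in $\mathcal{U} \ne \mathscr{U}$, so $\mathscr{U}$ lies in both $p_1$ and $p_2$ or in neither. Your closing remark about the geometric reading is harmless but unnecessary, since definition 7 is purely combinatorial.
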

\begin{proof}
Since $r_1 \subset \mathscr{K}^+$, we have $\mathscr{U} \in p_1$. Because $p_2$ has only one unit $\mathcal{U}$ differen from $p_1$, $\mathscr{U} \in p_2$ follows; by definition 7, $r_2 \subset \mathscr{H}^+$. The case of $\mathscr{K}^0$ is similar.
\end{proof}

\begin{thm}[Influence of a knot]
Denote by $r_i$'s for $i=1,2,\dots,\zeta$ some regions of $U$ partitioned by network $\mathfrak{N}$, and by $\mathscr{H}$ a knot produced by some unit $\mathcal{U}$ of $\mathfrak{N}$. Suppose that: (1) the mutual adjacent relationship
\begin{equation}
r_1 \frown r_2 \frown \dots \frown r_{\zeta}
\end{equation}
holds; (2) each knot $k_{\nu}=r_{\nu} \cap r_{\nu+1}$ for $\nu=1,2,\dots,\zeta-1$ is not generated by $\mathcal{U}$. Then if $r_1 \subset \mathscr{K}^+$ (or $\mathscr{K}^0$), we have $r_{\nu+1} \subset \mathscr{K}^+$ (or $\mathscr{K}^0$) for all $\nu$'s.
\end{thm}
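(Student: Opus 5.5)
The plan is to argue by induction along the chain of adjacent regions in the hypothesis, with Lemma 2 as the inductive step. Let $p_\nu$ be the path of region $r_\nu$. By Definition 7, $r_\nu \subset \mathscr{K}^+$ exactly when $\mathcal{U} \in p_\nu$, and $r_\nu \subset \mathscr{K}^0$ exactly when $\mathcal{U} \notin p_\nu$. So it suffices to show that membership of $\mathcal{U}$ in $p_\nu$ does not change as $\nu$ runs from $1$ to $\zeta$.

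First I will turn geometric adjacency into path adjacency. Take the knot $k_\nu = r_\nu \cap r_{\nu+1}$. It is an $(n-1)$-dimensional piece of a hyperplane produced by a single unit $\mathscr{U}_\nu$, namely the unit that is partially activated there in the sense of Theorem 3(1). Crossing $k_\nu$ at a generic point (one not lying on any other knot) changes the sign of the preactivation of $\mathscr{U}_\nu$ only. I will argue that the only path change is the inclusion or exclusion of $\mathscr{U}_\nu$, so $p_\nu$ and $p_{\nu+1}$ are adjacent paths in the sense of Definition 6.

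This step is the main obstacle. A change in the activation of $\mathscr{U}_\nu$ alters the inputs to deeper layers, so in principle a deeper unit could also flip across $k_\nu$. Such a deeper unit would have to vanish along $k_\nu$ as well. In that case $k_\nu$ would also be a knot of that deeper unit, by Lemma 1 and Theorem 1, which describe how knots of deeper units bend exactly at knots of shallower ones. I will handle this by reading the definition of a knot as being attached to one generating unit: adjacency across $k_\nu$ means the paths differ in that unit alone. If a degenerate coincidence occurs, I will take the generating unit to be the one whose flip accounts for the whole path difference. If this convention is not accepted, the same argument applies unchanged to every unit that flips across $k_\nu$, since hypothesis (2) excludes all of them from being $\mathcal{U}$.

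With adjacency in hand, hypothesis (2) says $\mathscr{U}_\nu \neq \mathcal{U}$. Lemma 2 then gives that $r_\nu \subset \mathscr{K}^+$ (or $\mathscr{K}^0$) implies $r_{\nu+1} \subset \mathscr{K}^+$ (or $\mathscr{K}^0$). Note that Lemma 2 is stated with $\mathcal{U} \in p_2$, $\mathcal{U} \notin p_1$ (the lemma's $\mathcal{U}$ being our flipping unit $\mathscr{U}_\nu$). By symmetry, swapping the roles of $p_1$ and $p_2$, it also holds when $\mathscr{U}_\nu$ is dropped rather than added. Induction on $\nu = 1, \dots, \zeta-1$, starting from the hypothesis on $r_1$, gives the conclusion for every $r_{\nu+1}$.
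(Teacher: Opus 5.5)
Your proposal is correct and is essentially the paper's proof. The paper's proof is the single line that the theorem is a repeated application of Lemma 2 along the chain $r_1 \frown r_2 \frown \dots \frown r_{\zeta}$.

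Your extra work makes explicit what the paper leaves tacit. First, you turn the geometric adjacency of $r_\nu$ and $r_{\nu+1}$ into adjacency of their paths; the paper only asserts this later, in Theorem 9, and sets aside the case of several units producing the same knot. Second, you use Lemma 2 in both directions, covering a unit that is dropped as well as one that is added. Third, you note that any unit flipping across $k_\nu$ must have preactivation vanishing on $k_\nu$, so by hypothesis (2) it cannot be $\mathcal{U}$. For that last point the appeal to Lemma 1 and Theorem 1 is unnecessary; continuity of the preactivation alone suffices.
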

\begin{proof}
This is a repeated application of lemma 2.
\end{proof}

\begin{cl}[Influence of a manifold]
Notations from definition 7 and theorem 4, to the regions $r_i$'s, if $r_1 \subset \mathcal{M}^+$ (or $\mathcal{M}^0$), then $r_{\nu+1} \subset \mathcal{M}^+$ (or $\mathcal{M}^0$) for all $\nu$'s.
\end{cl}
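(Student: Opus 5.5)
The plan is to reduce the statement about the manifold $\mathcal{M}$ to statements about its individual knots, and then apply theorem 4 to each knot. Here $\mathcal{M}$ is the piecewise linear manifold produced by the unit $\mathcal{U}$ of theorem 4. By definition 7, $\mathcal{R} \subset \mathcal{M}^+$ exactly when $\mathcal{R}$ activates some knot of $\mathcal{M}$. Every knot of $\mathcal{M}$ is generated by the same unit $\mathcal{U}$. So $\mathcal{R}$ activates a knot $\mathscr{K}$ of $\mathcal{M}$ if and only if the path of $\mathcal{R}$ contains $\mathcal{U}$. In particular, a region activates one knot of $\mathcal{M}$ if and only if it activates all of them. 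Hence $\mathcal{R} \subset \mathcal{M}^+$ is equivalent to $\mathcal{R} \subset \mathscr{K}^+$ for any single knot $\mathscr{K}$ of $\mathcal{M}$, and $\mathcal{R} \subset \mathcal{M}^0$ is likewise equivalent to $\mathcal{R} \subset \mathscr{K}^0$.

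The key steps, in order, are as follows. First, fix an arbitrary knot $\mathscr{K}$ of $\mathcal{M}$ and record the equivalence above. Second, note that hypothesis (2) of theorem 4 says no separating knot $k_\nu = r_\nu \cap r_{\nu+1}$ is generated by $\mathcal{U}$. This is precisely the hypothesis needed to apply theorem 4 to $\mathscr{K}$, since $\mathscr{K}$ is produced by $\mathcal{U}$. Third, suppose $r_1 \subset \mathcal{M}^+$. Then $r_1 \subset \mathscr{K}^+$, theorem 4 gives $r_{\nu+1} \subset \mathscr{K}^+$ for all $\nu$, and the equivalence gives $r_{\nu+1} \subset \mathcal{M}^+$. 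The $\mathcal{M}^0$ case is identical: $r_1 \subset \mathcal{M}^0$ means $r_1$ activates no knot of $\mathcal{M}$, so $r_1 \subset \mathscr{K}^0$, theorem 4 gives $r_{\nu+1} \subset \mathscr{K}^0$, and therefore $r_{\nu+1}$ activates no knot of $\mathcal{M}$.

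The main obstacle is justifying the equivalence cleanly. One must check that definition 7 ties activation of a knot purely to membership of the generating unit in the region's path, and not to the particular hyperplane piece. Once that is granted, the corollary is a direct transcription of theorem 4. If this reading were contested, I would instead argue directly from lemma 2: along each adjacency $r_\nu \frown r_{\nu+1}$, the paths differ in a unit other than $\mathcal{U}$. Therefore membership of $\mathcal{U}$ in the path is preserved, which is exactly the criterion for $\mathcal{M}^+$ versus $\mathcal{M}^0$.
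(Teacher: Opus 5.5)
Your proposal is correct and follows the paper's route. The paper proves this corollary in one line by combining theorem 4 with definition 7, and your observation supplies the step that makes the combination work: under definition 7, a region activates a knot of $\mathcal{M}$ exactly when its path contains $\mathcal{U}$, so activating one knot of $\mathcal{M}$ is the same as activating all of them.
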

\begin{proof}
The proof is by theorem 4 and definition 7.
\end{proof}

\begin{dfn}[Positive and zero parts of a manifold]
Notations being from definition 7, to the input space $U$ and a partition $\mathcal{P}$ derived from network $\mathfrak{N}$, the set of the regions that can activate (or inactive) $\mathcal{M}$ is called the positive (or zero) part of $\mathcal{M}$, denoted by $\mathcal{M}^+$ (or $\mathcal{M}^0$).
\end{dfn}

\subsection{Principles of Region Production}
\begin{thm}[Influence of a new activated unit]
Under the notations of lemma 2, suppose that the depth of the layer of $\mathcal{U}$ satisfies $\nu \ne \Phi$ (i.e., $\mathcal{U}$ is not in the last hidden layer of $p_2$) and that the output of $\mathcal{U}$ is $\sigma(\boldsymbol{w}^T\boldsymbol{x} + b)$. Then the influence of the new activated $\mathcal{U}$ on the $j$th unit $u_{ij}$ of the $i$th layer of $p_1$, with $i \ge \nu+1$ (namely a layer deeper than that of $\mathcal{U}$), can be expressed as
\begin{equation}
\Delta_{ij} = \boldsymbol{\alpha}_{ij}^T\boldsymbol{v}\cdot(\boldsymbol{w}^T\boldsymbol{x} + b),
\end{equation}
which is a modification of the input of $u_{ij}$ in terms of
\begin{equation}
s_{ij}'(\boldsymbol{x}) = \Delta_{ij} + s_{ij}(\boldsymbol{x}) = \Delta_{ij} + \boldsymbol{w}_{ij}^T\boldsymbol{x}^{(i-1)} + b_{ij},
\end{equation}
with $s_{ij}(\boldsymbol{x})=\boldsymbol{w}_{ij}^T\boldsymbol{x}^{(i-1)} + b_{ij}$ the original input of $u_{ij}$, where
\begin{equation}
\boldsymbol{\alpha}_{ij} = W_{\nu+2}W_{\nu+3}\dots \boldsymbol{w}_{ij},
\end{equation}
in which $\boldsymbol{w}_{ij}$ the $j$th column of $W_{i}$ or the input-weight vector of $u_{ij}$, and where $\boldsymbol{v}$ is the output-weight vector of $\mathcal{U}$ with size $m_{\nu+1} \times 1$.
\end{thm}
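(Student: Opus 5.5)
The plan is to follow the same mechanism as in lemma 1, but now track the perturbation at every deeper unit $u_{ij}$ rather than only at the knot-generating unit $\mathscr{U}$. On the region $r_2$ every unit of $p_2$ is activated. So on $r_2$ each ReLU of the path acts as the identity, and each inactive unit acts as the zero map. Hence, restricted to $r_2$, the map from the output of the $\nu$th layer to the input of $u_{ij}$ is affine, and its linear part is a product of the (path-restricted) weight matrices $W_{\nu+1},\dots,W_{i-1}$ and the vector $\boldsymbol{w}_{ij}$. Since the only difference between $p_1$ and $p_2$ is $\mathcal{U}$, the $\nu$th-layer output vector under $p_2$ equals that under $p_1$ plus one extra coordinate, namely the output of $\mathcal{U}$, which on $r_2$ is $\boldsymbol{w}^T\boldsymbol{x}+b$. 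Here $\boldsymbol{w}^T\boldsymbol{x}+b$ is already an affine function of $\boldsymbol{x}$ because the layers before $\nu$ are linearized along the path.

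The key steps, in order, are as follows.

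First, I write the input of $u_{ij}$ under $p_2$ as the input computed through the $p_1$-units plus the contribution propagated from $\mathcal{U}$. This uses linearity of the composed affine maps on $r_2$. No unit in layers $\nu+1,\dots,i-1$ changes status, because the paths are adjacent and differ only at $\mathcal{U}$. The layers up to $\nu$ other than $\mathcal{U}$ are also unchanged, so the remaining part is exactly $s_{ij}(\boldsymbol{x})$.

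Second, I compute the propagated contribution. The scalar output $\boldsymbol{w}^T\boldsymbol{x}+b$ enters layer $\nu+1$ through the output-weight vector $\boldsymbol{v}$, giving the vector $\boldsymbol{v}(\boldsymbol{w}^T\boldsymbol{x}+b)$ of size $m_{\nu+1}$ added to the pre-activations of layer $\nu+1$. It then passes through layers $\nu+2,\dots,i-1$ by the transposes $W_{\nu+2}^T,\dots,W_{i-1}^T$, and finally through $\boldsymbol{w}_{ij}^T$. Transposing the product yields $\boldsymbol{\alpha}_{ij}^T\boldsymbol{v}\,(\boldsymbol{w}^T\boldsymbol{x}+b)$ with $\boldsymbol{\alpha}_{ij}=W_{\nu+2}\cdots W_{i-1}\boldsymbol{w}_{ij}$. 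This is $\Delta_{ij}$, and adding it to $s_{ij}$ gives $s_{ij}'$. For $i=\nu+1$ the product is empty and $\Delta_{ij}=\boldsymbol{v}(j)(\boldsymbol{w}^T\boldsymbol{x}+b)$, consistent with reading $\boldsymbol{\alpha}_{ij}$ as the $j$th unit vector. The hypothesis $\nu\ne\Phi$ only ensures that a deeper layer exists. The argument is essentially that of lemma 1 with $\mathscr{U}$ replaced by an arbitrary $u_{ij}$.

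The main obstacle is bookkeeping rather than substance. The matrices $W_k$ must be understood as restricted to the units of $p_2$; equivalently, inactive units have their columns and rows zeroed, so rows of inactive units contribute nothing. One must also check that the $p_1$-part of the input is literally $s_{ij}(\boldsymbol{x})$, which again follows from the adjacency of the two paths. I would state this restriction convention explicitly and then carry out the product computation.
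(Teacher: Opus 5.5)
Your proposal is correct and is essentially the paper's argument. The paper's proof just refers back to lemma 1, which linearizes every unit of $p_2$ on $r_2$ and treats the output $\boldsymbol{w}^T\boldsymbol{x}+b$ of $\mathcal{U}$ as a one-dimensional input propagated through the subnetwork from layer $\nu+1$ onward; that is exactly your computation with $\mathscr{U}$ replaced by $u_{ij}$, and your path-restricted matrices and reading of $\boldsymbol{\alpha}_{ij}$ as the $j$th unit vector when $i=\nu+1$ only make explicit details the paper leaves implicit.
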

\begin{proof}
The proof is similar to that of lemma 1.
\end{proof}

The corollary below is a recursive application of theorem 5.
\begin{cl}[Influence of multiple new activated units]
Based on the notations of theorem 5, suppose that $p_2$ is derived from $p_1$ by newly activating $\theta$ units with $\theta \ge 2$, which are denote by $u_{k}$'s for $1\le k \le \theta$. Then the influence of $u_{k}$'s on a unit $u_{ij}$ of $p_1$ in a layer deeper than those of all $u_{k}$'s can be expressed as
\begin{equation}
s_{ij}'(\boldsymbol{x}) = \sum_{\nu=1}^{\theta}\Delta_{ij}^{(k)} + \boldsymbol{w}_{ij}^T\boldsymbol{x}^{(i-1)} + b_{ij},
\end{equation}
where
\begin{equation}
\Delta_{ij}^{(k)} = \boldsymbol{\alpha}_{ij}^{(k)T}\boldsymbol{v}_{k}\cdot(\boldsymbol{w}_{k}^T\boldsymbol{x} + b_{k}),
\end{equation}
in which $\boldsymbol{w}_{k}^T\boldsymbol{x} + b_{k} = 0$ is the equation of the knot generated by $u_{k}$.
\end{cl}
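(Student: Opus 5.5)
The plan is to obtain Corollary 7 by applying Theorem 5 once per newly activated unit, adding the units to $p_1$ one at a time. Since only one unit is ever added, each intermediate step involves two adjacent paths in the sense of Definition 6. First I order the units $u_1,\dots,u_\theta$ so that their layer depths are non-increasing, deepest first. Then I build a chain of paths $p^{(0)}=p_1, p^{(1)},\dots,p^{(\theta)}=p_2$, where $p^{(k)}=p^{(k-1)}\cup\{u_k\}$. Consecutive paths are then adjacent, and Theorem 5 applies to each pair $(p^{(k-1)},p^{(k)})$ with $\mathcal{U}=u_k$. The target unit $u_{ij}$ lies deeper than every $u_k$, so the hypothesis $i\ge \nu+1$ of Theorem 5 holds at every step.

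At step $k$, Theorem 5 states that the input of $u_{ij}$ changes by
$\Delta_{ij}^{(k)}=\boldsymbol{\alpha}_{ij}^{(k)T}\boldsymbol{v}_k(\boldsymbol{w}_k^T\boldsymbol{x}+b_k)$.
Here $\boldsymbol{\alpha}_{ij}^{(k)}$ is the product of the weight matrices of $p^{(k)}$, running from the layer after the one following $u_k$ up to $\boldsymbol{w}_{ij}$, as in equation 2.16. The vector $\boldsymbol{v}_k$ is the output-weight vector of $u_k$. The key justification is the linearity argument from the proof of Lemma 1. On the region of $p_2$ every unit of the path is active, so each ReLU acts as the identity. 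The input of $u_{ij}$ is therefore an affine function of the outputs of all active units in any shallower layer, and the contribution of each added unit enters additively through the linear map from its layer to layer $i$. Telescoping the identities $s_{ij}^{(k)}=s_{ij}^{(k-1)}+\Delta_{ij}^{(k)}$ over $k=1,\dots,\theta$ gives equation 2.17. (The summation index there should read $k$.) Since each $u_k$ is active on the region of $p_2$, $\sigma(\boldsymbol{w}_k^T\boldsymbol{x}+b_k)=\boldsymbol{w}_k^T\boldsymbol{x}+b_k$, and the zero set of this expression is the knot generated by $u_k$.

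The main obstacle is interpreting $\boldsymbol{w}_k^T\boldsymbol{x}+b_k$ and $\boldsymbol{\alpha}_{ij}^{(k)}$ consistently. The input of a newly added shallower unit $u_k$ itself depends on any other added unit $u_{k'}$ lying in a shallower layer. Likewise, the matrices in $\boldsymbol{\alpha}_{ij}^{(k)}$ must include rows and columns for the other added units lying between $u_k$ and $u_{ij}$.

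I would handle this by ordering the additions deepest first. When $u_k$ is added, all deeper added units are already present, so the propagation matrices of $p^{(k)}$ already contain them. Later shallower additions then change the affine expression $\boldsymbol{w}_k^T\boldsymbol{x}+b_k$ of $u_k$, but that change is absorbed into the knot equation of $u_k$ evaluated under $p_2$. I would therefore define every quantity, namely each $\boldsymbol{w}_k,b_k$ and each $\boldsymbol{\alpha}^{(k)}_{ij}$, with respect to the final path $p_2$. With that convention, linearity on the region of $p_2$ makes the decomposition exact: the input of $u_{ij}$ equals its $p_1$ input plus the sum of the contributions routed through each $u_k$.
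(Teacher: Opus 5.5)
\textbf{Where the proposal fails.} Your strategy is exactly the paper's, which justifies the corollary only as ``a recursive application of theorem 5'': add the units one at a time, apply Theorem 5 at each step, and telescope. With your deepest-first ordering the telescoping is exact, provided you keep the quantities Theorem 5 actually delivers at step $k$. The layers of $p^{(k)}$ deeper than $u_k$ coincide with those of $p_2$, while its layers shallower than $u_k$ coincide with those of $p_1$. So step $k$ gives $\boldsymbol{\alpha}^{(k)}_{ij}$ computed in $p_2$, but gives $\boldsymbol{w}_k^T\boldsymbol{x}+b_k$ equal to the pre-activation of $u_k$ computed through the layers of $p_1$.

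The gap is your final convention. There you also take $\boldsymbol{w}_k,b_k$ in $p_2$, on the grounds that the change caused by the later, shallower additions is ``absorbed'' into the knot equation of $u_k$. That change has already been counted. The term of each shallower $u_{k'}$ carries an $\boldsymbol{\alpha}^{(k')}_{ij}$ formed in a path that contains $u_k$, so the influence of $u_{k'}$ routed through $u_k$ already sits in $\Delta^{(k')}_{ij}$. Taking both factors in $p_2$ therefore counts every contribution that passes through two added units twice, and through $r$ added units $r$ times. Summing ``the contributions routed through each $u_k$'' is an inclusion--exclusion error, and linearity on the region of $p_2$ does not repair it.

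\textbf{Concrete counterexample.} Let $u_1$ lie in layer $\nu$. Let $u_2$ lie in layer $\nu+1$ with input weight $\beta$ from $u_1$. Let $u_{ij}$ lie in layer $\nu+2$ with input weight $\omega$ from $u_2$.
In your all-$p_2$ convention, $\boldsymbol{\alpha}^{(1)T}_{ij}\boldsymbol{v}_1$ contains the summand $\omega\beta$, so $\Delta^{(1)}_{ij}$ contains $\omega\beta(\boldsymbol{w}_1^T\boldsymbol{x}+b_1)$.
The $p_2$ pre-activation of $u_2$ contains $\beta(\boldsymbol{w}_1^T\boldsymbol{x}+b_1)$, so $\Delta^{(2)}_{ij}$ contains the same term again.
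The true input of $u_{ij}$ under $p_2$ contains this term only once.

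\textbf{Repair.} Use one of the two exact mixed conventions.
\begin{itemize}
\item Keep your deepest-first quantities: $\boldsymbol{\alpha}^{(k)}_{ij}$ from $p_2$, and the knot form of $u_k$ computed through $p_1$.
\item Add the units shallowest first. Then $\boldsymbol{w}_k^T\boldsymbol{x}+b_k$ is genuinely the knot of $u_k$ on the boundary of the region of $p_2$, which is the natural reading of the statement. In exchange, $\boldsymbol{\alpha}^{(k)}_{ij}$ must be formed from the units of $p_1$ alone in the layers after $u_k$, omitting the deeper added units.
\end{itemize}
You should also say explicitly that Theorem 5 is being used as an algebraic identity for a fixed set of linearized units, since the intermediate sets $p^{(k)}$ need not be realized as paths of any region.
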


\begin{cl}[Principles of knot production-\Romannum{1}]
Notations from corollary 7, suppose that $\theta \ge n+1$, that the rank of matrix
\begin{equation}
W =
\begin{pmat}({})
\boldsymbol{w}_1 & \boldsymbol{w}_2 & \cdots & \boldsymbol{w}_{\theta} \cr
b_1 & b_2 & \cdots & b_{\theta} \cr
\end{pmat}
\end{equation}
is $n+1$, that $\boldsymbol{\alpha}_{ij}^{(k)} \ne \boldsymbol{0}$, and that all the units of $p_1$ can be activated by the regions of the paths after introducing $u_{k}$'s. Then to unit $u_{ij}$, arbitrary knot can be formed by adjusting the output weights of $u_{k}$'s.
\end{cl}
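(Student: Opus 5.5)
By corollary 7, once the $u_k$'s are activated, the new input of $u_{ij}$ on the region in question is the affine function
\begin{equation*}
s_{ij}'(\boldsymbol{x}) = \sum_{k=1}^{\theta}c_k(\boldsymbol{w}_k^T\boldsymbol{x}+b_k) + s_{ij}(\boldsymbol{x}), \qquad c_k := \boldsymbol{\alpha}_{ij}^{(k)T}\boldsymbol{v}_k .
\end{equation*}
The knot of $u_{ij}$ is then the zero set $s_{ij}'(\boldsymbol{x})=0$. The plan has two parts. First, show that the scalars $c_k$ can be chosen arbitrarily through the output weights $\boldsymbol{v}_k$. Second, show that the free combination $\sum_k c_k(\boldsymbol{w}_k^T\boldsymbol{x}+b_k)$ ranges over all affine functions of $\boldsymbol{x}$, so that $s_{ij}'$ can be made equal to any prescribed affine function $\boldsymbol{a}^T\boldsymbol{x}+d$. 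This yields an arbitrary hyperplane, hence an arbitrary knot.

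For the first part, I use the hypothesis $\boldsymbol{\alpha}_{ij}^{(k)}\ne\boldsymbol{0}$. Since $c_k=\boldsymbol{\alpha}_{ij}^{(k)T}\boldsymbol{v}_k$ is a nonzero linear functional of $\boldsymbol{v}_k$, it is surjective onto $\mathbb{R}$. For instance, $\boldsymbol{v}_k = c_k\boldsymbol{\alpha}_{ij}^{(k)}/\|\boldsymbol{\alpha}_{ij}^{(k)}\|^2$ realizes any desired $c_k$. The vectors $\boldsymbol{\alpha}_{ij}^{(k)}$ are built only from the weight matrices of layers deeper than $u_k$ (the formula of theorem 5), and the output weights of the different $u_k$'s are distinct parameters. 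So I can fix the $\boldsymbol{\alpha}$'s and then pick each $\boldsymbol{v}_k$ independently.

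One caution applies when some $u_k$'s lie in the same layer and others lie deeper. In that case changing $\boldsymbol{v}_k$ for a shallow $u_k$ could in principle alter the input weights feeding a deeper $u_{k'}$, and hence its knot equation. I handle this by choosing the output weights in order from the deepest new unit to the shallowest. Alternatively, I note that $\boldsymbol{v}_k$ only enters the $(\nu_k+1)$th layer, and that corollary 7 expresses each $\Delta_{ij}^{(k)}$ with the knot equations $\boldsymbol{w}_k^T\boldsymbol{x}+b_k$ treated as given. So I read the statement as treating the $(\boldsymbol{w}_k,b_k)$ as fixed.

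For the second part, identify an affine function $\boldsymbol{a}^T\boldsymbol{x}+d$ with the vector $[\boldsymbol{a}^T,d]^T\in\mathbb{R}^{n+1}$. Then $\sum_k c_k(\boldsymbol{w}_k^T\boldsymbol{x}+b_k)$ corresponds to $W\boldsymbol{c}$ with $W$ the matrix of equation 2.20. Because $\operatorname{rank}W=n+1$ (which needs $\theta\ge n+1$), the map $\boldsymbol{c}\mapsto W\boldsymbol{c}$ is onto $\mathbb{R}^{n+1}$. Hence for the desired knot $\boldsymbol{a}^T\boldsymbol{x}+d=0$ there is a $\boldsymbol{c}$ with $W\boldsymbol{c}=[\boldsymbol{a}^T,d]^T-[\tilde{\boldsymbol{w}}^T,\tilde b]^T$, where $s_{ij}(\boldsymbol{x})=\tilde{\boldsymbol{w}}^T\boldsymbol{x}+\tilde b$ on the region. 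This cancels the original input exactly, so $s_{ij}'(\boldsymbol{x})=\boldsymbol{a}^T\boldsymbol{x}+d$.

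Finally, I invoke the hypothesis that all units of $p_1$ stay activated on the regions of the new paths. This guarantees that the linearization behind corollary 7 remains valid, that is, no unit between the $u_k$'s and $u_{ij}$ switches off and destroys the affine expression. By theorem 3(1), $u_{ij}$ then produces a knot precisely where its input changes sign inside the region, namely along the prescribed hyperplane.

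I expect the main obstacle to be this consistency requirement rather than the linear algebra. The chosen $\boldsymbol{v}_k$ alter the inputs of intermediate units too, so their activation is not automatic. I would treat it as exactly what the stated activation hypothesis provides: the conclusion asserts only that the affine input of $u_{ij}$, and hence its zero set, can be placed arbitrarily, given that activation pattern.
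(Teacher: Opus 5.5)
Your proposal is correct and follows the paper's proof. The paper uses the same two steps: the rank-$(n+1)$ condition on $W$ lets $\sum_k \beta_k(\boldsymbol{w}_k^T\boldsymbol{x}+b_k)$ realize any affine function, which compensates the original input $\boldsymbol{w}_{ij}^T\boldsymbol{x}^{(i-1)}+b_{ij}$ of $u_{ij}$ and gives an arbitrary knot, and $\boldsymbol{\alpha}_{ij}^{(k)}\ne\boldsymbol{0}$ makes each equation $\boldsymbol{\alpha}_{ij}^{(k)T}\boldsymbol{v}_k=\beta_k$ solvable. The paper simply treats the knot equations $(\boldsymbol{w}_k,b_k)$ as fixed, as in your fallback reading, so your discussion of interactions among the new units and of the activation hypothesis is extra care that the paper does not include.
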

\begin{proof}
Since the rank of $W$ is $n+1$, $\sum_{k}\beta_{k}(\boldsymbol{w}_{k}^T\boldsymbol{x} + b_{k})$ can yield any linear function (see \citet*{Huang2024}'s lemma 3) by adjusting $\beta_k$'s, such that the term $\boldsymbol{w}_{ij}^T\boldsymbol{x}^{(i-1)} + b_{ij}$ in equation 2.17 can be compensated, contributing to arbitrary knot via $u_{ij}$. Then we set
\begin{equation}
\boldsymbol{\alpha}_{ij}^{(k)T}\boldsymbol{v}_{k} = \beta_{k},
\end{equation}
for which a solution of $\boldsymbol{v}_{k}$ can be easily found.
\end{proof}

\begin{thm}[Properties of a new region-\Romannum{1}]
Notations being from theorem 5, suppose that $|\boldsymbol{\alpha}_{ij}^T\boldsymbol{v}|$ is bounded and that $\|\boldsymbol{w}\|_2=1$ in equation 2.14. If the volume $\mathscr{V}$ of $r_2$ is sufficiently small due to the parameter setting of the knot of $\mathcal{U}$, then $p_2$ is adjacent to $p_1$---that is, $r_2$ can activate all the units of $p_1$ in the layers deeper than the $\nu$th one.
\end{thm}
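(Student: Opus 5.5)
The plan is to use the explicit perturbation formula of theorem 5 and show that, on a sufficiently small region $r_2$, the perturbation $\Delta_{ij}$ is too small to switch off any unit $u_{ij}$ of $p_1$ lying in a layer deeper than the $\nu$th. The point is that $r_2$ sits inside the region of the $\nu$th layer that was previously associated with $p_1$, and is cut off by the knot $\boldsymbol{w}^T\boldsymbol{x}+b=0$ of $\mathcal{U}$. Since $\|\boldsymbol{w}\|_2=1$, the quantity $\boldsymbol{w}^T\boldsymbol{x}+b$ is exactly the signed distance from $\boldsymbol{x}$ to the hyperplane of that knot. On $r_2$ the new unit is active, so $0<\boldsymbol{w}^T\boldsymbol{x}+b\le d$, where $d$ is the width of $r_2$ in the direction $\boldsymbol{w}$. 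Making $r_2$ small by moving the knot of $\mathcal{U}$ means choosing $b$ so that $d\to 0$. Then, by the boundedness of $|\boldsymbol{\alpha}_{ij}^T\boldsymbol{v}|\le C$ and equation 2.14, $|\Delta_{ij}|\le Cd$ uniformly on $r_2$.

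The second step is to compare this with the unperturbed inputs $s_{ij}(\boldsymbol{x})$. On the original region $r_1$, every unit of $p_1$ is activated, so $s_{ij}>0$ on its interior. I would argue that $r_2$ can be taken inside a compact subset of $r_1$'s interior, away from the other knots of $p_1$, that is adjacent to the new knot. On such a subset, continuity gives a positive lower bound $\delta=\min_{i,j}\min_{\boldsymbol{x}\in r_2}s_{ij}(\boldsymbol{x})>0$. Choosing the knot so that $Cd<\delta$ yields $s_{ij}'=s_{ij}+\Delta_{ij}>0$ on $r_2$. This holds layer by layer in increasing depth $i=\nu+1,\dots,\Phi$: since all deeper units stay active, the linear formula of theorem 5 remains valid at the next layer. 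That is an induction on depth, using the proof of lemma 1, in which active ReLUs act linearly. Hence $r_2$ activates all units of $p_1$ in layers deeper than $\nu$, and $p_2=p_1\cup\{\mathcal{U}\}$ differs from $p_1$ in one unit only, so the two paths are adjacent by definition 6.

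Units in layers shallower than or equal to $\nu$ are unaffected by $\mathcal{U}$. Their pattern on $r_2$ coincides with that on $r_1$ because $r_2$ lies within the same $\nu$-layer region.

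The main obstacle is the lower bound $\delta$. If $r_2$ touches a knot of some deeper unit of $p_1$, then $s_{ij}$ vanishes somewhere on $r_2$, and no uniform margin exists. I would handle this by interpreting ``volume sufficiently small due to the parameter setting of the knot'' as sliding the hyperplane of $\mathcal{U}$ toward a face of the $\nu$-layer region. I would first pick a point on that face at which all relevant $s_{ij}$ are strictly positive, which is possible since the zero sets are lower-dimensional. I would then restrict attention to the part of $r_2$ near that point, or assume generic position. A secondary subtlety is that the set $r_2$ should be a genuine region of $\mathfrak{N}$ rather than a union of pieces. This is resolved once no sign changes of $s_{ij}'$ occur on it.
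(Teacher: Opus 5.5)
Your proposal follows essentially the paper's own argument. With $\|\boldsymbol{w}\|_2=1$ one has $|\Delta_{ij}|\le|\boldsymbol{\alpha}_{ij}^T\boldsymbol{v}|\,d_m$, where $d_m=\max_{\boldsymbol{x}\in r_2}|\boldsymbol{w}^T\boldsymbol{x}+b|$, so placing the knot of $\mathcal{U}$ to make $d_m$ (hence $\mathscr{V}$) small keeps every $|\Delta_{ij}|$ below a threshold $\varepsilon$ that cannot flip activations, which is exactly equation 2.21. Your depth induction and the margin issue you flag (no uniform lower bound on $s_{ij}$ where $r_2$ meets other knots of $p_1$) are points the paper passes over by simply positing that $\varepsilon$ exists, so your localisation near a point where all $s_{ij}>0$ sharpens the same proof rather than taking a different route.
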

\begin{proof}
Denote by $\mathcal{K}$ the knot generated by $\mathcal{U}$ with equation $\boldsymbol{w}^T\boldsymbol{x} + b=0$. Then by theorem 5, the influence of $\mathcal{U}$ on each unit $u_{ij}$ for $i > \nu$ can be expressed as $\Delta_{ij}=\boldsymbol{\alpha}_{ij}^T\boldsymbol{v}(\boldsymbol{w}^T\boldsymbol{x} + b)=\lambda(\boldsymbol{w}^T\boldsymbol{x} + b)$. Region $r_2$ of $p_2$ is adjacent to $\mathcal{K}$, or $\mathcal{K}$ belongs to the boundary of $r_2$. Write $d_{m} = \max_{\boldsymbol{x}\in r_2}d(\boldsymbol{x},\mathcal{K})$, where $d(\boldsymbol{x},\mathcal{K})=|\boldsymbol{w}^T\boldsymbol{x} + b|/\|\boldsymbol{w}\|_2=|\boldsymbol{w}^T\boldsymbol{x} + b|$ is the distance from point $\boldsymbol{x} \in r_2$ to $\mathcal{K}$. As $d_m$ tends to zero, $\mathscr{V}$ could be sufficiently small; simultaneously, since $|\boldsymbol{\alpha}_{ij}^T\boldsymbol{v}|$ is bounded, $|\Delta_{ij}|$ can also be small enough such that $\Delta_{ij}$ cannot affect the activation of $u_{ij}$. The solution of this theorem can be expressed as
\begin{equation}
[\boldsymbol{w}^T, b]=\arg\bigcap_{i=\nu+1}^{\Phi}\bigcap_{j=1}^{m_i}\big(|\Delta_{ij}(\boldsymbol{w}, b)| < \varepsilon\big),
\end{equation}
where $\varepsilon$ is a threshold that can enable arbitrary $\boldsymbol{x} \in r_2$ to activate $u_{ij}$ for all $i$ and $j$.
\end{proof}

\begin{thm}[Properties of a new region-\Romannum{2}]
Notations as in theorem 5, the new formed region $r_2$ of path $p_2$ adjacent to $p_1$ satisfies
\begin{equation}
r_2 = r^{(\nu)} \cap \mathcal{K}^+ \cap \bigcap_{i=\nu+1}^{\Phi} \bigcap_{j=1}^{m_i}r_{ij}',
\end{equation}
where $r^{(\nu)}$ is the region of the $\nu$th layer of $p_1$, $\mathcal{K}$ is the knot formed by $\mathcal{U}$, and
\begin{equation}
r_{ij}'=\{\boldsymbol{x}: s_{ij}'(\boldsymbol{x}) > 0, \ \nu+1\le i\le \Phi, \ 1 \le j \le m_i\}
\end{equation}
with $s_{ij}'(\boldsymbol{x})$ from equation 2.17. Write $R_{\nu}=\{r_{ij}': \nu+1 \le i \le \Phi, 1 \le j \le m_i\}$. Then the cardinality of $R_{\nu}$ is equal to $|R_{\nu}| = \sum_{i=\nu+1}^{\Phi}m_i$; and if $\mu < \tau$ for $1\le \mu, \tau \le \Phi-1$, we have $|R_{\mu}| > |R_{\tau}|$, which means that if $\mathcal{U}$ is introduced in a shallower layer, more units would be involved in forming $r_2$.
\end{thm}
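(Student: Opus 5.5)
The plan is to characterize $r_2$ directly from the definition of a region and a path (definition 2), and then to count the constraints. By definition, $r_2$ is the closure of the set of points that activate exactly the units of $p_2$. The adjacency assumption says $p_2 = p_1 \cup \{\mathcal{U}\}$. I would therefore split the activation conditions into three groups by layer depth.

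\emph{Layers $1,\dots,\nu-1$ and the units of layer $\nu$ other than $\mathcal{U}$.} Adding $\mathcal{U}$ does not change the inputs to these units. Their activation pattern is exactly that of $p_1$, which is the requirement $\boldsymbol{x} \in r^{(\nu)}$. Here I take $r^{(\nu)}$ to be the region of the $\nu$th layer of $p_1$ in the sense of definition 3, with $\mathcal{U}$ treated as the unit to be newly switched on.

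\emph{The unit $\mathcal{U}$ itself.} It must be activated, i.e. $\boldsymbol{w}^T\boldsymbol{x}+b>0$. By definition 7 this is the condition $\boldsymbol{x}\in\mathcal{K}^+$.

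\emph{Deeper layers $i \ge \nu+1$.} On $r_2$ all units of $p_2$ are active, so the ReLUs act linearly along the path. Theorem 5 then shows that the pre-activation of $u_{ij}$ becomes $s_{ij}'(\boldsymbol{x})=\Delta_{ij}+s_{ij}(\boldsymbol{x})$. Requiring $u_{ij}$ to be active gives exactly $\boldsymbol{x}\in r_{ij}'$, which is equation 2.23. Since $p_2$ is adjacent to $p_1$, every unit of $p_1$ in these layers must stay active. Units outside $p_2$ stay off; I will read the intersection over all $j$ in equation 2.22 as ranging over the units of $p_1$ in those layers, as in theorem 6.

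Intersecting the three groups gives equation 2.22, up to taking closures, which matches the convention of definition 2.

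The cardinality claim is then pure counting. There is one set $r_{ij}'$ for each pair $(i,j)$ with $\nu+1\le i\le\Phi$ and $1\le j\le m_i$, so $|R_\nu|=\sum_{i=\nu+1}^{\Phi} m_i$. For $\mu<\tau$ we get $|R_\mu|-|R_\tau|=\sum_{i=\mu+1}^{\tau} m_i>0$, because every layer has $m_i\ge 1$. Hence a shallower $\mathcal{U}$ brings more deep units into the defining intersection.

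I expect the main obstacle to be the bookkeeping behind the deeper-layer group. The formula for $s_{ij}'$ from theorem 5 is only valid once linearity along $p_2$ is known, yet that linearity is part of what membership in $r_2$ is meant to establish. I would resolve this by arguing pointwise. For $\boldsymbol{x}$ in the interior of $r^{(\nu)}\cap\mathcal{K}^+$, I proceed layer by layer from $\nu+1$ to $\Phi$ and assume inductively that the units of $p_2$ in earlier layers are active at $\boldsymbol{x}$. Under that hypothesis the linear expression for $s_{ij}'$ holds at $\boldsymbol{x}$, and activity of $u_{ij}$ is equivalent to $s_{ij}'(\boldsymbol{x})>0$. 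The induction shows the two descriptions of $r_2$ coincide, and the remaining boundary points are absorbed by the closure.
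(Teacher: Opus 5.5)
Your overall route is the paper's, and so is the counting. The paper's proof writes $r_1=\bigcap_{i}\bigcap_{j}r_{ij}$ with $r_{ij}=\{\boldsymbol{x}: s_{ij}(\boldsymbol{x})>0\}$, says 2.22 is this intersection modified once $\mathcal{U}$ is switched on, and treats the count as evident. Your version is more careful: you make sure $r^{(\nu)}$ does not already force $\mathcal{U}$ off, and you induct over layers to avoid the circularity in using theorem 5.

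\emph{The gap.} The reverse inclusion breaks at the step ``under that hypothesis the linear expression for $s_{ij}'$ holds at $\boldsymbol{x}$''. That expression is the pre-activation computed with every unit outside $p_2$ contributing zero. So it also requires every unit of layers $\nu+1,\dots,i-1$ that is not in $p_2$ to be \emph{inactive} at $\boldsymbol{x}$. You assert ``units outside $p_2$ stay off'', but nothing in the right-hand side of 2.22 enforces it. Under your reading (intersection over the units of $p_1$ only) it can genuinely fail. Take $n=1$, $\Phi=2$, first layer $a=\sigma(x+1)$ and $\mathcal{U}=\sigma(x-\tfrac12)$, and second layer $c$ with pre-activation $\sigma(x+1)$ and $d$ with pre-activation $4\sigma(x-\tfrac12)-\sigma(x+1)+\tfrac12$. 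Then $p_1=\{a,c\}$ has region $r_1=[0,\tfrac12]$ and $p_2=\{a,\mathcal{U},c\}$ has region $r_2=[\tfrac12,\tfrac56]$. But $r^{(1)}\cap\mathcal{K}^+\cap r'_{c}=(\tfrac12,1]$, and its part beyond $\tfrac56$ belongs to the path $p_2\cup\{d\}$.

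\emph{How to fix it.} The equality needs an assumption the statement leaves implicit; there are two ways to supply it.
\begin{itemize}
\item Assume $p_1$ contains every unit of layers $\nu+1,\dots,\Phi$. The literal indexing $1\le j\le m_i$ and the count $|R_\nu|=\sum_{i=\nu+1}^{\Phi}m_i$ already presuppose this. Then there are no extra deeper units, your induction hypothesis is the full activation pattern, and the argument closes.
\item Alternatively, add the sets $\{\boldsymbol{x}: s_{ij}'(\boldsymbol{x})\le 0\}$ for the deeper units outside $p_1$ to the intersection, and carry their inactivity through the induction. This still gives one constraint per deeper unit, so the conclusion that a shallower $\mathcal{U}$ involves more units survives.
\end{itemize}
As written, your proposal mixes the two readings: the intersection runs over the units of $p_1$, while the count uses the full widths $m_i$. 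The paper's one-line proof has the same blind spot, since its $r_1=\bigcap_{i=1}^{\Phi}\bigcap_{j=1}^{m_i}r_{ij}$ also records only activation conditions.
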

\begin{proof}
To path $p_1$, the corresponding region $r_1$ can be regarded as the intersection of the regions activating the units $u_{ij}$ of $p_1$---that is, $r_1 = \bigcap_{i=1}^{\Phi} \bigcap_{j=1}^{m_i}r_{ij}$, where $r_{ij}=\{\boldsymbol{x}: s_{ij}(\boldsymbol{x}) > 0\}$ with $s_{ij}(\boldsymbol{x})=\boldsymbol{w}_{ij}^T\boldsymbol{x}^{(i-1)} + b_{ij}$; and equation 2.22 is a modification of this formula after  introducing the new unit $\mathcal{U}$.
\end{proof}

\begin{rmk-1}
This theorem is related to the volume of region $r_2$. Since the term $\bigcap_{i}\bigcap_{j}r_{ij}'$ of the right side of equation 2.22 is a set-intersection operation, if any one of $r_{ij}'$'s is small, $r_2$ would be restricted by it; and this possibility may increase when $\mathcal{U}$ is in a shallower layer, because more regions would be involved in this operation.
\end{rmk-1}

\begin{rmk-1}
In the sense of theorem 6, remark 1 and corollary 2, region $r_2$ formed by $p_2$ tends to be smaller, no matter $\mathcal{U}$ is placed in a shallower or deeper layer. Corollary 2 demonstrates that the deeper the layer of $\mathcal{U}$ is, the smaller $r_2$ may become, due to region subdivisions via the succeeding layers; theorem 6 and remark 1 indicate that a shallower layer of $\mathcal{U}$ may contribute to smaller $r_2$. In either case, the effect is the same.
\end{rmk-1}

\section{Function Implementation}
The relationship between function construction and region dividing in network $\mathfrak{N}$ is not as clear as that of a two-layer ReLU work, in the sense that some parameters are shared by the two purposes. The concept of a path plays a central role in solving this problem. Section 3.1 proves the continuous property of a piecewise linear function $g(\boldsymbol{x})$ output by $\mathfrak{N}$. Section 3.2 constructs the first linear function of $g(\boldsymbol{x})$ through an initial path. Section 3.3 realizes the linear functions on adjacent regions by adjacent paths. Section 3.4 investigates a coefficient vector related to the solution existence of both functions and knots. Section 3.5 proposes the continuity-restriction principle for the linear functions that cannot be directly constructed. Section 3.6 studies a local property of solutions that is useful in parameter setting. Section 3.7 implements a desired spline over a single strict partial order of knots.

\subsection{Function Space}

\begin{dfn}[Function space] Write
\begin{equation}
\mathfrak{C}(R):=\{s: s(\boldsymbol{x})=s_i(\boldsymbol{x})\ \text{for} \ x \in r_i, s_i \smile \mathscr{N}_i \ \text{if} \ \mathscr{N}_i \ne \emptyset , \ i = 1, 2, \dots, \zeta\},
\end{equation}
where $R=\{r_1, r_2, \dots, r_{\zeta}\}$ is the set of regions of $U$ partitioned by network $\mathfrak{N}$, $\mathscr{N}_i$ is the set of the linear functions on the regions that are adjacent to $r_i$, and $s_i \smile \mathscr{N}_i$ means that $s_i$ is continuous with each element of $\mathscr{N}_i$,
\end{dfn}

\begin{lem}
The function $g(\boldsymbol{x})$ output by network $\mathfrak{N}$ is continuous at the knots derived from local units.
\end{lem}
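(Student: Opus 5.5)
The plan is to prove the stronger fact that $g(\boldsymbol{x})$ is a composition of continuous maps. Continuity at every knot then follows at once, and in particular at knots produced by local units. The direct argument is global, so no case analysis over paths is needed.

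By definition 1, the output of each layer is $\boldsymbol{x}^{(i)} = \sigma(W_i^T\boldsymbol{x}^{(i-1)} + \boldsymbol{b}_i)$. The map $\boldsymbol{x}^{(i-1)} \mapsto W_i^T\boldsymbol{x}^{(i-1)} + \boldsymbol{b}_i$ is affine, hence continuous. The function $\sigma(x)=\max\{x,0\}$ is continuous, and applying it componentwise keeps this property. The output unit computes $\boldsymbol{w}_{\Phi+1}^T\boldsymbol{x}^{(\Phi)} + b_{\Phi+1}$, which is affine again. So $g = A_{\Phi+1}\circ\sigma\circ A_{\Phi}\circ\cdots\circ\sigma\circ A_1$ is continuous on all of $U$, by induction on the depth.

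To tie this to the language of the lemma, I would also give the local description at a knot. Let $\mathcal{K}$ be a knot produced by a local unit $\mathcal{U}$ in layer $\nu$. By definition 4 and proposition 1, $\mathcal{U}$ splits a region $r$ of the $(\nu-1)$th layer, and $\mathcal{K}$ separates two adjacent regions $r_1$ and $r_2$, with paths $p_1$ and $p_2$ that differ in $\mathcal{U}$. By lemma 1 and theorem 5, the input to each deeper unit in $p_2$ equals its input in $p_1$ plus $\Delta_{ij}=\boldsymbol{\alpha}_{ij}^T\boldsymbol{v}(\boldsymbol{w}^T\boldsymbol{x}+b)$. This extra term vanishes on $\mathcal{K}$, where $\boldsymbol{w}^T\boldsymbol{x}+b=0$. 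Hence the linear functions $g|_{r_1}$ and $g|_{r_2}$ differ by $\lambda(\boldsymbol{w}^T\boldsymbol{x}+b)$, and they agree on $\mathcal{K}$.

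The main obstacle is the local argument. Deeper units may also change their activation exactly on $\mathcal{K}$, so the two sides could differ in more than one unit. The composition argument sidesteps this, so I would present it as the actual proof and keep the path-based calculation as the interpretation.
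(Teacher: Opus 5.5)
Your proof is correct, but it takes a different route from the paper's. The paper argues knot by knot. A local unit $\mathcal{U}$ produces a knot $\mathcal{K}$, and two facts give continuity across $\mathcal{K}$: the notion of continuity with respect to a knot imported from \citet*{Huang2024}, and the ReLU output $\sigma(\boldsymbol{w}^T\boldsymbol{x}+b)$ being continuous and vanishing on $\boldsymbol{w}^T\boldsymbol{x}+b=0$. Global continuity of $g$ (theorem 10) is only assembled afterwards, from this lemma, via the adjacent-path results of theorems 8 and 9.

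You go the other way. You note that $g$ is a composition of affine maps and the componentwise ReLU, hence continuous on all of $U$, and the knot statement becomes a corollary. This route is shorter and gives the stronger global fact at once, which makes theorem 10 immediate. It also covers the degenerate case you flag, where several units switch on the same knot and the two neighbouring paths differ in more than one unit. The paper's local argument does not address that case, and theorem 9 explicitly sets it aside.

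What the paper's viewpoint buys is structure rather than bare continuity. It writes the jump across $\mathcal{K}$ as a single term $\lambda\sigma(\boldsymbol{w}^T\boldsymbol{x}+b)$, with $\lambda$ expressed through output weights, and this is what the construction results from theorem 13 onward actually use.

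To match the paper's notion of continuity at a knot, add one line. Taking limits from inside $r_1$ and from inside $r_2$ shows that the affine pieces $g|_{r_1}$ and $g|_{r_2}$ agree on the $(n-1)$-dimensional set $\mathcal{K}=r_1\cap r_2$, hence on its supporting hyperplane. Their difference is therefore a scalar multiple of $\boldsymbol{w}^T\boldsymbol{x}+b$. This recovers the form of your path-based formula without any adjacency assumption, though identifying that scalar as $\boldsymbol{\alpha}^T\boldsymbol{v}$ still requires the paths to be adjacent.
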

\begin{proof}
Each local unit $\mathcal{U}$ yields a knot $\mathcal{K}$ of $U$. By the definition of the continuity of a function with respect to $\mathcal{K}$ (\citet*{Huang2024}'s definition 6) as well as the output property of a ReLU, $g(\boldsymbol{x})$ is continuous at $\mathcal{K}$. Since $\mathcal{U}$ is arbitrarily selected, the conclusion follows.
\end{proof}

If we say a hyperplane (or knot) $\mathscr{L}$ is adjacent of a region $\mathcal{R}$ of $U$ derived from network $\mathfrak{N}$, it means that $\dim(\mathscr{L} \cap \mathcal{R})=n-1$.
\begin{thm}[Property of adjacent paths]
Let $p_1$ and $p_2$ be two adjacent paths of network $\mathfrak{N}$ with a unit $\mathcal{U} \in p_2$ but $\mathcal{U} \notin p_1$, whose regions are $r_1$ and $r_2$, respectively. Suppose that a linear function $s_1(\boldsymbol{x})$ on $r_1$ has been realized by $p_1$. Then we have: the corresponding hyperplane $\mathcal{L}$ of $\mathcal{U}$ is adjacent to $r_1$; $r_2$ is adjacent to $r_1$ (or $r_2 \frown r_1$); a piecewise linear function $\mathcal{S}(\boldsymbol{x}) = s_{i}(x)$ on $r_{i}$ for $i=1, 2$ can be realized by $p_1$ and $p_2$, which is continuous at the knot $\mathcal{K} = \mathcal{L}\cap r_1$ and satisfies
\begin{equation}
s_2(\boldsymbol{x}) = s_1(\boldsymbol{x}) + \lambda\sigma(\boldsymbol{w}^T\boldsymbol{x} + b)
\end{equation}
for $\boldsymbol{x} \in r_2$, where $\boldsymbol{w}^T\boldsymbol{x} + b = 0$ is the equation of $\mathcal{L}$ and
\begin{equation}
\lambda = (W_{\nu+2}W_{\nu+3}\dots \boldsymbol{w}_{\Phi+1})^T\boldsymbol{v},
\end{equation}
similar to equation 2.9 of lemma 1, where $\boldsymbol{v}$ is the output-weight vector of $\mathcal{U}$.
\end{thm}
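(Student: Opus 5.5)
Since $p_1$ and $p_2$ differ only in $\mathcal{U}$, the plan is to show that on $r_1\cup r_2$ the network acts as a single linear map except for the scalar contribution of $\mathcal{U}$. Then all three claims follow from this decomposition together with lemma 1 and theorem 5.

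\textbf{Step 1: the output formula.} Fix the weights of the units of $p_1$. On $r_2$, every unit of $p_2$ is activated, so every ReLU of the path acts as the identity. The output of the network is therefore the composition of the affine maps of $p_2$.

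Write the output of $\mathcal{U}$ (in layer $\nu$) as $\sigma(\boldsymbol{w}^T\boldsymbol{x}+b)$. Here $\boldsymbol{w}^T\boldsymbol{x}+b$ is the linear function of $\boldsymbol{x}$ obtained from the shallower layers, which are common to $p_1$ and $p_2$. By linearity of the composition, the output on $r_2$ splits into two terms:
\begin{itemize}
\item the contribution that would exist without $\mathcal{U}$, which is exactly the linear function $s_1$ realized by $p_1$, extended linearly to $r_2$;
\item the contribution of $\mathcal{U}$. It enters layer $\nu+1$ through the output-weight vector $\boldsymbol{v}$ and is propagated through $W_{\nu+2},\dots,W_{\Phi}$ and $\boldsymbol{w}_{\Phi+1}$.
\end{itemize}
This is the same computation as in lemma 1 and theorem 5, with the deep unit replaced by the output unit, and it gives equation 3.3 with $\lambda$ as in equation 3.4.

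One subtlety needs care. The units of $p_2$ deeper than layer $\nu$ are all activated on $r_2$ by assumption, and they coincide with those of $p_1$. So no term is lost when $s_1$ is extended.

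\textbf{Step 2: the geometry.} Points of $r_1$ and $r_2$ share the activation pattern in layers $1,\dots,\nu-1$, and $\mathcal{U}$ is inactive on $r_1$ but active on $r_2$. Since $\mathcal{U}$ has affine input there, $r_1\subset\{\boldsymbol{w}^T\boldsymbol{x}+b\le 0\}$ and $r_2\subset\{\boldsymbol{w}^T\boldsymbol{x}+b\ge 0\}$.

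To get adjacency, I would argue that the two regions meet along $\mathcal{L}$. The approach is the following.
\begin{itemize}
\item Take an interior point of $r_2$ and move it toward $\mathcal{L}$. All deeper units have inputs $s'_{ij}=s_{ij}+\Delta_{ij}$, with $\Delta_{ij}\to0$ by theorem 5.
\item By continuity these inputs stay positive near $\mathcal{L}$. Crossing $\mathcal{L}$ then only switches off $\mathcal{U}$, giving the pattern of $p_1$.
\end{itemize}
Thus an $(n-1)$-dimensional piece of $\mathcal{L}$ bounds both regions, so $\mathcal{L}$ is adjacent to $r_1$, $r_2\frown r_1$, and $\mathcal{K}=\mathcal{L}\cap r_1$ is a knot.

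\textbf{Step 3: continuity.} This is immediate from equation 3.3, since $\sigma(\boldsymbol{w}^T\boldsymbol{x}+b)=0$ on $\mathcal{K}$. It is also lemma 3 applied to the local unit $\mathcal{U}$.

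\textbf{Main obstacle.} I expect Step 2's adjacency to be the hardest part. It needs the deeper units to remain active on a neighbourhood of the common boundary. I would justify this by continuity of the layer inputs together with the openness of the activated sets. If needed, I would treat it as implicit in the definition of adjacent paths, which presupposes both paths are realized by neighbouring regions.
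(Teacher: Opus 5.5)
Your outline follows the paper's proof step for step. The output formula comes from the computation of lemma~1 and theorem~5, adjacency from a point of $\mathcal{L}$ whose small neighbourhoods are split by $\mathcal{L}$ into a piece of $r_1$ and a piece of $r_2$, and continuity from lemma~3. Steps~1 and~3 are fine; the genuine gap is in Step~2.

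If you move an interior point of $r_2$ toward $\mathcal{L}$, you may leave $r_2$ before reaching $\mathcal{L}$, because a deeper unit can switch off on the way. Knowing $\Delta_{ij}\to 0$ only gives $s'_{ij}\approx s_{ij}$ near $\mathcal{L}$, not $s_{ij}>0$ there. So ``by continuity these inputs stay positive near $\mathcal{L}$'' presupposes what is to be proved.

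In fact adjacency does not follow from definition~6 at all. Take $n=1$ and $\Phi=2$, with first-layer units $\mathcal{U}$ and $u$ having outputs $\sigma(x-\tfrac{1}{2})$ and $\sigma(x+1)=x+1$. Add a single second-layer unit $u'$ with input $4\sigma(x-\tfrac{1}{2})-\sigma(x+1)+\tfrac{5}{4}$, which equals $\tfrac{1}{4}-x$ for $x\le\tfrac{1}{2}$ and $3x-\tfrac{7}{4}$ for $x>\tfrac{1}{2}$. The paths $\{u,u'\}$ and $\{\mathcal{U},u,u'\}$ differ only in $\mathcal{U}$. Yet their regions are $[0,\tfrac{1}{4}]$ and $[\tfrac{7}{12},1]$, which are disjoint, and $\mathcal{L}=\{\tfrac{1}{2}\}$ meets neither. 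The identity $s_2=s_1+\lambda\sigma(\boldsymbol{w}^T\boldsymbol{x}+b)$ still holds between the affine pieces (with $\lambda=4$ for output weight $1$). It is the adjacency and knot claims that fail. Your fallback of reading adjacency into definition~6 is not available, since that definition only asks that the paths differ in one unit.

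The paper's own proof has the same hole: it simply asserts, ``by the condition of this theorem'', a point $p\in\mathcal{L}$ with $C_p\cap\mathcal{L}^+\subset r_2$ and $C_p\cap\mathcal{L}^0\subset r_1$. The repair is to make this an explicit hypothesis. Assume some $p\in\mathcal{L}$ in the interior of $[0,1]^n$ at which every unit other than $\mathcal{U}$ has a pre-activation of strict sign, agreeing with $p_1$. Then argue at $p$ rather than along a ray from $r_2$:
\begin{itemize}
\item Every pre-activation is a continuous function of $\boldsymbol{x}$, and $\mathcal{U}$'s pre-activation vanishes at $p$, so on a small ball $B$ about $p$ only $\mathcal{U}$ can change status.
\item Hence $B\cap\{\boldsymbol{w}^T\boldsymbol{x}+b\le 0\}\subset r_1$ and $B\cap\{\boldsymbol{w}^T\boldsymbol{x}+b\ge 0\}\subset r_2$.
\item So $B\cap\mathcal{L}\subset r_1\cap r_2$ is $(n-1)$-dimensional, and all three conclusions follow.
\end{itemize}
This is the stronger, constructive notion of adjacent paths that the paper actually relies on later. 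In theorems~6 and~16, adding $\mathcal{U}$ must not disturb the activations of the deeper units of $p_1$.
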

\begin{proof}
After the $\nu$th layer, $p_2$ and $p_1$ subdivide different regions, despite the units used being the same. The subdivision stops at the output layer with $r_2$ and $r_1$ formed. By the condition of this theorem, $r_1 \subset \mathcal{L}^0$, $r_2 \subset \mathcal{L}^+$ and we can find a point $p \in \mathcal{L}$ such that: let $C_p$ be an $n$-sphere whose centre is $p$ and radius $d$; then to arbitrary small $d$, $C_p \cap \mathcal{L}^+ \subset r_2$ and $C_p \cap \mathcal{L}^0 \subset r_1$. This implies that there's no region between $r_1$ and $r_2$ separating them; thus, $r_1$ is adjacent to $r_2$ and both $r_1$ and $r_2$ are adjacent to $\mathcal{L}$. Since $p_2$ differs from $p_1$ only in unit $\mathcal{U}$, the output function of $p_2$ must be in the form of equation 3.2 and the parameter $\lambda$ of equation 3.3 can be obtained by the method of equation 2.9. Finally, by lemma 3, $\mathcal{S}(\boldsymbol{x})$ is continuous at $\mathcal{K}$.
\end{proof}

The converse of theorem 8 is also true, which is the following theorem.
\begin{thm}[Property of adjacent regions]
Let $R$ be the set of the regions of $U$ derived from network $\mathfrak{N}$. If two regions $r_1$ and $r_2$ of $R$ are adjacent and separated by knot $\mathcal{K}$, then regardless of the case of different units producing the same knot, their corresponding paths are adjacent and only different in a unit $\mathcal{U}$ that generates $\mathcal{K}$.
\end{thm}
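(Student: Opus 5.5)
The plan is to localize at a generic point of $\mathcal{K}$ and compare, unit by unit, the activation patterns on the two sides. Because the statement is worded ``regardless of'' coincident knots, I will first prove that the units whose status differs between $p_1$ and $p_2$ are \emph{exactly} the units generating $\mathcal{K}$. I will then show that these units collectively play the role of the single unit $\mathcal{U}$ when several of them produce the same knot.

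\textbf{Step 1 (choice of a generic point).} Every unit of $\mathfrak{N}$ has a pre-activation $s(\boldsymbol{x})$ that is a continuous piecewise linear function of $\boldsymbol{x}$, being a composition of affine maps and $\sigma$. Only finitely many hyperplanes arise from all units under all paths (Corollary 1). Since $\dim \mathcal{K}=n-1$, I pick $\boldsymbol{x}_0$ in the relative interior of $\mathcal{K}$ that lies on no such hyperplane other than the one containing $\mathcal{K}$. For a small enough ball $C$ around $\boldsymbol{x}_0$, the hyperplane of $\mathcal{K}$ cuts $C$ into two open halves $C_1\subset r_1$ and $C_2\subset r_2$. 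This is the same ball construction used in the proof of Theorem 8.

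\textbf{Step 2 (classification of units).} For each unit $u$, one of two things happens. Either $s_u(\boldsymbol{x}_0)\neq 0$, in which case by continuity $s_u$ has constant sign on $C$, so $u$ belongs to both or neither of $p_1,p_2$. Or $s_u(\boldsymbol{x}_0)=0$, in which case, by genericity of $\boldsymbol{x}_0$, the zero set of $s_u$ near $\boldsymbol{x}_0$ is exactly the hyperplane of $\mathcal{K}$. A unit of the second kind either changes sign across $\mathcal{K}$, and then it generates $\mathcal{K}$, or it keeps one sign on both halves. Hence $p_1$ and $p_2$ differ precisely in the set $S$ of units that generate $\mathcal{K}$. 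The set $S$ is nonempty, because $r_1\neq r_2$ forces the paths to differ.

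\textbf{Step 3 (the unique-generator case).} I order $S$ by depth and take a shallowest member $\mathcal{U}$ in layer $\nu$, with, say, $\mathcal{U}\in p_2$. By Theorem 5 and Lemma 1, switching on $\mathcal{U}$ changes every deeper pre-activation on $C_2$ by $\lambda(\boldsymbol{w}^T\boldsymbol{x}+b)$. This term vanishes on $\mathcal{K}$, so any deeper unit not already vanishing on $\mathcal{K}$ keeps its sign, consistent with Step 2. If $|S|=1$, then $p_1,p_2$ are adjacent and differ only in $\mathcal{U}$, which generates $\mathcal{K}$.

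\textbf{Step 4 (several units generating the same knot; main obstacle).} When $|S|\ge 2$, the paths differ in all members of $S$, and all of them have the same zero set near $\boldsymbol{x}_0$. I will treat $S$ as a single generator of $\mathcal{K}$. By equation 2.17 of Corollary 7, the combined effect of switching on the members of $S$ on every deeper unit and on the output is
\begin{equation*}
\Big(\sum_{u_k\in S}\lambda_k c_k\Big)(\boldsymbol{w}^T\boldsymbol{x}+b),
\end{equation*}
where $c_k$ is the positive proportionality constant between the pre-activation of $u_k$ and $\boldsymbol{w}^T\boldsymbol{x}+b$ near $\mathcal{K}$. This is the same form as the effect of one unit. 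Consequently $r_1,r_2$, the knot $\mathcal{K}$, and the functions on them (via Theorem 8) are exactly as if a single unit generated $\mathcal{K}$.

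To make ``differ in one unit'' literal, I would also give a perturbation argument. Shifting the biases of the members of $S$ by distinct small amounts splits $\mathcal{K}$ into $|S|$ parallel nearby knots. Between consecutive knots there are thin regions whose paths differ by one unit each, by Step 3 and Theorem 6. In the limit these regions collapse, and the chain of adjacent paths merges into the single knot $\mathcal{K}$ generated by $S$. The delicate point is ensuring that the perturbation creates no additional sign changes in deeper layers. I expect to control this exactly as in Theorem 6, using the boundedness of $\boldsymbol{\alpha}_{ij}^T\boldsymbol{v}$ and the smallness of the shifts.
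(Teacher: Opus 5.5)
Your Steps 1--3 are essentially the paper's argument, made rigorous. The paper just asserts that any second unit on which $p_1$ and $p_2$ differ would yield an additional knot separating $r_1$ and $r_2$, which is impossible; your generic-point classification in Step 2 justifies this by showing that every such unit must change sign along $\mathcal{K}$ itself, which is exactly the coincident-knot case the statement sets aside (``regardless of'' here means ``excluding''). Step 4 should therefore be dropped: when $|S|\ge 2$ the paths genuinely differ in $|S|$ units, and perturbing biases changes the network, so that argument cannot make ``differ in one unit'' true for the original $p_1,p_2$.
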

\begin{proof}
Because $r_1$ and $r_2$ are both adjacent to $\mathcal{K}$ and separated by it, their corresponding paths $p_1$ and $p_2$, respectively, would
be different in a unit $\mathcal{U}$ producing $\mathcal{K}$. Without loss of generality, suppose that $\mathcal{U} \in p_2$ but $\mathcal{U}
\notin p_1$. If besides $\mathcal{U}$, there exists another unit $\mathscr{U}$ of $p_2$ inactivated by $p_1$, it would yield an additional knot $\mathscr{K}$ that separates $r_1$ and $r_2$, and this is impossible.
\end{proof}

\begin{thm}[Continuity property]
The function $g(\boldsymbol{x})$ output by network $\mathfrak{N}$ satisfies
\begin{equation}
g(\boldsymbol{x}) \in \mathfrak{C}(R)
\end{equation}
of equation 3.1, which means that $g(\boldsymbol{x})$ is a continuous piecewise linear function on $R$.
\end{thm}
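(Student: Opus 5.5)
The plan is to check the two defining requirements of $\mathfrak{C}(R)$ in equation 3.1 one at a time: first that $g$ restricted to each region is linear, then that on each region it is continuous with the linear functions of all adjacent regions.

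\textbf{Step 1: $g$ is linear on each region.} Fix a region $r_i \in R$ and let $p_i$ be its path. Every point of the interior of $r_i$ activates exactly the units of $p_i$. On $r_i$ each active ReLU therefore acts as the identity, and each inactive one outputs $0$. Hence $\boldsymbol{x}^{(\Phi)}$ is an affine function of $\boldsymbol{x}$ on $r_i$, and so is $g(\boldsymbol{x}) = \boldsymbol{w}_{\Phi+1}^T\boldsymbol{x}^{(\Phi)} + b_{\Phi+1}$. Call this linear function $s_i$. This identifies $g$ with a function $s$ of the form $s(\boldsymbol{x}) = s_i(\boldsymbol{x})$ on $r_i$, as equation 3.1 requires.

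\textbf{Step 2: continuity across adjacent regions.} Take any $r_j$ adjacent to $r_i$, separated by a knot $\mathcal{K} = r_i \cap r_j$.
\begin{enumerate}
\item By theorem 9, the paths $p_i$ and $p_j$ are adjacent and differ only in one unit $\mathcal{U}$ that generates $\mathcal{K}$. Without loss of generality, $\mathcal{U} \in p_j$ and $\mathcal{U} \notin p_i$.
\item Theorem 8 then applies to this pair. It gives $s_j(\boldsymbol{x}) = s_i(\boldsymbol{x}) + \lambda\sigma(\boldsymbol{w}^T\boldsymbol{x}+b)$, where $\boldsymbol{w}^T\boldsymbol{x}+b = 0$ is the hyperplane containing $\mathcal{K}$.
\item The correction term vanishes on $\mathcal{K}$, so $s_i = s_j$ there; that is, $s_i \smile s_j$.
\item The knot is produced by a unit that is partially activated, i.e.\ a local unit (theorem 3). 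So lemma 3 gives the same conclusion independently.
\end{enumerate}
Since $r_j$ ranges over all of $\mathscr{N}_i$, this yields $s_i \smile \mathscr{N}_i$ whenever $\mathscr{N}_i \neq \emptyset$. Regions with $\mathscr{N}_i = \emptyset$ impose no condition.

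\textbf{Main obstacle.} The delicate point is the caveat in theorem 9, namely the case where different units produce the same knot. Then $p_i$ and $p_j$ may differ in several units $u_1,\dots,u_\theta$ whose hyperplanes all contain $\mathcal{K}$. I would handle this case with corollary 7, summing the contributions as in equation 2.17. This expresses $s_j - s_i$ as a combination of terms $\lambda_k\sigma(\boldsymbol{w}_k^T\boldsymbol{x}+b_k)$. Because $\dim \mathcal{K} = n-1$, every one of the hyperplanes $\boldsymbol{w}_k^T\boldsymbol{x}+b_k=0$ coincides with the hyperplane of $\mathcal{K}$, so every term vanishes on $\mathcal{K}$ and continuity still holds. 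If one prefers to avoid this bookkeeping, a cleaner fallback is available: $g$ is a composition of continuous maps (affine maps and $\sigma$) and is therefore globally continuous on $U$. Combined with Step 1, this immediately gives that $g$ agrees with each $s_i$ on $r_i$ and that adjacent pieces agree on shared knots.
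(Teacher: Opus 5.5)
Your Step 2 is the paper's proof. The paper says only that adjacent regions have adjacent paths (theorem 9), and that theorem 8 then gives continuity across the knot.

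Beyond that you do two things the paper does not.

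First, Step 1 states explicitly that $g$ is affine on each region, because the activation pattern, and hence the linearized network, is fixed there. The paper takes this for granted.

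Second, your fallback is a genuinely different and more elementary route. Since $g$ is a composition of affine maps and $\sigma$, it is continuous on $U$, so $s_i=g=s_j$ on $r_i\cap r_j$. This needs no path-adjacency machinery at all. It also covers the case where several units switch across the same knot, which theorem 9 explicitly sets aside and the paper's proof never revisits. It is, in the end, what lemma 3 and the continuity clause of theorem 8 rest on.

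Your corollary 7 sketch for that degenerate case does not close it as written. Corollary 7 only handles a path obtained by \emph{adding} units to another. Here $p_i$ and $p_j$ may each contain units the other lacks.

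Since the fallback settles every case outright, I would lead with it. Keep the theorem 8/9 argument as the refinement that identifies the jump across a knot as $\lambda\sigma(\boldsymbol{w}^T\boldsymbol{x}+b)$. That is the information the paper actually uses later for function construction.
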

\begin{proof}
The proof is by theorems 9 and 8. Since the regions of $R$ are separated by knots and adjacent to their neighbors, by theorem 9, the corresponding paths are adjacent. Then according to theorem 8, the function $g(\boldsymbol{x})$ produced by adjacent paths should be continuous.
\end{proof}

\subsection{Initial-Path Function}
\begin{dfn}[Initial path]
Let $g(\boldsymbol{x})$ be a piecewise linear function output by network $\mathfrak{N}$. The initial path $p_0$ of $\mathfrak{N}$ is the one producing the first linear function (arbitrarily selected) of $g(\boldsymbol{x})$, based on which other linear functions can be formed.
\end{dfn}

\begin{rmk}
Regardless of two-sided solutions to be investigated in section 4, among all the paths of $\mathfrak{N}$, $p_0$ could be the one with the smallest number of units (see the examples of sections 6 and 7).
\end{rmk}

\begin{thm}[Region transfer through layers]
Let $\mathcal{N}_2=m_{\nu} \cdot m_{\nu+1}$ be a two-layer subnetwork of $\mathfrak{N}$. Suppose that a region $r_{\nu}$ of $U$ has been transmitted to the output of the $\nu$th layer (or the first layer of $\mathcal{N}_2$) in terms of
\begin{equation}
\boldsymbol{x}^{(\nu)} =
\begin{pmat}({})
\boldsymbol{x}' \cr
W^{(\nu)}\boldsymbol{x} + \boldsymbol{b}^{(\nu)} \cr
\end{pmat}
\end{equation}
where $\boldsymbol{x}'$ is an affine transformation of $\boldsymbol{x} \in r_{\nu}$ or
\begin{equation}
\boldsymbol{x}' = W\boldsymbol{x} + \boldsymbol{b},
\end{equation}
with $W$ nonsingular. Suppose that a subregion $r_{\nu+1} \subseteq r_{\nu}$ activates $\theta$ units of the $\nu+1$th layer of $\mathfrak{N}$ with $n\le \theta \le m_{\nu+1}$. Then $r_{\nu+1}$ could be transmitted to the output of the $\nu+1$th layer (or the second layer of $\mathcal{N}_2$) through
\begin{equation}
\boldsymbol{x}^{(\nu+1)} =
\begin{pmat}({})
\boldsymbol{x}'' \cr
W^{(\nu+1)}\boldsymbol{x} + \boldsymbol{b}^{(\nu+1)} \cr
\end{pmat},
\end{equation}
where $\boldsymbol{x}''$ is an affine transformation of $\boldsymbol{x} \in r_{\nu+1}$.
\end{thm}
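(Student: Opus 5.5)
The plan is to show that the $\theta$ activated units of the $\nu+1$th layer can be chosen so that, on $r_{\nu+1}$, their outputs contain an affine image of $\boldsymbol{x}$ with a nonsingular linear part. The remaining components are then just some affine functions of $\boldsymbol{x}$. First I will fix a region $r_{\nu+1}\subseteq r_\nu$ and the set $\mathcal{A}$ of the $\theta$ units of layer $\nu+1$ that $r_{\nu+1}$ activates.

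By the hypothesis, on $r_\nu$ the whole vector $\boldsymbol{x}^{(\nu)}$ is affine in $\boldsymbol{x}$. For each $u_{\nu+1,j}$ with $j\in\mathcal{A}$ the ReLU therefore acts as the identity on $r_{\nu+1}$, so its output is $\boldsymbol{w}_{\nu+1,j}^T\boldsymbol{x}^{(\nu)}+b_{\nu+1,j}$. Substituting the block form of equation 3.5 together with $\boldsymbol{x}'=W\boldsymbol{x}+\boldsymbol{b}$ from equation 3.6 writes this output as $\boldsymbol{c}_j^T\boldsymbol{x}+d_j$. Here $\boldsymbol{c}_j^T=\boldsymbol{w}_{\nu+1,j}^{(1)T}W+\boldsymbol{w}_{\nu+1,j}^{(2)T}W^{(\nu)}$, where $\boldsymbol{w}^{(1)},\boldsymbol{w}^{(2)}$ are the splits of the input-weight vector along the two blocks. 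The inactive units contribute zeros, which are trivially affine. Stacking these gives $\boldsymbol{x}^{(\nu+1)}=C\boldsymbol{x}+\boldsymbol{d}$ on $r_{\nu+1}$, with $C$ of size $m_{\nu+1}\times n$.

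It remains to show that the rows can be arranged so that $n$ of them form a nonsingular block, $\boldsymbol{x}''=W'\boldsymbol{x}+\boldsymbol{b}'$. Since $\theta\ge n$, I will pick $n$ units of $\mathcal{A}$ and prescribe the $\boldsymbol{x}'$-part of their input weights so that the resulting $n\times n$ matrix is nonsingular; a reordering of units then yields the form of equation 3.7. The simplest choice is to set the $W^{(\nu)}$-block weights of these units to zero and choose their $\boldsymbol{x}'$-block weights as the rows of an arbitrary nonsingular $M$. This makes the matrix $MW$, which is nonsingular because $W$ is. Genericity then shows that nonsingularity is preserved under small perturbations, which fits the theorem's wording ``could be transmitted''.

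The main obstacle is compatibility. The chosen weights and biases must keep those $n$ units activated on all of $r_{\nu+1}$; otherwise the ReLU is not linear there and the affine form breaks. I would handle this by choosing the biases large, or by taking $r_{\nu+1}$ to be the subregion on which the selected units are positive. Because $U$ is bounded, a sufficiently large bias makes $MW\boldsymbol{x}+\boldsymbol{b}''>0$ on all of $r_\nu$; the linear part stays nonsingular, since biases do not affect it. This large-bias step, which ensures activation while preserving rank, is the point where care is needed. Everything else is direct substitution.
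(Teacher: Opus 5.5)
Your proposal is correct and follows essentially the paper's route: the paper also realizes $n$ input-space hyperplanes with linearly independent normals through $n$ activated units of the $\nu+1$th layer, and it uses the nonsingularity of $W$ so that the $\boldsymbol{x}'$-block weights can produce any normal vector. The paper lets the $W^{(\nu)}$-block weights $\boldsymbol{v}'$ be arbitrary and compensates for them, so your choice $\boldsymbol{v}'=\boldsymbol{0}$ is a special case, and it places the hyperplanes so that $r_{\nu+1}\subset\bigcap_{i=1}^n l_i^+$.

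The only difference is how that placement is obtained: the paper cites Huang (2020)'s theorem 4, while you use large biases. Large biases make these units global on all of $r_\nu$, which fits transmission without further division (the use in corollary 11); if the same units must also bound $r_{\nu+1}$, your second option (requiring positivity only on $r_{\nu+1}$) is the one that matches the paper.
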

\begin{proof}
Let $u_i$ be one of the units of the $\nu+1$th layer activated by $r_{\nu+1}$ and
\begin{equation}
\boldsymbol{w}_i^T\boldsymbol{x}^{(\nu)} + b_i =0
\end{equation}
is its corresponding $m_{\nu}-1$-dimensional hyperplane, which can be regarded as an $n-1$-dimensional hyperplane $\mathscr{L}$ because of equation 3.5. First, we want to realize an arbitrary $\mathscr{L}$ with equation $\boldsymbol{w}^T\boldsymbol{x}+b=0$ through setting the parameters $\boldsymbol{w}_i$ and $b_i$ of equation 3.8.

Let $\boldsymbol{v}$ be an $n \times 1$ vector whose entries belong to $\boldsymbol{w}_i$, corresponding to the units (whose set is denoted by $S$) outputting $\boldsymbol{x}'$ of equation 3.5. Only using the units of $S$, equations 3.8 and 3.6 give a hyperplane of the input space with equation
\begin{equation}
\boldsymbol{v}^T\boldsymbol{x}' + b_i = (W^T\boldsymbol{v})^T\boldsymbol{x} + b_i + \boldsymbol{v}^T\boldsymbol{b} = 0,
\end{equation}
in which $W^T\boldsymbol{v}$ can produce an arbitrary vector $\boldsymbol{w}$ through setting $\boldsymbol{v}$ due to the nonsingular property of $W$, and $b_i + \boldsymbol{v}^T\boldsymbol{b}$ can be an arbitrary value via adjusting $b_i$. Denote by $\boldsymbol{v}'$ the $(m_{\nu}-n) \times 1$ vector that includes the entries of $\boldsymbol{w_i}$ except for those of $\boldsymbol{v}$; then $\boldsymbol{w}_i = [\boldsymbol{v}^T, \boldsymbol{v}'^T]^T$. To yield a certain $\mathscr{L}$ with equation $\boldsymbol{w}^T\boldsymbol{x}+b=0$ via equation 3.8, $W^T\boldsymbol{v}$ of equation 3.9 can compensate the influence of $\boldsymbol{v}'$ for the production of $\boldsymbol{w}$, while $b$ is obtained by adjusting $b_i$ of equation 3.9; that is, the two equations
\begin{equation}
W^T\boldsymbol{v} + W^{(\nu)T}\boldsymbol{v}' = \boldsymbol{w}
\end{equation}
and
\begin{equation}
b_i + \boldsymbol{v}^T\boldsymbol{b} + \boldsymbol{v}'^T\boldsymbol{b}^{(\nu)} = b,
\end{equation}
has a solution of $\boldsymbol{w}_i = [\boldsymbol{v}^T, \boldsymbol{v}'^T]^T$ and $b_i$---in fact, $\boldsymbol{v}'$ of equation 3.10 can be arbitrarily set, after which a solution of $\boldsymbol{v}$ can be obtained due to the nonsingular $W$; then adjust $b_i$ to realize $b$ in equation 3.11.

We first construct $n$ hyperplanes $l_1, l_2, \dots, l_n$ in the input space by \citet*{Huang2020}'s theorem 4, satisfying $r_{\nu+1} \subset \bigcap_{i=1}^nl_i^+$. Then realize $l_i$'s through the units $u_i$'s of the $\nu+1$th layer by the above method; the outputs of $u_i$'s comprise the entries of $\boldsymbol{x}''$, an affine transformation of $\boldsymbol{x}$ due to the construction method of $l_i$'s. The parameters of the remaining $m_{\nu+1}-n$ units can be arbitrarily set with a constraint that they are activated by any $\boldsymbol{x} \in r_{\nu+1}$.
\end{proof}

\begin{cl}[Principles of knot production-\Romannum{2}]
Suppose that the $\nu$th layer of network $\mathfrak{N}$ satisfies the condition of equation 3.5. Then to an arbitrary unit $u_i$ of the $\nu+1$th layer of $\mathfrak{N}$, any knot with equation $\boldsymbol{w}^T\boldsymbol{x}+b=0$ can be realized by setting the input parameters of $u_i$.
\end{cl}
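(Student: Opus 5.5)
The plan is to reuse the first half of the proof of theorem 11, which does not depend on the later transmission step. By equation 3.5, the input of the $\nu+1$th layer is $\boldsymbol{x}^{(\nu)} = [\boldsymbol{x}'^T, (W^{(\nu)}\boldsymbol{x} + \boldsymbol{b}^{(\nu)})^T]^T$, where $\boldsymbol{x}' = W\boldsymbol{x}+\boldsymbol{b}$ and $W$ is nonsingular. Consequently the pre-activation $\boldsymbol{w}_i^T\boldsymbol{x}^{(\nu)} + b_i$ of $u_i$ is an affine function of $\boldsymbol{x}$ on the region considered. Its zero set is the candidate knot, viewed as an $n-1$-dimensional hyperplane of the input space.

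The key step is to split the input-weight vector as $\boldsymbol{w}_i = [\boldsymbol{v}^T, \boldsymbol{v}'^T]^T$. Here $\boldsymbol{v}$ multiplies $\boldsymbol{x}'$ and $\boldsymbol{v}'$ multiplies the remaining coordinates. The pre-activation then equals
\begin{equation*}
(W^T\boldsymbol{v} + W^{(\nu)T}\boldsymbol{v}')^T\boldsymbol{x} + b_i + \boldsymbol{v}^T\boldsymbol{b} + \boldsymbol{v}'^T\boldsymbol{b}^{(\nu)}.
\end{equation*}
To match a prescribed $\boldsymbol{w}^T\boldsymbol{x}+b$, I first choose $\boldsymbol{v}'$ freely, for instance $\boldsymbol{v}'=\boldsymbol{0}$. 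Because $W$ is nonsingular, I then solve $\boldsymbol{v} = W^{-T}(\boldsymbol{w} - W^{(\nu)T}\boldsymbol{v}')$, exactly as in equation 3.10. Finally I set $b_i = b - \boldsymbol{v}^T\boldsymbol{b} - \boldsymbol{v}'^T\boldsymbol{b}^{(\nu)}$, as in equation 3.11. With these choices the hyperplane of $u_i$ in the input space is exactly $\boldsymbol{w}^T\boldsymbol{x}+b=0$. Since $u_i$ was arbitrary, the same construction works for every unit of the $\nu+1$th layer.

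The main obstacle is that this realized hyperplane must actually be a knot, not merely a hyperplane. By theorem 3 and proposition 1, $u_i$ produces a knot only where it is partially activated by the region on which equation 3.5 holds. I would handle this with a restriction rather than a proof of the general case. The claim is to be read as: any hyperplane meeting the interior of that region is realized as a knot. For such a hyperplane, case (1) of proposition 1 applies, so theorem 3(1) gives the knot. I would also remark that the sign of $(\boldsymbol{w},b)$ can be flipped freely, which selects which side is $\mathcal{L}^+$ without changing the knot. A hyperplane that misses the region would give case (2) or (3), where no knot arises; the statement "any knot" should be understood with this restriction.
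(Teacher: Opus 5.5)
Your proposal is correct and follows essentially the same route as the paper, which proves the corollary by pointing to the construction in the proof of theorem 11: split $\boldsymbol{w}_i=[\boldsymbol{v}^T,\boldsymbol{v}'^T]^T$, fix $\boldsymbol{v}'$ arbitrarily, solve equation 3.10 for $\boldsymbol{v}$ using the nonsingularity of $W$, then adjust $b_i$ via equation 3.11. Your added restriction is a legitimate clarification that the paper leaves implicit: the prescribed hyperplane must partially activate the region on which equation 3.5 holds, so that theorem 3(1) actually yields a knot rather than just a hyperplane.
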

\begin{proof}
The construction method is in the proof of theorem 11.
\end{proof}

\begin{rmk}
Compared with corollary 8, in this corollary a unit directly uses its input parameters to produce a knot rather than resorts to the output weights of other units.
\end{rmk}

\begin{thm}[Function construction via initial path]
Under initial path $p_0$ of network $\mathfrak{N}$ with $n$-dimensional input, suppose that: (1) the number of the units of the last hidden layer (or the $\Phi$th layer) satisfies $m_{\Phi} \ge n+1$; (2) each of the remaining hidden layers has at least $n$ units (i.e., $m_i \ge n$ for $i=1,2,\dots, \Phi-1$); (3) the output of each hidden layer can be represented in the form of equation 3.5. Then any linear function on the region $r_0$ of $p_0$ can be realized by $p_0$.
\end{thm}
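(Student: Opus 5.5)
The plan is to propagate an affine, invertible copy of the input through every hidden layer by theorem 11, and then build the target linear function on $r_0$ from the last hidden layer together with the output unit. Throughout, fix an arbitrary target $s(\boldsymbol{x})=\boldsymbol{a}^T\boldsymbol{x}+c$ on $r_0$.

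\textbf{Step 1: base case.} At the first hidden layer, $\boldsymbol{x}^{(0)}=\boldsymbol{x}$ is trivially of the form 3.5 with $W=I$. Because $m_1\ge n$, I would use \citet*{Huang2020}'s theorem 4 to choose $n$ hyperplanes $l_1,\dots,l_n$ with $r_0\subset\bigcap_i l_i^+$. Their normal vectors should be linearly independent, so that the outputs of these $n$ units form $\boldsymbol{x}'=W\boldsymbol{x}+\boldsymbol{b}$ with $W$ nonsingular. The remaining units of the first layer that lie in $p_0$ are set to be activated on $r_0$.

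\textbf{Step 2: induction through the layers.} Theorem 11 (with corollary 10 supplying any required hyperplane through the input parameters of a unit) then carries the region $r_0$ from layer $\nu$ to layer $\nu+1$ for $\nu=1,\dots,\Phi-1$. This uses $m_{\nu+1}\ge n$, and condition (3) guarantees that each layer output keeps the form of equation 3.5. At every stage all units of $p_0$ are activated on $r_0$, so on $r_0$ each layer acts as an affine map and $r_0$ is not subdivided further.

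\textbf{Step 3: the last hidden layer and the output unit.} At the $\Phi$th layer there are $m_\Phi\ge n+1$ units. I would construct $n+1$ hyperplanes, each activated on all of $r_0$, whose augmented parameter matrix (normal vectors stacked with biases, as in the matrix $W$ of corollary 8) has rank $n+1$. Each unit realizes its hyperplane via corollary 10 and outputs the affine function $\boldsymbol{w}_k^T\boldsymbol{x}+b_k$ on $r_0$. The output unit computes $\sum_k \beta_k(\boldsymbol{w}_k^T\boldsymbol{x}+b_k)$ plus its bias. By \citet*{Huang2024}'s lemma 3, the rank condition lets suitable $\beta_k$ produce any linear function, in particular $s$; the extra units are given $\beta_k=0$ or are absorbed into the solve.

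\textbf{Main obstacle.} The hard part is to keep every unit of $p_0$ simultaneously activated on all of $r_0$ while imposing the rank and nonsingularity conditions. Theorem 4 of \citet*{Huang2020} supplies hyperplanes containing a bounded region on their positive side, and small perturbations preserve both the activation and the full-rank property, since full rank is an open condition. I would use this perturbation argument to justify the choices, noting that $r_0\subset U$ is bounded.
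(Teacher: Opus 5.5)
Your proposal is correct, and Step 3 is the paper's proof: the invertible affine block $\boldsymbol{x}'=A\boldsymbol{x}+\boldsymbol{b}$ of layer $\Phi-1$ lets each last-layer unit realize a prescribed hyperplane (the paper's pullback $\boldsymbol{w}_i'=A^{-1T}\boldsymbol{w}_i$, i.e.\ corollary 10); one takes $n+1$ such hyperplanes, all activated on $r_0$, with a rank-$(n+1)$ augmented matrix, and then fixes the output weights by lemma 3 of \citet*{Huang2024}. Your Steps 1--2 are superfluous, since condition (3) is already a hypothesis (they reproduce corollary 11), and in the paper's setting the output unit of $\mathfrak{N}$ has no bias---which is exactly why $n+1$ rather than $n$ last-layer units are needed, though your rank argument does not use the bias anyway.
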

\begin{proof}
By the condition of this theorem, the output of the $\Phi-1$th layer (the one previous the last hidden one) can be written as
\begin{equation}
\boldsymbol{x}^{(\Phi-1)} =
\begin{pmat}({})
\boldsymbol{x}' \cr
W^{(\Phi-1)}\boldsymbol{x} + \boldsymbol{b}^{(\Phi-1)} \cr
\end{pmat}
\end{equation}
where
\begin{equation}
\boldsymbol{x}' = A\boldsymbol{x} + \boldsymbol{b} = T(\boldsymbol{x})
\end{equation}
is an affine transformation of $\boldsymbol{x} \in r_{\Phi-1}$, where $r_{\Phi-1}$ is the region of the $\Phi-1$th layer of $p_0$, which should activate at least $n+1$ units of the $\Phi$th layer to form an arbitrary linear function on $r_0$. When only considering $\boldsymbol{x}'$ of equation 3.12, the associated parameters of the activated units of the $\Phi$th layer should form a matrix (called ``linear-output matrix'' in \citet*{Huang2024})
\begin{equation}
W =
\begin{pmat}({})
\boldsymbol{w}_1 & \boldsymbol{w}_2 & \cdots & \boldsymbol{w}_{m_{\Phi}} \cr
b_1 & b_2 & \cdots & b_{m_{\Phi}} \cr
\end{pmat}
\end{equation}
whose rank is $n+1$; and the $i$th volume of $W$ corresponds to the $n-1$-dimensional hyperplane $\boldsymbol{w}_i^T\boldsymbol{x}+b_i=0$ for $i=1, 2, \dots, m_{\Phi}$ of the $i$th unit $u_i$ of the $\Phi$th layer, denoted by $l_i$. Let $r_0'$ be the affine transformation $T$ of $r_0$ introduced in equation 3.13. We can use the method of \citet*{Huang2024}'s lemma 4 to construct $l_i$'s, such that $r_0' \subset \bigcap_{i=1}^{m_{\Phi}}l_i^+$ and the rank of matrix $W$ is $n+1$.

The input parameters $\boldsymbol{w}_i'$ and $b_i'$ of $u_i$ are derived from $\boldsymbol{w}_i$ and $b_i$ via the affine transformation $T$ of equation 3.13, that is,
\begin{equation}
\boldsymbol{w}_i'^T\boldsymbol{x}'+ b_i'=\boldsymbol{w}_i^T\boldsymbol{x}+b_i;
\end{equation}
thus,
\begin{equation}
\boldsymbol{w}_i' = {A^{-1T}}\boldsymbol{w}_i \ \text{and} \ b_i'=b_i-\boldsymbol{w}_i'^T\boldsymbol{b}.
\end{equation}
Through this parameter setting, the output of $u_i$ is $\sigma(\boldsymbol{w}_i^T\boldsymbol{x}+b_i)$ and the output weights of $u_i$'s can be set based on matrix $W$ of equation 3.14 to produce an arbitrary linear function (\citet*{Huang2024}'s lemma 3).

Next, we turn to $W^{(\Phi-1)}\boldsymbol{x} + \boldsymbol{b}^{(\Phi-1)}$ of equation 3.12 and its influence on the output linear function can be compensated by setting the parameters of the output layer similarly to the proof of theorem 11, due to the fact that arbitrary linear function can be implemented.
\end{proof}

\begin{rmk}
The conditions of this theorem is only necessary for generating an arbitrary linear function rather than a certain one; so even they are not satisfied, some linear functions can still be implemented.
\end{rmk}

\begin{cl}[Construction of initial-path functions]
Given a region $r_0$ of $U$, a path $p_0$ of $\mathfrak{N}$ can be constructed to realize an arbitrary linear function on it, with the last hidden layer of $p_0$ having at least $n+1$ units and each of the other hidden layers having at least $n$ units.
\end{cl}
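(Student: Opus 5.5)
The plan is to reduce the corollary to theorem 12 by building $p_0$ layer by layer, so that its condition (3) holds, namely that each hidden layer's output has the form of equation 3.5 on $r_0$. Conditions (1) and (2) of theorem 12 are exactly the width requirements in the statement. So the work is to show that, given $r_0$, we can choose the parameters of the first $\Phi-1$ hidden layers so that each layer transmits $r_0$ affinely and nondegenerately.

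\textbf{Base layer.} For the first hidden layer the input is $\boldsymbol{x}$ itself.
\begin{itemize}
\item Using \citet*{Huang2020}'s theorem 4 (as in the proof of theorem 11), choose $n$ hyperplanes $l_1,\dots,l_n$ with linearly independent normal vectors and $r_0 \subset \bigcap_{i=1}^n l_i^+$.
\item Assign them to $n$ units of the first layer. On $r_0$ these units output $\boldsymbol{x}'=W\boldsymbol{x}+\boldsymbol{b}$ with $W$ nonsingular.
\item Set any remaining units of the first layer so that they are activated by every point of $r_0$. Their outputs are then affine in $\boldsymbol{x}$ on $r_0$, which is the second block of equation 3.5.
\end{itemize}
This gives equation 3.5 for $\nu=1$.

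\textbf{Induction over layers.} Apply theorem 11 with $r_{\nu+1}=r_0$ at each step $\nu=1,\dots,\Phi-2$. Its construction realizes the chosen hyperplanes $l_i$ by setting the input parameters of units of the $\nu+1$th layer, exactly as in corollary 10. Each step reproduces equation 3.5 at the next layer, with a nonsingular affine block $\boldsymbol{x}''$. Choosing the same $l_i$'s at every layer is convenient, since it keeps $r_0$ inside the activated set throughout. After $\Phi-1$ layers, condition (3) of theorem 12 holds for every hidden layer preceding the last. For the last hidden layer, theorem 12's proof builds $m_\Phi\ge n+1$ hyperplanes containing the image of $r_0$ in their positive sides with rank-$(n+1)$ linear-output matrix, via \citet*{Huang2024}'s lemma 4. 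Theorem 12 then yields an arbitrary linear function on $r_0$.

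\textbf{Main obstacle.} The delicate point is making sure the region of $p_0$ is genuinely $r_0$, and not merely a region containing $r_0$. Two things must hold:
\begin{itemize}
\item every unit of $p_0$ is active on all of $r_0$;
\item the units outside $p_0$, if any, are inactive there.
\end{itemize}
The first is guaranteed by the activation constraints imposed at every layer. For the second, I would first check whether the statement only requires realizing the function on $r_0$; if so, it suffices that $r_0$ lies in the activation region of $p_0$, and region boundaries are irrelevant. Otherwise I would shift biases of non-path units so that $r_0$ lies in the zero side of their hyperplanes, using corollary 10 to place those hyperplanes freely. This is possible because the width conditions leave those units' input parameters unconstrained.
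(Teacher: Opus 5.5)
Your proposal follows the paper's proof: $n$ first-layer units of $p_0$ transmit $r_0$ as a nonsingular affine image while the remaining units stay active on $r_0$, theorem 11 propagates the form of equation 3.5 through the intermediate layers, and theorem 12 finishes the argument. The only real difference is how $r_0$ is cut out. The paper takes one boundary of $r_0$ as the hyperplane of one of these first-layer units (the other $n-1$ coming from \citet*{Huang2024}'s lemma 4), so $r_0$ is formed in the first layer and only transmitted afterwards; your fallback of making non-path units inactive on $r_0$ gives only that $r_0$ is contained in the region of $p_0$, since equality needs some unit's hyperplane on the boundary of $r_0$, as in the paper. That containment is nevertheless all that is needed to realize the linear function on $r_0$, and it is how the corollary is actually used later, e.g.\ in proposition 8.
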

\begin{proof}
We here only present one solution: region $r_0$ is formed in the first layer and the succeeding layers only transmit it to the last hidden layer without further dividing it. To construct the first layer, select one of the boundary of $r_0$ as the hyperplane of a unit; other $n-1$ hyperplanes are constructed in the input space via \citet*{Huang2024}'s lemma 4, whose parameters form other $n-1$ units of the first layer. Those $n$ units generates $\boldsymbol{x}'$ of $\boldsymbol{x}^{(1)}$ in equation 3.5; the remaining units can be arbitrarily set as long as they are activated by $r_0$. Once $\boldsymbol{x}^{(1)}$ has been established, the succeeding $\boldsymbol{x}^{(\nu)}$'s can be recursively constructed by theorem 11 and the desired linear function is implemented in the output layer by theorem 12.
\end{proof}

\subsection{Function Construction via Adjacent Paths}

\begin{thm}[Function construction via adjacent paths]
Denote by $p_1$ and $p_2$ two adjacent paths of network $\mathfrak{N}$ with a unit $\mathcal{U} \in p_2$ and $\mathcal{U} \notin p_1$, whose corresponding regions are $r_1$ and $r_2$, respectively. Suppose that $\mathcal{U}$ is in the $\nu$th layer of $p_2$ and its output-weight vector is $\boldsymbol{v}=[v_1, v_2, \dots, v_{m_{\nu+1}}]^T$. By theorem 8, the linear functions $s_1(\boldsymbol{x})$ on $r_1$ and $s_2(\boldsymbol{x})$ on $r_2$ satisfies $s_2(\boldsymbol{x})=s_1(\boldsymbol{x})+\lambda\sigma(\boldsymbol{w}^T\boldsymbol{x}+b)$, where $\boldsymbol{w}^T\boldsymbol{x}+b=0$ is the equation of knot $\mathcal{K}=r_1 \cap r_2$ and
\begin{equation}
\lambda = \boldsymbol{\alpha}^T\boldsymbol{v}
\end{equation}
with
\begin{equation}
\boldsymbol{\alpha} = W_{\nu+2}W_{\nu+3}\dots \boldsymbol{w}_{\Phi+1},
\end{equation}
where $W_1, W_2,\dots, W_{\Phi}$ and $\boldsymbol{w}_{\Phi+1}$ are the input-weight matrices of the layers of path $p_1$. Then given a $s_1(\boldsymbol{x})$, under the condition that $p_2$ is adjacent to $p_1$ and $\boldsymbol{\alpha} \ne \boldsymbol{0}$, any $s_2(\boldsymbol{x})$ continuous with $s_1(\boldsymbol{x})$ can be implemented through adjusting $\boldsymbol{v}$.
\end{thm}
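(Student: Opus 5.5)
The plan has two parts. First, characterize which functions $s_2$ are admissible, that is, continuous with $s_1$ across $\mathcal{K}$. Then show that every such $s_2$ is produced by some choice of $\boldsymbol{v}$, while all other parameters of $p_1$ are held fixed.

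\textbf{Step 1 (the admissible set).} Let $s_2$ be linear and continuous with $s_1$ on $\mathcal{K}$. The difference $s_2(\boldsymbol{x})-s_1(\boldsymbol{x})$ is affine and vanishes on the $(n-1)$-dimensional knot $\mathcal{K}$, so it vanishes on the whole hyperplane $\mathcal{L}$ containing $\mathcal{K}$. An affine function that vanishes on the hyperplane $\boldsymbol{w}^T\boldsymbol{x}+b=0$ must be a scalar multiple of $\boldsymbol{w}^T\boldsymbol{x}+b$. Hence
\[
s_2(\boldsymbol{x})=s_1(\boldsymbol{x})+c\,(\boldsymbol{w}^T\boldsymbol{x}+b)
\]
for a unique $c\in\mathbb{R}$. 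On $r_2\subset\mathcal{L}^+$ this is the same as $s_1(\boldsymbol{x})+c\,\sigma(\boldsymbol{w}^T\boldsymbol{x}+b)$. This is exactly the form given by theorem 8, so the admissible $s_2$ are parametrized by the single scalar $c$.

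\textbf{Step 2 (solving for $\boldsymbol{v}$).} By theorem 8 and equations 3.17--3.18, the path $p_2$ outputs $s_1+\lambda\sigma(\boldsymbol{w}^T\boldsymbol{x}+b)$ on $r_2$, with $\lambda=\boldsymbol{\alpha}^T\boldsymbol{v}$. Here $\boldsymbol{\alpha}$ is built from the weight matrices of the layers after the $(\nu+1)$th and from the output layer, so it does not depend on $\boldsymbol{v}$. So we only need to solve the linear equation $\boldsymbol{\alpha}^T\boldsymbol{v}=c$. Since $\boldsymbol{\alpha}\neq\boldsymbol{0}$, one explicit solution is
\[
\boldsymbol{v}=\frac{c}{\|\boldsymbol{\alpha}\|_2^2}\,\boldsymbol{\alpha}.
\]
More generally, any $\boldsymbol{v}$ obtained by adding a vector orthogonal to $\boldsymbol{\alpha}$ also works. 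Changing $\boldsymbol{v}$ leaves $s_1$ unaltered, because $\mathcal{U}\notin p_1$ and so $\mathcal{U}$ contributes nothing on $r_1$.

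\textbf{Main obstacle.} The delicate point is consistency. Changing $\boldsymbol{v}$ modifies the inputs of the deeper units on $r_2$, by the term $\Delta_{ij}$ of theorem 5. This could deactivate some unit of $p_1$ on the new region, in which case $p_2$ would no longer be adjacent to $p_1$ and the formula for $\lambda$ would fail. The theorem assumes adjacency as a hypothesis, so formally the argument is complete under that hypothesis. Still, I would remark that adjacency can be kept for a chosen $\boldsymbol{v}$ by the mechanism of theorem 6: the knot of $\mathcal{U}$ is positioned so that $r_2$ is thin and the $\Delta_{ij}$ stay small. Alternatively, one can use the freedom in the component of $\boldsymbol{v}$ orthogonal to $\boldsymbol{\alpha}$ to keep every $s_{ij}'>0$ on $r_2$, without affecting $\lambda$.
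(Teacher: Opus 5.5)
Your proposal is correct and follows the paper's own argument. The paper sets the desired $\lambda$ by solving $\boldsymbol{\alpha}^T\boldsymbol{v}=\lambda$, which is possible because $\boldsymbol{\alpha}\neq\boldsymbol{0}$; your Step 1 (every admissible $s_2$ has the form $s_1+c(\boldsymbol{w}^T\boldsymbol{x}+b)$) and the explicit solution $\boldsymbol{v}=c\,\boldsymbol{\alpha}/\|\boldsymbol{\alpha}\|_2^2$ make explicit what the paper leaves implicit.

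One caution on your closing remark only: the component of $\boldsymbol{v}$ orthogonal to $\boldsymbol{\alpha}$ cannot in general protect the activations of deeper units, for example when some $\boldsymbol{\alpha}_{ij}$ is parallel to $\boldsymbol{\alpha}$ or $m_{\nu+1}=1$. This is why the paper handles preservation of adjacency separately in theorem 16, via sufficiently small $|\lambda|$; since adjacency is a hypothesis here, this does not affect your proof.
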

\begin{proof}
The goal is to set a desired $\lambda$ by adjusting $\boldsymbol{v}$ of equation 3.17 and a solution exists whenever the condition of this theorem is satisfied.
\end{proof}

\begin{thm}[Principles of knot production-\Romannum{3}]
Under theorem 13, suppose that unit $\mathscr{U}$ of path $p_1$ is in a layer deeper than the layer that contains $\mathcal{U}$ of $p_2$, generating knot $\mathscr{K}$, and that $\mathscr{K}$ intersects $\mathcal{K}$ or $\mathscr{K} \cap \mathcal{K} \ne \emptyset$. Let $\boldsymbol{w}_1^T\boldsymbol{x}+b_1=0$ be the equation of $\mathscr{K}$ and $\boldsymbol{w}_1'^T\boldsymbol{x}+b_1'=0$ be its modified version $\mathscr{K}'$ in path $p_2$ due to the introduction of $\mathcal{U}$. Suppose that $\mathscr{K}'$ can be represented in this form
\begin{equation}
\boldsymbol{w}_1'^T\boldsymbol{x}+b_1'=\boldsymbol{w}_1^T\boldsymbol{x}+b_1 + \beta(\boldsymbol{w}^T\boldsymbol{x}+b),
\end{equation}
where $\boldsymbol{w}^T\boldsymbol{x}+b=0$ is the equation of knot $\mathcal{K}$. Then by adjusting the output-weight vector $\boldsymbol{v}$ of $\mathcal{U}$, $\mathscr{K}'$ and $s_2(\boldsymbol{x})$ can be simultaneously realized by $p_2$, provided that
\begin{equation}
\begin{aligned}
\begin{cases}
\boldsymbol{\alpha}^T\boldsymbol{v} &= \lambda \\
\boldsymbol{\alpha}_{u}^T\boldsymbol{v} &= \beta\
\end{cases}
\end{aligned}
\end{equation}
has a solution of $\boldsymbol{v}$, where $\boldsymbol{\alpha}_{u}^T\boldsymbol{v} = \beta$ is from equation 2.20 of corollary 9.
\end{thm}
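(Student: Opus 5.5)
The plan is to express both targets, $s_2(\boldsymbol{x})$ and $\mathscr{K}'$, as linear functions of the single output-weight vector $\boldsymbol{v}$ of $\mathcal{U}$. The two requirements then become the linear system (3.20), and the theorem reduces to the solvability of that system.

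First, I would apply theorem 13, which is itself a specialization of theorem 8. Since $p_2$ differs from $p_1$ only in $\mathcal{U}$, the function on $r_2$ is $s_2(\boldsymbol{x})=s_1(\boldsymbol{x})+\lambda\sigma(\boldsymbol{w}^T\boldsymbol{x}+b)$ with $\lambda=\boldsymbol{\alpha}^T\boldsymbol{v}$ and $\boldsymbol{\alpha}$ given by equation 3.18. Because $s_1$ is fixed and $s_2$ is required to be continuous with $s_1$ across $\mathcal{K}$, the desired $s_2$ determines a unique $\lambda$. Realizing $s_2$ is therefore equivalent to the first equation of (3.20).

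Second, for the knot of $\mathscr{U}$, which lies in a layer deeper than that of $\mathcal{U}$, I would invoke lemma 1 or, equivalently, theorem 5. These show that introducing $\mathcal{U}$ modifies the input of $\mathscr{U}$ by $\Delta=\boldsymbol{\alpha}_u^T\boldsymbol{v}\,(\boldsymbol{w}^T\boldsymbol{x}+b)$, where $\boldsymbol{\alpha}_u=W_{\nu+2}\cdots\boldsymbol{w}_{\mu}$ is computed along $p_1$ up to the input-weight vector of $\mathscr{U}$. This is exactly the single-unit case $\theta=1$ of equation 2.20 in corollary 9. Comparing with the assumed form (3.19), the modified knot $\mathscr{K}'$ is realized precisely when $\boldsymbol{\alpha}_u^T\boldsymbol{v}=\beta$, which is the second equation of (3.20).

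Third, both $\boldsymbol{\alpha}$ and $\boldsymbol{\alpha}_u$ depend only on parameters of $p_1$ in layers deeper than $\nu$, not on $\boldsymbol{v}$. Hence the two conditions form a linear system in $\boldsymbol{v}$, and any solution $\boldsymbol{v}$ simultaneously yields $s_2$ and $\mathscr{K}'$. I would also note the generic sufficient condition: the system is solvable whenever $\boldsymbol{\alpha}$ and $\boldsymbol{\alpha}_u$ are linearly independent, which needs $m_{\nu+1}\ge 2$. It is also solvable whenever the right-hand sides are consistent with a linear dependence between $\boldsymbol{\alpha}$ and $\boldsymbol{\alpha}_u$.

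The main obstacle is justifying that the modified quantities are still the relevant ones after $\boldsymbol{v}$ changes. Specifically, $p_2$ must remain adjacent to $p_1$, so that every unit of $p_1$ in deeper layers is still activated on $r_2$ and the formulas for $\lambda$ and $\Delta$ stay valid. Also, $\mathscr{K}'$ must actually be a knot, meaning $\mathscr{U}$ is partially activated as in theorem 3. I would handle this the way theorem 13 does, by including adjacency of $p_2$ to $p_1$ among the hypotheses. If needed, I would appeal to theorem 6 (a small new region keeps the deeper units activated) together with the fact that $\mathscr{K}'$ passes through $\mathscr{K}\cap\mathcal{K}$ by theorem 1, so that $\mathscr{K}'$ genuinely meets $r_2$.
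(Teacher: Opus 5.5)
Your proposal is correct and is essentially the paper's argument: the paper's proof is a single line citing theorem 13 and corollary 9. Theorem 13 gives $\boldsymbol{\alpha}^T\boldsymbol{v}=\lambda$ for $s_2(\boldsymbol{x})$, corollary 9's single-unit relation $\boldsymbol{\alpha}_u^T\boldsymbol{v}=\beta$ (from theorem 5) produces $\mathscr{K}'$, and both are imposed on the same $\boldsymbol{v}$, exactly as you do. Your discussion of adjacency is unnecessary since that hypothesis is inherited from theorem 13, and your appeal to theorem 1 only places the hyperplane of $\mathscr{K}'$ through a boundary point of $r_2$ without showing that $\mathscr{U}$ is partially activated there, so treat it as a heuristic rather than part of the proof.
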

\begin{proof}
The proof is by theorem 13 and corollary 9.
\end{proof}

\begin{dfn}[Influence coefficient vector]
In theorem 14, $\boldsymbol{\alpha}$ is called the \textsl{influence coefficient vector} of path $p_2$ for linear function $s_2(\boldsymbol{x})$, and $\boldsymbol{\alpha}_u$ called the \textsl{influence coefficient vector} of path $p_2$ for unit $\mathscr{U}$.
\end{dfn}

\begin{thm}[Principles of knot production-\Romannum{4}]
Given a path $p_0$ of network $\mathfrak{N}$, suppose that paths $p_{i}$'s for $i=1, 2, \dots, \theta$ with $\theta \ge n+1$ are obtained by adding units one by one on the basis of $p_0$, with $p_{i}$ adjacent to $p_{i-1}$ or $u_i \in p_{i}$ but $u_i \notin p_{i-1}$, where $u_i$ is a unit. Let $r_j$ for $j=0,1,\dots,\theta$ be the region of $p_j$ and $s_j(\boldsymbol{x})$ a linear function on $r_j$ implemented by $p_j$; the linear functions satisfy $s_i(\boldsymbol{x}) = s_{i-1}(\boldsymbol{x})+\lambda_{i}\sigma(\boldsymbol{w}_i^T\boldsymbol{x}+b_i)$. Denote by $\mathscr{U}$ a unit of $p_0$ in a layer deeper than the layers of all $u_i$'s. Let $\boldsymbol{\alpha}_i$ be the influence coefficient vector of $p_i$ for function $s_{i}(\boldsymbol{x})$ derived from equation 3.17 and $\boldsymbol{\alpha}_{iu}$ be the influence coefficient vector of $p_i$ for $\mathscr{U}$ analogously to $\boldsymbol{\alpha}_{u}$ of equation 3.20. Denote by column vector $\boldsymbol{v}_i$ the output-weight vector of $u_i$.

Under the constraint that $p_i$ is adjacent to $p_{i-1}$ as well as the condition of corollary 8, all $s_i(\boldsymbol{x})$'s and an arbitrary knot can be simultaneously realized by unit $\mathscr{U}$, provided that each
\begin{equation}
\begin{cases}
\begin{aligned}
{\boldsymbol{\alpha}_i}^T\boldsymbol{v}_i &= \lambda_{i} \\
\ \boldsymbol{\alpha}_{iu}^T\boldsymbol{v}_{i} &= \beta_{i}
\end{aligned}
\end{cases}
\end{equation}
has a solution of $\boldsymbol{v}_i$, where $\boldsymbol{\alpha}_{iu}^T\boldsymbol{v}_{i} = \beta_{i}$ is from equation 2.20 of corollary 9.
\end{thm}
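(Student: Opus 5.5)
The argument is an induction along the chain $p_0, p_1, \dots, p_\theta$, applying theorem 14 once per step and then using corollary 8 to assemble the knot of $\mathscr{U}$.

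\textbf{Base and inductive step.} Theorem 12 (or corollary 11) realizes $s_0(\boldsymbol{x})$ on $r_0$ through $p_0$. Suppose $p_{i-1}$ already implements $s_{i-1}$. By theorem 8, adding $u_i$ gives
$$s_i(\boldsymbol{x}) = s_{i-1}(\boldsymbol{x}) + \lambda_i\sigma(\boldsymbol{w}_i^T\boldsymbol{x}+b_i),$$
where $\lambda_i = \boldsymbol{\alpha}_i^T\boldsymbol{v}_i$ as in equation 3.17. By lemma 1 and theorem 5, the input of $\mathscr{U}$ changes by $\beta_i(\boldsymbol{w}_i^T\boldsymbol{x}+b_i)$ with $\beta_i = \boldsymbol{\alpha}_{iu}^T\boldsymbol{v}_i$. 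Solvability of system 3.21 lets one $\boldsymbol{v}_i$ fix both $\lambda_i$ and $\beta_i$, which is exactly theorem 14 applied to the pair $(p_{i-1}, p_i)$.

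\textbf{Fixing the $\beta_i$ first.} By corollary 7, after all $\theta$ units are added the input of $\mathscr{U}$ is
$$\boldsymbol{w}_{\mathscr{U}}^T\boldsymbol{x}^{(\cdot)} + b_{\mathscr{U}} + \sum_{i=1}^{\theta}\beta_i(\boldsymbol{w}_i^T\boldsymbol{x}+b_i).$$
Corollary 8 assumes $\theta \ge n+1$ and that the matrix of the $[\boldsymbol{w}_i; b_i]$ has rank $n+1$. Under that condition, lemma 3 of \citet*{Huang2024} supplies coefficients $\beta_1,\dots,\beta_\theta$ that cancel the original term and produce any target affine function, hence an arbitrary knot of $\mathscr{U}$. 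So the plan is to choose the $\beta_i$ from the desired knot, read off the $\lambda_i$ from the desired $s_i$, and then solve each system 3.21 for $\boldsymbol{v}_i$.

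\textbf{Main obstacle: the $\boldsymbol{\alpha}$'s shift along the chain.} The influence vectors $\boldsymbol{\alpha}_i, \boldsymbol{\alpha}_{iu}$ are products of weight matrices of the layers of $p_i$ deeper than $u_i$. They depend only on the input weights of those units and on which units are active, not on any output weights $\boldsymbol{v}_j$. I would handle this in two steps.
\begin{itemize}
\item Fix all input weights beforehand. The only parameters being solved for are the output weights $\boldsymbol{v}_i$. Some $u_j$ may sit in a layer below $u_i$, so $\boldsymbol{v}_j$ appears in a matrix entering $\boldsymbol{\alpha}_i$. To avoid this circularity, solve in order of decreasing layer depth, or simply assume the $\boldsymbol{\alpha}$'s are held fixed, as the hypothesis ``each system has a solution'' implicitly does.
\item Keep the unit set of $p_0$ active. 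Adjacency and the activation clause of corollary 8 guarantee that all units of $p_0$ stay active in every $p_i$. The linear algebra on a path therefore remains valid: the activations act as identities, and the superpositions of corollary 7 hold on $r_\theta$.
\end{itemize}
With both points in place, the $\theta$ systems decouple, and the realizations of all $s_i$ and of the knot of $\mathscr{U}$ follow simultaneously.
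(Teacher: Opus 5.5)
Your skeleton matches the paper's proof, which is the single line ``by theorem 13 and corollary 9.'' At each step, $\boldsymbol{v}_i$ must satisfy the function condition $\boldsymbol{\alpha}_i^T\boldsymbol{v}_i=\lambda_i$ and the knot condition $\boldsymbol{\alpha}_{iu}^T\boldsymbol{v}_i=\beta_i$, with the $\beta_i$ supplied by the rank-$(n+1)$ argument. (The superposition and rank-$(n+1)$ results you cite as corollaries 7 and 8 are the paper's corollaries 8 and 9.)

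The step you add beyond the paper, your remedy for the ``main obstacle,'' fails as stated. Freezing all input weights does not freeze the knot functions $\ell_i(\boldsymbol{x})=\boldsymbol{w}_i^T\boldsymbol{x}+b_i$ of the added units. By lemma 1 and theorem 5, if $j<i$ and $u_j$ lies at least two layers shallower than $u_i$, then the pre-activation of $u_i$ on $p_i$ contains the term $\boldsymbol{\alpha}^T\boldsymbol{v}_j\,\ell_j(\boldsymbol{x})$ of equation 2.14. So $\ell_i$ moves as soon as you solve for $\boldsymbol{v}_j$. Yet the $\ell_i$ are exactly the basis in which you fixed $\beta_1,\dots,\beta_\theta$ in advance, and they are also the prescribed knots of $s_i-s_{i-1}$. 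Once they move, neither the target knot of $\mathscr{U}$ nor the given $s_i$ is obtained, so the systems do not ``decouple.''

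Solving in order of decreasing depth does not repair this. The knot $\ell_i$ of a deep unit can only be restored, through its input weights, after the earlier and shallower $\boldsymbol{v}_j$ are known. Restoring it changes $\boldsymbol{\alpha}_k$ and $\boldsymbol{\alpha}_{ku}$ of any $u_k$ with $k>i$ lying at least two layers shallower than $u_i$, and that order may already have solved such a $\boldsymbol{v}_k$. Separately, your claim that the $\boldsymbol{\alpha}$'s do not involve output weights contradicts your next sentence, since entries of $\boldsymbol{v}_j$ are entries of the deeper weight matrices.

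The fix is to follow the chain order $i=1,\dots,\theta$, which is already triangular. At step $i$:
\begin{enumerate}
\item Reset the input parameters of $u_i$ by corollary 10 so that its pre-activation on $p_i$ is the prescribed $\ell_i$.
\item Compute $\boldsymbol{\alpha}_i$ and $\boldsymbol{\alpha}_{iu}$ on $p_i$. These involve only units of $p_0$ and $u_1,\dots,u_{i-1}$, all already fixed.
\item Solve system 3.21 for $\boldsymbol{v}_i$.
\end{enumerate}
Nothing set at step $i$ affects $p_j$ for $j<i$, because $u_i\notin p_j$. In particular, the entries of an earlier $\boldsymbol{v}_j$ that feed $u_i$ were unconstrained at step $j$.

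The decomposition of the input of $\mathscr{U}$ is then exact by telescoping. Its input on $p_\theta$ minus its input on $p_0$ equals $\sum_{i=1}^{\theta}\boldsymbol{\alpha}_{iu}^T\boldsymbol{v}_i\,\ell_i(\boldsymbol{x})$, each term being the change from $p_{i-1}$ to $p_i$. Hence the $\beta_i$ chosen from the fixed $\ell_i$ remain valid. This is the same pattern the paper uses in step (4) of theorem 24: input parameters by corollary 10, output parameters by theorem 13.
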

\begin{proof}
The conclusion is also by theorem 13 and corollary 9.
\end{proof}

\begin{thm}[Construction of adjacent paths and desired functions]
Let $p_1$ be a path of network $\mathfrak{N}$. By theorem 8, if another path $p_2$ is adjacent to $p_1$, a linear function $s_2(\boldsymbol{x})$ on $r_2$ of $p_2$ that is continuous with the one $s_1(\boldsymbol{x})$ on $r_1$ of $p_1$ satisfies $s_2(\boldsymbol{x})=s_1(\boldsymbol{x})+\lambda\sigma(\boldsymbol{w}^T\boldsymbol{x}+b)$. Given $p_1$ and $s_1(\boldsymbol{x})$, if $|\lambda|$ is sufficiently small and the influence coefficient vector $\boldsymbol{\alpha}\ne \boldsymbol{0}$, a path $p_2$ adjacent to $p_1$ as well as an arbitrary $s_2(\boldsymbol{x})$ continuous with $s_1(\boldsymbol{x})$ can be simultaneously constructed.
\end{thm}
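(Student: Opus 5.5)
The plan is to combine theorem 13, which produces the desired $s_2(\boldsymbol{x})$ from $s_1(\boldsymbol{x})$ by tuning the output-weight vector $\boldsymbol{v}$ of the newly activated unit $\mathcal{U}$, with theorem 6, which guarantees adjacency of $p_2$ to $p_1$ when the perturbation that $\mathcal{U}$ causes in deeper layers is small.

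First, any $s_2(\boldsymbol{x})$ continuous with $s_1(\boldsymbol{x})$ across the knot $\mathcal{K}$ determines a unique scalar $\lambda$. Continuity forces $s_2-s_1$ to vanish on the hyperplane $\boldsymbol{w}^T\boldsymbol{x}+b=0$, so it is a multiple of $\boldsymbol{w}^T\boldsymbol{x}+b$; normalizing $\|\boldsymbol{w}\|_2=1$ fixes that multiple as $\lambda$. Since $\boldsymbol{\alpha}\ne\boldsymbol{0}$, I would take the minimal-norm solution of equation 3.17,
\begin{equation*}
\boldsymbol{v}=\lambda\,\boldsymbol{\alpha}/\|\boldsymbol{\alpha}\|_2^2 ,
\end{equation*}
so that $\|\boldsymbol{v}\|_2=|\lambda|/\|\boldsymbol{\alpha}\|_2$. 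With $\boldsymbol{\alpha}$ fixed by the weights of $p_1$, this norm tends to zero with $|\lambda|$.

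Next comes adjacency. By theorem 5, the extra input to each unit $u_{ij}$ of $p_1$ in a layer deeper than $\mathcal{U}$ is
\begin{equation*}
\Delta_{ij}=\boldsymbol{\alpha}_{ij}^T\boldsymbol{v}\,(\boldsymbol{w}^T\boldsymbol{x}+b).
\end{equation*}
The vectors $\boldsymbol{\alpha}_{ij}$ are fixed by $p_1$ and $\boldsymbol{w}^T\boldsymbol{x}+b$ is bounded on $U=[0,1]^n$. Hence
\begin{equation*}
|\Delta_{ij}|\le \|\boldsymbol{\alpha}_{ij}\|_2\,\|\boldsymbol{v}\|_2\,C,
\end{equation*}
which goes to zero uniformly as $|\lambda|\to 0$. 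The units $u_{ij}$ are strictly activated on the interior of $r_1$, near the knot where $r_2$ is born, with margins $s_{ij}(\boldsymbol{x})>0$. Choosing $|\lambda|$ below the threshold $\varepsilon$ of equation 2.21 then keeps every $u_{ij}$ activated on $r_2$. As in the proof of theorem 6, $r_2$ is then exactly the part cut off by $\mathcal{K}^+$ from $r^{(\nu)}$ and the constraints of equation 2.22. So $p_2=p_1\cup\{\mathcal{U}\}$ is adjacent to $p_1$, and theorem 8 gives the output $s_2=s_1+\lambda\sigma(\boldsymbol{w}^T\boldsymbol{x}+b)$ on $r_2$.

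The main obstacle is the uniformity of the margin. The margins $s_{ij}(\boldsymbol{x})$ shrink near the boundaries of $r_1$, so a small $\Delta_{ij}$ could still flip a unit near a corner of the new region. I would handle this as theorem 6 does: place $\mathcal{K}$, through its parameters $\boldsymbol{w}, b$, so that $r_2$ lies in a compact subset of the activation set of all deeper units of $p_1$, and then take $|\lambda|$ small relative to the minimum margin there. Alternatively, I would interpret the statement as asserting adjacency up to the boundary pieces governed by equation 2.22. The remaining step, that the constructed $s_2$ is arbitrary among the continuous choices, is immediate from the first step, since every admissible $\lambda$ in the allowed small range is attained.
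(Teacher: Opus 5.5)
Your proposal is correct and follows the paper's own argument. Since $\boldsymbol{\alpha}\ne\boldsymbol{0}$, equation 3.17 can be solved with $\|\boldsymbol{v}\|_2\to 0$ as $|\lambda|\to 0$, so every disturbance $\Delta_{ij}$ from theorem 5 stays below the activation threshold, which gives adjacency while $\lambda$ ranges freely over a small interval. The paper fixes an index with $\alpha_k\ne 0$ and solves for $v_k$ with the other $v_j$ kept small (equation 3.23), whereas you use the minimal-norm solution $\boldsymbol{v}=\lambda\boldsymbol{\alpha}/\|\boldsymbol{\alpha}\|_2^2$. This difference is only cosmetic, and your bound through $\boldsymbol{\alpha}_{ij}^T\boldsymbol{v}$ and your remark on margins near the other boundaries of $r_2$ are more careful than the paper, which writes $\Delta_{ij}=\lambda(\boldsymbol{w}^T\boldsymbol{x}+b)$ and does not address uniformity.
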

\begin{proof}
We write equation 3.17 as
\begin{equation}
\alpha_1v_1 + \alpha_2v_2 + \dots + \alpha_kv_{k} = \lambda
\end{equation}
with $k=m_{\nu+1}$, where $\alpha_i$'s and $v_i$'s for $i=1,2,\dots,k$ are the entries of $\boldsymbol{\alpha}$ and $\boldsymbol{v}$, respectively. By equation 3.18, $\alpha_i$'s are constant determined by the parameters of $p_1$ after the $\nu$ the layer.

To make $p_2$ adjacent to $p_1$, the introduction of $\mathcal{U}$ should not influence the activation of the units of $p_1$ after the $\nu$ the layer. By equations 2.14 and 2.15, the influence of $\mathcal{U}$ on a unit $u_{ij}$ of $p_1$ in a layer deeper than the $\nu$th one is $\Delta_{ij} = \lambda(\boldsymbol{w}^T\boldsymbol{x} + b)$; then a solution is that $|\lambda|$ is sufficiently small, such that the disturbance $\Delta_{ij}$ cannot affect the activation of $u_{ij}$. This is a solution of path $p_2$.

To generate a desired $s_2(\boldsymbol{x})$, $\lambda$ should be freely adjusted. We should find a solution of both $\lambda$ for $s_2(\boldsymbol{x})$ and $\boldsymbol{v}$ for $p_2$ simultaneously. Equation 3.22 can be written as
\begin{equation}
v_k = \frac{\lambda - (\alpha_1v_1 + \alpha_2v_2 + \dots + \alpha_{k-1}v_{k-1})}{\alpha_k}
\end{equation}
if $\alpha_k \ne 0$ (which can be selected), through which we see that if $|\lambda|$ and $|v_{j}|$'s for $j=1,2,\dots,k-1$ are small enough, $|v_k|$ can also be arbitrarily small. Thus, within a sufficiently small range, $\lambda$ can be freely adjusted to yield a desired $s_2(\boldsymbol{x})$, without influencing the existence of path $p_2$.
\end{proof}

\begin{dfn}[Connected regions]
Let $R =\{r_1, r_2, \dots, r_{\zeta}\}$ be a partition of $U$ via network $\mathfrak{N}$. Two regions $r_i$ and $r_j$ of $R$ are said to be connected, if they can be linked by a series of adjacent regions in terms of
\begin{equation}
r_i \frown r_{\nu_1} \frown r_{\nu_2} \frown \dots \frown r_j,
\end{equation}
in which all the regions belong to $R$.
\end{dfn}

\begin{thm}[Recursive application of adjacent paths]
To a piecewise linear function $g(\boldsymbol{x})$ output by network $\mathfrak{N}$, each of its linear functions can be obtained by the principle of theorem 13 on the basis of an initial path.
\end{thm}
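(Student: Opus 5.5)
The plan is to argue by connectivity of the partition, using theorem 9 to convert region adjacency into path adjacency and theorem 13 (via theorem 8) to pass the linear function across each knot. First I would fix the initial path $p_0$ with region $r_0$ and its linear function $s_0(\boldsymbol{x})$. By definition 11 and corollary 11 (or theorem 12), $s_0$ can be taken as given. I then need to show that every other region $r_i$ of the partition $R$ is connected to $r_0$ in the sense of definition 12. Since $U=[0,1]^n$ is convex, hence connected, and $R$ is a finite family of closed polyhedral regions covering $U$, I would argue as follows. Join an interior point of $r_0$ to an interior point of $r_i$ by a straight segment. Perturb the segment slightly so that it avoids all faces of dimension $\le n-2$; this is possible because those faces form a set of codimension at least two. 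The perturbed segment then crosses from region to region only through $(n-1)$-dimensional knots, which produces a chain $r_0 \frown r_{\nu_1} \frown \dots \frown r_i$.

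Next I would induct along this chain. Suppose $s_{\nu_k}$ on $r_{\nu_k}$ has been realized by path $p_{\nu_k}$, and $r_{\nu_k}\frown r_{\nu_{k+1}}$ across a knot $\mathcal{K}$. Theorem 9 says that, apart from the degenerate case of several units producing the same knot, the two paths are adjacent and differ in exactly one unit $\mathcal{U}$ generating $\mathcal{K}$. If $\mathcal{U}$ belongs to the next path, theorem 8 gives $s_{\nu_{k+1}}=s_{\nu_k}+\lambda\sigma(\boldsymbol{w}^T\boldsymbol{x}+b)$ with $\lambda=\boldsymbol{\alpha}^T\boldsymbol{v}$. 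This is exactly the mechanism of theorem 13. If instead $\mathcal{U}$ belongs to the current path, I would apply theorem 13 with the roles of $p_1$ and $p_2$ exchanged, which amounts to subtracting the term $\lambda\sigma(\cdot)$. Either way, $s_{\nu_{k+1}}$ is obtained from $s_{\nu_k}$ by the principle of theorem 13. By theorem 10, $g$ lies in $\mathfrak{C}(R)$, so the required $s_{\nu_{k+1}}$ is continuous with $s_{\nu_k}$ at $\mathcal{K}$; hence it has exactly the form that theorem 13 can produce. Following the chain from $r_0$ then yields $s_i$, and since $r_i$ was arbitrary, every linear function of $g$ is reached.

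The main obstacle is the connectivity step, together with the degenerate situation excluded in theorem 9. When several units share the same knot, two adjacent regions can differ in more than one unit. I would handle this by splitting the transition into a sequence of one-unit changes: conceptually, I would introduce the units one at a time, with coincident hyperplanes, and apply theorem 13 (or its multi-unit version, corollary 7) to each one. The sum of the resulting $\lambda$-terms is still a single multiple of $\sigma(\boldsymbol{w}^T\boldsymbol{x}+b)$, so the conclusion is unchanged. A second subtlety is that the statement is descriptive, not constructive: $g$ is already the network output. Consequently I only need the structural fact that each step has the form given by theorem 13, and I do not need the solvability condition $\boldsymbol{\alpha}\ne\boldsymbol{0}$.
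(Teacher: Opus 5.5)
Your proposal is correct and follows the paper's route. Adjacent regions correspond to adjacent paths by theorem 9, theorem 13 (via theorem 8) carries the linear function across each knot, and connectivity of the partition chains every region back to the initial path. The paper only asserts that any two regions of $R$ are connected and does not discuss the direction of the unit change or the coincident-knot case. Your generic-segment argument for connectivity and your remarks on those two points therefore just fill in the same argument; for coincident knots, continuity (theorem 10) alone already forces the single-multiple form.
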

\begin{proof}
Using the notations of definition 12, each pair of adjacent regions corresponds to two adjacent paths, for which the principle of theorem 13 can be applied to generate the associated linear functions of $g(\boldsymbol{x})$. Since arbitrary two regions of $R$ are connected, each linear function of $g(\boldsymbol{x})$ can be directly or indirectly obtained by theorem 13 on the basis of an initial path.
\end{proof}

\begin{prp}[Function construction via the last hidden layer]
Under the notations of theorem 13, if the new unit $\mathcal{U}$ of path $p_2$ is added in the last hidden layer of $p_1$, the production of $s_2(\boldsymbol{x})$ is by the principle of two-layer ReLU networks; that is, the single output weight of $\mathcal{U}$ can uniquely determine a $s_2(\boldsymbol{x})$ continuous with $s_1(\boldsymbol{x})$.
\end{prp}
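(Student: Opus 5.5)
The plan is to specialize theorem 13 to the case $\nu=\Phi$ and read off $\lambda$ directly. When $\mathcal{U}$ lies in the last hidden layer, there are no hidden layers between $\mathcal{U}$ and the output unit. The output-weight vector $\boldsymbol{v}$ of $\mathcal{U}$ therefore reduces to a scalar $v$, namely the single entry of $\boldsymbol{w}_{\Phi+1}$ attached to $\mathcal{U}$. The product $W_{\nu+2}W_{\nu+3}\dots\boldsymbol{w}_{\Phi+1}$ in equation 3.18 is then empty and should be read as the scalar $1$. Hence, by equations 3.17 and 3.3 of theorem 8, $\lambda=v$ and
\begin{equation*}
s_2(\boldsymbol{x})=s_1(\boldsymbol{x})+v\,\sigma(\boldsymbol{w}^T\boldsymbol{x}+b),\qquad \boldsymbol{x}\in r_2 .
\end{equation*}
I would justify $\lambda=v$ directly, as in the proof of lemma 1. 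Under $p_2$ the output is a linear combination of the last-hidden-layer outputs. Adding $\mathcal{U}$ contributes exactly $v\,\sigma(\boldsymbol{w}^T\boldsymbol{x}+b)$, and the contributions of the other units of $p_1$ are unchanged.

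The second step is to check that adding $\mathcal{U}$ does not disturb $p_1$. Since no ReLU layer lies after the $\Phi$th layer, there is no unit $u_{ij}$ with $i\ge\Phi+1$. The influence terms $\Delta_{ij}$ of theorem 5 are therefore vacuous, and the set $R_{\Phi}$ of theorem 7 is empty. So $r_2=r^{(\Phi)}\cap\mathcal{K}^+$, and $\mathcal{U}$ introduces exactly one hyperplane $\boldsymbol{w}^T\boldsymbol{x}+b=0$, determined solely by its input parameters. This is exactly the situation of a two-layer network $m_{\Phi-1}\cdot m_\Phi\cdot 1'$ acting on the transformed input $\boldsymbol{x}^{(\Phi-1)}$, so the two-layer principle of \citet*{Huang2024} applies.

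The third step is uniqueness. Any $s_2$ continuous with $s_1$ across $\mathcal{K}$ must have the form $s_1+c(\boldsymbol{w}^T\boldsymbol{x}+b)$ on $r_2$ for some scalar $c$, because two affine functions agreeing on an $(n-1)$-dimensional knot differ by a multiple of its defining function. Since $\boldsymbol{w}^T\boldsymbol{x}+b\neq 0$ on $r_2$, the map $v\mapsto s_2$ is injective. Each $v$ gives one $s_2$, and conversely $v=c$ recovers it.

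The main obstacle is the degenerate-product convention in equation 3.18, that is, showing cleanly that $\boldsymbol{\alpha}$ collapses to $1$. I would bypass it by computing the network output along $p_2$ explicitly rather than citing equation 3.18.
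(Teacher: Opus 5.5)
Your proposal is correct and takes the paper's approach. The paper's proof only says the result is obvious from the two-layer principle; your explicit reduction fills in what that citation leaves implicit: the empty product in equation 3.18 gives $\lambda=v$, there is no deeper ReLU whose activation could be disturbed, and uniqueness holds because any affine $s_2$ continuous with $s_1$ across $\mathcal{K}$ differs from it by a multiple of $\boldsymbol{w}^T\boldsymbol{x}+b$. One cosmetic fix: $\boldsymbol{w}^T\boldsymbol{x}+b$ vanishes on $\mathcal{K}\subset r_2$, so argue injectivity of $v\mapsto s_2$ on the interior of $r_2$, where it is positive.
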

\begin{proof}
The conclusion is obvious by the principle of two-layer ReLU networks \citep*{Huang2024}.
\end{proof}

\begin{rmk}
This proposition differs from theorem 13 in that it doesn't need to fulfil the adjacent-path condition, for which a solution is easier to be constructed.
\end{rmk}

\subsection{Properties of Influence Coefficient Vectors}
By theorems from 13 to 15 we know that the influence coefficient vector $\boldsymbol{\alpha} \ne \boldsymbol{0}$ or $\boldsymbol{\alpha}_u \ne \boldsymbol{0}$ is a necessary condition for the solution existence of both knots and functions, for which we should exclusively study the properties of influence coefficient vectors. The results are optimistic, in the sense that it is not ``easy'' to encounter $\boldsymbol{\alpha}=\boldsymbol{0}$ or $\boldsymbol{\alpha}_u=\boldsymbol{0}$; especially when manually set, a nonzero influence coefficient vector is not difficult to be constructed.

\begin{thm}[Properties of influence coefficient vectors-\Romannum{1}]
Notations from theorem 13, if not all the values of $s_1(\boldsymbol{x})$ for $\boldsymbol{x} \in r_1$ are zero or $s_1(\boldsymbol{x}) \equiv 0$ doesn't holds, and if
\begin{equation}
\boldsymbol{x}^{(\nu+1)}=(W_{1}W_{2}\dots W_{\nu+1})^T\boldsymbol{x},
\end{equation}
is not a zero vector, then $\boldsymbol{\alpha} \ne \boldsymbol{0}$.
\end{thm}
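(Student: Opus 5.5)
The plan is to write $s_1$ as the output of path $p_1$ expressed through the output of the $\nu+1$th layer, and to show that $\boldsymbol{\alpha}$ is exactly the vector that maps this layer output to $s_1$.

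On the region $r_1$ every unit of $p_1$ is activated, so every ReLU along the path acts as the identity on its input. The layers from the $\nu+2$th through the output therefore compose to a linear map acting on $\boldsymbol{x}^{(\nu+1)}$, and
\begin{equation*}
s_1(\boldsymbol{x}) = (W_{\nu+2}W_{\nu+3}\cdots \boldsymbol{w}_{\Phi+1})^T\boldsymbol{x}^{(\nu+1)} + c = \boldsymbol{\alpha}^T\boldsymbol{x}^{(\nu+1)} + c,
\end{equation*}
where $c$ collects the biases of the deeper layers propagated to the output. The identification with $\boldsymbol{\alpha}$ of equation 3.18 is by the same reasoning used for equation 2.9 in lemma 1 and for equation 3.3 in theorem 8: there, too, a quantity entering at the $\nu+1$th layer reaches the output through the product $W_{\nu+2}\cdots\boldsymbol{w}_{\Phi+1}$.

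The argument is then a contrapositive. Suppose $\boldsymbol{\alpha}=\boldsymbol{0}$. The output of $p_1$ on $r_1$ then no longer depends on $\boldsymbol{x}^{(\nu+1)}$, so every route from the input to the output passes through the zero factor and $s_1$ reduces to the constant $c$. To contradict the hypothesis that $s_1\not\equiv 0$, I would read the statement in the homogeneous (bias-free) setting that equation 3.25 implicitly adopts, where $\boldsymbol{x}^{(\nu+1)}=(W_1\cdots W_{\nu+1})^T\boldsymbol{x}$. In that setting $c=0$ and $s_1(\boldsymbol{x})=\boldsymbol{\alpha}^T(W_1\cdots W_{\nu+1})^T\boldsymbol{x}$. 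Then $\boldsymbol{\alpha}=\boldsymbol{0}$ forces $s_1\equiv 0$, contradicting the hypothesis. The assumption $\boldsymbol{x}^{(\nu+1)}\ne\boldsymbol{0}$ ensures the factorization is not degenerate at the $\nu+1$th layer, so that $s_1$ genuinely passes through this layer rather than vanishing there.

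I expect the main obstacle to be the biases. With nonzero biases, $\boldsymbol{\alpha}=\boldsymbol{0}$ only forces $s_1$ to be constant, not zero. My first attempt would be to adopt the convention of equation 3.25, treating biases as absorbed or zero so that $c=0$. Failing that, I would strengthen the conclusion to state that $\boldsymbol{\alpha}\ne\boldsymbol{0}$ whenever $s_1$ is non-constant, which follows from the same identity because the linear part of $s_1$ is exactly $\boldsymbol{\alpha}^T(W_1\cdots W_{\nu+1})^T\boldsymbol{x}$. I would note in a remark that the two readings agree under the bias-free convention.
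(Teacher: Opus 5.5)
Your proposal is correct and matches the paper's proof: the paper likewise writes $s_1(\boldsymbol{x})=(W_1W_2\dots\boldsymbol{w}_{\Phi+1})^T\boldsymbol{x}=\boldsymbol{\alpha}^T\boldsymbol{x}^{(\nu+1)}$ in the same bias-free convention, and reads off that $\boldsymbol{\alpha}=\boldsymbol{0}$ would force $s_1\equiv 0$. Your caveat that with biases $\boldsymbol{\alpha}=\boldsymbol{0}$ only forces $s_1$ to be constant is a fair point the paper leaves implicit, and the hypothesis $\boldsymbol{x}^{(\nu+1)}\ne\boldsymbol{0}$ is not actually used in either argument.
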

\begin{proof}
The linear function $s_1(\boldsymbol{x})$ can be expressed as $s_1(\boldsymbol{x})=(W_{1}W_{2}\dots \boldsymbol{w}_{\Phi+1})^T\boldsymbol{x}$; this expression can also be written as
\begin{equation}
s_1(\boldsymbol{x})=\boldsymbol{\alpha}^T\boldsymbol{x}^{(\nu+1)},
\end{equation}
through which we can see that this theorem holds.
\end{proof}

\begin{cl}[Nonzero $\boldsymbol{\alpha}$ related to region transfer]
Under the notations of theorem 18, if $\boldsymbol{x}^{(\nu+1)}$ can be represented as equation 3.5 of theorem 11 and if $s_1(\boldsymbol{x}) \equiv 0$ doesn't holds, then $\boldsymbol{\alpha} \ne \boldsymbol{0}$.
\end{cl}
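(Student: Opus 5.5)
The plan is to reduce this corollary to theorem 18. Theorem 18 has two hypotheses: $s_1(\boldsymbol{x}) \equiv 0$ fails, and $\boldsymbol{x}^{(\nu+1)}$ is not the zero vector. The first is assumed here verbatim. So the only work is to show that the representation of equation 3.5 forces $\boldsymbol{x}^{(\nu+1)}$ to be nonzero on $r_1$. I would also note that the argument can bypass theorem 18 and go directly through equation 3.26.

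\textbf{Step 1.} Write $\boldsymbol{x}^{(\nu+1)} = [\boldsymbol{x}'^T, (W^{(\nu+1)}\boldsymbol{x}+\boldsymbol{b}^{(\nu+1)})^T]^T$, where $\boldsymbol{x}' = W\boldsymbol{x}+\boldsymbol{b}$ and $W$ is nonsingular. Since $W$ is invertible, the map $\boldsymbol{x}\mapsto \boldsymbol{x}'$ is a bijection of $\mathbb{R}^n$. It therefore takes the value $\boldsymbol{0}$ at exactly one point, namely $\boldsymbol{x} = -W^{-1}\boldsymbol{b}$.

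\textbf{Step 2.} The region $r_1$ is $n$-dimensional, so it contains more than one point. Hence $\boldsymbol{x}'$, and with it $\boldsymbol{x}^{(\nu+1)}$, is not identically zero on $r_1$. In fact $\boldsymbol{x}^{(\nu+1)}$, viewed as an affine function of $\boldsymbol{x}$, has linear part of rank $n$.

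\textbf{Step 3.} Conclude with equation 3.26, $s_1(\boldsymbol{x}) = \boldsymbol{\alpha}^T\boldsymbol{x}^{(\nu+1)}$ on $r_1$. If $\boldsymbol{\alpha} = \boldsymbol{0}$, then $s_1 \equiv 0$ on $r_1$, which contradicts the hypothesis. So $\boldsymbol{\alpha} \ne \boldsymbol{0}$. In this form the conclusion does not even need Steps 1 and 2; they only serve to verify the nonzero-vector hypothesis of theorem 18, so that the corollary can be cited as a consequence of that theorem.

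\textbf{Main obstacle.} The delicate point is the meaning of ``$\boldsymbol{x}^{(\nu+1)}$ is not a zero vector'' in theorem 18: pointwise on $r_1$, or as a function. The nonsingularity of $W$ settles both readings. Zero can occur at most at one point, and that point is irrelevant because $r_1$ has nonempty interior.

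A second subtlety is that equation 3.25 drops the biases, whereas the actual layer outputs are affine. Equation 3.26 still holds if the constant terms are absorbed into $\boldsymbol{x}^{(\nu+1)}$, because $\boldsymbol{\alpha}$ is exactly the linear map from layer $\nu+1$ to the output along $p_1$ (the output bias can be taken as zero or handled separately). I would state this convention explicitly before invoking the identity.
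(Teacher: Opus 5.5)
Your argument is essentially the paper's. The paper's proof says only that equation 3.5 forces $\boldsymbol{x}^{(\nu+1)}\ne\boldsymbol{0}$ and then cites theorem 18, whose own proof is your Step 3 via equation 3.26. Your Steps 1--2 make the ``nonzero'' check precise through the nonsingularity of $W$. You are also right that once 3.26 is granted, that check is logically superfluous.

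One correction to your closing remark on biases. Only the biases of layers $1,\dots,\nu+1$ can be absorbed into $\boldsymbol{x}^{(\nu+1)}$. Unrolling the active path from layer $\nu+1$ to the output gives $s_1(\boldsymbol{x})=\boldsymbol{\alpha}^T\boldsymbol{x}^{(\nu+1)}+c$, where $c=\sum_{k=\nu+2}^{\Phi}(W_{k+1}\cdots W_{\Phi}\boldsymbol{w}_{\Phi+1})^T\boldsymbol{b}_k$ (empty product for $k=\Phi$, plus any output bias). So $\boldsymbol{\alpha}$ is only the \emph{linear part} of that affine map, and $\boldsymbol{\alpha}=\boldsymbol{0}$ gives $s_1\equiv c$, not $s_1\equiv 0$.

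Here is a concrete case. Take $\Phi=\nu+2$, $W_{\Phi}=0$ restricted to $p_1$, $\boldsymbol{b}_{\Phi}>\boldsymbol{0}$ componentwise, and $\boldsymbol{w}_{\Phi+1}^T\boldsymbol{b}_{\Phi}\ne 0$. All path units stay active, $\boldsymbol{\alpha}=\boldsymbol{0}$, and $s_1$ is a nonzero constant, while $\boldsymbol{x}^{(\nu+1)}$ can still have the form 3.5.

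So your Step 3 is valid only under the convention that the biases of layers $\nu+2,\dots,\Phi$ contribute nothing. The paper's theorem 18 makes the same tacit assumption by dropping biases in equations 3.25--3.26. You should either state that convention explicitly, or replace the hypothesis $s_1\not\equiv 0$ by ``$s_1$ is not constant on $r_1$''. With the second fix your one-line argument goes through unchanged, and equation 3.5 is not even needed.
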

\begin{proof}
In this case, by the meaning of equation 3.5, $\boldsymbol{x}^{(\nu+1)} \ne \boldsymbol{0}$ and the conclusion follows from theorem 18.
\end{proof}

\begin{thm}[Properties of influence coefficient vectors-\Romannum{2}]
In theorem 5 of section 2.5, if the output $\boldsymbol{x}^{(\nu+1)}$ of the $\nu+1$th layer of $p_1$ as equation 3.25 is not a zero vector, then the influence coefficient vector $\boldsymbol{\alpha}_{ij}$, introduced for the influence of a new activated unit $\mathcal{U}$ on a unit $u_{ij}$ in a layer deeper than that of $\mathcal{U}$, satisfies $\boldsymbol{\alpha}_{ij} \ne \boldsymbol{0}$.
\end{thm}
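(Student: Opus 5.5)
The plan is to mirror the proof of theorem 18, with the output linear function $s_1(\boldsymbol{x})$ replaced by the pre-activation $s_{ij}(\boldsymbol{x})$ of the deeper unit $u_{ij}$. Because every unit of $p_1$ is activated on $r_1$, each ReLU along the path acts as the identity. The input of $u_{ij}$ on $r_1$ is therefore obtained by composing the linear maps of the layers:
\begin{equation*}
s_{ij}(\boldsymbol{x}) = (W_{\nu+2}W_{\nu+3}\dots \boldsymbol{w}_{ij})^T\boldsymbol{x}^{(\nu+1)} + c_{ij} = \boldsymbol{\alpha}_{ij}^T\boldsymbol{x}^{(\nu+1)} + c_{ij},
\end{equation*}
with $\boldsymbol{\alpha}_{ij}$ as in equation 2.16. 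Here $c_{ij}$ collects the biases of layers $\nu+2,\dots,i$; following the convention of equation 3.25 in theorem 18, these biases can be absorbed by augmenting the input. This is the analogue of equation 3.26.

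The next step is to argue by contraposition. Suppose $\boldsymbol{\alpha}_{ij}=\boldsymbol{0}$. Then the dependence of $s_{ij}$ on $\boldsymbol{x}^{(\nu+1)}$ vanishes, so $s_{ij}$ is constant in $\boldsymbol{x}$. In the bias-absorbed form of theorem 18 it is identically zero. An identically zero $s_{ij}$ contradicts the standing fact that $u_{ij}\in p_1$ is activated, i.e.\ $s_{ij}(\boldsymbol{x})>0$ on $r_1$. The hypothesis $\boldsymbol{x}^{(\nu+1)}\ne\boldsymbol{0}$ is what makes this representation meaningful. If $\boldsymbol{x}^{(\nu+1)}$ were zero, the identity would hold trivially and would carry no information about $\boldsymbol{\alpha}_{ij}$.

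The main obstacle is the affine constant. A nonzero constant $c_{ij}>0$ would let $u_{ij}$ be activated even with $\boldsymbol{\alpha}_{ij}=\boldsymbol{0}$. I would handle this exactly as theorem 18 implicitly does, through the homogeneous form 3.25: append a constant coordinate to $\boldsymbol{x}$ so that biases become weights. With that convention, activation of $u_{ij}$ forces $s_{ij}\not\equiv 0$. Then $s_{ij}=\boldsymbol{\alpha}_{ij}^T\boldsymbol{x}^{(\nu+1)}$ with $\boldsymbol{x}^{(\nu+1)}\neq\boldsymbol{0}$ gives $\boldsymbol{\alpha}_{ij}\ne\boldsymbol{0}$.

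If one insists on keeping the biases explicit, I would instead state a caveat: the argument requires either $c_{ij}=0$ or the theorem-18 convention. I would note this as the delicate point rather than claim a stronger result.
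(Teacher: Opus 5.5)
Your proposal is essentially the paper's proof. The paper notes that because $u_{ij}$ is activated, its pre-activation is not identically zero, and then invokes the argument of theorem 18: it writes that pre-activation as $\boldsymbol{\alpha}_{ij}^T\boldsymbol{x}^{(\nu+1)}$ in the bias-free form of equations 3.25--3.26 and concludes $\boldsymbol{\alpha}_{ij}\ne\boldsymbol{0}$. The paper works with the modified input $s_{ij}'$ of equation 2.15 where you use the unmodified $s_{ij}$ on $r_1$; this makes no difference.

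Your bias caveat is genuine, and the paper inherits it silently from theorem 18. For example, $\boldsymbol{w}_{ij}=\boldsymbol{0}$ with $b_{ij}>0$ for $i=\nu+2$ gives an activated $u_{ij}$ with $\boldsymbol{\alpha}_{ij}=\boldsymbol{0}$. Augmenting the input does not repair this, because the bias lands in an extra coordinate of the augmented coefficient vector rather than in $\boldsymbol{\alpha}_{ij}$ itself. So stating the result only under theorem 18's bias-free convention, as your last paragraph does, is the right claim.
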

\begin{proof}
Since $u_{ij}$ of path $p_1$ is activated, its output function $s_{ij}'(\boldsymbol{x})$ (see equation 2.15) is not always zero; and by theorem 18 this conclusion holds.
\end{proof}

\begin{thm}[Properties of influence coefficient vectors-\Romannum{3}]
Under the notations of theorem 18, suppose that $s_1(\boldsymbol{x})=0$ for all $\boldsymbol{x} \in r_1$ (or $s_1(\boldsymbol{x}) \equiv 0$) and that $\boldsymbol{x}^{(\nu+1)}$ of equation 3.25 can be expressed in the form of equation 3.5, that is,
\begin{equation}
\boldsymbol{x}^{(\nu+1)} =
\begin{pmat}({})
\boldsymbol{x}' \cr
W^{(\nu+1)}\boldsymbol{x} + \boldsymbol{b}^{(\nu+1)} \cr
\end{pmat},
\end{equation}
with $\boldsymbol{x}'$ an affine transformation of $\boldsymbol{x} \in U$. Let $\sigma(\boldsymbol{w}_i^T\boldsymbol{x}+b_i)$ for $i=1,2,\dots,m_{\nu+1}$ be the output of the $i$th unit of the $\nu+1$th layer of path $p_1$, corresponding to the entries of $\boldsymbol{x}^{(\nu+1)}$; and the parameters $\boldsymbol{w}_i$'s and $b_i$'s comprise a linear-output matrix $W$ as equation 2.19. If the length $m_{\nu+1}$ of vector $\boldsymbol{x}^{(\nu+1)}$ satisfies $m_{\nu+1}>n+1$ and the rank of $W$ is $n+1$, then the influence coefficient vector $\boldsymbol{\alpha}$ has a solution of nonzero vector for $s_1(\boldsymbol{x}) \equiv 0$; otherwise if $m_{\nu} \le n+1$ and the rank of $W$ is $m_{\nu+1}$, we have $\boldsymbol{\alpha}=\boldsymbol{0}$ for $s_1(\boldsymbol{x}) \equiv 0$.
\end{thm}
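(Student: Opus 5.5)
The plan is to express $s_1$ through $\boldsymbol{\alpha}$, as in the proof of theorem 18, and turn the statement into a question about the null space of a linear map. By equation 3.26 we have $s_1(\boldsymbol{x})=\boldsymbol{\alpha}^T\boldsymbol{x}^{(\nu+1)}$ for $\boldsymbol{x}\in r_1$. Since $p_1$ activates every unit of the $\nu+1$th layer on $r_1$, each entry is linear there: $x^{(\nu+1)}_i=\boldsymbol{w}_i^T\boldsymbol{x}+b_i$. Hence
\begin{equation*}
s_1(\boldsymbol{x})=\sum_{i=1}^{m_{\nu+1}}\alpha_i(\boldsymbol{w}_i^T\boldsymbol{x}+b_i)=\big[\boldsymbol{x}^T,1\big]\,W\boldsymbol{\alpha},
\end{equation*}
where $W$ is the linear-output matrix of size $(n+1)\times m_{\nu+1}$ from equation 2.19. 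Because $r_1$ is $n$-dimensional, the affine function $[\boldsymbol{x}^T,1]W\boldsymbol{\alpha}$ vanishes identically on $r_1$ if and only if $W\boldsymbol{\alpha}=\boldsymbol{0}$. I would state this equivalence first; it is the crux of the argument. The justification is that $r_1$ contains $n+1$ affinely independent points, so the vectors $[\boldsymbol{x}^T,1]$ for $\boldsymbol{x}\in r_1$ span $\mathbb{R}^{n+1}$. The form in equation 3.27, with $\boldsymbol{x}'$ an affine transformation, is what guarantees this linear representation on the whole region, since no unit switches off there.

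Once this is in place, both claims follow from rank–nullity applied to $W$ acting on $\mathbb{R}^{m_{\nu+1}}$. In the first case, $m_{\nu+1}>n+1$ and $\mathrm{rank}\,W=n+1$, so the null space has dimension $m_{\nu+1}-(n+1)\ge 1$. Any nonzero $\boldsymbol{\alpha}$ in it realizes $s_1\equiv 0$, which gives the existence of a nonzero solution. I would also note that such an $\boldsymbol{\alpha}$ is attainable as $W_{\nu+2}\cdots\boldsymbol{w}_{\Phi+1}$ by setting the deeper layers appropriately, for instance through the region-transfer construction of theorem 11 and the compensation argument of theorem 12. In the second case, $m_{\nu+1}\le n+1$ and $\mathrm{rank}\,W=m_{\nu+1}$, so $W$ has full column rank and its null space is trivial. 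Then $s_1\equiv0$ forces $W\boldsymbol{\alpha}=\boldsymbol{0}$ and hence $\boldsymbol{\alpha}=\boldsymbol{0}$. I read the statement's $m_\nu$ as a typo for $m_{\nu+1}$.

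The main obstacle is the first case. Here one must check that a null-space vector can actually occur as the product of downstream weights while the units of $p_1$ remain activated. A purely algebraic existence argument is not enough for this, so I would first try a realizability argument based on theorem 11 and theorem 12.
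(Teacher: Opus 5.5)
Your proposal is correct and follows the paper's proof. The paper likewise uses equation 3.26 to rewrite $s_1\equiv 0$ as $\boldsymbol{\alpha}^T\boldsymbol{x}^{(\nu+1)}=0$, reduces this to the homogeneous system $W\boldsymbol{\alpha}=\boldsymbol{0}$ (citing lemma 3 of Huang (2024) at the point where you invoke the $n$-dimensionality of $r_1$), and then counts solutions in the two rank cases. The paper treats $\boldsymbol{\alpha}$ only as the unknown of that linear system and never raises the realizability question you flag, so that step is not needed to match its argument; your reading of $m_\nu$ as $m_{\nu+1}$ agrees with the rank condition.
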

\begin{proof}
By equation 3.26, the condition $s_1(\boldsymbol{x}) \equiv 0$ can be written as
\begin{equation}
\boldsymbol{\alpha}^T\boldsymbol{x}^{(\nu+1)} = 0.
\end{equation}
Equation 3.28 can be reduced to a system of linear equations through the expression $\sigma(\boldsymbol{w}_i^T\boldsymbol{x}+b_i)$ of each entry of $\boldsymbol{x}^{(\nu+1)}$, with the entries of $\boldsymbol{\alpha}$ as variables (\citet*{Huang2024}'s lemma 3). To the first conclusion, when the condition is satisfied, equation 3.28 has infinitely many solutions of $\boldsymbol{\alpha}$ including a nonzero-vector one. To the second conclusion, there exists a unique solution $\boldsymbol{\alpha}=\boldsymbol{0}$ under that condition.
\end{proof}

\subsection{Principle of Continuity Restriction}
\begin{thm}[Principle of continuity restriction]
Let $\mathcal{R}$ be a region of $U=[0,1]^n$ for $n \ge 2$ obtained by network $\mathfrak{N}$. Denote by $k_1$ and $k_2$ two knots derived from $\mathcal{R}$. Suppose that $l_1 \cap l_2 \ne \emptyset$, where $l_1$ and $l_2$ are the $n-1$-dimensional hyperplanes that $k_1$ and $k_2$ lie on, respectively. If the linear functions on $k_1$ and $k_2$ have been realized by $\mathfrak{N}$ through function construction for the adjacent regions of $\mathcal{R}$, then the one on $\mathcal{R}$ is also implemented.
\end{thm}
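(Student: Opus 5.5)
The plan is to use the continuity property of theorem 10: the function $g(\boldsymbol{x})$ output by $\mathfrak{N}$ lies in $\mathfrak{C}(R)$. So the linear function $s$ on $\mathcal{R}$ must agree with the already realized linear functions of the adjacent regions along $k_1$ and $k_2$. I will show that these two agreements determine $s$ uniquely. Since $\mathfrak{N}$ outputs some linear function on $\mathcal{R}$, and that function must be this one, it is implemented automatically and needs no separate construction.

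Let $r_1$ and $r_2$ be the regions adjacent to $\mathcal{R}$ across $k_1$ and $k_2$, with realized linear functions $s_1$ and $s_2$, and let $l_i:\boldsymbol{w}_i^T\boldsymbol{x}+b_i=0$. By theorem 9 the path of $\mathcal{R}$ differs from the path of $r_i$ in exactly one unit generating $k_i$. Theorem 8 then gives
\[ s(\boldsymbol{x})=s_1(\boldsymbol{x})+\lambda_1(\boldsymbol{w}_1^T\boldsymbol{x}+b_1)=s_2(\boldsymbol{x})+\lambda_2(\boldsymbol{w}_2^T\boldsymbol{x}+b_2), \]
with scalars $\lambda_1,\lambda_2$ and the sign convention absorbed. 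This equivalently says that $s-s_i$ vanishes on $l_i$. Hence $s_1-s_2=\lambda_2(\boldsymbol{w}_2^T\boldsymbol{x}+b_2)-\lambda_1(\boldsymbol{w}_1^T\boldsymbol{x}+b_1)$.

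For uniqueness, suppose two linear functions $s$ and $s'$ both satisfy these conditions. Then $d=s-s'$ is affine and vanishes on $l_1$ and on $l_2$. Since $k_1\ne k_2$ are distinct knots bounding $\mathcal{R}$, the hyperplanes $l_1$ and $l_2$ are distinct. Because $l_1\cap l_2\ne\emptyset$, they are not parallel, so $\boldsymbol{w}_1$ and $\boldsymbol{w}_2$ are linearly independent. An affine function vanishing on $l_1$ equals $c_1(\boldsymbol{w}_1^T\boldsymbol{x}+b_1)$, and one vanishing on $l_2$ equals $c_2(\boldsymbol{w}_2^T\boldsymbol{x}+b_2)$. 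Independence of $\boldsymbol{w}_1$ and $\boldsymbol{w}_2$ then forces $c_1=c_2=0$, so $d\equiv0$. The assumption $n\ge2$ is what makes it possible for two hyperplanes to be non-parallel and still intersect.

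For existence, the network output on $\mathcal{R}$ is some linear function continuous with $s_1$ and $s_2$, by theorems 8 and 10, and by uniqueness it must be the determined $s$. Equivalently, the coefficients $\lambda_1$ and $\lambda_2$ are forced by the $(n+1)$-dimensional linear identity above, which has a solution precisely because $s_1$ and $s_2$ come from the same network. The main obstacle is justifying that $k_1$ and $k_2$ really lie on distinct, non-parallel hyperplanes, and that the functions realized for $r_1$ and $r_2$ are compatible. I would handle distinctness via the definition of knots as separating different adjacent regions, and compatibility through the consistency of $g$ supplied by theorem 10. The first thing to try is to derive $s_1=s_2$ on $l_1\cap l_2$, which is needed for consistency, by noting that points of $l_1\cap l_2\cap\bar{\mathcal{R}}$ lie in the closures of both $r_1$ and $r_2$. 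If that intersection point falls outside $U$, I would instead argue purely algebraically, using that the output of $\mathfrak{N}$ on $\mathcal{R}$ exists.
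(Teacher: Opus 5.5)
Your proposal is correct and is essentially the paper's argument: the paper's proof also rests on the continuity of the network output (theorem 10), combined with the two-layer fact from \citet*{Huang2024}'s theorem 9 that a linear function agreeing with given linear functions along two distinct, intersecting hyperplanes is uniquely determined; your affine-difference computation just makes that uniqueness step explicit. The point you flag, that $l_1 \ne l_2$, is tacitly assumed by the paper as well (compare the non-parallel hypothesis in the corollary right after the theorem), and your compatibility worry is unnecessary, since the network's output on $\mathcal{R}$ exists and so must equal the uniquely determined function.
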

\begin{proof}
To this problem, there's no substantial difference between deep and two-layer ReLU networks (\citet*{Huang2024}'s theorem 9). A key fact is that a piecewise linear function output by $\mathfrak{N}$ is continuous, which was proved in theorem 10.
\end{proof}

\begin{cl}[Function construction from neighborhoods]
Notations being from theorem 21, let $r_1$ and $r_2$ be two regions both adjacent to $\mathcal{R}$. Suppose that the hyperplanes of $r_1 \cap \mathcal{R}$ and $r_2 \cap \mathcal{R}$ are not parallel. Then if the linear functions on $r_1$ and $r_2$ have been implemented by $\mathfrak{N}$, the one on $\mathcal{R}$ is also simultaneously realized.
\end{cl}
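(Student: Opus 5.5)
The plan is to reduce the corollary to Theorem 21 by identifying the two knots $k_1 = r_1 \cap \mathcal{R}$ and $k_2 = r_2 \cap \mathcal{R}$. We then check that the hypotheses of that theorem hold, and finally pin down the linear function on $\mathcal{R}$ explicitly through the continuity property of Theorem 10.

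\textbf{Step 1: setting up the knots.} Since $r_1 \frown \mathcal{R}$ and $r_2 \frown \mathcal{R}$, each intersection is $(n-1)$-dimensional and is a knot lying on some hyperplane; call these hyperplanes $l_1$ and $l_2$. Two non-parallel $(n-1)$-dimensional hyperplanes in $\mathbb{R}^n$ with $n\ge 2$ always meet in an $(n-2)$-dimensional affine set. Hence $l_1 \cap l_2 \ne \emptyset$, which is exactly the hypothesis of Theorem 21.

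\textbf{Step 2: the function on $\mathcal{R}$.} Let $s(\boldsymbol{x}) = \boldsymbol{a}^T\boldsymbol{x}+c$ be the linear function on $\mathcal{R}$, and let $s_1, s_2$ be the given functions on $r_1$ and $r_2$. By Theorem 10, $g \in \mathfrak{C}(R)$, so $s = s_1$ on $k_1$ and $s = s_2$ on $k_2$. Because $k_1$ is $(n-1)$-dimensional, equality of affine functions on it extends to all of $l_1$. This gives
\[
s - s_1 = \mu_1(\boldsymbol{w}_1^T\boldsymbol{x}+b_1), \qquad s - s_2 = \mu_2(\boldsymbol{w}_2^T\boldsymbol{x}+b_2)
\]
for some scalars $\mu_1, \mu_2$, where $\boldsymbol{w}_i^T\boldsymbol{x}+b_i=0$ is the equation of $l_i$. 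Subtracting, $s_2 - s_1 = \mu_1(\boldsymbol{w}_1^T\boldsymbol{x}+b_1) - \mu_2(\boldsymbol{w}_2^T\boldsymbol{x}+b_2)$. Since $s_1, s_2$ are known and $\boldsymbol{w}_1, \boldsymbol{w}_2$ are linearly independent by non-parallelism, $\mu_1$ and $\mu_2$ are determined uniquely whenever a solution exists. Therefore $s$ is uniquely fixed by $s_1$ and $s_2$.

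\textbf{Step 3: the main obstacle.} The "simultaneously realized" claim needs care: the realized functions on $r_1$ and $r_2$ must be consistent, i.e., a solution $(\mu_1,\mu_2)$ must exist. I would argue this through the paths rather than algebraically. By Theorem 9, $\mathcal{R}$'s path differs from that of $r_1$ (respectively $r_2$) by a single unit, so by Theorem 8, $s = s_1 + \lambda_1\sigma(\boldsymbol{w}_1^T\boldsymbol{x}+b_1)$ on $\mathcal{R}$ is already output by the network once $r_1$'s path is fixed. The same holds for $r_2$. Thus the function on $\mathcal{R}$ is not a new free quantity: once $s_1$ and $s_2$ are realized, the network's output on $\mathcal{R}$ is the continuous interpolant computed in Step 2. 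No further parameter adjustment is needed, and the mechanism is the same as in the two-layer case of Huang (2024), Theorem 9.
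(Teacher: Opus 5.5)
Your proposal is correct and follows the paper's route. The paper's proof says only that the corollary restates Theorem 21: non-parallel hyperplanes meet for $n\ge 2$, so $l_1\cap l_2\neq\emptyset$. Theorem 21 in turn rests on the continuity of $g$ (Theorem 10), which your Step 2 makes explicit through the uniqueness of $\mu_1,\mu_2$.

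The consistency worry in Step 3 is not a real obstacle, so you can drop the detour through Theorems 8 and 9. The restriction $g|_{\mathcal{R}}$ is itself an affine function continuous with $s_1$ and $s_2$, which gives existence. Your uniqueness argument then forces $g|_{\mathcal{R}}$ to equal the target's piece on $\mathcal{R}$, since the target is also continuous across $k_1$ and $k_2$.
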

\begin{proof}
This corollary is an alternative description of theorem 21, more convenient for applications.
\end{proof}

\subsection{Multiple Expressions of Linear Functions}
\begin{thm}[Multiple expressions of linear functions]
Suppose that a region $\mathcal{R}$ formed by a path $\mathcal{P}$ of network $\mathfrak{N}$ has $\phi$ adjacent regions, denoted by $r_i$'s for $i=1,2,\dots,\phi$. Each $r_i$ corresponds to a knot $k_i$ separating $r_i$ and $\mathcal{R}$ (or $k_i=r_i \cap \mathcal{R}$); and $k_i$ is generated by unit $u_i$, corresponding to a path $p_i$ of $\mathfrak{N}$ that is adjacent to $\mathcal{P}$; the region of $p_i$ is $r_i$. Let $g(\boldsymbol{x})$ be the piecewise linear function output by $\mathfrak{N}$ and $s_i(\boldsymbol{x})$ the linear function of $g(\boldsymbol{x})$ on $r_i$. Then the linear function $s(\boldsymbol{x})$ of $g(\boldsymbol{x})$ on $\mathcal{R}$ has $\phi$ independent expressions, each of which is either
\begin{equation}
s(\boldsymbol{x}) = s_i(\boldsymbol{x}) + \lambda_i\sigma(\boldsymbol{w}_i^T\boldsymbol{x} + b_i)
\end{equation}
when $\mathcal{R} \subset k_i^+$ and $r_i \subset k_i^0$
or
\begin{equation}
s(\boldsymbol{x}) = s_i(\boldsymbol{x}) - \lambda_i\sigma(\boldsymbol{w}_i^T\boldsymbol{x} + b_i)
\end{equation}
if $\mathcal{R} \subset k_i^0$ and $r_i \subset k_i^+$, where $\boldsymbol{w}_i^T\boldsymbol{x} + b_i = 0$ is the equation of $k_i$ and $\lambda_i$ is a constant as in equation 3.2.
\end{thm}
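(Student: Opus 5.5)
The plan is to reduce the statement to $\phi$ separate applications of theorem 9 and theorem 8, one for each adjacent region $r_i$. I will first fix an index $i$ and look at the pair $\mathcal{R}$, $r_i$. These regions are adjacent and separated by the knot $k_i=r_i\cap\mathcal{R}$. Theorem 9 therefore says that $\mathcal{P}$ and $p_i$ are adjacent paths, differing only in the unit $u_i$ that generates $k_i$. This is consistent with the hypothesis of the statement, and it also tells us that $u_i$ belongs to exactly one of the two paths.

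I would then split into the two cases allowed by definition 7.

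\textbf{Case $u_i\in\mathcal{P}$ and $u_i\notin p_i$.} Then $\mathcal{R}\subset k_i^+$ and $r_i\subset k_i^0$. Theorem 8 is applied with $p_1=p_i$ and $p_2=\mathcal{P}$. It gives $s(\boldsymbol{x})=s_i(\boldsymbol{x})+\lambda_i\sigma(\boldsymbol{w}_i^T\boldsymbol{x}+b_i)$, where $\lambda_i$ is computed by equation 3.3 from the output-weight vector of $u_i$ and the weights of $p_i$. This is equation 3.29.

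\textbf{Case $u_i\notin\mathcal{P}$ and $u_i\in p_i$.} Then $\mathcal{R}\subset k_i^0$ and $r_i\subset k_i^+$. Theorem 8 is applied with the roles exchanged, $p_1=\mathcal{P}$ and $p_2=p_i$. It gives $s_i(\boldsymbol{x})=s(\boldsymbol{x})+\lambda_i\sigma(\boldsymbol{w}_i^T\boldsymbol{x}+b_i)$, and rearranging yields equation 3.30.

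One point needs care in both cases. The formula of theorem 8 is stated only on $r_2$. Since both $s$ and $s_i$ are affine on all of $\mathbb{R}^n$, I will read the identity as an identity of linear functions, extended off $r_2$. In the second case the $\sigma$ term should then be read as the affine part $\boldsymbol{w}_i^T\boldsymbol{x}+b_i$. For $\boldsymbol{x}\in\mathcal{R}$ this term vanishes under $\sigma$, which is exactly what continuity across $k_i$ (lemma 3, theorem 10) requires.

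The main obstacle is justifying the word ``independent''. My first attempt is to argue that the $\phi$ expressions are produced by distinct units $u_i$ through distinct adjacent paths $p_i$. Each expression therefore carries its own data: the knot $(\boldsymbol{w}_i,b_i)$, the neighbouring function $s_i$, and the coefficient $\lambda_i$, which is determined by the output weights of $u_i$ via equation 3.3. No single expression is derived from another. All of them are nonetheless forced to equal the same $s(\boldsymbol{x})$, because $\mathcal{R}$ carries a single linear piece of $g$.

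The caveat in theorem 9, ``regardless of the case of different units producing the same knot'', also has to be imported here. Without it the units $u_i$ need not be well defined. I would assume the $k_i$ lie on distinct hyperplanes, or else treat coincident knots as a single expression. Counting the cases then gives exactly $\phi$ expressions, one per adjacent region.
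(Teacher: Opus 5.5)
Your proposal is correct and follows essentially the paper's route. The paper argues that, because $\mathcal{P}$ and $p_i$ differ only at $u_i$, every other knot has $\mathcal{R}\cup r_i$ on one side (as in lemma 2), so its influence is already contained in $s_i(\boldsymbol{x})$ and the only extra term is $\lambda_i\sigma(\boldsymbol{w}_i^T\boldsymbol{x}+b_i)$. That is exactly what theorem 8 packages when you apply it with the two role assignments, and the paper obtains equation 3.30 by the same rewriting $s_i(\boldsymbol{x})=s(\boldsymbol{x})+\lambda_i\sigma(\boldsymbol{w}_i^T\boldsymbol{x}+b_i)$ on $r_i$.

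Two minor remarks. First, your extra assumption that the $k_i$ lie on distinct hyperplanes is unnecessary: $u_i=u_j$ would force $p_i=p_j$ and hence $r_i=r_j$. Second, reading equation 3.30 as an affine identity is a sound refinement that the paper leaves implicit. However, the vanishing of $\sigma(\boldsymbol{w}_i^T\boldsymbol{x}+b_i)$ on $\mathcal{R}$ is why the literal $\sigma$-form would fail there (it would give $s=s_i$ on $\mathcal{R}$), not what continuity requires; continuity only needs $\boldsymbol{w}_i^T\boldsymbol{x}+b_i=0$ on $k_i$.
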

\begin{proof}
To equation 3.29, the influence of $p_i$ on $s(\boldsymbol{x})$ is embedded in expression $s_i(\boldsymbol{x})$; and $\lambda_i\sigma(\boldsymbol{w}_i^T\boldsymbol{x} + b_i)$ is the effect of new activated unit $u_i$ generating $k_i$. To each of the remaining knots of $\mathfrak{N}$, say, $\mathscr{K}$, either $(\mathcal{R} \cup r_i) \subset \mathscr{K}^+$ or $(\mathcal{R} \cup r_i) \subset \mathscr{K}^0$ holds, because $\mathcal{P}$ and $p_i$ differ from each other only at $u_i$, similarly to lemma 2. If $r_i \subset \mathscr{K}^+$, the influence of $\mathscr{K}^+$ is embedded in $s_i(\boldsymbol{x})$ of equation 3.29 and thus $s(\boldsymbol{x})$ also contains this influence. Equation 3.30 can be written as $s_i(\boldsymbol{x}) = s(\boldsymbol{x}) + \lambda_i\sigma(\boldsymbol{w}_i^T\boldsymbol{x} + b_i)$, analogous to equation 3.29, so the proof is similar.
\end{proof}

\begin{rmk}
This theorem is useful in determining the solution of $\mathfrak{N}$ through local information and will be used to interpret the training solution in later section 8.
\end{rmk}

\subsection{Splines over Single Strict Partial Order}

\begin{dfn}[Strict partial order of knots]
Let
\begin{equation}
K = \{k_1, k_2, \dots, k_{\zeta}\}
\end{equation}
be a set of knots generated by network $\mathfrak{N}$, with each $k_i$ for $i=1,2,\dots, \zeta$ produced by unit $u_i$ of $\mathfrak{N}$ in path $p_i$. Suppose that there exists a set
\begin{equation}
\begin{aligned}
R = \{r_{1}, r_{\nu}: r_{1} \subset k_{1}^+, &\dim(r_1\cap k_1)=n-1, \bigcup_{j = 1}^{\nu-1}r_{j} \subseteq k_{\nu}^0, r_{\nu} \subset \bigcap_{\mu = 1}^{\nu}k_{\mu}^+, \\&k_{\nu} = r_{\nu} \cap r_{\nu-1}, \nu=2,3, \dots, \zeta\},
\end{aligned}
\end{equation}
of regions of $U=[0,1]^n$ divided by $\mathfrak{N}$. If $1\le \nu_1 < \nu_2 \le \zeta$, we define that
$k_{\nu_1} \prec k_{\nu_2}$. Write
\begin{equation}
\mathscr{O}(K) := k_1 \prec k_2 \dots \prec k_{\zeta}.
\end{equation}
The elements of $R$ are called the ordered regions of $\mathscr{O}(K)$. The region $r_0=U \cap k_1^0$ is called the \textsl{initial region} of $\mathscr{O}(K)$.
\end{dfn}

\begin{rmk}
The definition is borrowed from \citet*{Huang2024}'s definitions 3 and 15. The relation ``$\prec$'' is obviously a strict partial order (see the proof of \citet*{Huang2024}'s proposition 2).
\end{rmk}

\begin{dfn}[Continuous linear spline]
We use the notation
\begin{equation}
\mathfrak{S}_n(R, K):= \{s(\boldsymbol{x}): s(\boldsymbol{x})=s_i(\boldsymbol{x}) \  \text{for} \ \boldsymbol{x}\in r_i\}
\end{equation}
to denote the set of continuous piecewise linear functions defined on the regions of $R$ with the knots of $K$ introduced in definition 13, where the linear functions $s_i(\boldsymbol{x})$'s of $s(\boldsymbol{x})$ are subject to
\begin{equation}
s_{\nu}(\boldsymbol{x}) = s_{\nu-1}(\boldsymbol{x}) + \lambda_{\nu}\sigma(\boldsymbol{w}_{\nu}^T\boldsymbol{x} + b_{\nu})
\end{equation}
for $\nu=2,3,\dots,\zeta$, where $\sigma(\boldsymbol{w}_{\nu}^T\boldsymbol{x} + b_{\nu})$ is the output of unit $u_i$; each element of $\mathfrak{S}_n(R, K)$ is also called a \textsl{continuous linear spline} or shortly a \textsl{spline} in this paper. The linear function on the initial region $r_0$ is called the \textsl{initial linear function} of $s(\boldsymbol{x})$, denoted by $s_0(\boldsymbol{x})$.
\end{dfn}

\begin{thm}[Splines over a single strict partial order]
Any continuous linear spline $s(\boldsymbol{x}) \in \mathfrak{S}_n(R, K)$ of equation 3.34 can be realized by network $\mathfrak{N}$, provided that: (a) the conditions of theorem 12 for constructing the function over an initial path are satisfied; (b) the conditions of theorem 13 for adjacent paths and influence coefficient vectors are satisfied by each pair of paths $p_{\nu}$ and $p_{\nu-1}$ for $2\le \nu \le \zeta$, where $p_{i}$ is the path of region $r_{i}$ for $i=1,2,\dots, \zeta$.
\end{thm}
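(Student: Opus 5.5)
The proof will be an induction along the strict partial order $\mathscr{O}(K)=k_1\prec k_2\prec\dots\prec k_{\zeta}$. I will start from the initial path and add one unit per step, using theorem 13 to produce each successive linear function $s_\nu(\boldsymbol{x})$ of the spline. First fix the initial region $r_0=U\cap k_1^0$ (equivalently, its first ordered region). By condition (a), theorem 12, via the construction of corollary 12, gives a path $p_0$ whose region is $r_0$ and which realizes the prescribed initial linear function $s_0(\boldsymbol{x})$.

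The inductive step uses the structure of the ordered regions in equation 3.32. Each $r_\nu$ lies in $\bigcap_{\mu\le\nu}k_\mu^+$, while all previous regions lie in $k_\nu^0$. Hence passing from $r_{\nu-1}$ to $r_\nu$ newly activates exactly the unit $u_\nu$ generating $k_\nu$. By theorem 9, the paths $p_{\nu-1}$ and $p_\nu$ are therefore adjacent and differ only in $u_\nu$, and $k_\nu=r_\nu\cap r_{\nu-1}$ is the separating knot. Theorem 8 then gives
\[
s_\nu(\boldsymbol{x})=s_{\nu-1}(\boldsymbol{x})+\lambda\sigma(\boldsymbol{w}_\nu^T\boldsymbol{x}+b_\nu),
\]
with $\lambda=\boldsymbol{\alpha}_\nu^T\boldsymbol{v}_\nu$. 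This is exactly the form required by equation 3.35. Condition (b) supplies two things: adjacency of $p_\nu$ to $p_{\nu-1}$, and $\boldsymbol{\alpha}_\nu\neq\boldsymbol{0}$. So by theorem 13 we can choose the output-weight vector $\boldsymbol{v}_\nu$ of $u_\nu$ so that $\lambda=\lambda_\nu$, the prescribed coefficient of the spline. Iterating for $\nu=2,\dots,\zeta$ realizes every $s_\nu$ on its ordered region $r_\nu$. Since consecutive regions are adjacent, they are connected (definition 12), and theorem 17 confirms that this recursive procedure accounts for all pieces.

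The main obstacle is consistency across steps. Adjusting $\boldsymbol{v}_\nu$ in a shallower layer could in principle disturb parameters already fixed for earlier regions, or change the influence coefficient vectors used later. I would handle this with two observations. First, $u_\nu$ is inactive on every $r_j$ with $j<\nu$, because $r_j\subseteq k_\nu^0$. Its output weights therefore do not enter the linear functions on those regions, and $s_0,\dots,s_{\nu-1}$ remain intact. Second, $\boldsymbol{\alpha}_\nu$ in equation 3.18 depends only on the weight matrices of $p_{\nu-1}$ after the layer of $u_\nu$, not on $\boldsymbol{v}_\nu$. So each equation $\boldsymbol{\alpha}_\nu^T\boldsymbol{v}_\nu=\lambda_\nu$ is solved with everything else held fixed.

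The remaining subtlety is that adjacency of $p_\nu$ to $p_{\nu-1}$ must survive the choice of $\boldsymbol{v}_\nu$. I would take this to be part of hypothesis (b) as used in theorem 13. If needed, I would invoke theorem 16 (small $|\lambda|$ together with a rescaling argument) or theorem 6 to justify that it holds. Continuity of the resulting function is automatic by theorem 10, so $s(\boldsymbol{x})\in\mathfrak{S}_n(R,K)$ is realized.
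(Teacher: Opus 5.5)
Your proposal is correct and is essentially the paper's own argument: realize the initial linear function on an initial path via theorem 12, then apply theorem 13 along consecutive ordered regions using hypothesis (b). Your remarks that $u_\nu$ is inactive on $k_\nu^0$, and that $\boldsymbol{\alpha}_\nu$ does not depend on $\boldsymbol{v}_\nu$, add a little justification the paper's two-line proof omits. (The initial-path construction is corollary 11, not corollary 12.)

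There is one indexing slip. $r_0=U\cap k_1^0$ is not the first ordered region, since $r_1\subset k_1^+$, so starting the iteration at $\nu=2$ skips the step across $k_1$. Either handle that step by theorem 13 as well, as the paper tacitly does with ``to each $s_\nu$'', or take the initial path to be $p_1$ itself.
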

\begin{proof}
The initial linear function $s_0(\boldsymbol{x})$ can be realized by an initial path as in corollary 11. To each $s_{\nu}(\boldsymbol{x})$, by theorem 13, if the adjacent-path condition is satisfied by $p_{\nu}$ and $p_{\nu-1}$ and influence coefficient vector $\boldsymbol{\alpha}_{\nu} \ne \boldsymbol{0}$, it can be constructed.
\end{proof}

\section{Two-Sided Solutions}
As the case of two-layer ReLU networks, one-sided solutions of network $\mathfrak{N}$ are already enough for function construction and the introduction of two-sided ones is for enlarging the solution space to explain experiments. A two-sided solution of $\mathfrak{N}$ is much more complex than that of a two-layer ReLU network (\citet*{Huang2024}'s section 5) due to the disturbance of multiple hidden layers. We reduce this complexity to three typical cases, which are the contents of the first three subsections, and a general two-sided solution can be explained by their generalization or combination.

\subsection{First Case of Single Negative Unit}

The following assumption is introduced for simplifying the description and is tacitly assumed to be true in section 4. In other sections, we will explicitly give it when required.
\begin{assm}[Adjacent-path assumption]
Notations being from theorem 23 and definitions 13 and 14, support that to each $\nu$, the adjacent relationship between paths $p_{\nu}$ and $p_{\nu-1}$ is satisfied, that the influence coefficient vector $\boldsymbol{\alpha}_{\nu} \ne \boldsymbol{0}$, and that the knots of the ordered regions $r_i$'s besides $k_i$'s are preserved. These conditions are collectively called ``\textsl{adjacent-path assumption}''.
\end{assm}

\begin{rmk}
To fulfil this assumption, for instance, theorem 6 or 16 can ensure adjacent paths and the methods of section 2.3 for knot production can preserve the knots of regions; technical details can be found in the examples of section 8.3. The nonzero influence coefficient vector was studied in section 3.4.
\end{rmk}

\begin{dfn}[Negative form of a knot (unit)]
Give a knot $\mathcal{K}$ of network $\mathfrak{N}$ with equation $\boldsymbol{w}^T\boldsymbol{x}+b=0$, its negative form $-\mathcal{K}$ means that its equation is modified to be $-\boldsymbol{w}^T\boldsymbol{x}-b=0$. Suppose that unit $\mathcal{U}$ of $\mathfrak{N}$ generates $\mathcal{K}$ and then the negative form $-\mathcal{U}$ of $\mathcal{U}$ corresponds to $-\mathcal{K}$ with output $\sigma(-\boldsymbol{w}^T\boldsymbol{x}-b)$.
\end{dfn}

\begin{lem}
Under a path $\mathcal{P}$ of network $\mathfrak{N}$, suppose that knot $\mathcal{K}$ is produced by the $j$th unit $u_{ij}$ of the $i$th layer, whose equation is $\boldsymbol{w}^T\boldsymbol{x}+b=0$ or $\boldsymbol{w}_{ij}^T\boldsymbol{x}^{(i-1)}+b_{ij}=0$, where $\boldsymbol{x}^{(i-1)}$ is the input of the $i$th layer of $\mathcal{P}$ and $\boldsymbol{w}_{ij}$ and $b_{ij}$ are the input parameters of $u_{ij}$. Then $-\mathcal{K}$ with equation $-\boldsymbol{w}^T\boldsymbol{x}-b=0$ can be obtained by changing $\boldsymbol{w}_{ij}$ and $b_{ij}$ into $-\boldsymbol{w}_{ij}$ and $-b_{ij}$, respectively, if the parameters of the units of $\mathcal{P}$ in the layers shallower than the $i$th one remain invariant.
\end{lem}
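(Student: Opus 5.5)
The plan is to work directly from the layerwise representation of the path $\mathcal{P}$. Under $\mathcal{P}$, every unit in the layers $1,\dots,i-1$ that belongs to the path is activated on the relevant region, so its ReLU acts as the identity there. Hence the input $\boldsymbol{x}^{(i-1)}$ of the $i$th layer is an affine function of $\boldsymbol{x}$. I will write it as
\begin{equation*}
\boldsymbol{x}^{(i-1)} = A\boldsymbol{x} + \boldsymbol{c},
\end{equation*}
where $A$ and $\boldsymbol{c}$ are built only from the weight matrices $W_1,\dots,W_{i-1}$ and bias vectors $\boldsymbol{b}_1,\dots,\boldsymbol{b}_{i-1}$ restricted to $\mathcal{P}$, in the same way as equation 3.25. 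Substituting into the equation of $\mathcal{K}$ gives
\begin{equation*}
\boldsymbol{w} = A^T\boldsymbol{w}_{ij}, \qquad b = \boldsymbol{w}_{ij}^T\boldsymbol{c} + b_{ij}.
\end{equation*}
This is the same identification used in equation 3.9 of theorem 11 and in lemma 1.

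The key step uses the hypothesis that the parameters of all units of $\mathcal{P}$ in layers shallower than the $i$th one are unchanged. Under that hypothesis, $A$ and $\boldsymbol{c}$ stay fixed when $\boldsymbol{w}_{ij}$ and $b_{ij}$ are replaced by $-\boldsymbol{w}_{ij}$ and $-b_{ij}$. The map $(\boldsymbol{w}_{ij}, b_{ij}) \mapsto (A^T\boldsymbol{w}_{ij}, \boldsymbol{w}_{ij}^T\boldsymbol{c}+b_{ij})$ is linear, so the new hyperplane has parameters $(-\boldsymbol{w}, -b)$. Its equation is therefore $-\boldsymbol{w}^T\boldsymbol{x}-b=0$, and the modified unit outputs $\sigma(-\boldsymbol{w}^T\boldsymbol{x}-b)$. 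By definition 16, this is exactly the negative form $-\mathcal{K}$ (equivalently $-u_{ij}$).

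The step needing the most care is that $\mathcal{K}$ is a knot (a piece of a hyperplane restricted to a region), not the whole hyperplane. I must check that the sign flip does not change which region of the $(i-1)$th layer the knot lives in. This holds because the region of the $(i-1)$th layer of $\mathcal{P}$ is determined entirely by the shallower units, which are untouched. Consequently the part of that region cut by the hyperplane is the same set for both signs, and only the positive and zero sides are exchanged. I would state this explicitly, citing proposition 1 and theorem 3: the modified unit is still a local unit of the same region, so it again produces a knot, namely $-\mathcal{K}$, lying on the same set of points.

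Deeper layers may of course see different activations afterwards. That does not affect the claim, which concerns only the knot generated by $u_{ij}$ itself.
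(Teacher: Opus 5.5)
Your proposal is correct and follows essentially the same route as the paper, which likewise argues that fixing the shallower layers preserves $\boldsymbol{x}^{(i-1)}$ as an affine function of $\boldsymbol{x}$, so replacing $\boldsymbol{w}_{ij}, b_{ij}$ by their negatives negates $\boldsymbol{w}$ and $b$ by linearity. Your explicit form $\boldsymbol{x}^{(i-1)}=A\boldsymbol{x}+\boldsymbol{c}$ and your check that the hyperplane still cuts the same $(i-1)$th-layer region make precise what the paper's equation-level argument leaves implicit.
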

\begin{proof}
We know that $\boldsymbol{w}^T\boldsymbol{x}+b=0$ is derived from $\boldsymbol{w}_{ij}^T\boldsymbol{x}^{(i-1)}+b_{ij}=0$ by expanding $\boldsymbol{x}^{(i-1)}$ as a linear combination of the dimensions of the input space, or we can write $\boldsymbol{w}^T\boldsymbol{x}+b=0 = \boldsymbol{w}_{ij}^T\boldsymbol{x}^{(i-1)}+b_{ij}=0$. If the parameters of the shallower layers are fixed, $\boldsymbol{x}^{(i-1)}$ would be preserved; then if changing $\boldsymbol{w}_{ij}$ and $b_{ij}$ into $-\boldsymbol{w}_{ij}$ and $-b_{ij}$, respectively,  we have  $-\boldsymbol{w}^T\boldsymbol{x}-b=0 = (-\boldsymbol{w}_{ij})^T\boldsymbol{x}^{(i-1)}-b_{ij}=0$.
\end{proof}

\begin{prp}[Influence of negative knots]
Given a strict partial order $\mathscr{O}(K)$ of equation 3.33, suppose that the $i$th knot $k_i$ generated by unit $u_i$ is changed into its negative form $-k_i$, with the modified unit denoted by $-u_i$. Then the unit $-u_i$ must be in the paths $p_j$'s for $j < i$, with $p_j$ corresponding to knot $k_j$.
\end{prp}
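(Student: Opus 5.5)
The plan is to read the statement directly off Definition 13 together with Definition 7 and Lemma 4. In the strict partial order $\mathscr{O}(K)$, the ordered regions satisfy $\bigcup_{j=1}^{i-1} r_j \subseteq k_i^0$ and $r_j \subset \bigcap_{\mu=1}^{j} k_\mu^+$ for $j \ge i$. By Definition 7, the first condition means that none of the paths $p_j$ with $j<i$ contains $u_i$, while every $p_j$ with $j\ge i$ does.

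First, I will use Lemma 4 to identify what the negative form does. Replacing the input parameters $\boldsymbol{w}_{ij}, b_{ij}$ of $u_i$ by their negatives, with all shallower layers kept fixed, produces the knot $-k_i$ with equation $-\boldsymbol{w}_i^T\boldsymbol{x}-b_i=0$. The point set of the hyperplane is unchanged, but its sign is reversed. For every $\boldsymbol{x}$ in a region $r_j$, the preactivation of $-u_i$ equals $-(\boldsymbol{w}_i^T\boldsymbol{x}+b_i)$. This holds because under the adjacent-path assumption (Assumption 1) the shallower parts of the paths, and hence $\boldsymbol{x}^{(i-1)}$ restricted to each ordered region, are preserved.

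Second, I will examine the regions $r_j$ with $j<i$. Since $r_j \subseteq k_i^0$ and $k_i=r_i\cap r_{i-1}$ is a genuine knot separating $r_{i-1}$ from $r_i$, the linear expression $\boldsymbol{w}_i^T\boldsymbol{x}+b_i$ is nonpositive on $r_j$. Consequently $-\boldsymbol{w}_i^T\boldsymbol{x}-b_i \ge 0$ there, and it is strictly positive in the interior. The step I expect to be the main obstacle is justifying this sign for all $j<i$ and not only for $r_{i-1}$. The reason is that $u_i$ may be a deeper unit whose knot equation is region-dependent (Corollary 1).

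I would handle this as follows. For $j<i$, the chain $r_j \frown r_{j+1}\frown\cdots\frown r_{i-1}$ crosses only the knots $k_{j+1},\dots,k_{i-1}$, none of which is generated by $u_i$. Theorem 4 then keeps $u_i$ inactive along the chain, and by Lemma 4 and the preserved shallower layers, the input to $u_i$ in each such path is the same linear expression on the side $\boldsymbol{w}_i^T\boldsymbol{x}+b_i<0$. Hence $-u_i$ is activated by every $r_j$ with $j<i$, and so $-u_i \in p_j$.

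Finally, I will note the complementary fact for contrast: for $j\ge i$ the original $u_i$ was active, so $-u_i$ is inactive there. This shows that the negation moves the unit exactly onto the paths preceding $k_i$ in the order, which is the assertion of the proposition.
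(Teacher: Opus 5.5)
Your conclusion is right, but the step you flag as the main obstacle is closed with a false claim. You assert that the input to $u_i$ on every $r_j$, $j<i$, is ``the same linear expression'' $\boldsymbol{w}_i^T\boldsymbol{x}+b_i$. That is exactly what Corollary 1 warns can fail. Let $\nu$ be the layer of $u_i$; if some $u_j$ with $j<i$ lies in a layer shallower than $\nu$, then $\boldsymbol{x}^{(\nu-1)}$ is a different affine map on $r_j$ than on $r_{i-1}$. The preactivation of $u_i$ on $r_j$ is then some other affine function $a_j(\boldsymbol{x})$, and the hyperplane carrying $k_i$ may even pass through $r_j$. So the sign claim in your second paragraph is not justified as stated either.

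You do not need uniformity. Definition 13 already gives $r_j\subseteq k_i^0$, i.e.\ $a_j\le 0$ on $r_j$, for every $j<i$, which makes your Theorem 4 chain redundant. Negating $\boldsymbol{w}_{\nu i},b_{\nu i}$ leaves every shallower layer untouched; this is the hypothesis of Lemma 4, not a consequence of Assumption 1. Hence the preactivation of $-u_i$ at each point of $r_j$ is $-a_j(\boldsymbol{x})$, which is positive on the interior of $r_j$ as long as $a_j\not\equiv 0$.

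Repaired this way, your route genuinely differs from the paper's. The paper flips only at the boundary: $r_{i-1}\subset -k_i^+$ because $r_{i-1}$ lies on the zero side of $k_i$. It then applies Theorem 4 to the knot $-k_i$ along $r_{i-1}\frown r_{i-2}\frown\cdots\frown r_1$, whose separating knots $k_{i-1},\dots,k_2$ are not generated by $-u_i$. That argument never needs $u_i$'s preactivation on distant regions. However, it runs in the modified network, so it relies on the ordered regions and their separating knots surviving the flip (Assumption 1).

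Your pointwise argument needs neither Theorem 4 nor any adjacency chain. It shows that every interior point of each original $r_j$ activates $-u_i$, however the deeper layers re-partition after the flip. The price is the nondegeneracy condition $a_j\not\equiv 0$.
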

\begin{proof}
The reason is by theorem 4. After this negative-knot operation, we have $r_{i-1} \subset -k_i^+$. Since $r_{i-2} \cap r_{i-1} = k_{i-1} \ne k_i$, $r_{i-2} \subset -k_i^+$ also holds. This process can be repeated until $r_1 \subset -k_i^+$.
\end{proof}

\begin{thm}[Principles of two-sided solutions-\Romannum{1}]
In theorem 23, let $u_i$ be the unit generating knot $k_i$ and $u_j$ for $j=1,2 \dots,i-1$ be the one that yields $k_j$ with $k_j \prec k_i$. Suppose that $u_j$ is in the $d_{j}$th layer of network $\mathfrak{N}$, and that $u_i$ is in the $\nu$th layer, satisfying $\nu \le d_j$ for all $j$. Then, if $k_i$ generated by $u_i$ is changed into its negative form $-k_i$, a solution of arbitrary $s(\boldsymbol{x}) \in \mathfrak{S}_n(R, K')$ via network $\mathfrak{N}$ can still be founded, where $K'$ is the modified version of $K$ by substituting $k_i$ with $-k_i$ (with this notation applicable to other cases of this section).
\end{thm}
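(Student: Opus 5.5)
The idea is to redo the one-sided construction of theorem 23 with $-u_i$ in place of $u_i$, absorbing the extra contribution of $-u_i$ into the linear functions of the earlier paths. The depth condition $\nu \le d_j$ is what makes this absorption possible.

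First fix the geometry. By lemma 4, reversing the input parameters of $u_i$ while leaving shallower layers unchanged yields $-k_i$ with output $\sigma(-\boldsymbol{w}_i^T\boldsymbol{x}-b_i)$. The knot's position is unchanged, so the ordered regions $r_0,r_1,\dots,r_\zeta$ are the same sets. By proposition 4, $-u_i$ now lies in every path $p_j$ for $j<i$ (and in the initial path $p_0$). It is absent from $p_i,\dots,p_\zeta$, provided the adjacent-path assumption (assumption 1) keeps the other activations unchanged.

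Next, the target spline $s\in\mathfrak{S}_n(R,K')$ is still determined by linear functions $s_0,\dots,s_\zeta$ on the ordered regions, continuous across each knot. The difference is that across $-k_i$ we now have
\begin{equation*}
s_{i-1}(\boldsymbol{x}) = s_i(\boldsymbol{x}) + \lambda_i'\sigma(-\boldsymbol{w}_i^T\boldsymbol{x}-b_i).
\end{equation*}
I would build the network in three stages.

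\begin{enumerate}
\item Construct the initial path $p_0$, which now contains $-u_i$, via corollary 11 and theorem 12 so that it realizes $s_0$. Theorem 12 permits any linear function on $r_0$, so the presence of $-u_i$ in the path is harmless at this stage: its contribution $\lambda_i'(-\boldsymbol{w}_i^T\boldsymbol{x}-b_i)$ is linear on $r_0$ and is simply included in what the initial path outputs.
\item For $j=1,\dots,i-1$, add $u_j$ as in theorem 23 and use theorem 13 to set $\lambda_j$ through the output weights of $u_j$. The condition $\nu\le d_j$ matters here. Since $u_j$ lies no shallower than $-u_i$, inserting $u_j$ does not change the input of $-u_i$, so the knot $-k_i$ is preserved. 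The influence coefficient vector of $p_j$ is determined by layers deeper than $d_j$ and is nonzero by hypothesis (b).
\item At $r_i$ the unit $-u_i$ switches off. Read backwards, the step from $p_i$ to $p_{i-1}$ is an adjacent-path step in the sense of theorem 8, with $\lambda_i'$ given by equation 3.3 through the output-weight vector of $-u_i$. Choose those output weights by theorem 13 so that $\lambda_i'$ produces the desired jump. For $j>i$, proceed exactly as in theorem 23.
\end{enumerate}

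The main obstacle is a coupling in stage 3. The output weights of $-u_i$ fix $\lambda_i'$, but they also contribute to $s_0,\dots,s_{i-1}$ and to the knots of deeper units in those paths through theorem 5.

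My plan is to fix $\boldsymbol{v}$ for $-u_i$ first, to obtain the required $\lambda_i'$. I would then treat its influence on $p_0,\dots,p_{i-1}$ as already-present terms:
\begin{itemize}
\item on $s_0$, it is absorbed by theorem 12;
\item on the knots $k_1,\dots,k_{i-1}$, it is compensated by corollary 10 or corollary 8, which reset the input parameters of the units $u_j$ at depth $d_j\ge\nu$ after $-u_i$ is fixed.
\end{itemize}
This ordering is the reason the hypothesis $\nu\le d_j$ is needed: every $u_j$ is constructed after, and downstream of, $-u_i$. If $\nu=d_j$ for some $j$, the two units share a layer and do not influence each other's inputs, so no compensation is required. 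Adjacency of the paths throughout is kept by theorem 16, using small $|\lambda|$.
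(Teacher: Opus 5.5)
\textbf{There is a genuine gap in how you resolve the coupling you identified.} You fix the output-weight vector $\boldsymbol{v}$ of $-u_i$ first, so that $\lambda_i'=\boldsymbol{\alpha}_i'^T\boldsymbol{v}$ has the required value. Only afterwards do you absorb the influence of $-u_i$, using theorem 12 on $s_0$ and re-setting the $u_j$ by corollary 10 and theorem 13. But $\boldsymbol{\alpha}_i'=W_{\nu+2}W_{\nu+3}\cdots\boldsymbol{w}_{\Phi+1}$ is built from the layers of $p_i$ deeper than $\nu+1$, and each of your later repairs changes exactly those layers:
\begin{itemize}
\item theorem 12 re-realizes $s_0$ by resetting the last hidden layer and $\boldsymbol{w}_{\Phi+1}$;
\item re-setting the input parameters of a $u_j$ with $d_j\ge\nu+2$ changes $W_{d_j}$;
\item the output weights chosen for $u_j$ by theorem 13 form a row of $W_{d_j+1}$, which enters the product whenever $d_j\ge\nu+1$.
\end{itemize}
After these steps $\boldsymbol{\alpha}_i'^T\boldsymbol{v}$ is no longer the $\lambda_i'$ you fixed. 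Changing $\boldsymbol{v}$ again disturbs $s_0,\dots,s_{i-1}$, so the procedure is circular.

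For the same reason you cannot cite hypothesis (b) for the modified $\boldsymbol{\alpha}_j$. Repairing only $s_0$ and the knots $k_1,\dots,k_{i-1}$ also ignores that $-u_i$ shifts the pre-activations of every other unit deeper than layer $\nu$ on $p_0,\dots,p_{i-1}$. Finally, theorem 16 is unavailable: $\lambda_i'$ is prescribed by an arbitrary spline, not small, so adjacency must come from assumption 1.

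\textbf{The missing idea, which is the key step of the paper's proof, is where to compensate.} Fixing $\boldsymbol{v}$ first is legitimate (the paper does so too), but only if nothing deeper than layer $\nu+1$ is touched afterwards. The paper proceeds as follows:
\begin{itemize}
\item Start from the finished one-sided solution and replace $u_i$ by $-u_i$.
\item On $p_0'$, $-u_i$ is active, so its contribution is linear. Cancel it immediately through the input parameters of the layer-$(\nu+1)$ units (theorem 11, corollary 10), so the outputs of layer $\nu+1$ equal those of $p_0$.
\item These input parameters appear in none of the products $W_{\nu+2}\cdots\boldsymbol{w}_{\Phi+1}$. Hence every parameter from layer $\nu+2$ on is inherited unchanged, and so are all $u_j$ with $j<i$ (only those with $d_j=\nu+1$ need the same input compensation), all deeper knots, and all influence coefficient vectors.
\item In particular $\boldsymbol{\alpha}_i'=\boldsymbol{\alpha}_i$, so the output weights of $-u_i$ are read off directly from $\boldsymbol{v}_i$ with no circularity. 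Only the $u_k$ with $k>i$ are re-set, by corollary 10 and theorem 13.
\end{itemize}
This is also the real role of $\nu\le d_j$. Layers $1,\dots,\nu-1$ are identical on $p_0',\dots,p_{i-1}'$, and layer $\nu$ differs only by units with $d_j=\nu$, whose outputs the compensation need not use. So one fixed compensation in layer $\nu+1$ is valid on all these paths at once.
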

\begin{proof}
After the negative-form operation for $k_i$ or $u_i$, the original $s_{i}(\boldsymbol{x}) = s_{i-1}(\boldsymbol{x})+\lambda_{i}\sigma(\boldsymbol{w}_{i}^T\boldsymbol{x}+b_{i})$ for $s_{i}(\boldsymbol{x})$ should become
\begin{equation}
s_{i}(\boldsymbol{x}) = s_{i-1}(\boldsymbol{x}) - \lambda_{i}\sigma(-\boldsymbol{w}_{i}^T\boldsymbol{x} - b_{i}),
\end{equation}
where
\begin{equation}
\lambda_i = \boldsymbol{\alpha}_i^T\boldsymbol{v}_i
\end{equation}
is substituted by
\begin{equation}
-\lambda_i = {\boldsymbol{\alpha}_i'}^T\boldsymbol{v}_i
\end{equation}
for which $\boldsymbol{v}_i$ should be reset. Write
\begin{equation}
\boldsymbol{\alpha}_i = W_{\nu+2}W_{\nu+3}\dots \boldsymbol{w}_{\Phi+1}
\end{equation}
by equation 3.18.

We first reset the parameters of $u_{j}$'s for $j=1,2 \dots,i-1$, because according to proposition 5 each path $p_j$ corresponding to $u_j$ would contain $-u_i$ after the operation. Suppose that $p_0$ is the initial path in theorem 23 and implements $s_0(\boldsymbol{x})$ via theorem 12. Let the output of $u_i$ be
\begin{equation}
\sigma(\boldsymbol{w}_{\nu i}^T\boldsymbol{x}^{(\nu-1)} + b_{\nu i}) = \sigma(\boldsymbol{w}_{i}^T\boldsymbol{x} + b_{i}),
\end{equation}
where $\boldsymbol{w}_{\nu i}$ and $b_{\nu i}$ are the input parameters of $u_i$ in the $\nu$th layer.
After $-u_i$ being constructed by lemma 4, $-k_i$ is formed and the $\nu$th layer of $p_0$ also adds a unit $-u_{i}$ (similarly to lemma 4) whose output is
\begin{equation}
\sigma(-\boldsymbol{w}_{\nu i}^T\boldsymbol{x}^{(\nu-1)} - b_{\nu i});
\end{equation}
the altered path is denoted by $p_0'$.

Suppose that the output-weight vector $\boldsymbol{v}_i$ of $-u_i$ has been already set and we first fix it as a constant vector in the following discussion. A key point is that the influence of $-u_i$ on $p_0$ can be restrict in the $\nu$th layer, such that the parameters and the outputs of the succeeding layers can remain invariant, resulting in the original linear function $s_0(\boldsymbol{x})$ via $p_0'$. The method is by corollary 10 and the proof of theorem 11. Let $\mathcal{U}$ be a unit of the $\nu+1$ layer of $p_0$ whose output is $\sigma(\boldsymbol{w}^T\boldsymbol{x}+b)$; write $s(\boldsymbol{x})=\boldsymbol{w}^T\boldsymbol{x}+b$. By theorem 12, arbitrary $s(\boldsymbol{x})$ can be realized by the input parameters of $\mathcal{U}$. Thus, the influence of $-u_i$ on $\mathcal{U}$ can be compensated by $s(\boldsymbol{x})$, such that the output of $\mathcal{U}$ can remain the same as that of the original path $p_0$. Because $\mathcal{U}$ is arbitrarily selected in the $\nu+1$th layer, the outputs of all the units of the $\nu+1$th layer of $p_0'$ could be equal to those of $p_0$, respectively; consequently, the parameters of $p_0$ after the $\nu+1$th layer can be preserved to be those of $p_0'$, without influencing the original result of $p_0$---producing $s_0(\boldsymbol{x})$.

Path $p_1$ is derived from $p_0$ by adding unit $u_1$ in a layer whose depth $d_1 \ge \nu$ and there are three cases. (1) $d_1>\nu+1$: After $p_0$ becoming $p_0'$, we should construct $p_1'$ on the basis of $p_0'$. Since the outputs of the layers of $p'_0$ whose depths are greater than $\nu$ are equal to those of $p_0$, respectively, $u_1$ could be introduced in $p_0'$ in the original position of $p_0$, whose parameters are also the same as those in $p_0$. (2) $d_1=\nu+1$: The input parameters of $u_1$ are reset to compensate the influence of $-u_i$ as discussed above and its output parameters are unchanged. (3) $d_1=\nu$: all the parameters of $u_1$ are invariant since $-u_i$ doesn't affect its input and the impact on its output has been compensated by the previous operations.

The remaining paths $p_2, p_3,\dots,p_{i-1}$ can be recursively dealt with similarly to $p_1$, yielding new paths $p_2', p_3',\dots,p_{i-1}'$.

Now, we turn to the process from $p_{i-1}'$ to $p_i'$ that removes $-u_{i}$ from $p_{i-1}'$, resulting in function $s_{i}(\boldsymbol{x})$, for which we should reset the output-weight vector $\boldsymbol{v}_i$ of $-u_{i}$. By the above parameter settings of the new paths as well as equation 4.4, we know that
\begin{equation}
\boldsymbol{\alpha}'_i = \boldsymbol{\alpha}_i;
\end{equation}
equations 4.7, 4.2 and 4.3 imply
\begin{equation}
\boldsymbol{v}'_i = -\boldsymbol{v}_i.
\end{equation}
Thus, we can initially set the output-weight vector of $-u_i$ to be $-\boldsymbol{v}_i$ and the previous assumption about the fixed output-weight vector is resolved.

We then process $u_{k}$ for $k=i+1, i+2, \dots, \zeta$ of path $p_k$. In the new path $p_{k}'$, each $u_k$ can be placed in the original position of $p_{k-1}$, whose parameter setting cannot influence the preceding accomplished results. The input parameters of $u_k$ should be set to generate knot $k_j$ by corollary 10; the output parameters of $u_k$ yield the linear function $s_k(\boldsymbol{x})$ on $r_k$ via theorem 13; under adjacent-path assumption and the condition of theorem 23, a solution can be founded. This completes the construction of this type of two-sided solution. We summarize the above process by the following steps.
\begin{itemize}
\item[(1)] Let $\boldsymbol{w}_{\nu i}$ and $b_{\nu i}$ be the input parameters of $u_{i}$ and $\boldsymbol{v}_i$ its output-weight vector. Change $\boldsymbol{w}_{\nu i}$, $b_{\nu i}$ and $\boldsymbol{v}_i$ into $-\boldsymbol{w}_{\nu i}$, $-b_{\nu i}$ and $-\boldsymbol{v}_i$, respectively.
\item[(2)] Reset the parameters of the $\nu$th layer of the initial path $p_0$ to make the outputs of the $\nu+1$th layer invariant with the previous negative-form operation. The modified path is denoted by $p_0'$.
\item[(3)] Add units $u_1, u_2, \dots, u_{i-1}$ one by one in their original positions on the basis of $p_0'$ to produce the paths $p_1', p_2', \dots, p_{i-1}'$, which correspond to $p_1, p_2, \dots, p_{i-1}$, respectively. The parameters of $u_j$ for $1\le j \le i-1$ are the same as those in the original paths, except for the case that when $d_j=\nu+1$ the input-weight vector of $u_j$ should be reset.
\item[(4)] On the basis of $p_i'$, introduce $u_k$'s for $k=i+1, i+2, \dots, \zeta$ in their original positions and reset their parameters via corollary 10 and theorem 13.
\end{itemize}
\end{proof}

\subsection{Second Case of of Single Negative Unit}
\begin{prp}[A special two-sided solution]
Under the notations of theorem 24, suppose that to $u_i$ we have $i=2$ and among $u_k$'s for $1\le k\le \zeta$ only $u_1$ is in a layer shallower than the layer of $u_2$. After $u_2$ being modified to $-u_2$, to find a solution of any $s(\boldsymbol{x}) \in \mathfrak{S}_n(R, K')$, first set the parameters of $u_2$ by theorem 24. The input parameters of $u_{1}$ could be invariant, while its output-weight vector $\boldsymbol{v}_1$ should be updated for $s_{1}(\boldsymbol{x})=s_{0}(\boldsymbol{x})+\lambda_{1}\sigma(\boldsymbol{w}_{1}^T\boldsymbol{x}+b_{1})$ and simultaneously this update should not affect the the knot of $-u_2$ whose depth is deeper than that of $u_1$, for which a solution exists, if
\begin{equation}
\begin{cases}
\ {\boldsymbol{\alpha}_1'}^T\boldsymbol{v}_1 = \lambda_{1}
\\
\ {\boldsymbol{\alpha}_{12}'}^T\boldsymbol{v}_1 = 0
\end{cases}
\end{equation}
has a solution of $\boldsymbol{v}_1$, where $\boldsymbol{\alpha}_1'$ is the updated version of $\boldsymbol{\alpha}_1$ from equation 3.17 after the negative-knot operation and $\boldsymbol{\alpha}_{12}'$ from
\begin{equation}
\Delta_{2} = {\boldsymbol{\alpha}_{12}'}^T\boldsymbol{v}_1(\boldsymbol{w}_{1}^T\boldsymbol{x} + b_1)=0
\end{equation}
by equation 2.14, which is the influence of $u_1$ on $-u_2$; the first formula of equation 4.9 ensures the generation of $s_{1}(\boldsymbol{x})$ and the second one means that the update of $\boldsymbol{v}_1$ doesn't influence the knot of $-u_2$.
\end{prp}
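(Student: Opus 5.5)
The plan is to reduce Proposition 6 to the construction in the proof of theorem 24, and to isolate the single place where that construction fails: $u_1$ lies in a layer shallower than $-u_2$. In theorem 24 the hypothesis $\nu \le d_j$ ensured that every $u_j$ with $j<i$ sits at least as deep as the negated unit. Once the influence of $-u_2$ on the $\nu+1$th layer of $p_0$ is compensated, the deeper layers are unchanged and every such $u_j$ can be reinserted with essentially its old parameters. Here only $u_1$ violates this, and the other units $u_k$ ($k\ge 3$) are deeper than $u_2$. So I would first carry out steps (1), (2) and (4) of theorem 24 for $u_2$ and for the $u_k$ with $k\ge3$. This yields the modified initial path $p_0'$, which contains $-u_2$ and still outputs $s_0(\boldsymbol{x})$ by theorem 12 and corollary 10. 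It also gives the reset output-weight vector $-\boldsymbol{v}_2$ of $-u_2$ and the later paths $p_k'$.

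Next comes $u_1$. By proposition 5, path $p_1'$ must contain $-u_2$, so $p_1'$ is $p_0'$ with $u_1$ added in its original (shallower) layer. I would keep the input parameters of $u_1$ unchanged. This is legitimate by lemma 4: $-u_2$ is deeper, so the input $\boldsymbol{x}^{(d_1-1)}$ of $u_1$ is unchanged, and knot $k_1$ is preserved. What must change is $\boldsymbol{v}_1$, since the downstream weights of $p_0'$ differ from those of $p_0$ because of the added $-u_2$ and the compensating resets. This gives a new influence coefficient vector $\boldsymbol{\alpha}_1'$ computed by equation 3.18 on $p_0'$. By theorems 8 and 13, achieving $s_1=s_0+\lambda_1\sigma(\boldsymbol{w}_1^T\boldsymbol{x}+b_1)$ is exactly the first equation of (4.9).

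The second requirement is that activating $u_1$ must not move the knot $-k_2$ produced by $-u_2$, which lies in a deeper layer. By theorem 5, introducing $u_1$ adds to the input of $-u_2$ the term $\Delta_2=\boldsymbol{\alpha}_{12}'^T\boldsymbol{v}_1(\boldsymbol{w}_1^T\boldsymbol{x}+b_1)$ of (4.10). Corollary 9 and lemma 1 show that this term would rotate $-k_2$ about $k_1\cap(-k_2)$, as in theorem 1. Requiring $\boldsymbol{\alpha}_{12}'^T\boldsymbol{v}_1=0$ therefore keeps the equation of $-k_2$ identical in $p_1'$ and $p_0'$. So any $\boldsymbol{v}_1$ solving (4.9) meets both demands, which is the content of theorem 14 with $\beta=0$.

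Finally, I would check that nothing downstream is disturbed. The paths $p_k'$ for $k\ge 3$ are built on top of $p_1'$ and $p_2'$ using only deeper units, so their construction by corollary 10 and theorem 13 is unaffected. Adjacency of $p_1'$ and $p_0'$ follows from the adjacent-path assumption, for instance by scaling $\boldsymbol{v}_1$ small as in theorem 16. That scaling is compatible with (4.9), because the second equation is homogeneous and the first only fixes one linear combination.

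I expect the main obstacle to be making precise that $\boldsymbol{\alpha}_1'$ and $\boldsymbol{\alpha}_{12}'$ are well defined before $\boldsymbol{v}_1$ is chosen. This is a circularity concern: the compensation in the $\nu+1$th layer depends on $-u_2$, and not on $u_1$'s output weights. I would resolve it by fixing all parameters of layers deeper than that of $u_1$ first, then reading off both vectors from them. This makes (4.9) a plain linear system in $\boldsymbol{v}_1$, solvable whenever $\boldsymbol{\alpha}_1'$ and $\boldsymbol{\alpha}_{12}'$ are linearly independent.
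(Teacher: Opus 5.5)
Your proposal is correct and follows the paper's argument. The operations of theorem 24 change the layers below $u_1$, so $\boldsymbol{v}_1$ must be reset to produce $\lambda_1$ (first line of equation 4.9), and the resulting disturbance of the input of $-u_2$ is removed by $\boldsymbol{\alpha}_{12}'^T\boldsymbol{v}_1=0$ (second line, theorem 14 with $\beta=0$). Two small corrections: the $u_k$ with $k\ge 3$ lie deeper than $u_1$ and their knots move with $\boldsymbol{v}_1$, so step (4) of theorem 24 must come after $\boldsymbol{v}_1$ is fixed (not before, as your first paragraph says); and shrinking $\boldsymbol{v}_1$ is compatible with the first equation only when $|\lambda_1|$ is small (as in theorem 16), so adjacency should simply be taken from assumption 1.
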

\begin{proof}
To produce $\lambda_{1}$ for $s_{1}(\boldsymbol{x})$, because $u_{1}$ is in a layer shallower than that of $u_{2}$ and the parameters of $u_2$ have been updated by the operations of theorem 24, the output-vector $\boldsymbol{v}_1$ of $u_1$ should be reset---this is the first formula of equation 4.9. The updated $\boldsymbol{v}_1$ may in turn affect the knot $-k_2$ and the second formula of equation 4.9 solves this problem by removing the disturbance.
\end{proof}

\begin{thm}[Principles of two-sided solutions-\Romannum{2}]
Notations as in theorem 24, suppose that $i=\theta+1$ (i.e., $u_{i}=u_{\theta+1}$) with $\theta \ge n+1$ and that $u_j$'s for $1 \le j \le \theta=i-1$ are all in the layers shallower than the one of $u_i$, and that the input parameters of $u_j$'s form a matrix $W$ of equation 2.19 whose rank is $n+1$. After $u_{i}$ becoming $-u_i$, for a solution of any $s(\boldsymbol{x}) \in \mathfrak{S}_n(R, K')$, first, in the initial path $p_0$, the input parameters of $u_i$ are set to yield a global unit of region $r_0$ and its output-weight vector is changed to be $-\boldsymbol{v}_i$; then step (2) of theorem 24 is operated. Second, use theorem 15 to generate the knot $-k_i$ of $-u_i$ as well as the linear functions $s_{j}(\boldsymbol{x})$'s, through a solution of each $\boldsymbol{v}_j$ of
\begin{equation}
\begin{cases}
\begin{aligned}
{\boldsymbol{\alpha}_j'}^T\boldsymbol{v}_j &= \lambda_{j} \\
\boldsymbol{\alpha}_{ji}'^T\boldsymbol{v}_{j} &= \beta_{j}
\end{aligned}
\end{cases}
\end{equation}
similar to equation 4.9, with $\beta_{j}$'s for all $j$ subject to
\begin{equation}
\sum_j\beta_{j}(\boldsymbol{w}_j^T\boldsymbol{x}+b_j) + \boldsymbol{w}^T\boldsymbol{x}+b = -\boldsymbol{w}_i^T\boldsymbol{x}-b_i,
\end{equation}
where $\boldsymbol{w}_j^T\boldsymbol{x}+b_j$ is from the equation $\boldsymbol{w}_j^T\boldsymbol{x}+b_j=0$ of the knot $k_j$ of $u_{j}$,  $\boldsymbol{w}^T\boldsymbol{x}+b$ from the knot of $-u_i$ in path $p_0'$, and $-\boldsymbol{w}_i^T\boldsymbol{x}-b_i$ from $-k_i$. Third, units $u_k$'s for $k\ge i+1$ are processed by step (4) of theorem 24. By the above four steps, any $s(\boldsymbol{x}) \in \mathfrak{S}_n(R, K')$ can be constructed by network $\mathfrak{N}$.
\end{thm}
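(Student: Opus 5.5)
The plan is to follow the three-stage construction in the statement and verify at each stage that the paths and linear functions of $\mathfrak{S}_n(R, K')$ are produced. The argument is modeled on the proof of theorem 24. The new feature is that all $u_j$ with $j \le \theta$ lie in layers shallower than that of $u_i$. The knot $-k_i$ therefore cannot keep its original equation automatically: each newly activated $u_j$ perturbs the input of $-u_i$ by a term proportional to $\boldsymbol{w}_j^T\boldsymbol{x}+b_j$, as in theorem 5. The knot must then be assembled from these perturbations, in the spirit of corollary 8.

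\textbf{Stage 1 (initial path).} By proposition 5, after the negative-form operation every path $p_j$ with $j<i$, including $p_0$, must contain $-u_i$. In $p_0$ I would set the input parameters of $-u_i$ so that $r_0$ completely activates it, giving a hyperplane $\boldsymbol{w}^T\boldsymbol{x}+b=0$ that does not cut $r_0$. By theorem 3(2) this hyperplane yields no knot there. The output-weight vector of $-u_i$ is set to $-\boldsymbol{v}_i$. I would then apply step (2) of theorem 24: using corollary 10 and theorem 12, reset the input parameters of the next layer so that its outputs equal those of the original $p_0$. The modified path $p_0'$ then still produces $s_0(\boldsymbol{x})$.

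\textbf{Stage 2 (adding $u_1,\dots,u_\theta$).} When $u_j$ is activated, the input of $-u_i$ becomes, by corollary 7,
\[
\boldsymbol{w}^T\boldsymbol{x}+b+\sum_{j}\boldsymbol{\alpha}_{ji}'^T\boldsymbol{v}_j(\boldsymbol{w}_j^T\boldsymbol{x}+b_j).
\]
After all $\theta$ units are introduced, this should equal $-\boldsymbol{w}_i^T\boldsymbol{x}-b_i$, which is equation 4.12 with $\beta_j=\boldsymbol{\alpha}_{ji}'^T\boldsymbol{v}_j$. The matrix formed by the $(\boldsymbol{w}_j,b_j)$ has rank $n+1$. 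By \citet*{Huang2024}'s lemma 3 (as in corollary 8), the affine function $-\boldsymbol{w}_i^T\boldsymbol{x}-b_i-\boldsymbol{w}^T\boldsymbol{x}-b$ is therefore a linear combination $\sum_j\beta_j(\boldsymbol{w}_j^T\boldsymbol{x}+b_j)$, so suitable $\beta_j$ exist. Each $\boldsymbol{v}_j$ must also give the correct $\lambda_j$ for $s_j=s_{j-1}+\lambda_j\sigma(\boldsymbol{w}_j^T\boldsymbol{x}+b_j)$. This is exactly the system 4.11, solved path by path as in theorem 15. Under the adjacent-path assumption and the hypothesis that 4.11 is solvable, both requirements hold simultaneously.

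Then $p_{i-1}'$ contains $-u_i$ with knot $-k_i$. Passing to $p_i'$ removes $-u_i$. The argument of theorem 24 (equations 4.7 and 4.8, $\boldsymbol{\alpha}_i'=\boldsymbol{\alpha}_i$) shows that the output weight $-\boldsymbol{v}_i$ produces $s_i=s_{i-1}-\lambda_i\sigma(-\boldsymbol{w}_i^T\boldsymbol{x}-b_i)$.

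\textbf{Stage 3.} The units $u_k$ with $k>i$ are added by step (4) of theorem 24, using corollary 10 and theorem 13, without disturbing earlier results.

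\textbf{Main obstacle.} The hardest step is the consistency check in Stage 2 and its interaction with Stage 3. First, $\boldsymbol{\alpha}_{ji}'$ and $\boldsymbol{\alpha}_j'$ must be computed in the modified paths $p_{j-1}'$, and they must not be changed by later choices of $\boldsymbol{v}_{j'}$. Second, the intermediate knots of $-u_i$ (partial sums of the $\beta_j$ terms) must not cut the regions $r_1,\dots,r_{i-1}$, because $-u_i$ must remain globally active on them. I would handle the first by processing the $u_j$ in order of depth from deep to shallow, so that each $\boldsymbol{\alpha}$ depends only on already fixed deeper parameters. For the second I would first rely on the adjacent-path assumption, which preserves region knots. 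Failing that, I would scale the $\beta_j$ small, in the manner of theorem 16, except for the final contribution that produces $-k_i$.
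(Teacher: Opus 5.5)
Your proposal is correct and follows the paper's route. The paper's entire proof is your Stage 2 observation: by theorem 15 and the rank-$(n+1)$ argument of corollary 8, $\sum_j\beta_j(\boldsymbol{w}_j^T\boldsymbol{x}+b_j)$ can be any affine function, so it compensates $\boldsymbol{w}^T\boldsymbol{x}+b$ in equation 4.12 to give $-k_i$. Your Stages 1 and 3, and your concerns about the order in which the $\boldsymbol{v}_j$ are fixed and about intermediate knots of $-u_i$, are details the paper leaves to the adjacent-path assumption it tacitly assumes throughout section 4.

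Rely on that assumption and drop your fallback of shrinking all but the last $\beta_j$. That fallback would leave the single term $\beta_\theta(\boldsymbol{w}_\theta^T\boldsymbol{x}+b_\theta)$ to carry the whole correction in equation 4.12, which abandons the rank-$(n+1)$ mechanism the theorem depends on.
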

\begin{proof}
By theorem 15, $\sum_j\beta_{j}(\boldsymbol{w}_j^T\boldsymbol{x}+b_j)$ can realize arbitrary linear function, such that $ \boldsymbol{w}^T\boldsymbol{x}+b = 0$ of equation 4.12 can be compensated to yield the equation of $-k_i$.
\end{proof}

\begin{rmk}
Proposition 6 is a special case of this theorem.
\end{rmk}

\subsection{Third Case of of Single Negative Unit}
\begin{thm}[Principles of two-sided solutions-\Romannum{3}]
Notations being from theorem 24, suppose that $i=\theta+2$ (or $u_{i}=u_{\theta+2}$) with $\theta \ge n+1$ and units $u_{j}$ for $1 \le j \le \theta=i-2$ are in the layers shallower than the one of $u_{i}$, and that the input parameters of $u_{j}$'s form a matrix $W$ of equation 2.19 with rank $n+1$. Unit $u_{i-1}$ is in a layer deeper than the one of $u_i$ and this is the difference from theorem 25. For $u_{i}$ becoming $-u_i$, to find a solution of an arbitrary $s(\boldsymbol{x}) \in \mathfrak{S}_n(R, K')$, first, $u_i$  and $u_{j}$'s are processed by theorem 25. Second, the parameters of $u_{i-1}$ remain the same as the original ones. Third, the knot $k_{i-1}$ of $u_{i-1}$ is produced by corollary 9, namely by adjusting the output parameters of $u_j$'s, after which the linear function $s_{i-1}(\boldsymbol{x})$ can be simultaneously realized. To satisfy both the first and third steps, equation 4.11 should be modified to
\begin{equation}
\begin{cases}
\begin{aligned}
{\boldsymbol{\alpha}_j'}^T\boldsymbol{v}_j &= \lambda_{j} \\
\boldsymbol{\alpha}_{ji}'^T\boldsymbol{v}_{j} &= \beta_{j} \\
\boldsymbol{\alpha}_{j,i-1}'^T\boldsymbol{v}_{j} &= \gamma_{j}
\end{aligned}
\end{cases},
\end{equation}
in which the third formula is for the production of knot $k_{i-1}$. Fourth, units $u_{k}$'s for $k\ge i+1$ are processed by step (4) of theorem 24. Through the four steps, any $s(\boldsymbol{x}) \in \mathfrak{S}_n(R, K')$ can be constructed via network $\mathfrak{N}$.
\end{thm}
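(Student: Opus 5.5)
The plan is to reduce this to theorem 25 plus one extra knot constraint on $u_{i-1}$, handled by corollary 9 through the output weights of the shallow units $u_j$. I will follow the four steps of the statement in order and check at each step that earlier work is not destroyed.

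First, exactly as in theorem 25, I set the input parameters of $u_i$ in the initial path $p_0$ so that it is a global unit of $r_0$, and its output-weight vector to $-\boldsymbol{v}_i$. I then apply step (2) of theorem 24: the input parameters of the $\nu+1$th layer compensate the influence of $-u_i$ (corollary 10 and theorem 11), so the modified path $p_0'$ still outputs $s_0(\boldsymbol{x})$. Because every $u_j$ with $j\le\theta$ is shallower than $u_i$, adding them one by one changes the input of $-u_i$ by $\Delta^{(j)}=\boldsymbol{\alpha}_{ji}'^T\boldsymbol{v}_j(\boldsymbol{w}_j^T\boldsymbol{x}+b_j)$, as in corollary 7. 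Choosing the $\beta_j$ subject to equation 4.12 then makes the knot of $-u_i$ in $p_\theta'$ equal to $-k_i$. This is possible because $W$ has rank $n+1$ (corollary 8, theorem 15).

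Second, I keep the parameters of $u_{i-1}$ unchanged. Since $u_{i-1}$ is deeper than $u_i$, it does not affect the input of $-u_i$. Its knot, however, is perturbed both by the $u_j$'s and by the compensations made in the $\nu+1$th layer. By corollary 7, in path $p_{i-2}'$ the input of $u_{i-1}$ equals its original input in $p_0'$ plus $\sum_j\boldsymbol{\alpha}_{j,i-1}'^T\boldsymbol{v}_j(\boldsymbol{w}_j^T\boldsymbol{x}+b_j)$. Using again that the $n+1$ linear forms $\boldsymbol{w}_j^T\boldsymbol{x}+b_j$ span all affine functions, I choose $\gamma_j$ so that this sum equals $k_{i-1}$'s equation minus the residual in $p_0'$; that is corollary 9. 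Under the adjacent-path assumption, the pieces $s_j(\boldsymbol{x})$ come from $\boldsymbol{\alpha}_j'^T\boldsymbol{v}_j=\lambda_j$ (theorem 13). Then $s_{i-1}$ follows from theorem 8 with the unchanged output weights of $u_{i-1}$, adjusted if necessary through theorem 13 with $\boldsymbol{\alpha}_{i-1}\neq\boldsymbol{0}$. Finally, $s_i$ is obtained by removing $-u_i$, exactly as in theorem 24 (equations 4.7–4.8). So all constraints on the $\boldsymbol{v}_j$ are collected in system 4.13, and solvability of that system is the stated hypothesis.

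Third, the units $u_k$ with $k\ge i+1$ are inserted by step (4) of theorem 24. Their knots come from corollary 10 and their linear functions from theorem 13, and they leave the earlier paths untouched. This yields every $s(\boldsymbol{x})\in\mathfrak{S}_n(R,K')$.

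The main obstacle is the coupling in the second step. The $\gamma_j$ and $\beta_j$ are determined through the same vectors $\boldsymbol{v}_j$, and also through the compensation in the $\nu+1$th layer, which changes $\boldsymbol{\alpha}_{j,i-1}'$. I would fix the order as: compensation, then $\beta_j$ and $\gamma_j$, then the joint solution of 4.13. If some $\boldsymbol{v}_j$ has fewer than three free components, I would first try to enlarge the $\nu+1$th layer so that $\boldsymbol{\alpha}_j'$, $\boldsymbol{\alpha}_{ji}'$ and $\boldsymbol{\alpha}_{j,i-1}'$ are linearly independent, which makes 4.13 solvable.
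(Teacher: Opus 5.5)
Your proposal follows the paper's own argument. You process $u_i$ and the $u_j$'s by theorem 25, you keep all parameters after the $(\nu+1)$th layer (in particular those of $u_{i-1}$) fixed so that equations 4.7 and 4.8 survive, and you restore $k_{i-1}$ through corollary 9 via the third row of system 4.13.

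One thing should be removed: your fallback of readjusting $\boldsymbol{v}_{i-1}$ by theorem 13. It is never needed, because $\lambda_{i-1}=\boldsymbol{\alpha}_{i-1}^T\boldsymbol{v}_{i-1}$ involves only $\boldsymbol{v}_{i-1}$ and the weights of layers deeper than that of $u_{i-1}$, all unchanged (this is the paper's part (4)). It would also be harmful: $\boldsymbol{v}_{i-1}$ enters $\boldsymbol{\alpha}_i$, so changing it destroys $\boldsymbol{\alpha}_i'=\boldsymbol{\alpha}_i$, on which your final step for $s_i$ relies.
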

\begin{proof}
The proof is composed of four parts. (1) By theorem 24, the parameters of $\mathfrak{N}$ after the $\nu+1$th layer must remain invariant to ensure the establishment of equations 4.7 and 4.8. (2) The output parameters of $u_{j}$'s have been reset by theorem 25; in combination with (1), $u_{i-1}$ may not produce knot $k_{i-1}$ as before. (3) To solve this problem, one method is by corollary 9, that is, adjusting the output parameters of $u_{j}$'s to enable $u_{i-1}$ to generate $k_{i-1}$. (4) On the basis of (3), $s_{i-1}(\boldsymbol{x})$ only depends on the parameters of the layers deeper than the one of $u_{i-1}$, which are not changed according to (1).
\end{proof}

\subsection{General Two-Sided Solutions}
The preceding results are typical and a general two-sided solution can be reduced to their combination or generalization. For instance, to the case of multiple negative units, we can process them as follows. Suppose that by the preceding sections, we obtained $K' = \{k_1, k_2, \dots, -k_i, \dots, k_{\zeta}\}$, a modified version of equation 3.31, in which $k_i$ is changed into its negative form $-k_i$. To change another knot $k_j$ for $j \ne i$ to be negative, we can regard $-k_i$ as an ordinary knot that needs not to be specially treated and again apply the method of a single negative unit. This procedure can be repeatedly done.

\section{Mechanism of Multiple Outputs}
The preceding sections studied the networks with a single output and and this section turns to the multiple-output case. Given a deep feedforward ReLU network $\mathfrak{N}_m$ with $m$ units for $m\ge 2$ in the output layer, how to simultaneously implement $m$ desired piecewise linear functions is an important problem, not only related to the expressive capability of $\mathfrak{N}_m$, but also associated with exploiting the efficiency of hidden-layer units to produce knots and functions. Thus, the mechanism of multiple outputs is also a component of the black box of deep ReLU networks.

\subsection{Solution of Multiple-Output Functions}
\begin{thm}[Mechanism of Multiple Outputs]
Let $\mathfrak{N}_m$ be a deep feedforward ReLU network having $m$ outputs for $m \ge 2$, each of which corresponds to a unit of the output layer with a linear activation function, denoted by $u_{i}$ for $i=1,2,\dots,m$. Suppose that paths $p_i$'s of $\mathfrak{N}_m$ are only different in the units of the output layer, with $u_i \in p_i$ and $u_i \notin p_j$ if $j\ne i$. Denote by $s_i(\boldsymbol{x})$ the linear function produced by $p_i$. Add a new unit $\mathcal{U}$ in the $\nu$th layer of $p_i$ with $\nu\ne \Phi$, yielding path $p_i'$. Suppose that $p_i'$ is adjacent to $p_i$. Let
\begin{equation}
s_i'(\boldsymbol{x}) = s_i(\boldsymbol{x}) + \lambda_i\sigma(\boldsymbol{w}^T\boldsymbol{x}+b)
\end{equation}
be the linear function of $p_i'$, continuous with $s_i(\boldsymbol{x})$ at knot $\mathcal{L}$ generated by $\mathcal{U}$, in which $\boldsymbol{w}^T\boldsymbol{x}+b$ is from the equation of $\mathcal{L}$ and $\lambda_i=\boldsymbol{\alpha_i}^T\boldsymbol{v}$ from equation 3.17, with $\boldsymbol{v}$ being the output-weight vector of $\mathcal{U}$ and $\boldsymbol{\alpha_i}$ being
\begin{equation}
\boldsymbol{\alpha}_i = W_{\nu+2}W_{\nu+3}\dots\boldsymbol{w}_i
\end{equation}
similarly to equation 3.18, where $\boldsymbol{w}_i$ is the input-weight vector of $u_i$. Suppose that the length of $\boldsymbol{\nu}$ is not less than $m$. Write
\begin{equation}
A =
\begin{pmat}({})
\boldsymbol{\alpha}_1, \boldsymbol{\alpha}_2, \dots, \boldsymbol{\alpha}_m \cr
\end{pmat}^T
\end{equation}
and
$\boldsymbol{\lambda} = [\lambda_1, \lambda_2, \dots, \lambda_m]^T$. Then arbitrary $s_i'(\boldsymbol{x})$'s for all $i$ with $s_i'(\boldsymbol{x})$ continuous with $s_i(\boldsymbol{x})$ can be simultaneously realized by $p_i'$'s, respectively, in terms of
\begin{equation}
A\boldsymbol{v} = \boldsymbol{\lambda},
\end{equation}
provided that the rank of matrix $A$ is $m$. Equation 5.4 is equivalent to
\begin{equation}
\begin{aligned}
\begin{cases}
{\boldsymbol{\alpha}_1}^T\boldsymbol{v} &= \lambda_{1} \\
{\boldsymbol{\alpha}_2}^T\boldsymbol{v} &= \lambda_{2} \\
&\vdots \\
{\boldsymbol{\alpha}_m}^T\boldsymbol{v} &= \lambda_{m}
\end{cases},
\end{aligned}
\end{equation}
whose each equation is the single-output case of deep feedforward ReLU networks.
\end{thm}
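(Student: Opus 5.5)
The plan is to reduce the multiple-output statement to $m$ copies of the single-output principle of theorem 13, coupled only through the shared output-weight vector $\boldsymbol{v}$ of $\mathcal{U}$.

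\emph{Per-output expansion.} Fix an output index $i$. Since $p_i$ and $p_i'$ differ only in $\mathcal{U}$, and $p_i'$ is adjacent to $p_i$ by hypothesis, theorem 8 applies verbatim to the pair $(p_i,p_i')$ with the output unit $u_i$ in place of the single output unit. It gives $s_i'(\boldsymbol{x}) = s_i(\boldsymbol{x}) + \lambda_i\sigma(\boldsymbol{w}^T\boldsymbol{x}+b)$ with $\lambda_i=\boldsymbol{\alpha}_i^T\boldsymbol{v}$. The computation of $\boldsymbol{\alpha}_i$ follows lemma 1 and theorem 5. Every unit of $p_i'$ after the $\nu$th layer is activated on $r_2$, so these layers act linearly there. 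The output of $\mathcal{U}$ then propagates through $W_{\nu+2},\dots,W_{\Phi}$ and finally through the input-weight vector $\boldsymbol{w}_i$ of $u_i$, which yields equation 5.2. The hidden-layer matrices $W_{\nu+2},\dots,W_\Phi$ are common to all $p_i'$, because the paths differ only at the output layer. Hence the $\boldsymbol{\alpha}_i$ differ only in their last factor and are fixed numbers once the hidden parameters of $p_i$ are fixed.

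\emph{Linear system and solvability.} Continuity of $s_i'$ with $s_i$ at $\mathcal{L}$ means that any desired $s_i'$ is of the above form for some scalar $\lambda_i$. So the target is a prescribed $\boldsymbol{\lambda}$, and realizing all $s_i'$ at once is exactly the system 5.5, that is, $A\boldsymbol{v}=\boldsymbol{\lambda}$. The matrix $A$ is $m\times m_{\nu+1}$. The length of $\boldsymbol{v}$ is $m_{\nu+1}\ge m$, and by hypothesis $\operatorname{rank}A=m$. The row space of $A$ is therefore full, the map $\boldsymbol{v}\mapsto A\boldsymbol{v}$ is onto $\mathbb{R}^m$, and a solution $\boldsymbol{v}$ exists for every $\boldsymbol{\lambda}$. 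When $m_{\nu+1}>m$ there is an affine family of solutions. Each row of 5.5 is precisely the single-output equation 3.17 of theorem 13 for output $u_i$, which proves the stated equivalence.

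\emph{Main obstacle.} The delicate point is consistency: changing $\boldsymbol{v}$ must not destroy the hypothesis that each $p_i'$ is adjacent to $p_i$. The influence of $\mathcal{U}$ on a deeper hidden unit $u_{jk}$ is $\Delta_{jk}=\boldsymbol{\alpha}_{jk}^T\boldsymbol{v}\,(\boldsymbol{w}^T\boldsymbol{x}+b)$ by theorem 5, and this depends on $\boldsymbol{v}$. First I would treat adjacency as given, as the statement does. Should an argument be needed, I would follow theorem 16. Choose a minimum-norm solution $\boldsymbol{v}=A^T(AA^T)^{-1}\boldsymbol{\lambda}$; it depends linearly on $\boldsymbol{\lambda}$, so $\|\boldsymbol{v}\|$ is small when $\|\boldsymbol{\lambda}\|$ is small. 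Then every $|\Delta_{jk}|$ is small and no activation in the deeper layers changes, which preserves adjacency. Alternatively, one can rely on theorem 6 by taking $r_2$ thin enough. Either route keeps $\boldsymbol{\alpha}_i$ unchanged, so the solution of 5.4 remains valid.
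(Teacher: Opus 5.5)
Your proposal is correct and follows the paper's own argument. The $\boldsymbol{\alpha}_i$ share the hidden-layer factors and differ only through $\boldsymbol{w}_i$; they stack into the $m\times m_{\nu+1}$ matrix $A$; and full row rank $m$ makes $A\boldsymbol{v}=\boldsymbol{\lambda}$ solvable for every $\boldsymbol{\lambda}$. Your extra discussion of preserving adjacency with a small minimum-norm $\boldsymbol{v}$ (in the spirit of theorem 16) goes beyond the paper, which simply assumes adjacency, and that route only makes $\boldsymbol{\lambda}$ arbitrary within a neighbourhood of $\boldsymbol{0}$.
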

\begin{proof}
By equation 5.2, the difference between $\boldsymbol{\alpha}_{\nu}$ and $\boldsymbol{\alpha}_{\mu}$ for $1\le \nu, \mu \le m$ and $\nu \ne \mu$ lies in the input-weight vectors $\boldsymbol{w}_{\nu}$ and $\boldsymbol{w}_{\mu}$ of $u_{\nu}$ and $u_{\mu}$, respectively. All of $\boldsymbol{\alpha}_i$'s lead to the matrix $A$ of equations 5.3 and 5.4. The solution of $s_i'(\boldsymbol{x})$'s exists if the rank of $A$ is $m$.
\end{proof}

\begin{cl}[Solution of multiple-output functions]
In theorem 27, when only considering $u_1$ and ignoring other units of the output layer, the subnetwork of $\mathfrak{N}_m$ is denoted by $\mathscr{N}_1$. Suppose that any continuous piecewise linear function $g_1(\boldsymbol{x}) \in \mathfrak{C}(R)$ can be realized by $\mathscr{N}_1$, where the set $R$ of regions is derived from $\mathscr{N}_1$. Let $p^{(1)}_j$'s for $1\le j\le \zeta$ be the paths of $\mathscr{N}_1$ generating $g_1(\boldsymbol{x})$, with $p^{(1)}_1$ being the initial path. Suppose that $p^{(1)}_j$ for each $j$ corresponds to a set
\begin{equation}
P_j=\{p^{(i)}_j: 1\le i\le m\}
\end{equation}
of paths of $\mathfrak{N}_m$, subject to: (1) the paths of $P_j$ are only different in the output-layer unit as $p_i$'s of theorem 27; (2) each element $p^{(i)}_1$ of $P_1$ is the initial path of $\mathscr{N}_i$; (3) the paths of each $P_{\nu}$ for $2\le \nu\le m$ satisfy the conditions of $p_i'$'s of theorem 27 for adjacent-path relationships and linear-function production. Then any output-layer unit $u_i$ of $\mathfrak{N}_m$ for $i\ne 1$ is also capable of realizing an arbitrary $g_i(\boldsymbol{x}) \in \mathfrak{C}(R)$, independently of other units of the output layer.
\end{cl}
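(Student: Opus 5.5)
The plan is to build $g_i(\boldsymbol{x})$ for each output unit $u_i$ by retracing, path by path, the construction that realizes $g_1(\boldsymbol{x})$ through $\mathscr{N}_1$. The hidden-layer parameters, and therefore the partition $R$ and all knots, are shared by every $\mathscr{N}_i$, because the paths in each $P_j$ differ only in the output-layer unit (condition (1)). So $g_i$ needs the same regions as $g_1$, and only two kinds of freedom are left to produce the different linear pieces. One is the output-layer input weights $\boldsymbol{w}_i$ of $u_i$, which act on the initial path. The other is the shared output-weight vectors $\boldsymbol{v}$ of the units added along the adjacent-path chain, which act through equation 5.5 of theorem 27.

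First, the initial path. By condition (2), $p^{(i)}_1$ is the initial path of $\mathscr{N}_i$. The hidden layers of $p^{(1)}_1$ already carry the region $r_1$ to the last hidden layer in the form required by theorem 12. Since $\boldsymbol{w}_i$ and the bias of $u_i$ belong only to $u_i$, theorem 12 (applied with the output layer replaced by $u_i$) gives an arbitrary initial linear function of $g_i$ on $r_1$. This choice does not disturb any other $g_k$.

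Second, the adjacent paths. I would follow the order used in theorem 17. Every region of $R$ is connected to $r_1$ through a chain $r_1\frown r_{\nu_1}\frown\cdots$, and each step adds one hidden unit $\mathcal{U}$. By theorem 27, that step changes the $i$th output by $\lambda_i\sigma(\boldsymbol{w}^T\boldsymbol{x}+b)$ with $\lambda_i=\boldsymbol{\alpha}_i^T\boldsymbol{v}$, where the knot $\boldsymbol{w}^T\boldsymbol{x}+b=0$ is common to all $i$. Fix the targets $g_1,\dots,g_m\in\mathfrak{C}(R)$. By theorem 8 each target jump across the knot is of the form $\lambda_i\sigma(\boldsymbol{w}^T\boldsymbol{x}+b)$, so the required vector $\boldsymbol{\lambda}$ is determined. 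Condition (3) gives rank $A=m$ and adjacency, so equation 5.4 has a solution $\boldsymbol{v}$. This one $\boldsymbol{v}$ produces the required jump for every output simultaneously, including the jump $\lambda_1$ for $g_1$. Hence the construction for $g_1$ is not destroyed, and every $g_i$ is realized independently of the others. If $\mathcal{U}$ lies in the last hidden layer, proposition 7 applies instead. There $\boldsymbol{v}$ is a single weight, and independence comes from choosing the $i$th entry of $\boldsymbol{w}_i$ freely, since that weight belongs only to $u_i$.

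Third, the regions that are not reached by one-sided construction. For these I would invoke theorem 21 and corollary 12 (continuity restriction) separately for each output. Continuity in $\mathfrak{C}(R)$ guarantees that the piece of $g_i$ on such a region is forced once its neighbours are fixed.

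The main obstacle is the second step, specifically making sure that solving equation 5.4 does not change the paths themselves. If $\boldsymbol{v}$ is too large, adjacency could break, and with it the partition $R$. I would handle this as in theorem 16: scale the targets so that $\|\boldsymbol{\lambda}\|$ is small, and then take the minimum-norm solution $\boldsymbol{v}=A^T(AA^T)^{-1}\boldsymbol{\lambda}$, which is small whenever $\boldsymbol{\lambda}$ is. Any leftover scaling can be absorbed into the output-layer weights of the $u_i$'s.
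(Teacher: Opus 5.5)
Your skeleton is the paper's own. The hidden layers, and hence $R$ and $K$, are shared. Theorem 17 reduces every linear piece to adjacent-path steps from an initial path, and at each step theorem 27 solves $A\boldsymbol{v}=\boldsymbol{\lambda}$ for all outputs at once. Your use of theorem 12 on $p^{(i)}_1$ and of theorem 21 for leftover regions only makes explicit what the paper leaves implicit.

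The gap is in the step you single out as the main obstacle. You treat the vectors $\boldsymbol{v}$ as free parameters while holding $R$ fixed. But for $\nu\ne\Phi$ the output-weight vector $\boldsymbol{v}$ of $\mathcal{U}$ is a row of $W_{\nu+1}$, i.e.\ a hidden-layer parameter. By lemma 1, corollary 9 and theorem 14 it does more than fix the jumps $\lambda_i=\boldsymbol{\alpha}_i^T\boldsymbol{v}$. It also bends every deeper knot meeting $\mathcal{K}$ by $\beta=\boldsymbol{\alpha}_u^T\boldsymbol{v}$. In the $g_1$ solution, such bends are how the piecewise linear manifolds forming $R$ were produced; for example, $\boldsymbol{v}_{12}$ yields both $C_7C_8$ and $D_1D_2$ in section 8.3. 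So ``this one $\boldsymbol{v}$ produces the required jump \dots\ hence the construction for $g_1$ is not destroyed'' holds for the jumps but not for $R$.

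Your min-norm/rescaling device makes this fail outright. Enlarging the output-layer weights does let you shrink $\boldsymbol{v}$. But $\boldsymbol{\alpha}_u$ contains no output-layer weights, so the bends $\boldsymbol{\alpha}_u^T\boldsymbol{v}$ are driven toward $0$ and the manifolds straighten. You then realize the $g_i$ on a different partition, not on $R$. The step is also unnecessary, since condition (3) already grants adjacency; the paper's proof simply takes preservation of $R$ as part of that condition.

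The repair is to keep $\boldsymbol{v}$ tied to its knot-producing role instead of shrinking it. At each step, solve the stacked system $A\boldsymbol{v}=\boldsymbol{\lambda}$ together with $\boldsymbol{\alpha}_u^T\boldsymbol{v}=\beta_u$ for every deeper unit $\mathscr{U}$ whose knot the $g_1$ solution shaped through $\boldsymbol{v}$, with $\beta_u$ copied from that solution. This is the combination of theorems 27 and 14 written as equation 9.4. It requires the stacked matrix to have full row rank, so $m_{\nu+1}$ must be at least the number of rows. Alternatively, read condition (3) as including the knot-preservation clause of assumption 1 and drop the rescaling entirely.

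Minor slips:
\begin{itemize}
\item The last-hidden-layer case is proposition 4, not proposition 7. There the free parameter is the entry of $\boldsymbol{w}_i$ attached to $\mathcal{U}$, not its $i$th entry.
\item The continuity-restriction corollary is corollary 13, not 12.
\end{itemize}
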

\begin{proof}
By theorem 17, each linear function of $g_i(\boldsymbol{x})$ is produced by adjacent paths on the basis of an initial one of $\mathscr{N}_i$. Note that subnetwork $\mathscr{N}_1$ includes the hidden layers of $\mathfrak{N}_m$, so after $\mathscr{N}_1$ having been set for $g_1(\boldsymbol{x}) \in \mathfrak{C}(R)$, the set $R$ and the associated set $K$ of the knots forming $R$ are shared by all the other subnetworks $\mathscr{N}_i$'s for $i\ne 1$. Then by theorem 27, the conclusion follows.
\end{proof}

\subsection{Principles of Multiple-Knot Control}
Write
\begin{equation}
\mathcal{N} = n\prod_{i=1}^{\nu-1}m_{i}1,
\end{equation}
a subnetwork of network $\mathfrak{N}$ up to the $\nu-1$th layer together with a unit $u_1$ of the $\nu+1$th layer. Let $R=\{r_1, r_2, \dots, r_{\zeta}\}$ be the set of the regions formed by the hidden layers of $\mathcal{N}$ and the corresponding set of the knots is denoted by $K$. Then $\mathcal{N}$ generates a function
\begin{equation}
g'(\boldsymbol{x}) = \sigma(g(\boldsymbol{x})),
\end{equation}
where $g(\boldsymbol{x}) \in \mathfrak{C}_n(R)$ (equation 3.1 of definition 8) is the output of $u_1$ when its activation function is a linear one and $\sigma(x)$ is the activation function of a ReLU.

\begin{prp}[An example of multiple-knot control]
Under the above notations, suppose that $K$ forms a strict partial order of equation 3.33 and that condition (1) of theorem 12 for an initial path is modified to $m_{\nu-1} \ge n$. If the conditions of theorem 23 are satisfied, then in the sense of implementing an arbitrary $g(\boldsymbol{x})$ of equation 5.8, any $g'(\boldsymbol{x})$ can be realized by $\mathcal{N}$.
\end{prp}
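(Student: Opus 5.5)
The plan is to reduce the statement to theorem 23 applied to the network obtained from $\mathcal{N}$ by treating $u_1$ as a linear output unit, and then to observe that composing with $\sigma$ costs nothing. Regard $u_1$ as having a linear activation, so that $\mathcal{N}$ becomes a network of the form $n\prod_{i=1}^{\nu-1}m_i1'$ whose last hidden layer is the $(\nu-1)$th one. Its output is $g(\boldsymbol{x})$, and the actual output of $\mathcal{N}$ is $\sigma(g(\boldsymbol{x}))$. Since $\sigma$ is a fixed map, it suffices to show that $g$ can be any element of $\mathfrak{S}_n(R,K)\subset\mathfrak{C}(R)$ over the given strict partial order. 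The phrase ``in the sense of implementing an arbitrary $g(\boldsymbol{x})$'' in the statement is read exactly this way: every $g'$ of the form 5.8 is then obtained.

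The key steps run in the following order.

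First, construct the initial path $p_0$ on the initial region $r_0$ of $\mathscr{O}(K)$. Theorem 12 requires the last hidden layer to have at least $n+1$ units. The statement weakens this to $m_{\nu-1}\ge n$. I would justify the weakening using the remark after theorem 12 together with the construction of corollary 11. With $n$ units in the last hidden layer, whose outputs form an affine image $\boldsymbol{x}'=A\boldsymbol{x}+\boldsymbol{b}$ with $A$ nonsingular (theorem 11), the output unit $u_1$ still produces $\boldsymbol{w}^T\boldsymbol{x}'+b$ through its own bias. Its input weights and bias then give $n+1$ free parameters, and this already realizes an arbitrary linear function $s_0$ on $r_0$. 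In other words, the missing $(n+1)$th unit is replaced by the bias of $u_1$. This is precisely why condition (1) can be relaxed.

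Second, apply theorem 13 inductively along $k_1\prec k_2\prec\dots\prec k_\zeta$, exactly as in the proof of theorem 23. Hypothesis (b) of theorem 23 (adjacent paths and $\boldsymbol{\alpha}_\nu\neq\boldsymbol{0}$) lets each $s_\nu=s_{\nu-1}+\lambda_\nu\sigma(\boldsymbol{w}_\nu^T\boldsymbol{x}+b_\nu)$ be realized by adjusting the output-weight vector of the newly activated unit. If a new unit lies in the last hidden layer, proposition 7 applies instead. This yields an arbitrary $g\in\mathfrak{S}_n(R,K)$ at the pre-activation of $u_1$.

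Third, restore the ReLU of $u_1$. Its pre-activation equals $g$ on every region, so its output is $\sigma(g)=g'$. The zero set of $g$ may introduce new knots of $u_1$ by theorem 3(1), but these only affect the output $g'$, not the regions $R$ formed by the hidden layers. Hence nothing in the earlier construction needs to be revisited.

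The main obstacle is the first step. Theorem 12's proof relies on the linear-output matrix of the last hidden layer having rank $n+1$, so I must check carefully that using the bias of $u_1$ preserves the compensation argument for the extra components $W^{(\nu-1)}\boldsymbol{x}+\boldsymbol{b}^{(\nu-1)}$ of equation 3.5. I would try to make those extra units contribute linearly on $r_0$ and absorb them through the weights of $u_1$ on the $n$ units carrying $\boldsymbol{x}'$, using the nonsingularity of $A$, in the same way as equations 3.10 and 3.11.
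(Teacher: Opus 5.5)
Your proposal is correct and follows the paper's proof. Regarding $u_1$ as linear, the only change from $\mathfrak{N}$ is the extra bias of $u_1$, which takes the place of the $(n+1)$th last-hidden-layer unit and so gives $m_{\nu-1}\ge n$; theorem 23 then yields an arbitrary $g$, hence $g'=\sigma(g)$. One citation fix: the last-hidden-layer case you attribute to proposition 7 (the statement being proved) is proposition 4.
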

\begin{proof}
When only considering $g(\boldsymbol{x})$ regardless of the $\sigma$ operator of a ReLU, to produce a piecewise linear function, the only difference between $\mathcal{N}$ and $\mathfrak{N}$ is that the unit of the output layer of the former adds a bias parameter, for which
condition (1) of theorem 12 is modified to $m_{\nu-1} \ge n$. Then the conclusion follows by theorem 23.
\end{proof}

\begin{thm}[Principles of knot production-\Romannum{5}]
Notations from equations 5.7 and 5.8, each linear function $s_i(\boldsymbol{x})$ for $1\le i \le \zeta$ of $g(\boldsymbol{x})$ on $r_i$ corresponds to an $n-1$-dimensional hyperplane $l_i$ with equation $s_i(\boldsymbol{x}) = 0$. When $r_i \in l_i^+$ or $r_i \in l_i^0$, $l_i$ would not divide $r_i$; otherwise, $l_i$ subdivides $r_i$ and a knot $k_i$ is formed. When $s_i(\boldsymbol{x})$ can be designed to be a desired linear function through realizing a certain $g(\boldsymbol{x})$, we say that $k_i$ is controllable. Then if $g(\boldsymbol{x})$ can simultaneously implement $\theta$ desired linear functions, a single unit $u_1$ of the output layer of $\mathcal{N}$ can generates at most $\theta$ controllable knots on different regions.
\end{thm}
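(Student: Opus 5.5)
The plan is to identify the knots produced by the single unit $u_1$ with the zero sets of the linear pieces of $g(\boldsymbol{x})$, and then to count how many of these zero sets can be designed. The argument has two parts: an upper bound, which is essentially a counting statement, and attainability, which uses the construction results.

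First I will describe the knots of $u_1$. On each region $r_i$ of $R$ the path of $\mathcal{N}$ up to the $\nu-1$th layer is fixed, so the input of $u_1$ is the affine function $s_i(\boldsymbol{x})$. By proposition 1, applied to $u_1$ with $\mathcal{L}=l_i$, there are exactly three cases: $r_i\subset l_i^+$, $r_i\subset l_i^0$, or $l_i$ divides $r_i$. By theorem 3, only the third case produces a knot. In that case the knot is $k_i=l_i\cap r_i$. The hyperplanes generated by $u_1$ on different regions are in general distinct (corollary 1), so every knot of $u_1$ lives in a single region and is determined by that region's linear piece. Consequently a knot $k_i$ can be designed only if its piece $s_i(\boldsymbol{x})$ can be chosen freely. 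Controllability of $k_i$ is therefore, by definition, equivalent to freedom in choosing $s_i$, and I will make this identification explicit.

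Next comes the count. Suppose at most $\theta$ of the pieces $s_i$ of $g$ can be prescribed simultaneously; all other pieces are forced by continuity (theorem 10, theorem 21, corollary 12) or by the shared parameters. Each controllable knot requires its own freely designed piece, and two controllable knots in different regions require two distinct pieces. This gives an injection from controllable knots into designable pieces, so there are at most $\theta$ controllable knots. For attainability I will invoke proposition 7 together with theorem 23. When $K$ forms a strict partial order and the conditions of theorem 23 hold, each ordered region's $s_i$ can be set arbitrarily. I will then choose $s_i$ so that its zero hyperplane cuts $r_i$ in the desired position, which realizes up to $\theta$ knots.

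The main obstacle is that $s_i$ is not really free: it is coupled to its neighbours through $s_\nu=s_{\nu-1}+\lambda_\nu\sigma(\cdot)$. I will therefore need to argue that prescribing $s_i$ only up to the continuity constraint still leaves enough freedom to place $l_i$ arbitrarily inside $r_i$. The one-parameter family $s_{i-1}+\lambda\sigma(\boldsymbol{w}_i^T\boldsymbol{x}+b_i)$ may not achieve every hyperplane, so I would first check whether varying $\lambda$ together with the earlier pieces suffices. If it does not, I would fall back on reading ``desired'' as meaning within the realizable family. Under that reading the bound of $\theta$ is unaffected.
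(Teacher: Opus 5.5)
Your argument is the same as the paper's, whose proof says only that ``the conclusion is obvious'' and points to the $\mathcal{N}$ versus $\mathfrak{N}$ distinction in proposition 7. Your counting step spells that out: each knot of $u_1$ on $r_i$ is the zero set of the single piece $s_i$ inside $r_i$, so distinct controllable knots need distinct freely designable pieces, which gives at most $\theta$. The attainability part and its worry about coupling are unnecessary, because the statement only asserts the upper bound, and that coupling is already built into what $\theta$ counts.
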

\begin{proof}
The conclusion is obvious. The difference between knot production via $\mathcal{N}$ and function realization via $\mathfrak{N}$ was discussed in the proof of proposition 7 and at the beginning of this theorem.
\end{proof}

\begin{rmk-3}
This theorem unifies knot production and function construction to a common framework, including not only the preceding knot-production principles but also other mechanisms, especially the continuity-restriction principle of theorem 21. The efficiency of parameter sharing is thus further exploited.
\end{rmk-3}

\begin{rmk-3}
Proposition 7 is an example of neural networks capable of multiple-knot control. To this theorem, if arbitrary piecewise linear function $g(\boldsymbol{x})$ can be constructed, multiple-knot control via $\mathcal{N}$ is also possible.
\end{rmk-3}

\begin{thm}[Principles of knot production-\Romannum{6}]
Let $N_{\nu} = n\prod_{i=1}^{\nu-1}m_{i}m_{\nu}$ be a subnetwork of $\mathfrak{N}$ up to the $\nu$th layer, which can also be obtained by adding $m_{\nu}-1$ units in the $\nu$th layer of $\mathcal{N}$ of equation 5.7. Denote by $u_i$ for $1\le i \le m_{\nu}$ the $i$th unit of the output layer of $N_{\nu}$, outputting a piecewise linear function $g'_i(\boldsymbol{x}) = \sigma(g_i(\boldsymbol{x}))$ analogous to equation 5.8. Then under corollary 14 and theorem 28, if arbitrary $g_1(\boldsymbol{x})$ via $u_1$ can be realized for generating multiple controllable knots, each unit $u_i$ for $i \ne 1$ also has this capability, independently of the other output-layer units.
\end{thm}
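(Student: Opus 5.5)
The plan is to reduce Theorem 29 to Corollary 14 by viewing $N_{\nu}$ as a multiple-output network whose "output layer" is the $\nu$th layer, and then to apply Theorem 28 to each output unit separately. Before applying the ReLU, each unit $u_i$ of the $\nu$th layer is affine in the output $\boldsymbol{x}^{(\nu-1)}$ of the $(\nu-1)$th layer, with its own input-weight vector and bias. The subnetwork obtained by keeping the hidden layers of $N_{\nu}$ and only $u_i$ is exactly a copy of $\mathcal{N}$ of equation 5.7; call it $\mathcal{N}^{(i)}$. It produces $g_i'(\boldsymbol{x})=\sigma(g_i(\boldsymbol{x}))$, with $g_i\in\mathfrak{C}(R)$. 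The set $R$ of regions and the knot set $K$ come only from layers $1,\dots,\nu-1$, so they are shared by all $u_i$. This is the same situation as the shared hidden layers in the proof of Corollary 14.

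First, I will set the hidden layers of $N_{\nu}$ so that $u_1$ can realize an arbitrary $g_1(\boldsymbol{x})$, which the hypothesis provides. This fixes $R$, $K$ and the paths $p_j^{(1)}$, together with an initial path. Second, for each $j$ I will form the family $P_j=\{p_j^{(i)}:1\le i\le m_{\nu}\}$ of paths that differ only in the $\nu$th-layer unit $u_i$. These play the role of the output-layer-distinct paths of Theorem 27. The only change from Theorem 27 is that the output units now carry a bias, which is the modification already handled in Proposition 7 (the initial-path condition becomes $m_{\nu-1}\ge n$).

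Third, I will invoke Corollary 14 with $\mathscr{N}_1$ replaced by $\mathcal{N}^{(1)}$. This gives that each $u_i$ realizes an arbitrary $g_i\in\mathfrak{C}(R)$ independently of the others. Independence holds because the input-weight vectors and biases of distinct units $u_i$ are disjoint parameter sets, so the rows $\boldsymbol{\alpha}_i$ of $A$ in equation 5.3 can be taken with full rank. Consequently the system $A\boldsymbol{v}=\boldsymbol{\lambda}$ is solvable for every new unit added in a shallower layer. Fourth, I will apply Theorem 28 to each $g_i$. The hyperplanes $s_{ij}(\boldsymbol{x})=0$ on the regions $r_j$ are then controllable knots for $u_i$, because each $s_{ij}$ can be designed freely. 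So $u_i$ generates multiple controllable knots in the same sense as $u_1$.

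The main obstacle is the rank condition in the third step, which is what makes the units' capabilities simultaneous rather than merely individual. When the new unit $\mathcal{U}$ is placed in a layer shallower than $\nu-1$, its output-weight vector $\boldsymbol{v}$ is shared by all $u_i$, so $A$ must have rank $m_{\nu}$. That requires $\boldsymbol{v}$ to have length at least $m_{\nu}$. I would first argue that the $\boldsymbol{\alpha}_i=W_{\nu'+2}\cdots\boldsymbol{w}_i$ differ only through the freely chosen $\boldsymbol{w}_i$, so generic choices give full rank, citing Theorem 18 and Theorem 20 for nonvanishing. Where the length condition fails, I would fall back to adding units in the $(\nu-1)$th layer, where Proposition 8 applies to each $u_i$ separately.
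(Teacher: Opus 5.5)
Your proposal is essentially the paper's proof, which consists only of applying Corollary 14 to $N_{\nu}$ with the $\nu$th layer as the output layer (the extra bias handled as in Proposition 7), and then Theorem 28 to turn freely designable $g_i$'s into controllable knots. Note that the rank condition on $A$, which you single out as the main obstacle, is already a hypothesis (``under corollary 14'' imports condition (3), and with it Theorem 27's rank assumption), so your last paragraph is unnecessary---and its genericity claim does not hold in general, since the rank of $A$ is capped by the number of active units in every layer between $\mathcal{U}$ and layer $\nu-1$, and the $\boldsymbol{w}_i$ are not free but constrained by the initial linear functions they must realize.
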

\begin{proof}
The proof is by corollary 14 and theorem 28.
\end{proof}

\begin{rmk-4}
Compared to theorem 28, this theorem again exploits the parameter efficiency through the principle of multiple outputs of corollary 14.
\end{rmk-4}

\begin{rmk-4}
This theorem is correlated with proposition 2 for the number of the regions of network $\mathfrak{N}$. If a layer of $\mathfrak{N}$ is exclusively designed by this theorem to enable the units to yield multiple controllable knots, the number of regions of this layer can satisfy the condition of proposition 2, such that exponential grow with respect to the depth is possible (corollary 3).
\end{rmk-4}

\section{Univariate Function Approximation}
To the mechanism of network $\mathfrak{N}$, the difference between input dimensionality $n=1$ and $n \ge 2$ lies in two aspects. First, there exists only one strict partial order over $[0, 1]$ and any spline realized by $\mathfrak{N}$ is either by theorem 23 for one-sided solutions or by the results of section 4 for two-sided solutions. Second, the continuity-restriction principle of theorem 21 is not applicable to $n=1$. Thus, the solution space of one-dimensional input is much simpler, for which we know more about this case, especially the minimum number of units required. However, the remaining principles still ensure the solution complexity to some extent, through which the mechanism of $\mathfrak{N}$ regardless of input dimensionality can be highlighted.

\subsection{A Solution of Universal Approximation}
We first give a trivial solution of universal approximation for arbitrary input dimensionality, in the sense that it is equivalent to that of a two-layer ReLU network.
\begin{prp}[Trivial universal approximation]
Let $R$ be a set of regions divided by the units of a two-layer ReLU $\mathscr{N}$. To realize a piecewise linear function $s(\boldsymbol{x}) \in \mathfrak{C}(R)$ via network $\mathfrak{N}$, except for the region of an initial path, the units for generating $R$ can all be introduced in the last hidden layer and the associate solution for $s(\boldsymbol{x})$ is equivalent to that of $\mathscr{N}$. In this sense, $\mathfrak{N}$ is also a universal approximator.
\end{prp}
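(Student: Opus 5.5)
The plan is to reduce the construction to a two-layer ReLU network sitting on top of an ``identity-like'' transmission of the input through the first $\Phi-1$ hidden layers.

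First, fix the initial path. Take the region $r_0$ of $\mathscr{N}$ on which $s(\boldsymbol{x})$ equals its initial linear function $s_0(\boldsymbol{x})$. Apply corollary 11: place an $n$-unit block in the first layer whose hyperplanes all have $U$ (hence $r_0$) on their positive sides. Then use theorem 11 recursively to transmit $U$ unchanged through layers $2,\dots,\Phi-1$. As a result, the output of the $(\Phi-1)$th layer has the form of equation 3.5, namely $\boldsymbol{x}^{(\Phi-1)} = [\boldsymbol{x}'^T, \dots]^T$ with $\boldsymbol{x}' = A\boldsymbol{x}+\boldsymbol{b}$ and $A$ nonsingular, \emph{for every} $\boldsymbol{x}\in U$. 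The requirement is that none of these shallow units divides $U$, i.e.\ each is a global unit of $U$ in the sense of definition 4. This is possible because $U=[0,1]^n$ is bounded: shift the biases so that every hyperplane lies outside $U$.

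Second, realize the hyperplanes in the last hidden layer. Every hyperplane $\boldsymbol{w}_k^T\boldsymbol{x}+b_k=0$ of a hidden unit of $\mathscr{N}$ is realized by a unit of the $\Phi$th layer, using corollary 10 together with the parameter transformation of equation 3.16: $\boldsymbol{w}_k' = A^{-1T}\boldsymbol{w}_k$ and $b_k' = b_k - \boldsymbol{w}_k'^T\boldsymbol{b}$. With this choice the output of that unit is exactly $\sigma(\boldsymbol{w}_k^T\boldsymbol{x}+b_k)$ on all of $U$. The shallow layers create no knots, by theorem 3(2), since all their units are completely activated. Hence the partition of $U$ produced by $\mathfrak{N}$ is exactly $R$, and every path differs from the initial path only in units of the $\Phi$th layer. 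Likewise, $n+1$ further units of the $\Phi$th layer, global on $U$ and with rank-$(n+1)$ linear-output matrix, give $s_0(\boldsymbol{x})$ by theorem 12.

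Third, set the output weights. Since all knots come from the last hidden layer, proposition 8 applies to each of them: the single output weight $\lambda_k$ of a unit fixes its contribution. The output is therefore $s_0(\boldsymbol{x})+\sum_k\lambda_k\sigma(\boldsymbol{w}_k^T\boldsymbol{x}+b_k)$, which is literally the output function of $\mathscr{N}$ (plus a linear part). Choosing the $\lambda_k$ as in $\mathscr{N}$'s solution therefore yields $s(\boldsymbol{x})$. The influence coefficient vector here is just the scalar output weight, so no adjacent-path condition of theorem 13 is needed. Universal approximation then follows from that of two-layer ReLU networks \citep*{Huang2024}.

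The main obstacle is the first step: guaranteeing that the transmission layers are genuinely affine on the whole of $U$ rather than only on a subregion. I would handle this by taking every shallow-layer unit global on $U$ through bias shifts, and I would check that the nonsingularity of $A$ survives each application of theorem 11. Width assumptions of at least $n$ units per layer (at least $n+1$ plus the number of hyperplanes of $\mathscr{N}$ in the last layer) are tacitly needed, and I would state them explicitly.
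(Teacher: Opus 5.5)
Your proposal is correct and follows essentially the same route as the paper: an initial path whose region contains all of $U$ (corollary 11 and theorem 12), all hyperplanes of $\mathscr{N}$ realized by units of the last hidden layer (corollary 10), output weights set by the two-layer principle, and universal approximation inherited from \citet*{Huang2024}; the bias shifts that make every shallow unit global on $U$ and the explicit reparametrization of equation 3.16 simply spell out details the paper leaves implicit. One citation slip: the last-hidden-layer principle in your third step is proposition 4, not proposition 8, which is the statement you are proving.
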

\begin{proof}
Design an initial path of $\mathfrak{N}$ whose region $r_0$ includes $U$ by corollary 11 and construct a linear function of $s(\boldsymbol{x})$ over $r_0$ by theorem 12. After that, introduce the units in the last hidden layer of $\mathfrak{N}$ to obtain $R$ by corollary 10. The remaining linear functions of $s(\boldsymbol{x})$ are implemented by the principles of two-layer ReLU networks (see proposition 4 of section 3.3). Since $\mathscr{N}$ is a universal approximator \citep*{Huang2024}, the second conclusion follows.
\end{proof}

To one-dimensional input, given a knot $x$ generated by network $\mathfrak{N}$ with equation $wx+b=0$, if $w>0$, we call it a \textsl{positive knot}; otherwise, it is a \textsl{negative knot}. A region of one-dimensional input is called an \textsl{interval} in section 6.
\begin{prp}[An example of knot control]
Suppose that the depth of network $\mathfrak{N}$ is $\Phi=2$. Let $p_1$ be a path of $\mathfrak{N}$. Suppose that $r_1=[x_1, 1]$ is the interval of $p_1$ and knot $x_1 \in [0, 1)$ is generated by unit $u_{21}$ of the second layer. Add a unit $u_{11}$ in the first layer of $p_1$ to form a new path $p_2$ whose interval is $r_2=[x_2, x_3]$, with $x_3 \le 1$ and knot $x_2$ produced by $u_{11}$. The introduction of $u_{11}$ can change $x_1$ into $x_1'$. Suppose that $p_2$ is adjacent to $p_1$. Let $s_1(x)$ and $s_2(x)$ be the linear functions output by $p_1$ and $p_2$, respectively, satisfying $s_2(x)=s_1(x)+\lambda_1\sigma(x-x_2)$. Then if $|\lambda_{1}|$ is sufficiently small, we have $x_1' \in (-\infty, x_2)$.
\end{prp}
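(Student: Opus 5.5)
The plan is to follow the input of the single second-layer unit $u_{21}$ before and after $u_{11}$ is introduced. We then locate the zero of its modified input, which is the knot $x_1'$, relative to $x_2$.

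First I fix the sign conventions. Since $r_1=[x_1,1]$ is the interval of $p_1$ and $u_{21}\in p_1$ produces the knot $x_1$, the input $s_{21}(x)=w x+b_{21}$ of $u_{21}$ (with $w$ the input-space coefficient, via equation 3.5) is positive on $(x_1,1]$ and vanishes at $x_1$. Hence $w>0$, i.e. $x_1$ is a positive knot. Likewise the output of $u_{11}$ is $\sigma(x-x_2)$ up to a positive factor, since $s_2(x)=s_1(x)+\lambda_1\sigma(x-x_2)$ and $u_{11}$ is active on $r_2=[x_2,x_3]$.

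Next I apply theorem 5 with $\nu=1$ and $i=2$. The influence of $u_{11}$ on $u_{21}$ is $\Delta_{21}=\delta\,(x-x_2)$, where $\delta=\boldsymbol{\alpha}_{21}^T\boldsymbol{v}$ and $\boldsymbol{v}$ is the output-weight vector of $u_{11}$. The modified input is therefore
\begin{equation*}
s_{21}'(x)=s_{21}(x)+\delta\,\sigma(x-x_2).
\end{equation*}
It coincides with $s_{21}$ for $x\le x_2$ and has slope $w+\delta$ for $x>x_2$. By the proof of theorem 16 (equation 3.23), when $|\lambda_1|$ is sufficiently small we may take all entries of $\boldsymbol{v}$ small. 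Then $|\delta|$ is small, in particular $|\delta|<w$, so that $w+\delta>0$.

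Then I use adjacency. By theorem 8, the knot $x_2$ generated by $u_{11}$ is adjacent to $r_1$, so $x_2\in[x_1,1)$. Since $x_2$ and $x_1$ are knots from different units separating different pairs of intervals, I take $x_2>x_1$, which gives $s_{21}(x_2)>0$. The new knot $x_1'$ of $u_{21}$ in $p_2$ is the zero of $s_{21}'$ on the branch $x>x_2$, i.e. the root of the line $s_{21}(x_2)+(w+\delta)(x-x_2)$. This line is positive at $x_2$ and has positive slope, so its root satisfies
\begin{equation*}
x_1'=x_2-\frac{s_{21}(x_2)}{w+\delta}<x_2,
\end{equation*}
that is, $x_1'\in(-\infty,x_2)$. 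This also confirms that $u_{21}$ stays activated on all of $r_2$, consistent with $p_2$ being adjacent to $p_1$.

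The main obstacle is justifying the two sign facts cleanly. The first is that $x_2>x_1$ strictly, which needs the nondegeneracy implicit in theorem 8. The second is that smallness of $|\lambda_1|$ really forces $|\delta|<w$: $\lambda_1$ and $\delta$ are different linear forms in $\boldsymbol{v}$, and this is where I would invoke the small-$\boldsymbol{v}$ solution constructed in theorem 16 rather than a generic $\boldsymbol{v}$.
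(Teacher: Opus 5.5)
Your argument is correct and is essentially the paper's. Under $p_2$ the knot equation of $u_{21}$ is its original equation plus a small multiple of $(x-x_2)$; smallness keeps the leading coefficient positive, so $x_1'$ stays a positive knot to the left of $x_2$.

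The differences are minor. First, the paper locates $x_1'$ through adjacency: $r_2\subset x_1'^+$ leaves two options, ``positive knot left of $x_2$'' or ``negative knot right of $x_3$'', and positivity of the slope rules out the second. You instead compute the root explicitly from $s_{21}(x_2)>0$. Second, the paper writes the perturbation coefficient simply as $\lambda_1$, using the modified equation $w_1x+b_1+\lambda_1(w_2x+b_2)=0$. You correctly separate $\delta=\boldsymbol{\alpha}_{21}^T\boldsymbol{v}$ from $\lambda_1$ and justify its smallness through the small-$\boldsymbol{v}$ choice of theorem 16, which makes explicit an assumption the paper leaves implicit.
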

\begin{proof}
Because $u_{11}$ is in the first layer and $u_{21}$ in the second one, the activation of $u_{11}$ can influence the knot $x_1$ of $u_{21}$ and changes it into $x_1'$. The purpose of this proposition is to restrict $x_1'$ in $(-\infty, x_2)$ to avoid a knot greater than $x_2$. Under path $p_2$, to arbitrary $x \in [x_2, x_3]$, it activates both $u_{11}$ and $u_{21}$ and then
\begin{equation}
x \in x_1'^+ \cap x_2^+,
\end{equation}
for which there are two possibilities: (a) $x_1'$ is a positive knot with $x_1' \in (-\infty, x_2)$; (b) $x_1'$ is a negative knot with $x_1' \in (x_3, +\infty)$.

Let $w_1x+b_1=0$ and $w_2x+b_2=0$ be the equations of $x_1$ and $x_2$, respectively. Then $w_1, w_2>0$, since both $k_1$ and $k_2$ are positive knots. The equation of $x_1'$ is $w_1x+b_1 + \lambda_{1}(w_2x+b_2) = 0$ or
\begin{equation}
(w_1 + \lambda_{1}w_2)x + b_1 + \lambda_{1}b_2 = 0.
\end{equation}
If $|\lambda_{1}|$ is sufficiently small, $w_1 + \lambda_{1}w_2 > 0$ such that only case (a) is possible.
\end{proof}

\begin{rmk}
The knot $x_1'$ would be restored to the original $x_1$ when $x \in [x_1, x_2]$ and needs not to be processed in function construction over $[x_2, 1]$---the purpose of this knot control.
\end{rmk}

\begin{thm}[A solution of univariate universal approximation]
Any continuous function $f: [0, 1] \to \mathbb{R}$ can be approximated by $\mathfrak{N}$ as precisely as possible, whose depth could be an arbitrary integer $\Phi \ge 2$, through implementing a continuous linear spline $s(x) \in \mathfrak{S}_1(R, K)$ of equation 3.34, provided that the maximum length of the intervals derived from $K$ is sufficiently small. If to achieve an approximation error $\varepsilon$, $\zeta$ linear pieces of $s(x)$ are required, the number of the units of $\mathfrak{N}$ satisfies
\begin{equation}
\Theta \ge \Phi + \zeta.
\end{equation}
\end{thm}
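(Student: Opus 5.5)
The proof splits into two parts: an approximation part, which is standard, and a construction part, which realizes the spline by a network of arbitrary depth $\Phi \ge 2$ and counts its units.

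\textbf{Approximation step.} Since $f$ is continuous on $[0,1]$, it is uniformly continuous. Choose knots $0<x_1<x_2<\dots<x_{\zeta-1}<1$ so that every interval has length below the modulus-of-continuity threshold for $\varepsilon/2$. Let $s(x)$ be the continuous piecewise linear interpolant of $f$ at these knots. Then $|f-s|<\varepsilon$ on $[0,1]$, and $s$ has $\zeta$ linear pieces. Order the knots increasingly as $k_1\prec k_2\prec\dots\prec k_{\zeta-1}$, all taken as positive knots. This order is a strict partial order in the sense of definition 13, with initial region $r_0=[0,x_1]$. Hence $s\in\mathfrak{S}_1(R,K)$, and each piece satisfies $s_\nu=s_{\nu-1}+\lambda_\nu\sigma(x-x_\nu)$, so the remaining task is to realize this spline.

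\textbf{Construction for arbitrary depth.} I would build the network by corollary 11 and theorem 23, but economically, since for $n=1$ far fewer units are needed.
\begin{itemize}
\item \emph{Initial path.} Put one unit in each hidden layer forming a chain. The first-layer unit outputs $\sigma(x+1)$, which is positive on all of $[0,1]$. Each subsequent unit passes its input through with weight $1$, so it stays active on $[0,1]$. This is the region transfer of theorem 11 with $n=1$.
\item \emph{Initial function.} The output weight and the biases can realize the linear function $s_0$ on $r_0$, adjusting the last-layer bias if needed. These chain units contribute $\Phi$ units in total.
\item \emph{The pieces.} For $\nu=1,\dots,\zeta-1$, add a unit generating knot $x_\nu$. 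The simplest choice places all of them in the last hidden layer, with input weight chosen by corollary 10 so that the unit outputs $\sigma(x-x_\nu)$. Its single output weight then equals $\lambda_\nu$, by proposition 4.
\item \emph{Counting.} This produces the $\zeta$ pieces with $\Phi+\zeta-1$ units. I would add one extra unit where needed, for example a first-layer unit fixing the positivity of the chain or the last-hidden-layer unit for $s_0$, which gives the bound $\Theta\ge\Phi+\zeta$.
\end{itemize}

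\textbf{Main obstacle.} The genuine difficulty is the count: showing that $\Phi+\zeta$ units suffice, and presumably are needed. Every layer must contain at least one unit, and each knot requires its own unit. For knots whose unit sits in the last hidden layer, theorem 3(2) and proposition 4 guarantee that no other paths are disturbed, so adjacency is automatic. If instead one wants the knot units spread over shallower layers, I would invoke theorem 16 together with proposition 9. Choosing $|\lambda|$ small keeps each new path adjacent and stops deeper knots from migrating; larger slopes are recovered by rescaling the output weights of the deeper layers. I expect the lower-bound direction to reduce to this observation: a chain of $\Phi$ layers needs at least one unit per layer, and each of the $\zeta-1$ knots plus the initial piece requires a distinct activated unit.
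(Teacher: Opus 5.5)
The gap is in the initial path, and therefore in the unit count. In this paper's model the output unit is linear and has no bias. Definition 1 introduces only $\boldsymbol{w}_{\Phi+1}$ for it, and this is why theorem 12 requires $m_\Phi\ge n+1$. The proof of proposition 7 confirms this: adding an output bias is exactly what lowers the requirement to $n$.

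With your chain of one unit per hidden layer, the output on $r_0=[0,x_1]$ is $w\,\sigma(\ell(x))$ for a single affine $\ell$, because every knot unit $\sigma(x-x_\nu)$ vanishes there. This has constant sign on $r_0$, so no $s_0$ that changes sign on $r_0$ can be produced. For example, with $\Phi=2$ and $\zeta=2$, your $\Phi+\zeta-1=3$ hidden units cannot realize $s(x)=x-\tfrac14+\sigma(x-\tfrac12)$. So your claim that $\Phi+\zeta-1$ units already produce all $\zeta$ pieces is false.

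Consequently the extra unit is not an optional addition ``where needed''. It is a second unit in the last hidden layer of $p_0$, active on $r_0$, which gives the linear-output matrix there rank $n+1=2$ (theorem 12, corollary 11). Your alternative of a first-layer unit ``fixing the positivity of the chain'' repairs nothing, since $\sigma(x+1)$ is already positive on $[0,1]$. With this correction, $p_0$ has $(\Phi-1)+2=\Phi+1$ units and the knots add $\zeta-1$, giving $\Phi+\zeta$ exactly as in the paper.

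Note also that inequality 6.3 is the unit count of the constructed solution (see the remark after theorem 31), so no minimality argument is needed here. Minimality over all solutions is theorem 31, which rests on lemma 5. Your heuristic that the initial piece needs a unit distinct from the chain is also not the right mechanism; what it needs is rank $2$ in the last hidden layer.

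Apart from this, your route is legitimate but is the degenerate case of the paper's. Putting every knot unit in the last hidden layer is the trivial solution of proposition 8, which the paper's algorithm reaches only as its worst case. The paper instead introduces knot units cyclically from the deepest layer toward the first. At each step it checks whether the influence coefficient vector vanishes and whether a deeper knot migrates into the current interval (proposition 9), moving the unit one layer deeper when a check fails. That buys a genuinely deep solution. For existence and counting, your placement suffices and, by proposition 4, needs no adjacency argument. Your aside about recovering large $\lambda$ by rescaling deeper output weights is unjustified, since such rescaling also alters $s_0$ and every other $\lambda_\nu$; fortunately you do not rely on it.
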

\begin{proof}
This theorem is a solution of theorem 23. We first construct a $s(x)$ satisfying $\sup{|s(x)-f(x)|} \le \varepsilon$ and then realize it through $\mathfrak{N}$. First see an example of depth $\Phi =2$ with two hidden layers. Under theorem 23, an initial path $p_0$ is selected to realize the initial linear function $s_0(x)$ of $s(x)$ on $[0, 1]$ via corollary 11, with the first and second layers of $p_0$ having one and two units, respectively.

Then add a unit (denoted by $u_{23}$) in the second layer of $p_0$ to produce knot $x_1$ (corollary 10) and the new path is denoted by $p_1$; the parameter $\lambda_1$ for $s_1(x)$ on $[x_1, 1]$ is set by proposition 4. Next, introduce $u_{12}$ in the first layer of $p_1$ for knot $x_2$ and the modified path is $p_2$, and then the following steps are required: (1) check whether the influence coefficient vector $\boldsymbol{\alpha}_2=\boldsymbol{0}$; (2) if $\boldsymbol{\alpha}_2\ne \boldsymbol{0}$, set the output parameters of $u_{12}$ to produce the parameter $\lambda_2$ for $s_2(x)$ on $[x_2,x_3]$ by theorem 13; (3) check if there's a unit of the second layer generating a knot in $[x_2, x_3]$. If $\boldsymbol{\alpha}_2=\boldsymbol{0}$ or the answer to step (3) is yes, transfer the unit $u_{12}$ for $x_2$ to the second layer of $p_2$ instead. Proposition 9 is also a solution of step (3).

Knot $x_3$ is processed as $x_1$, and $x_4$ as $x_2$. In general, if $i$ is an odd number, knot $x_i$ is initially introduced in the second layer and otherwise in the first layer. The first case is processed by the method of $x_1$, while the second case by that of $x_2$.

The case of depth $\Phi = 3$ is similar. Knot $x_i$ is initially introduced in the first, second and third layer for $i=3m, 3m+2,3m+1$, respectively, where $m$ is a nonnegative integer. The processes of $x_{3m+1}$ and $x_{3m+2}$ are the same as those of $x_{2m+1}$ and $x_{2m}$ when $\Phi=2$, respectively. To $x_{3m}$, three steps analogous to those for $x_{3m+2}$ are also required, with the differences that in step (3) the units of both the second and third layers should be checked, and that if the answer to step (1) or (3) is yes, first transfer the unit generating $x_{3m}$ to the second layer and then use the method of processing $x_{3m+2}$.

By the examples of $\Phi =2, 3$, the general method for arbitrary depth $\Phi$ can be obtained, including three steps: (1) to each knot $x_{i}$, where $i=\Phi m+\nu$ with $\nu = 1, 2, \dots, \Phi-1, \Phi$, initially introduce a unit $u_i$ in the $\Phi-\nu+1$th layer of the previous path and set its parameters to produce both $x_i$ and the associated linear function; if $\nu \ne \Phi$: (2) check whether the associated influence coefficient vector is a zero vector; (3) check if there exists a unit in a layer deeper than $d_i$ generating a knot in $(x_i,x_{i+1})$, where $d_i$ is the depth of the layer of $u_i$. If the answer to (2) or (3) is yes, transfer $u_i$ to the $d_i+1$th layer, update $d_i$ and the parameters of $u_i$, and then go to steps (2) and (3) again until the answers to both steps (2) and (3) are not. (4) Set the parameters of the unit to generate $x_i$ and $s_i(x)$.

Note that the above algorithm always reaches a solution and the ``worst case'' is the trivial solution of proposition 8---that is, all the knots are produced in the last hidden layer of $\mathfrak{N}$.

To the number of units required, in the initial path $p_0$, by theorem 12, the last hidden layer requires at least two units, while each of the remaining hidden layers needs at least one, with the total number being $\Phi-1+2 = \Phi+1$. Each knot needs one unit to form it and the total number of the units is $\zeta-1$. Thus, the minimum number of the units required is $\Phi+1+\zeta-1=\Phi+\zeta$ and this proves inequality 6.3.
\end{proof}

\subsection{Number of Units for General Case}

\begin{lem}
The continuity-restriction principle of theorem 21 is not applicable to network $\mathfrak{N}$ with input dimensionality $n=1$.
\end{lem}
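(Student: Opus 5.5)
The plan is to show that the hypothesis of theorem 21 can never be met when $n=1$. In that theorem the region $\mathcal{R}$ has two knots $k_1$ and $k_2$ lying on $n-1$-dimensional hyperplanes $l_1$ and $l_2$ with $l_1\cap l_2\ne\emptyset$. The theorem also explicitly assumes $n\ge 2$; the point of the lemma is that this assumption is not merely technical.

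First I will set up the one-dimensional picture. For $n=1$, a region of $U=[0,1]$ is a closed interval $\mathcal{R}=[a,c]$, and a knot is a $0$-dimensional hyperplane, namely a single point $x$ solving $wx+b=0$ with $w\ne 0$. So the hyperplane $l_i$ a knot lies on coincides with the knot itself. The knots derived from $\mathcal{R}$ are its boundary points, so $\mathcal{R}$ has at most two of them, $k_1=\{a\}$ and $k_2=\{c\}$. Since $\mathcal{R}$ is a region in the sense of definition 2, with $\dim\mathcal{R}=1$, we have $a<c$.

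Next I will check the intersection condition. Because $l_1=\{a\}$ and $l_2=\{c\}$ with $a\ne c$, we get $l_1\cap l_2=\emptyset$. The only way to have $l_1\cap l_2\ne\emptyset$ is $k_1=k_2$, but then there is only one knot and one adjacent region. The function on $\mathcal{R}$ is then not determined: the relation $s(x)=s_1(x)+\lambda\sigma(\pm(wx+b))$ of theorem 22 leaves the slope jump $\lambda$ free, so continuity at one point fixes only the value there, not the slope.

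Finally I will show the mechanism itself, not just its stated form, fails. Suppose the linear functions $s_1$ and $s_2$ on the two neighbours $[\cdot,a]$ and $[c,\cdot]$ are given. Continuity (theorem 10) forces $s(a)=s_1(a)$ and $s(c)=s_2(c)$. These values do fix a line through the two points, but both values are arbitrary, and the slope of $s$ is just the difference quotient of the prescribed endpoint values. The function on $\mathcal{R}$ therefore has to be built separately, using its own knot unit via theorem 13, rather than coming "for free".

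I expect this last step to be the main obstacle. It must distinguish "determined by endpoint data" from "implemented without extra units" in the sense of theorem 21. In $n\ge 2$ the continuity restriction removes a degree of freedom automatically, because two nonparallel knots meeting at a common point fix the gradient. In $n=1$ the knots never intersect, so no constraint beyond interpolation arises, and each linear piece still consumes its own $\lambda_i$. I will argue this by comparison with the unit count in theorem 30, where every piece costs one unit.
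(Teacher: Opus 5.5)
\textbf{There is a genuine gap in your third step, which you yourself flagged as the main obstacle.}

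Your first two steps only show that the literal hypothesis $l_1\cap l_2\ne\emptyset$ with $k_1\ne k_2$ fails when knots are points. Theorem 21 already assumes $n\ge 2$, so this says nothing about the natural one-dimensional analogue. That analogue is what the lemma, and its later use in Theorem 31, is about: for $I_i=[k_i,k_{i+1}]$, if the linear functions on both neighbouring intervals have been realized, the one on $I_i$ is obtained for free.

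Your third step is meant to refute that analogue, but it argues the opposite. You concede that continuity at the two distinct points $a$ and $c$ fixes a unique line on $[a,c]$, which is exactly the analogue's conclusion. You then assert, without justification, that the function on $\mathcal{R}$ ``has to be built separately.'' That the endpoint values are arbitrary is irrelevant, since in $n\ge 2$ the neighbouring functions are arbitrary too. For the same reason your contrast with $n\ge2$ is misplaced: in every dimension the function on $\mathcal{R}$ is determined once its neighbours are fixed. Determination is not where the dimensions differ.

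\textbf{The missing idea.} This is the whole of the paper's argument: the \emph{hypothesis} of the analogue can never be met, because of how the pieces are constructed.
\begin{enumerate}
\item By Theorem 17, linear functions are produced one at a time along adjacent paths, via Theorem 13.
\item The paths $p_{i-1}$ and $p_{i+1}$ of $[k_{i-1},k_i]$ and $[k_{i+1},k_{i+2}]$ differ in the two units generating $k_i$ and $k_{i+1}$. So they are not adjacent.
\item Hence any construction from one of them to the other must use $p_i$ as a bridge. The function on $I_i$ is therefore always built before one of its neighbours, never deduced from both.
\end{enumerate}
This is what separates $n=1$ from $n=2$. In a planar grid, the two neighbours of a cell across a vertical and a horizontal knot can both be reached from a common third region without passing through the cell.

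\textbf{Your fallback also fails.} Comparing with the unit count of Theorem 30 does not close the gap. That theorem counts units for one particular construction, and this cannot show that no other solution exploits continuity restriction. In fact the paper uses the present lemma precisely to upgrade Theorem 30's special-solution count to the general minimum in Theorem 31.
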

\begin{proof}
In this case, a region of $U$ is a subinterval of $[0,1]$ and a knot is a point. The corresponding continuity-restriction principle is that: to an interval $I_i=[k_i, k_{i+1}] \subset [0, 1]$, if the linear functions of $s(x) \in \mathfrak{S}_1(R, K)$ on the two endpoints $k_i$ and $k_{i+1}$ are realized by $\mathfrak{N}$ for the adjacent intervals of $I_i$, the linear function on $I_i$ is automatically implemented.

We prove that the condition of the above conclusion is impossible to be fulfilled. By theorem 17, the linear functions of $s(x)$ are realized one by one via adjacent paths on the basis of an initial path. To the univariate case, this means that the linear functions are formed one by one through adjacent intervals that share a common endpoint.

To arbitrary interval $[k_i, k_{i+1}]$ with respect to path $p_i$, it is impossible for $\mathfrak{N}$ to first implement linear functions on $[k_{i-1},k_i]$ and $[k_{i+1},k_{i+2}]$, because the corresponding paths $p_{i-1}$ and $p_{i+1}$ are different in two units for $k_i$ and $k_{i+1}$ and not adjacent, such that the linear functions from $p_{i-1}$ to $p_{i+1}$ must use $p_i$ as a bridge in the form of the order $p_{i-1}, p_i, p_{i+1}$ or $p_{i+1}, p_i, p_{i-1}$. This completes the proof.

\end{proof}

\begin{thm}[Minimum number of units required]
To be capable of realizing any continuous linear spline $s(x) \in \mathfrak{S}_1(R, K)$ of equation 3.34 via network $\mathfrak{N}$ with one-dimensional input, the minimum number of the units required is
\begin{equation}
\min \Theta = \Phi + \zeta,
\end{equation}
where $\Phi$ is the depth of $\mathfrak{N}$ and $\zeta$ is the number of the linear functions of $s(x)$.
\end{thm}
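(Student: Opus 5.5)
The plan splits the claim $\min \Theta = \Phi + \zeta$ into two inequalities, $\Theta \ge \Phi+\zeta$ as a lower bound and $\Theta = \Phi+\zeta$ as attainable. Attainability comes from the constructive algorithm in the proof of theorem 30. The lower bound will be argued from the structure of paths together with lemma 5, which excludes the continuity-restriction principle when $n=1$. Throughout, the required ``capability'' is understood as realizing an \emph{arbitrary} spline in $\mathfrak{S}_1(R,K)$, so the counts are of units whose parameters are freely settable.

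\textbf{Upper bound.} I follow the construction of theorem 30 verbatim. The initial path $p_0$ is built by corollary 11 with one unit in each of the first $\Phi-1$ hidden layers and two units in the last, giving $\Phi+1$ units, so that theorem 12 yields an arbitrary initial linear function $s_0(x)$. Each of the remaining $\zeta-1$ knots is then introduced by exactly one new unit, placed by the layer-transfer procedure of theorem 30. In the worst case it is placed in the last hidden layer as in proposition 8, where proposition 4 guarantees that its single output weight can produce the required $\lambda_\nu$. The total is $\Phi+1+\zeta-1=\Phi+\zeta$ units, so equality is attained.

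\textbf{Lower bound, in two counting steps.} First, the initial path needs $\Phi+1$ units. Every hidden layer of $p_0$ must contain at least one activated unit, otherwise the output would be identically zero on $r_0$ and no arbitrary $s_0$ could be produced. Moreover, an arbitrary linear function $ax+c$ on $r_0$ has two free parameters. I plan to argue, in the spirit of the linear-output-matrix rank condition of theorem 12 with $n+1=2$, that a single last-hidden-layer unit outputting $\sigma(wx+b)$ combined with the output weight cannot realize every pair $(a,c)$, for instance a constant nonzero function. The case I will check explicitly is $b=0$ in the output unit as in definition 1, which is why the last hidden layer needs two units. Second, each additional linear piece needs a distinct new unit. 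By theorem 9, adjacent intervals correspond to adjacent paths differing in exactly one unit that generates the separating knot. For $n=1$ the knots are points in $[0,1]$, and lemma 5 shows that no linear piece can be obtained for free by continuity restriction, since every piece must be produced through an adjacent-path step. So the $\zeta-1$ knots require $\zeta-1$ knot-producing units, and these are not units of $p_0$.

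\textbf{Main obstacle.} The hardest step is showing that a single unit cannot generate two of the required knots. Corollary 4 and theorem 28 allow one deep unit to produce several knots, via multiple hyperplanes on different regions. I will handle this with the universality requirement. Suppose one unit $\mathscr{U}$ produced knots $x_i$ and $x_j$. By lemma 1 and theorem 1, the position of the second knot is then determined by shallower parameters, namely $\lambda$ in equation 2.9, which are already committed to fixing other linear pieces. By counting free parameters against the $2\zeta$ degrees of freedom of an arbitrary spline (knot positions plus slopes), I expect the knot positions and the $\lambda_\nu$ to impose more independent constraints than the parameters available with fewer than $\Phi+\zeta$ units. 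I anticipate this count being the delicate part, and I will cross-check it against the small case $\Phi=2,\zeta=2$.
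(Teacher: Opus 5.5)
Your outline coincides with the paper's proof. That proof consists of your two counting steps: at least $\Phi+1$ units for the initial path, lemma 5 to exclude continuity restriction, and then the bare assertion, referred back to theorem 30, that each of the remaining $\zeta-1$ linear pieces needs its own unit. Attainability is the construction of theorem 30. You are right that the distinct-unit assertion is the crux, but your plan for it cannot succeed.

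A dimension count is the wrong tool. A network $1\cdot m_1\cdot m_2\cdot 1'$ already has $m_1m_2$ second-layer weights, so $\Theta$ hidden units carry on the order of $\Theta^2$ parameters. By contrast, $\mathfrak{S}_1(R,K)$ has dimension $\zeta+1$, since the knots are fixed, not $2\zeta$. Counting can therefore give at best something like $\Theta\gtrsim\sqrt{\zeta}$. Your own test case $\Phi=2$, $\zeta=2$ shows this: a three-unit network has $8$ parameters against a $3$-dimensional target. It fails for sign reasons (for instance $v\sigma(\cdot)$ cannot change sign), not for lack of parameters. Likewise, the idea that a deep unit's second knot is pinned by parameters ``already committed'' contradicts theorems 14 and 15. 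Their point is that one output-weight vector $\boldsymbol{v}$ can fix $\lambda$ and a knot bend $\beta$ at the same time.

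In fact the distinct-unit step is false, so no argument along these lines can close the gap. Here is a counterexample.

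Take $\Phi=2$ and knots $x_1<\dots<x_9$ in $(0,1)$, so $\zeta=10$ and the claimed minimum is $12$. Use the $7$ first-layer units $\sigma(x+1)$ and $\sigma(x-t)$, $t\in T=\{x_1,x_2,x_4,x_5,x_7,x_8\}$. Second-layer pre-activations then range over all continuous piecewise linear $G$ on $[0,1]$ with breakpoints in $T$.

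Given a target $f$ with kinks $\gamma_k$, pick $G_1$ whose zeros in $[0,1]$ are exactly the $x_k$, $k\in\{3,6,9\}$, with $\gamma_k>0$, crossing with $|G_1'(x_k)|=\gamma_k$. This is possible because consecutive such zeros are separated by two points of $T$, which decouples the slopes at successive zeros. Pick $G_2$ likewise for the $\gamma_k<0$.

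Since $\sigma(G)$ has kink $+|G'(z)|$ at each transversal zero $z$, the function $R=f-\sigma(G_1)+\sigma(G_2)$ has breakpoints only in $T$. So $G_3=R+M$ and $G_4=M$ are admissible pre-activations for large $M$. Output weights $(1,-1,1,-1)$ then give exactly $f$ with $7+4=11$ hidden units. Repeating the pattern needs only about $\tfrac{2}{3}\zeta+O(1)$ units.

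A single unit can therefore carry several of the required knots with independent kinks. This is just the phenomenon of corollary 1 and theorem 28. Only the upper-bound half of your plan ($\Phi+\zeta$ units suffice) is correct.
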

\begin{proof}
By lemma 5, to the univariate case, it is impossible for the linear functions of $s(x)$ to use continuity-restriction principle of theorem 21 to reduce the number of the units. As in the proof of theorem 30, besides the units for the linear function of an initial path whose number is at least $\Phi+1$, each of the remaining linear functions needs a distinct unit. Thus, $\Phi + \zeta$ is the minimum number.
\end{proof}

\begin{rmk}
Notice the difference between this theorem and inequality 6.3 of theorem 30. The latter is derived from a special solution, while the former is a general conclusion for all the solutions.
\end{rmk}

\section{Multivariate Function Approximation}
The solution space of network $\mathfrak{N}$ with input dimensionality $n \ge 2$ for function approximation is complicated due to the combination of multiple strict partial orders, continuity restriction and two-sided solutions. We will give typical results to include as much as possible the most general solutions and construct a special solution of universal approximation. Section 7.1 combines multiple strict partial orders with continuity restriction to realize a desired piecewise linear function. Section 7.2 investigates the two-sided solutions under multiple strict partial orders. Section 7.3 discusses universal approximation.

\subsection{Multiple Strict Partial Orders}
By \citet*{Huang2024}'s definitions from 15 to 17, an \textsl{initial region} of a strict partial order $\mathscr{O}$ of equation 3.33 is the one $r_0 \subset l_1^0$ with $\dim{r_0 \cap l_1}=n-1$, whose corresponding linear function $s_0(\boldsymbol{x})$ is called the \textsl{initial linear function} of $s(\boldsymbol{x}) \in \mathfrak{S}_n(R, K)$. On the bias of $\mathscr{O}$, another strict partial order $\mathscr{O}'$ can be formed with $r_0$ or one of the ordered regions of $\mathscr{O}$ being the initial region, and the third one $\mathscr{O}''$ whose initial region is from $r_0$ and all the ordered regions of $\mathscr{O}$ and $\mathscr{O}'$; this process of order production can be recursively done. All the formed strict partial orders comprise an ``\textsl{order tree}'', denote by $\mathcal{T}$, with $r_0$ being the \textsl{root region} of $\mathcal{T}$; intuitive examples of an order tree can be found in Figures \ref{Fig.3}a, \ref{Fig.4}b, \ref{Fig.5}b and \citet*{Huang2024}'s Figure 7.

\begin{thm}[Function construction over multiple strict partial orders]
Suppose that the set $K$ of the knots derived from network $\mathfrak{N}$ with input dimensionality $n \ge 2$ forms an order tree $\mathcal{T}$. Then $\mathcal{T}$ is composed of multiple strict partial orders $\mathscr{O}_i$'s for $i=1, 2, \dots, \psi$. Let $R$ be the set of the regions of $U$ partitioned by $\mathfrak{N}$. Under the adjacent-path assumption (assumption 1 in section 4) for each $\mathscr{O}_i$, any continuous piecewise linear function $s(\boldsymbol{x}) \in \mathfrak{C}(R)$ can be realized by $\mathfrak{N}$, provided that:
\begin{itemize}
\item[\rm{(1)}] The linear function on root region $r_0$ of $\mathcal{T}$ is implemented by an initial path;
\item[\rm{(2)}] The conditions of theorem 23 for function construction over each single strict partial order $\mathscr{O}_i$ are satisfied;
\item[\rm{(3)}] Conditions \Romannum{2} and \Romannum{3} of \citet*{Huang2024}'s theorem 7 for $\mathscr{O}_i$'s are fulfilled;
\item[\rm{(4)}] The linear functions on the regions that are not included in $\mathcal{T}$ can be realized by the continuity-restriction principle of theorem 21.
\end{itemize}
\end{thm}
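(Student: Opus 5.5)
The proof will be a constructive induction over the order tree $\mathcal{T}$. The regions of $R$ fall into three classes: the root region $r_0$, the ordered regions of some $\mathscr{O}_i$, and the regions outside $\mathcal{T}$. The linear functions of $s(\boldsymbol{x})$ are realized in exactly this sequence.

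The first step is the root region. By condition (1), corollary 11 and theorem 12 give an initial path $p_0$ whose region is $r_0$ and whose output is $s_0(\boldsymbol{x})$, the restriction of $s(\boldsymbol{x})$ to $r_0$.

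Next, the strict partial orders are arranged in the order in which they were generated when $\mathcal{T}$ was built: $\mathscr{O}_1$ first, then $\mathscr{O}_2$, and so on. The initial region of $\mathscr{O}_i$ is either $r_0$ or an ordered region of some $\mathscr{O}_j$ with $j<i$. I will therefore argue by induction on $i$, with the following hypothesis: every ordered region of $\mathscr{O}_1,\dots,\mathscr{O}_{i-1}$ already carries its prescribed linear function of $s(\boldsymbol{x})$. Under this hypothesis, the initial region of $\mathscr{O}_i$ already has the correct linear function, and the path producing it serves as the ``initial path'' for $\mathscr{O}_i$.

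Theorem 23 then applies to $\mathscr{O}_i$, since condition (2) together with the adjacent-path assumption supplies its hypotheses. Along the order, each $s_\nu(\boldsymbol{x})$ is obtained from $s_{\nu-1}(\boldsymbol{x})$ by theorem 13. This works because $p_\nu$ is adjacent to $p_{\nu-1}$, so $s_\nu-s_{\nu-1}=\lambda_\nu\sigma(\boldsymbol{w}_\nu^T\boldsymbol{x}+b_\nu)$ by theorem 8, and because $\boldsymbol{\alpha}_\nu\neq\boldsymbol{0}$, so $\lambda_\nu$ can be set through $\boldsymbol{v}_\nu$. Continuity of $s$ guarantees that the required difference across the knot $k_\nu$ really has this ReLU form.

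The main obstacle is that processing $\mathscr{O}_i$ must not destroy what earlier orders have already built. This can fail in two ways. First, a unit introduced for $\mathscr{O}_i$ might also activate regions of an earlier order. Second, resetting output weights in shallow layers changes the influence coefficient vectors and the knots of deeper units, as in theorem 5 and corollary 9.

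The assumption that handles the first problem is condition (3), conditions II and III of Huang2024's theorem 7. As in the two-layer case, these ensure that the knots of $\mathscr{O}_i$ are inactive on the ordered regions of earlier orders, or that they coincide with knots already present there. Combining this with theorem 4 and corollary 12 (influence of a knot or manifold along chains of adjacent regions), I will argue that a new unit affects only regions lying in its positive part. Those regions are exactly the ordered regions of $\mathscr{O}_i$ and of the orders descending from it, and these have not yet been processed.

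The second problem, the preservation of knots, is precisely the content of the adjacent-path assumption, which keeps the knots of the ordered regions other than $k_\nu$ invariant. If that does not suffice, I would fall back on solving the combined linear systems of theorem 15 for each $\boldsymbol{v}_\nu$.

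Finally, every region not contained in $\mathcal{T}$ is handled by condition (4). Such a region is bounded by non-parallel knots whose adjacent regions have already been realized, so theorem 21 (in the form of corollary 12) together with the continuity property of theorem 10 forces the correct linear function on it. No further parameters are needed for this step.

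Since $R$ is connected through adjacent regions (definition 12), theorem 17 ensures that this procedure reaches every region. Hence $s(\boldsymbol{x})\in\mathfrak{C}(R)$ is realized by $\mathfrak{N}$.
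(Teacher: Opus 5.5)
Your proposal is correct and follows the paper's route. The paper's proof assigns condition (1) to the root region, condition (2) (via theorem 23) to each $\mathscr{O}_i$, condition (3) to excluding or preserving the influences between the orders, and condition (4) (theorem 21) to the regions outside $\mathcal{T}$; your induction over the generation order of $\mathcal{T}$ spells that out more explicitly. One small point: the closing appeal to theorem 17 is unnecessary, and it does not match your procedure, which realizes the off-tree regions by continuity restriction rather than by theorem 13; every region is already covered by $\mathcal{T}$ together with condition (4).
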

\begin{proof}
Condition (1) is the basis of forming the remaining linear functions. Condition (2) ensures the implementation of a spline over each $\mathscr{O}_i$. Condition (3) is necessary for realizing a piecewise linear function over the whole $\mathcal{T}$, through excluding or preserving the influences between $\mathscr{O}_i$'s (see the proof of \citet*{Huang2024}'s theorem 7). Condition (4) yields the linear functions that cannot be directly produced by $\mathcal{T}$.
\end{proof}

\begin{rmk-2}
The adjacent-path assumption of section 4 is used here for two reasons. One is to provide the necessary conditions for each strict partial order and the other is to preserve the knot of regions when setting the parameters for linear functions. The method of resolving this assumption has been discussed in the remark of the assumption in section 4. A key point is the simultaneous knot control when constructing linear functions and the solutions were given in section 2.3; in section 8.3 concrete examples will be provided.
\end{rmk-2}

\begin{rmk-2}
This theorem is a typical result and when there's more than one order tree, the underlying principles can still be applied. The central problem is how to combine multiple strict partial orders with the continuity-restriction principle and how to dealt with the influences among the strict partial orders.
\end{rmk-2}

\subsection{Two-sided Solutions}
Under multiple strict partial orders, if at least one of their units is changed into its negative form as in section 4, the associated solution for function construction is called a \textsl{two-sided solution}. The mechanism is the combination of the results of section 4 for a single strict partial order with those of section 2.3 for knot production; the former ensures the construction of linear functions and the latter generates the desired knots.

For instance, by condition (3) of theorem 32, the strict partial orders $\mathscr{O}_i$'s of $\mathcal{T}$ can be arranged in an order
\begin{equation}
\mathscr{O}_{i_1}, \mathscr{O}_{i_2}, \dots, \mathscr{O}_{i_{\psi}}
\end{equation}
with $1\le i_{\nu} \le \psi$ for $1\le \nu \le \psi$. Let $k_{i_{\nu}j_{\nu}}$'s for $j_{\nu}=1, 2, \dots, \zeta_{i_\nu}$ be the knots of $\mathscr{O}_{i_{\nu}}$, satisfying
\begin{equation}
k_{i_{\nu}1} \prec k_{i_{\nu}2} \prec \dots \prec k_{i_{\nu}\zeta_{i_{\nu}}}.
\end{equation}
We arrange all the knots of $\mathcal{T}$ into
\begin{equation}
\begin{aligned}
\{k_{i_{\nu}j_{\nu}}: 1&\le i_{\nu}\le \psi, 1\le j_{\nu} \le \zeta_{i_{\nu}}, 1\le \nu \le \psi\} = \\
\{k_{i_{1}1},k_{i_{1}2},\dots,&k_{i_{1}\zeta_{i_1}}, k_{i_{2}1},k_{i_{2}2},\dots,k_{i_{2}\zeta_{i_2}}, \dots, k_{i_{\psi}1},k_{i_{\psi}2},\dots,k_{i_{\psi}\zeta_{i_{\psi}}}\},
\end{aligned}
\end{equation}
in which the knots of the same $\mathscr{O}_i$ are in the order of equation 7.2, while those with different $\mathscr{O}_i$'s are in accordance with the order of equation 7.1.

By condition \Romannum{2} of \citet*{Huang2024}'s theorem 7, on the basis of equation 7.3, if any knot, say, $k_{i_{\nu}j_{\nu}}$ for $\nu \ge 2$, is modified to its negative form $-k_{i_{\nu}j_{\nu}}$, the influence of $-k_{i_{\nu}j_{\nu}}^+$ on $\mathscr{O}_{i_{\mu}}$'s for $\mu < \nu$ is the type of a global unit; and if further the influence on $\mathscr{O}_{i_{\mu}}$'s for $\mu > \nu$ is zero, we can regard equation 7.3 as a single strict partial order and use the results of section 4 to obtain a two-sided solution.

Even though the above conditions are not satisfied, the general principle mentioned at the beginning of this section can still be applied. The key point is to clarify the orders of the knots as well as the depths of the layers forming the knots, through which the influences of negative knots can be known and the corresponding operation can be selected.

Notice that all the results of section 4 are under adjacent-path assumption, in which the ordered regions are assumed to be preserved during the construction of two-sided solutions. However, some knot of a region may be modified during the construction. The solution is by the principles of knot production of section 2.3---that is, resetting the associated parameters to reproduce the original knot. We will give some examples in later section 8.3 to further explain this mechanism.

\subsection{A Solution of Universal Approximation}

\begin{figure}[!t]    
\captionsetup{justification=centering}
\centering
\subfloat[A standard partition.]{\includegraphics[width=2.3in, trim = {3.7cm 1.9cm 2.8cm 1.0cm}, clip]{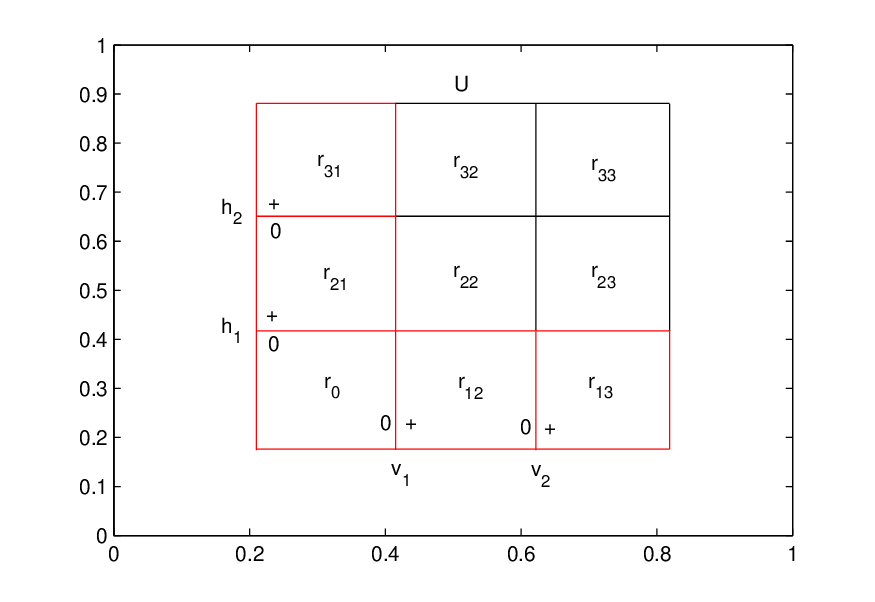}} \quad
\subfloat[Modified version of (a).]{\includegraphics[width=2.3in, trim = {3.7cm 1.9cm 2.8cm 1.0cm}, clip]{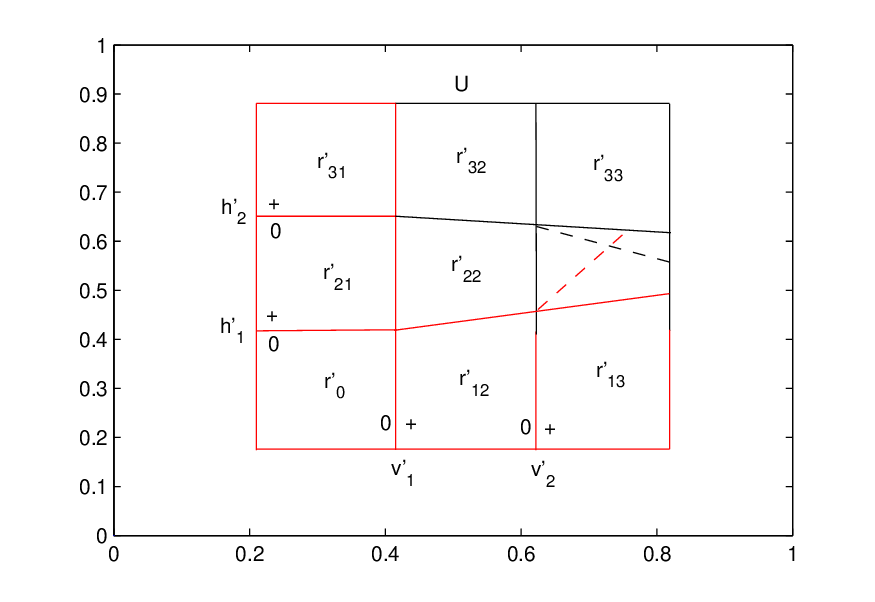}}
\caption{Modified standard partition via deep networks.}
\label{Fig.3}
\end{figure}

In two-dimensional case, when $U$ is divided by vertical and horizontal lines and this forms a \textsl{standard partition} of $U$, whose rigorous definition can be found in \citet*{Huang2024}'s definition 21.
\begin{thm}[A solution of multivariate universal approximation]
Any continuous function $f: [0, 1]^2 \to \mathbb{R}$ can be approximated by network $\mathfrak{N}$ with arbitrary precision, through realizing a continuous piecewise linear function $s(\boldsymbol{x})$ approximating $f(\boldsymbol{x})$. The depth $\Phi$ of $\mathfrak{N}$ could be an arbitrary positive integer $\Phi \ge 2$. To ensure an approximation error $\varepsilon$, if $\zeta$ linear pieces of $s(\boldsymbol{x})$ are required, the number of the units needed satisfies
\begin{equation}
\Theta \ge 2\zeta^{1/2} + 2\Phi - 1.
\end{equation}
\end{thm}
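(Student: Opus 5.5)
The plan parallels theorem 30: first approximate $f$ by a continuous piecewise linear function on a standard partition, then realize that function by a concrete construction in $\mathfrak{N}$ of arbitrary depth $\Phi \ge 2$, and finally count units. Since $f$ is uniformly continuous on $[0,1]^2$, for any $\varepsilon$ there is a standard partition of $U$ by $q-1$ vertical and $q-1$ horizontal lines, with cells of small diameter. On it we take a continuous piecewise linear $s(\boldsymbol{x})$ interpolating $f$ (as in \citet*{Huang2024}'s construction for standard partitions), with $\sup|s-f|\le\varepsilon$. Let $\zeta$ be the number of linear pieces. The cells of the $q\times q$ grid give $\zeta \le q^2$, so $q \ge \zeta^{1/2}$. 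A refinement such as the modified partition of Figure 3b may be needed to make $s$ genuinely piecewise linear; it does not change the count of the generating lines.

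For the realization, I will follow proposition 8 and theorem 32, using an order tree whose root region $r_0$ is the corner cell. The initial path $p_0$ is built by corollary 11 and theorem 12 with $n=2$. The last hidden layer needs $n+1=3$ units and each of the other $\Phi-1$ layers needs $n=2$ units, giving $2(\Phi-1)+3 = 2\Phi+1$ units. The vertical lines $x_1=t_i$ then form one strict partial order $\mathscr{O}_1$ from the root, realized by theorem 23. As in theorem 30, each new unit is placed in some layer, and if the influence coefficient vector vanishes or a knot conflict arises, it is moved to a deeper layer; the worst case is the trivial solution in the last hidden layer. The horizontal lines $x_2=t_j'$ form further strict partial orders hanging off $r_0$. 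All remaining cells, not on the two ordered chains, get their linear functions automatically from the continuity-restriction principle of theorem 21 and corollary 12, because each such cell is adjacent to two cells across a vertical and a horizontal knot, and those hyperplanes intersect. This is exactly what makes the unit count grow like $\zeta^{1/2}$ rather than $\zeta$.

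For the count, each of the $2(q-1)$ grid lines needs at least one unit, provided a unit's knot extends across the whole square. Adding the initial path gives $2(q-1)+2\Phi+1 = 2q+2\Phi-1 \ge 2\zeta^{1/2}+2\Phi-1$, which is the claimed bound (7.4).

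The main obstacle is the middle step. In a deep network, a unit placed in a layer before the last can have its knot bent by shallower units (theorem 1, corollary 5), so a grid line might not remain a straight full line. I would handle this first by putting all grid-line units in the last hidden layer, except the at most $2\Phi+1$ initial-path units. By corollary 10, that makes each line a single hyperplane across $U$, and by proposition 4 the linear functions follow the two-layer principle. The conditions of \citet*{Huang2024}'s theorem 7 for combining the orders then hold exactly as in the two-layer standard-partition solution. Only if depth must actually be exploited would I invoke theorem 16 (small $|\lambda|$) to keep the paths adjacent and the knots straight.
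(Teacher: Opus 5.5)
Your outline is the paper's own: a standard partition with the corner cell as root $r_0$, an order tree made of the vertical and horizontal strict partial orders, an initial path of $2(\Phi-1)+3$ units from corollary 11 and theorem 12, and continuity restriction for the cells off the two chains. You also keep proposition 8 as the always-available fallback and arrive at the same count $2(q-1)+2(\Phi-1)+3=2q+2\Phi-1$. The paper's main line differs in one respect. It realizes a slightly modified partition $\mathscr{P}$ isomorphic to $\mathscr{P}_0$: horizontal lines go in the deepest layer, and vertical-line units are added in shallower layers, each kept only if isomorphism, inequality 7.5 and a nonzero influence coefficient vector all hold. You go directly to the last-layer fallback instead.

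There is a genuine gap in the approximation step, and your own realization step exposes it. Suppose every grid knot is a full straight line. Continuity across a vertical edge $x_1=t_i$ forces the two neighbouring cells to share the same $x_2$-coefficient, and continuity across a horizontal edge forces them to share the same $x_1$-coefficient. Hence every continuous $s$ that is affine on each cell has the form $s(\boldsymbol{x})=g_1(x_1)+g_2(x_2)$. Such an $s$ cannot interpolate $f$ at the grid vertices in general. For $f(\boldsymbol{x})=x_1x_2$, the combination $f(1,1)-f(1,0)-f(0,1)+f(0,0)=1$ is zero for every such $s$, so $\sup|f-s|\ge 1/4$ however fine the grid.

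Your remark that a refinement as in Figure 3b "may be needed" shows you sensed this. But the realization you then adopt puts all grid units in the last hidden layer, giving straight lines by corollary 10 and two-layer functions by proposition 4. That produces exactly the straight grid. The continuity restriction you invoke then forces the additive function on every off-chain cell, not the interpolant you chose. So the first and last steps of your plan are incompatible.

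Appealing to the paper does not close this. Its argument takes $s_0$ on $\mathscr{P}_0$ from Huang (2024)'s lemma 6 and likewise accepts proposition 8 as an admissible worst case, so it rests on the same claim. The missing idea is a proof that the partition actually realized supports a continuous piecewise linear approximant of $f$. For instance, you would need to show that the bending of knots by shallower units (theorem 1, corollary 4) supplies the needed non-additive freedom without raising the unit count. Without that, neither the straight-grid construction nor the count in inequality 7.4 yields universal approximation.
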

\begin{proof}
The first step is to use vertical and horizontal lines to form a standard partition $\mathscr{P}_0$ of $U$ such as Figure \ref{Fig.3}a. A piecewise linear function $s_0(\boldsymbol{x})$ approximating $f(\boldsymbol{x})$ can be constructed over $\mathscr{P}_0$ by \citet*{Huang2024}'s lemma 6; if the area of each region is sufficiently small, any desired approximation error can be assured. Denote by $r_{ij}$ for $i = 1, 2, \dots, m_1$ and $j= 1,2, \dots, m_2$ each region of $\mathscr{P}_0$ and write $r_0:=r_{11}$, as an example of Figure \ref{Fig.3}a.

By \citet*{Huang2024}'s definition 22, a partition $\mathscr{P}'$ is isomorphic to $\mathscr{P}_0$, if each region $r_{ij}'$ of $\mathscr{P}'$ exactly corresponds to $r_{ij}$ of $\mathscr{P}_0$, including the one-to-one maps of their knots as well as their adjacent relationships with other regions. If a partition $\mathscr{P}$ is obtained by network $\mathfrak{N}$, the only difference between $\mathscr{P}$ and $\mathscr{P}'$ is that the former can use a piecewise linear curve to divide $U$, and the concept of isomorphic partitions is applicable to the relationship between $\mathscr{P}$ and $\mathscr{P}_0$.

Suppose that partition $\mathscr{P}$ via $\mathfrak{N}$ is isomorphic to $\mathscr{P}_0$. It we say that $\mathscr{P}$ is a \textsl{slight-modification} of $\mathscr{P}_0$, it means that the area of each $r'_{ij}$ of $\mathscr{P}$ approximates its counterpart $r_{ij}$ of $\mathscr{P}_0$ with a desired accuracy, denoted by
\begin{equation}
|r'_{ij}-r_{ij}| < \epsilon
\end{equation}
for all $i$ and $j$, where $\epsilon$ is a sufficiently small positive real number; an example of $\mathscr{P}$ is shown in Figure \ref{Fig.3}b (regardless of the dashed lines). By the method of \citet*{Huang2024}'s lemma 6, a piecewise linear function $s(\boldsymbol{x})$ over $\mathscr{P}$ can be constructed to approximate $f(\boldsymbol{x})$ with arbitrary precision, despite the slight difference between $\mathscr{P}$ and $\mathscr{P}_0$.

The second step is to use network $\mathfrak{N}$ to realize partition $\mathscr{P}$ and function $s(\boldsymbol{x})$. Note that in this process, $\mathscr{P}$ may vary if necessary, and the function $s(\boldsymbol{x})$ should be accordingly updated; the constraint that $\mathscr{P}$ is a slight modification of $\mathscr{P}_0$ must be always satisfied during the process to ensure the approximation error.

We use a strategy similar to theorem 30 for univariate functions: introduce units from deep layers to shallow layers. First see the case of depth $\Phi=2$. In the example of Figure \ref{Fig.3}a, the partition $\mathscr{P}_0$ is via a two-layer ReLU network; construct an order tree $\mathcal{T}$ with root region $r_0$ that is composed of two strict partial orders $\mathscr{O}_1=v_{1} \prec v_{2}$ and $\mathscr{O}_2=h_{1} \prec h_{2}$, where $v_i$ and $h_i$ for $i=1,2$ are vertical and horizontal lines, respectively. To deep network $\mathfrak{N}$, we first construct an initial path $p_0$ by corollary 11, whose region includes $U$ as a subset and whose output function is the linear function of $s(\boldsymbol{x})$ on $r_0$. Next, generate a partition $\mathscr{P}$ isomorphic to $\mathscr{P}_0$ via $\mathfrak{N}$. As shown in Figure \ref{Fig.3}b, the order tree $\mathcal{T}$ of $\mathscr{P}_0$ should have a counterpart $\mathcal{T}'$ of $\mathscr{P}$ comprising $\mathscr{O}'_1=v'_{1} \prec v'_{2}$ and $\mathscr{O}'_2=h'_{1} \prec h'_{2}$, where $v'_{i}$ (or $h'_{i}$) is the piecewise linear curve corresponding to $v_i$ (or $h_i$). The construction method is as follows.

Let $x_1 = i/M$ for $1\le i \le M-1$ and $M=3$ be the equation of $v_i$ and $x_2 = i/M$ be the equation of $h_i$. We first realize $h_i$'s and the associated linear functions in the second layer of $\mathfrak{N}$ via corollary 10 and proposition 4. Then add units $u_{1i}$'s in the first layer one by one for $v_i$'s, respectively. For example, to $v_1$, use corollary 10 and theorem 13 to set the parameters of $u_{11}$, and then each horizontal line $h_i$ formed in the second layer would change its direction to become $h_i'$ (corollary 4).

We then investigate the changes of the regions due to the alteration from $h_i$ to $h_i'$ for all $i$. Because the unit $u_{12}$ to be introduced for $v_2$ doesn't affect the direction of $h'_i$'s in $v_2'^{0}$, we assume that $v_2'$ already exists such that the regions $r'_{\nu2}$'s for $\nu=1,2,3$ have been formed for investigation. Check the following three conditions: (1) whether the partition $\mathscr{P}$ up to now is isomorphic to the corresponding part of $\mathscr{P}_0$; (2) whether inequality 7.5 is fulfilled for all the modified regions; (3) whether the associated influence coefficient vector is not a zero vector. If any of the three conditions is not satisfied, transfer $u_{11}$ to the second layer to avoid line-direction changes. Note that condition (1) implicitly includes the check of the adjacent-path condition, since if the associated paths are not adjacent, the isomorphic relationship would be destroyed.

Suppose that the introduction of $u_{11}$ is accepted by the above three conditions. Adding $u_{12}$ in the first layer for $v'_{2}$ could further change the direction of $h'_{i}$'s, and the dotted lines in Figure \ref{Fig.3}b mean that $h'_{1}$ intersect $h'_{2}$ due to this line-direction change. In this case condition (1) is not satisfied, so we should transfer $u_{12}$ to the second layer for $v'_{2}$. Even though condition (1) is fulfilled, if $|r'_{\nu3}-r_{\nu3}| \ge \epsilon$ for some $\nu$ or the associated influence coefficient vector is a zero vector, the previous operation is also required.

It is possible that after transferring a unit to the second layer, condition (1), (2) or (3) is still violated due to the effect of the preceding operations in a new part of $U$, for which we should further move the associated unit of the first layer to the second one. In general, after any operation, the above three conditions should be checked and the next step is based on the checking result. The process of $v_2'$ ends until all the three conditions are satisfied.

The continuity-restriction principle of theorem 21 is needed, since the function construction above is only for the regions of order tree $\mathcal{T}'$. The proof is similar to that of \citet*{Huang2024}' lemma 9. The mechanism is that, for example, when the linear functions on the regions of Figure \ref{Fig.3}b composed of red sides are realized by network $\mathfrak{N}$, those on the remaining regions are automatically implemented.

As the univariate case of theorem 30, the above construction algorithm always reaches a solution, since the ``worst case'' is the solution of proposition 8 when all the knots are generated in the last hidden layer of $\mathfrak{N}$.

To the the number of units required, by \citet*{Huang2024}'s theorem 10, a two-layer ReLU network needs at least $2\zeta^{1/2} + 1$ units for a standard partition including the three ones for initial region $r_0$, while the case of deep network $\mathfrak{N}$ is a modification of that. The difference is that the initial path $p_0$ of $\mathfrak{N}$ needs at least $2(\Phi-1)+3$ units and thus the minimum number of units is $2\zeta^{1/2}+1+2(\Phi-1)=2\zeta^{1/2}+2\Phi-1$.

The case of depth $\Phi=3$ can be dealt with similarly to $\Phi=2$ and the proof of theorem 30. For example, we can realize horizontal lines in the third layer of $\mathfrak{N}$ and introduce the units for vertical lines in shallower layers by the order in the proof of theorem 30; note that according to proposition 3, parallel vertical lines realized in different layers also result in vertical lines despite the influences from shallower layers and this could simplify the construction; after introducing or transferring a unit, conditions from (1) to (3) should be checked and transfer a unit into a deeper layer if necessary. The case of arbitrary depth $\Phi$ is similar.
\end{proof}

\begin{rmk-5}
This theorem can be easily generalized to arbitrary input dimensionality $n \ge 3$ on the basis of \citet*{Huang2024}'s theorem 10 and the counterpart of inequality 7.4 is
\begin{equation}
\Theta \ge n\zeta^{1/n} + 1 + n(\Phi-1).
\end{equation}
Although the solution is trivial in the sense that it is derived from that of two-layer ReLU networks, it may be useful to clarify whether a more efficient partition exists to reduce the lower bound of inequality 7.6.
\end{rmk-5}

\begin{rmk-5}
To a partition $\mathscr{P}$ of $\mathfrak{N}$, the basic structure of one standard partition or several ones combined can still exist as in two-layer ReLU networks, and the difference is that the former can use piecewise linear curve to form $\mathscr{P}$. To a training solution, it may not be necessary to restrict the line-direction change as this theorem, while modifying the direction is to enable a piecewise linear curve to fit the geometric feature of data.
\end{rmk-5}

\section{Explanation of Training Solutions}
We have developed the theory by deduction in the preceding sections and the purpose is to explain the training solution of experiments obtained by the back-propagation algorithm. That's the usual way in theoretical physics but still not pervasive in the area of neural networks. Although it's not the first time of successfully applying this methodology (see \citet*{Huang2024}), due to the complexity of deep neural networks as well as their impressive applications, the success of this paper is a milestone. Section 8.1 summarizes the main principles to be used. Section 8.2 proposes an algorithm to draw the knots of a partition derived from a training solution. Sections 8.3 and 8.4 explain two concrete solutions.

\subsection{Preliminaries}
The case of two-dimensional input is to be investigated for its easy intuitive demonstration. The main principles employed are as follows: (1) the results of section 2.3 for knot control or production; (2) corollary 4 for piecewise linear curves generated by a unit through deep layers; (3) theorem 4 and corollary 7 for the influence of a knot or piecewise linear curve on regions; (4) theorem 10 for the continuous property of the function output by neural networks and theorem 21 for the continuity-restriction principle; (5) theorem 22 for determining output weights via local information; (6) theorem 23 for spline construction over a single strict partial order; (7) the results of section 4 for two-sided solutions; (8) theorem 32 for functions over multiple strict partial orders.

We will use the above principles to show how to manually set the parameters for a given training solution and what the meaning of the derived parameters is, through which the ``black box'' of deep feedforward ReLU networks is revealed.

Note that when applying those principles, a cited conclusion may not strictly fit the experimental solution, but this problem can usually be solved by simple generalization or modification of the original result. For instance, theorem 15 has a constraint on the number of the units required; however, it is to ensure the capability of generating arbitrary knot and not for a certain knot; so even if it is not satisfied, some knot can still be realized.

In order to intuitively observe a training solution, in the next section, we first develop an algorithm by theorem 8 to draw the knots of a solution, which is applicable to arbitrary input dimensionality.

\subsection{Algorithm of Drawing Knots}
\begin{prp}[Principle of drawing knots]
Let $p_1$ and $p_2$ be two adjacent paths of network $\mathfrak{N}$ with unit $\mathcal{U} \in p_2$ but $\mathcal{U} \notin p_1$, whose regions are $r_1$ and $r_2$, respectively. By theorem 8, the knot $\mathcal{K}=r_1 \cap r_2$ is generated by $\mathcal{U}$. Let $\boldsymbol{x}_1 \in r_1$ and $\boldsymbol{x}_2 \in r_2$ be two points. Denote by $d_i=\min_{\boldsymbol{x} \in \mathcal{K}}|\boldsymbol{x}_i-\boldsymbol{x}|$ for $i=1,2$, namely the minimum distance from $\boldsymbol{x}_i$ to $\mathcal{K}$. Then, if $d_1$ and $d_2$ are sufficiently small, the line segment $\mathcal{L}$ connecting $\boldsymbol{x}_1$ and $\boldsymbol{x}_2$ intersects $\mathcal{K}$ or
\begin{equation}
\mathcal{L} \cap \mathcal{K} \ne \emptyset;
\end{equation}
moreover,
\begin{equation}
\lim_{|\boldsymbol{x}_2-\boldsymbol{x}_1|\to 0} \boldsymbol{x}_2 = \lim_{|\boldsymbol{x}_2-\boldsymbol{x}_1|\to 0} \boldsymbol{x}_1 = \boldsymbol{x} \in \mathcal{K}.
\end{equation}
\end{prp}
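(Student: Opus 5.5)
The plan is to reduce everything to the local geometry near the knot supplied by theorem 8. By theorem 8, $r_1\frown r_2$, and $\mathcal{K}=r_1\cap r_2$ lies on the hyperplane $\mathcal{L}_0:\boldsymbol{w}^T\boldsymbol{x}+b=0$ of $\mathcal{U}$, with $r_1\subset\mathcal{L}_0^0$ and $r_2\subset\mathcal{L}_0^+$. The proof of theorem 8 already gives the key local fact. For a point $p$ in the relative interior of $\mathcal{K}$ and a ball $C_p$ of small radius $d$, we have $C_p\cap\mathcal{L}_0^+\subset r_2$ and $C_p\cap\mathcal{L}_0^0\subset r_1$. I would first fix the meaning of ``sufficiently small'' as follows: $\boldsymbol{x}_1,\boldsymbol{x}_2$ are close to $\mathcal{K}$ and to one another, so that both lie in one such ball $C_p$. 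This reading is needed, because two points near $\mathcal{K}$ but far apart along it could form a segment that leaves the knot (for instance if $\mathcal{K}$ bends with a neighbouring knot).

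Under this reading, (8.1) follows from convexity and continuity. The ball $C_p$ is convex, so the segment $\mathcal{L}$ joining $\boldsymbol{x}_1,\boldsymbol{x}_2$ lies in $C_p$. The affine function $h(\boldsymbol{x})=\boldsymbol{w}^T\boldsymbol{x}+b$ satisfies $h(\boldsymbol{x}_1)\le 0<h(\boldsymbol{x}_2)$. By the intermediate value theorem, $h$ vanishes at some point $\boldsymbol{x}^*\in\mathcal{L}$, so $\boldsymbol{x}^*\in C_p\cap\mathcal{L}_0$. It remains to show that $C_p\cap\mathcal{L}_0\subset\mathcal{K}$. Such a point lies in the closure of $C_p\cap\mathcal{L}_0^+\subset r_2$ and also of $C_p\cap\mathcal{L}_0^0\subset r_1$. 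Since regions are closures (definition 2), the point lies in $r_1\cap r_2=\mathcal{K}$.

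For (8.2) I would take sequences $\boldsymbol{x}_1^{(k)}\in r_1$ and $\boldsymbol{x}_2^{(k)}\in r_2$ with $|\boldsymbol{x}_2^{(k)}-\boldsymbol{x}_1^{(k)}|\to0$, still inside the small balls above. By (8.1), each segment contains a point $\boldsymbol{x}^{*(k)}\in\mathcal{K}$. Both endpoints are within $|\boldsymbol{x}_2^{(k)}-\boldsymbol{x}_1^{(k)}|$ of $\boldsymbol{x}^{*(k)}$. The knot $\mathcal{K}$ is closed and bounded in $U$, so by compactness a subsequence of $\boldsymbol{x}^{*(k)}$ converges to some $\boldsymbol{x}\in\mathcal{K}$. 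Both endpoint sequences then converge to the same $\boldsymbol{x}$, which is (8.2).

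I expect the main obstacle to be the justification of the local fact near $\mathcal{K}$. This includes choosing $p$ in the relative interior, so that $\dim(\mathcal{K})=n-1$ guarantees that no third region meets $C_p$, and handling the case where $\boldsymbol{x}_i$ lies near the relative boundary of $\mathcal{K}$. Here I would first try invoking theorem 9, which says only $\mathcal{U}$ separates $r_1$ and $r_2$. That excludes other knots through $\mathcal{K}$'s interior, so the ball can be shrunk uniformly on compact parts of the interior, and I would state the result for points approaching such a part.
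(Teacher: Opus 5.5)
Your argument is correct and is essentially the paper's own: the paper only says the claim is obvious from the relationship between $r_1$ and $r_2$, meaning the two-sided local picture near $\mathcal{K}$ from the proof of theorem 8, and you make that explicit via convexity of the ball, the intermediate value theorem for $\boldsymbol{w}^T\boldsymbol{x}+b$, and closedness of the regions. Your localization to a small ball about a relative-interior point of $\mathcal{K}$ is genuinely needed, because the literal claim fails near the relative boundary of $\mathcal{K}$ (for example where a deeper unit's knot bends so that $r_1\cup r_2$ is not convex; separation along $\mathcal{K}$ is harmless, since $\mathcal{K}=r_1\cap r_2$ is convex); the local fact you flag as the obstacle follows at once from each region being a convex polyhedron (once the activation pattern is fixed, every pre-activation is affine in $\boldsymbol{x}$); and (8.2) needs no localization at all, because any common limit point of $\boldsymbol{x}_1$ and $\boldsymbol{x}_2$ lies in the closed set $r_1\cap r_2=\mathcal{K}$.
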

\begin{proof}
The conclusion is obvious by the relationship between $r_1$ and $r_2$.
\end{proof}

\begin{algorithm}
\caption{Drawing knots of a training solution}  
\begin{itemize}
\item[(1)] Discretize each dimension of $ \boldsymbol{x} = [x_1, x_2, \dots, x_n]^T \in U$ by step $\Delta x$ to form a set $D$ of points with cardinality $|D| = N^n$, where $N=[1/\Delta x]$ is the integer part of $1/\Delta x$.
\item[(2)] Let $D'$ be the set of points of $D$ whose elements are arranged by an $n$-dimensional array, that is,
    \begin{equation}
    D'(i_1, i_2, \dots, i_n) = [i_1\Delta x, i_2\Delta x, \dots, i_n\Delta x]^T
    \end{equation}
    where $i_k = 0, 1, \dots, N-1$ for $k = 1, 2, \dots, n$. Denote by $p(D'(i_1, i_2, \dots, i_n))$ the path of network $\mathfrak{N}$ activated by point $D'(i_1, i_2, \dots, i_n)$. Then do the operations as follows:
\begin{algorithmic}
\FOR{$i_1, i_2, \dots, i_n=1$ to $N-1$}
\IF{$p(D'(i_1, i_2, \dots, i_n)) \ne p(D'(i_1-1, i_2, \dots, i_n))$ \OR $\ne p(D'(i_1, i_2-1, \dots, i_n))$ \OR $\dots$ \OR $\ne p(D'(i_1, i_2, \dots, i_n-1))$}
\STATE Set $D'(i_1, i_2, \dots, i_n)$ to be a knot point, draw it in the figure and save it in the set $K'$.
\ENDIF
\ENDFOR
\end{algorithmic}
\end{itemize}
\end{algorithm}

\begin{prp}[Effect of algorithm 1]
Let $K = \{k_1, k_2, \dots, k_{\zeta}\}$ be the set of the knots of network $\mathfrak{N}$. To algorithm 1, we have
\begin{equation}
\lim_{\Delta x\to 0}K' \subset K
\end{equation}
and
\begin{equation}
\lim_{\Delta x\to 0}K' \cap k_i \ne \emptyset
\end{equation}
for $i=1, 2, \dots, \zeta$.
\end{prp}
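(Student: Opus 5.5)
We prove the two limit relations separately. Throughout, $\lim_{\Delta x\to 0}K'$ is read as the set of accumulation points of the knot points marked by algorithm 1 as the grid is refined. The tools are proposition 10 (principle of drawing knots) and theorems 8 and 9, which say that adjacent regions correspond exactly to adjacent paths differing in one unit, with the separating knot generated by that unit.

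\emph{Inclusion $\lim_{\Delta x\to 0}K'\subset K$.} Take a point $\boldsymbol{x}^*$ that is a limit of marked points $\boldsymbol{y}_{\Delta x}\in K'$. Each $\boldsymbol{y}_{\Delta x}$ has some grid neighbour $\boldsymbol{z}_{\Delta x}$ with $|\boldsymbol{y}_{\Delta x}-\boldsymbol{z}_{\Delta x}|=\Delta x$ and $p(\boldsymbol{y}_{\Delta x})\ne p(\boldsymbol{z}_{\Delta x})$. Hence the segment joining them leaves the region of $p(\boldsymbol{y}_{\Delta x})$. Regions are closures of sets of constant activation pattern, and the partition is finite. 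So the segment must meet the union of region boundaries, which is the union of the knots of $K$ together with lower-dimensional pieces where several knots meet.

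Both endpoints converge to $\boldsymbol{x}^*$, so $\boldsymbol{x}^*$ lies in the closed set $\bigcup_i k_i$. This is the content of equation 8.2 of proposition 10, now applied along a sequence. I would also pass to a subsequence on which the pair of paths is fixed; this is possible because there are finitely many paths.

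\emph{Each knot is hit: $\lim_{\Delta x\to 0}K'\cap k_i\neq\emptyset$.} Fix $k_i=r_1\cap r_2$. By theorem 9 its two sides belong to adjacent paths $p_1,p_2$ differing in the unit generating $k_i$. Pick a relative-interior point $\boldsymbol{x}_0$ of $k_i$ with a ball $B$ around it such that $B\cap\operatorname{int} r_1$ and $B\cap\operatorname{int} r_2$ are the two open half-balls cut out by the hyperplane of $k_i$. Such a ball exists because $\dim k_i=n-1$ and only finitely many other knots exist.

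For $\Delta x$ small, $B$ contains grid points on both sides. Take the grid line through $B$ in a coordinate direction $e_k$ transversal to $k_i$; one exists because the normal vector is nonzero. Walking along this line from the $r_1$ side to the $r_2$ side, the path label must change at some consecutive pair of grid points. The later point of that pair is marked and lies within $\Delta x$ of $k_i$. Letting $\Delta x\to 0$ and choosing the line to pass within $O(\Delta x)$ of $\boldsymbol{x}_0$ yields a limit point in $k_i$, indeed $\boldsymbol{x}_0$ itself.

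\emph{Expected obstacles.} The main obstacle is measure-zero degeneracies: grid points lying exactly on a knot, and points activating the same path on both sides of a knot that arises from a completely activated unit (theorem 3). The second case is excluded because such hyperplanes are not knots, so crossing them does not change the path and produces no spurious marks. The first case is handled by using the open interiors of the regions. A second technical point is the boundary index range of the loop, which starts at $1$. I would handle it by noting that the missing layer of points has width $\Delta x\to 0$, so it does not affect the limits.
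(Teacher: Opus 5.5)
Your proof is correct and follows the paper's route. The inclusion comes from proposition 10, and your observation that region boundaries lie in the closed set $\bigcup_i k_i$ absorbs the paper's separate remark about non-adjacent paths meeting at a common point. The covering property rests on the paper's own $n=2$ argument, namely that some coordinate direction is transversal to any given knot, which you extend to general $n$.

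One small correction: algorithm 1 marks the point of the pair with the larger index, which is not necessarily the later one along your walk. Both points lie within $\Delta x$ of $k_i$, so the argument is unaffected.
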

\begin{proof}
First see the case of adjacent paths and regions. Equation 8.4 is by proposition 10. To equation 8.5, we first see an example of two-dimensional input (i.e., $n=2$). In this case, the condition of the ``if'' statement in step (2) of algorithm 1 is $p(D'(i_1, i_2) \ne p(D'(i_1-1, i_2)$ or $\ne p(D'(i_1, i_2-1)$. The first condition ensures that all the knots except for the vertical ones can be detected by the algorithm, while to the second condition only horizontal knots cannot be detected; and their combination can reach a knot with arbitrary direction. The general $n$-dimensional case can be analogously proved.

It's possible that algorithm 1 encounters paths that are not adjacent, while this case can still reach a point of knots. For example, let $r_1$, $r_2$ and $r_3$ be three regions, satisfying $O=(r_1 \cap r_2 \cap r_3) \ne \emptyset$ and $r_1$ being not adjacent to $r_3$. Denote by $\boldsymbol{x}_1$ and $\boldsymbol{x}_2$ two points, with $\boldsymbol{x}_1 \in r_1$, $\boldsymbol{x}_3 \in r_3$ and line segment $\boldsymbol{x}_1\boldsymbol{x}_3 \cap O \ne \emptyset$. Then paths $p_1$ and $p_3$ for $r_1$ and $r_3$, respectively, are not adjacent but still provide a knot point belonging to $O$ by algorithm 1 as $\Delta x = |\boldsymbol{x}_3-\boldsymbol{x}_1|\to 0$. The general principle is similar.
\end{proof}

\subsection{Solution Explanation: First Example}

\begin{figure}[!t]    
\captionsetup{justification=centering}
\centering
\subfloat[Example \Romannum{1}: data-fitting effect.]{\includegraphics[width=2.5in, trim = {1.1cm 0.7cm 0.9cm 0.1cm}, clip]{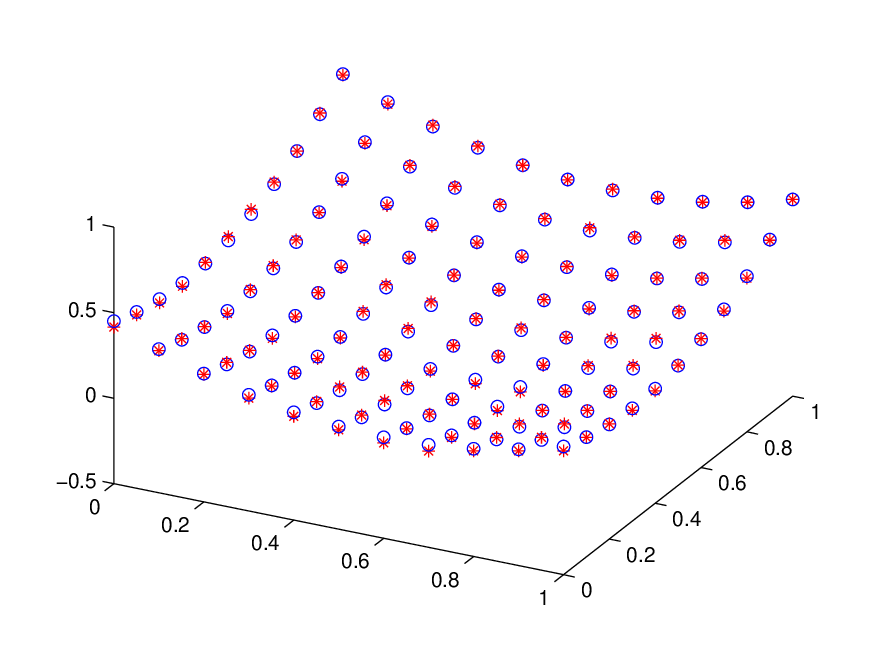}} \quad
\subfloat[Solution of example \Romannum{1}.]{\includegraphics[width=3.1in, trim = {1.2cm 0.7cm 0.9cm 0.1cm}, clip]{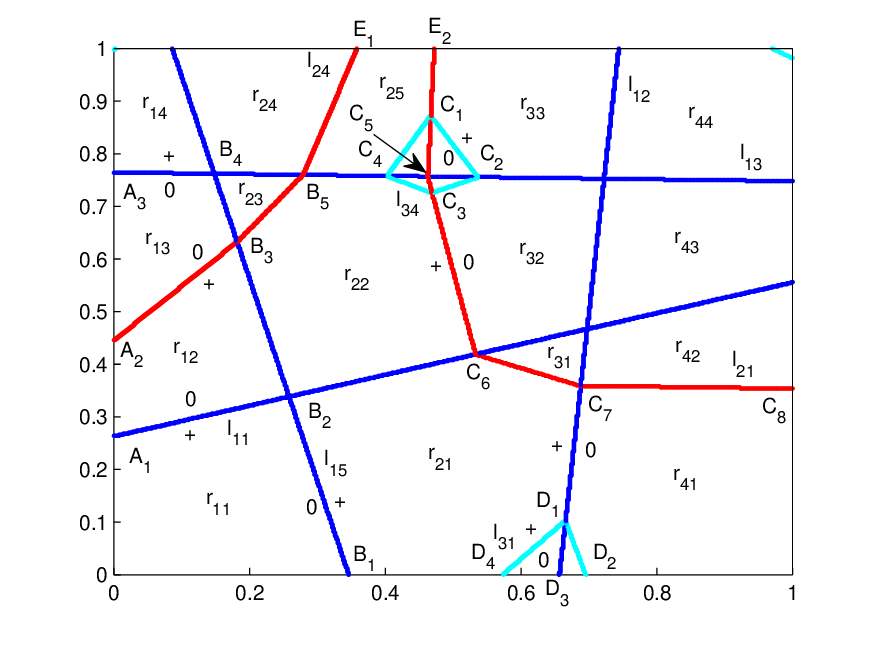}}
\caption{Explanation of training solutions-example \Romannum{1}.}
\label{Fig.4}
\end{figure}

In Figure \ref{Fig.4}a, the points of blue circles are from function $z=(x-0.6)^2+(y-0.3)^2$ with both $x$ and $y$ dimensions being discretized by step $\Delta=0.1$. The red asterisks are produced by network $\mathfrak{N}$ whose parameter settings are as follows: the depth or the number of hidden layers is $\Phi=3$, with each layer having 6 units; the learning rate is $c=0.05$; the number of training iterations is $N=5000$; the weights and bias of each unit are initialized by uniform distribution on interval $[-1, 1]$.

Through algorithm 1 of section 8.2 with parameter $\Delta x=0.002$, Figure \ref{Fig.4}b depicts the knots of the training solution of $\mathfrak{N}$ that yields the red asterisk points of Figure \ref{Fig.4}a. The blue, red or cyan curves correspond to the units of the first, second and third layers of $\mathfrak{N}$, respectively. A piecewise linear curve of Figure \ref{Fig.4}b is the one that has the same color and only changes its direction when meeting a knot generated in a shallower layer. For simplicity of descriptions, we use the term ``curve'' to represent both lines and piecewise linear curves. Let $u_{ij}$ for $1\le i\le 3$ and $1\le j\le 6$ be the $j$th unit of the $i$th layer of $\mathfrak{N}$. Denote by $l_{ij}$ the curve produced $u_{ij}$.

The notation $l_{ij}^+$ (or $l_{ij}^0$) represents the part of $U$ divided by $l_{ij}$ that can activate (or inactivate) the unit $u_{ij}$. A region is denoted by $r_{\nu\mu}$ for some $\nu$ and $\mu$, its corresponding path by $p_{\nu\mu}$ and the linear function on it by $s_{\nu\mu}(\boldsymbol{x})$. A capital letter, such as $A_1$, is of a point that is usually the intersection of knots; when a region is very small, it is denoted by its vertexes (e.g., $D_1D_2D_4$ of Figure \ref{Fig.4}b).

We now see how to interpret the training solution of Figure \ref{Fig.4}b. The first step is for one-sided solutions. The knots are regarded as being arranged in strict partial orders, regardless of negative units. This step contains the main principles of solution construction. The second step turns to the actual Figure \ref{Fig.4}b and emphasizes the mechanism of two-sided solutions.

\subsubsection{One-sided solutions}
First see the function construction in polygon $R_1 = l_{15}^0\cap U$ of Figure \ref{Fig.4}b. The knots in $R_1$ form a strict partial order
\begin{equation}
\mathscr{O}_1 = A_1B_2 \prec A_2B_3 \prec A_3B_4.
\end{equation}
The path $p_{11}$ for region $r_{11}$ is selected to be the initial path and the linear function $s_{11}(\boldsymbol{x})$ on $r_{11}$ is realized by the universal global units of the whole domain $U$ by corollary 11. After that, according to theorem 23, the remaining linear functions on $R_1$ can be implemented one by one via adding units in $p_{11}$. The knots in equation 8.6 are realized by corollary 10 through adjusting the input parameters of units.

Next, deal with the second strict partial order
\begin{equation}
\mathscr{O}_2 = B_1B_2 \prec D_1C_7.
\end{equation}
Introduce unit $u_{15}$ in the first layer of $p_{11}$ to form path $p_{21}$. The input parameters of $u_{15}$ are set to yield knot $B_1B_2$ and the output weights for the linear function $s_{21}(\boldsymbol{x})$ on $r_{21}$. Since the line of $B_1B_2$ intersects $A_2B_3$, $u_{15}$ could change the direction of $A_2B_3$; we use theorem 14 to control the production of knot $B_3B_5$, through adjusting the output weights of $u_{15}$ simultaneously without influencing $s_{21}(\boldsymbol{x})$. Knot $B_5E_1$ can be analogously produced via setting the output weights of $u_{13}$.

The previous operation for $u_{15}$ can lead to the linear functions on several other regions by the continuity-restriction principle of theorem 21. In Figure \ref{Fig.4}b, we find that
\begin{equation}
r_{12} + r_{21} \to r_{22},
\end{equation}
which means that the linear functions on $r_{12}$ and $r_{21}$ lead to the one on $r_{22}$ by theorem 21. Recursively, we have $r_{22}+r_{13} \to r_{23}$, $r_{23}+r_{14} \to r_{24}$, and $r_{24}+r_{22} \to r_{25}$. By only one operation for region $r_{21}$, nearly all the linear functions on $l_{15}^+ \cap l_{21}^+\cap l_{12}^+$ are automatically realized, demonstrating the power of continuity-restriction principle. Then add unit $u_{12}$ in the first layer of $p_{21}$, whose input parameters are for knot $D_1C_7$ and output parameters for function $s_{41}(\boldsymbol{x})$ on $r_{41}$.

The third strict partial order is
\begin{equation}
\mathscr{O}_3 = B_1B_2 \prec D_4D_1.
\end{equation}
Introduce $u_{31}$ in the third layer of $p_{21}$ for both knot $D_4D_1$ and the linear function on $D_4D_1D_3$; knot $D_1D_2$ is by the output-weight vector $\boldsymbol{v}_{12}$ of $u_{12}$. We also have $D_4D_1D_3+r_{41} \to D_1D_2D_3$.

The fourth one is
\begin{equation}
\mathscr{O}_4 = B_1B_2 \prec C_6C_7.
\end{equation}
Add unit $u_{21}$ in the second layer of $p_{21}$, whose input parameters are for knot $C_6C_7$ and output parameters for $s_{31}(\boldsymbol{x})$ on $r_{31}$. Besides $C_6C_7$, we see how the remaining part of curve $l_{21}$ are constructed. Knot $C_7C_8$ is controlled by the output-weight vector $\boldsymbol{v}_{12}$ of $u_{12}$. Note that $D_1D_2$ above is also formed by $\boldsymbol{v}_{12}$ and this means that $\boldsymbol{v}_{12}$ should simultaneously realize both $C_7C_8$ and $D_1D_2$; this is possible because the problem amounts to a solution of two linear equations for unknown $\boldsymbol{v}_{12}$, analogously to theorem 15. Knot $C_6C_3$ is by the output parameters of $u_{11}$. Knot $C_3E_2$ is through the output-weight vector $\boldsymbol{v}_{13}$ of $u_{13}$, which is also responsible for $B_5E_1$.

Because curve $l_{21}$ is long, its effect of continuity-restriction principle is significant. In Figure \ref{Fig.4}b, we find that $r_{22}+r_{31} \to r_{32}$, $r_{32}+r_{25} \to r_{33}$, $r_{31}+r_{41} \to r_{42}$, $r_{42}+r_{32} \to r_{43}$ and $r_{43}+r_{33} \to r_{44}$.

The last one is
\begin{equation}
\mathscr{O}_5= B_2C_6 \prec C_3C_4.
\end{equation}
In the third layer of $p_{22}$ introduce unit $u_{34}$, whose input and output parameters are set for knot $C_3C_4$ and the linear function on $C_3C_4C_5$, respectively. Knots $C_4C_1$ and $C_1C_2$ are controlled by the output-weight vectors of $u_{13}$ and $u_{21}$, respectively; after the previous steps, knot $C_2C_3$ can be automatically implemented because its two endpoints have been fixed by the preceding operations. Continuity-restriction principle leads to $C_3C_4C_5+r_{25} \to C_1C_4C_5$, $C_1C_4C_5+r_{33} \to C_1C_2C_5$, $C_1C_2C_5+r_{32} \to C_2C_3C_5$.

The minor region at the right top corner of $U$ can be easily processed. Up to now, all the regions of Figure \ref{Fig.4}b are covered by the principles of strict partial orders and continuity restriction, and a desired piecewise linear function on them is constructed.

\subsubsection{Two-sided solutions}
First, dealt with $-A_1B_2$ and $-A_2A_3$ for $\mathscr{O}_1 = A_1B_2 \prec A_2B_3 \prec A_3B_4$. On the basis of the preceding one-sided solution of $\mathscr{O}_1$, change $A_1B_2$ into $-A_1B_2$ by theorem 24; after that, use theorem 25 or proposition 6 to modify $A_2B_3$ to $-A_2B_3$. The altered order is denoted by $\mathscr{O}_1'=-A_1B_2 \prec -A_2B_3 \prec A_3B_4$, corresponding to the actual case of Figure \ref{Fig.4}b; the solution of $\mathscr{O}_1'$ is denote by $\mathcal{S}^{(1)}_1$, where the superscript ``(1)'' indicates the updating times and the subscript is the order index; the rule of notation ``$\mathcal{S}^{(1)}_1$'' is applicable to the remaining orders.

Then turn to $-D_1C_7$ for $\mathscr{O}_2 = B_1B_2 \prec D_1C_7$. The one-sided solution of $\mathscr{O}_2$ should be first updated, because $\mathscr{O}'_1$ can affect it. The influence of $-D_1C_7$ on $\mathscr{O}_1'$ is equivalent to that of a global unit; and in combination with the depth of $u_{12}$, it can be processed by theorem 24. The result is $\mathscr{O}_2'=B_1B_2 \prec -D_1C_7$ and the solutions $\mathcal{S}^{(2)}_1, \mathcal{S}^{(1)}_2$.

The third one is $-D_4D_1$ for $\mathscr{O}_3 = B_1B_2 \prec D_4D_1$. First update the one-sided solution of $\mathscr{O}_3$ based on $\mathscr{O}_1'$ and $\mathscr{O}_2'$. According to the relationships between $\mathscr{O}_1',\mathscr{O}_2',\mathscr{O}_3$ as well as the depth of $u_{31}$, theorem 25 can be applied for the two-sided solution of $-D_4D_1$. The parameters for $B_3B_5$, $B_5E_1$ and $D_1D_2$ should also be updated accordingly. We obtain $\mathscr{O}_3' = B_1B_2 \prec -D_4D_1$ and the solutions $\mathcal{S}^{(3)}_1, \mathcal{S}^{(2)}_2, \mathcal{S}^{(1)}_3$.

The fourth is to process $-C_6C_7$ for $\mathscr{O}_4 = B_1B_2 \prec C_6C_7$. Also first update the one-sided solution of $\mathscr{O}_4$ based on $\mathscr{O}_1',\mathscr{O}_2', \mathscr{O}_3'$. As shown in Figure \ref{Fig.4}b, by the depth of $u_{21}$ and the fact that $u_{31}$ is introduced before $u_{21}$ and lies in a deeper layer, the parameters should be reset by theorem 26. Note that theorem 26 includes the operation of theorem 24 for unit $u_{24}$ that is in the same layer as $u_{21}$. The obtained result is $\mathscr{O}_4' = B_1B_2 \prec -C_6C_7$ and the solutions $\mathcal{S}^{(4)}_1, \mathcal{S}^{(3)}_2, \mathcal{S}^{(2)}_3, \mathcal{S}^{(1)}_4$.

The last one is $-C_3C_4$ for $\mathscr{O}_5 = B_2C_6 \prec C_3C_4$. First, reconstruct the one-sided solution of $\mathscr{O}_5$ on the basis of $\mathscr{O}_i'$ for $1\le i\le4$. Second, by the depth of $u_{34}$, the parameters of the units are updated by theorem 25; and theorem 25 contains the operation of theorem 24 for $u_{31}$. We get $\mathscr{O}_5' = B_2C_6 \prec -C_3C_4$ and the solutions $\mathcal{S}^{(5)}_1, \mathcal{S}^{(4)}_2, \mathcal{S}^{(3)}_3, \mathcal{S}^{(2)}_4, \mathcal{S}^{(5)}_1$. This is the final two-sided solution.

\begin{figure}[!t]
\captionsetup{justification=centering}
\centering
\subfloat[Example \Romannum{2}: data-fitting effect.]{\includegraphics[width=2.5in, trim = {1.2cm 0.7cm 0.9cm 0.1cm}, clip]{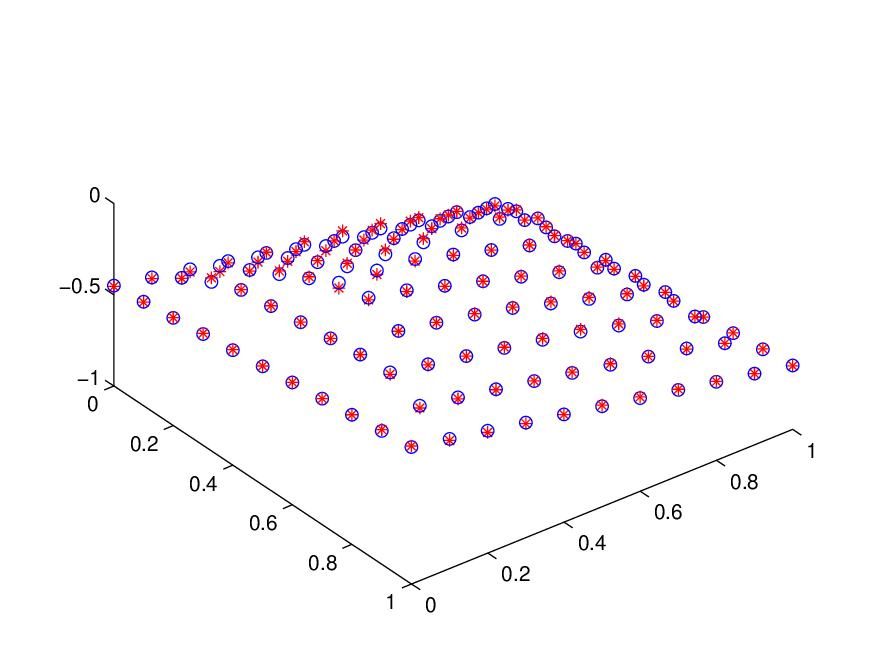}} \quad
\subfloat[Solution of example \Romannum{2}.]{\includegraphics[width=3.1in, trim = {1.2cm 0.7cm 0.9cm 0.1cm}, clip]{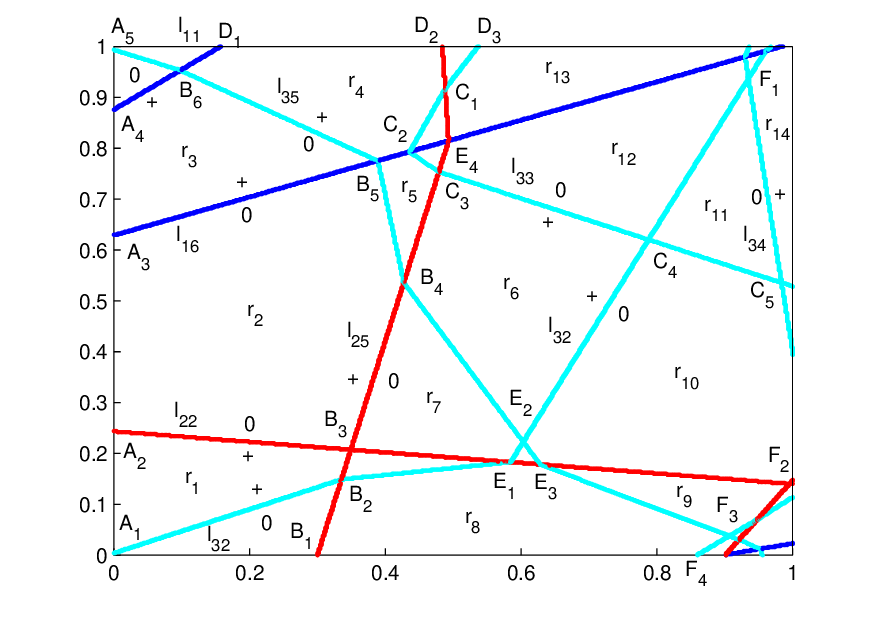}}
\caption{Explanation of training solutions-example \Romannum{2}.}
\label{Fig.5}
\end{figure}

\subsection{Solution Explanation: Second Example}

The blue-circle points of Figure \ref{Fig.5}a is discretized from a continuous piecewise linear function on $U$ and the red-asterisk points are produced by a network $\mathfrak{N}$ with three hidden layers. The parameter settings of $\mathfrak{N}$ are the same as those of the first example of section 8.3. Figure \ref{Fig.5}b is depicted analogously to Figure \ref{Fig.4}b and the notations are also similar.

\Romannum{1}. \textbf{One-sided solution}. In this case, we summarize the method of section 8.3 into several steps, with the details and explanations omitted since the underlying principles have been discussed in section 8.3.
\begin{itemize}
\item[(1)] An initial path of $\mathfrak{N}$ implements the linear function on root region $A_1B_1B_2$. Construct the linear functions on polygon $A_1B_1B_4B_5B_6A_5$ based on the strict partial order
\begin{equation}
\mathscr{O}_1 = A_1B_2 \prec A_2B_3 \prec A_3B_5 \prec A_4B_6.
\end{equation}

\item[(2)] Process
\begin{equation}
\mathscr{O}_2 = B_1B_2 \prec E_3F_3
\end{equation}
for the linear functions on $r_8$ and $r_9$. By the continuity-restriction principle of theorem 21, as shown in Figure \ref{Fig.5}b, we have: $r_1 + r_8 \to B_2E_1B_3$, $B_2E_1B_3 + r_2 \to r_7$, $r_7 + r_8 \to E_1E_2E_3$, $E_1E_2E_3 + r_9 \to r_{10}$, $r_{10} + r_7 \to r_6$, $r_6 + r_2 \to r_5$, $r_5 + r_3 \to r_4$, $r_4 + A_4B_6A_5 \to A_5B_6D_1$.
\item[(3)] The third strict partial order is
\begin{equation}
\mathscr{O}_3 = E_3F_2 \prec C_4C_5
\end{equation}
generating the linear functions on $r_{10}$ and $r_{11}$, after which we have: $r_6 + r_{11} \to r_{12}$, $r_{12} + r_5 \to C_2C_3E_4$, $C_2C_3E_4 + r_4 \to C_2E_4C_1$, $C_2E_4C_1+r_{12} \to r_{13}$, $r_{13}+r_4 \to C_1D_2D_3$.

\item[(4)] The fourth one is
\begin{equation}
\mathscr{O}_4 = C_4C_5 \prec F_1C_5
\end{equation}
for the linear function on $r_{14}$.
\end{itemize}

The remaining regions at the right bottom and top corners of $U$ are too small to be labelled and their process is analogous to the above regions. Up to now, all the regions of $U$ are covered by the strict partial orders and the continuity-restriction principle, through which a desired piecewise linear function is constructed; the knot control for the partition is by the principles of section 2.3.

\Romannum{2}. \textbf{Two-sided solution}: As shown in Figure \ref{Fig.5}b, $\mathscr{O}_1' = A_1B_1 \prec -A_2B_3 \prec A_3B_5 \prec -A_4B_6$ is processed by theorem 24. $\mathscr{O}_2'= -B_1B_2 \prec E_3F_3$ is through theorem 26. $\mathscr{O}_3' = E_3F_2 \prec -C_4C_5$ is by theorem 25. The one-sided solution of the remaining strict partial orders should be updated based on the preceding operations.

\section{Summary for Black box}
We summarize the results of this paper and highlight the key points: section 9.1 is the summary of the main principles; sections 9.2, 9.3 and 9.4 are for the emphasizing of the mechanism of hidden-layer units, the effect of deep layers, the continuity-restriction principle and the information extraction from data, respectively.

\subsection{Main Principles}
Partition formation and function implementation are the two components of the black box of network $\mathfrak{N}$, which are correlated with each other since the knots for the former could be produced by the latter. To realize a desired partition $\mathcal{P}=\{r_i: 1\le i\le \zeta\}$ of $U$ as well as a desired piecewise linear function $f(\boldsymbol{x})$ on $\mathcal{P}$ via $\mathfrak{N}$, the principles required are summarized as follows:
\begin{itemize}
\item[\Romannum{1}] The concept of a path. Each region $r_i$ of $\mathcal{P}$ corresponds to a unique path $p_i$ of $\mathfrak{N}$. The adjacent paths of $p_i$ yield the geometrically adjacent regions of $r_i$ (theorems 8 and 9) and all the cases of $p_i$'s contribute to the whole partition $\mathcal{P}$. A path $p_i$ realizes both the linear function on $r_i$ and at least one knot of $r_i$ (theorem 13).

\item[\Romannum{2}] The mechanism of knot production. There are two main principles for a unit $\mathcal{U}$ to generate a knot $\mathcal{K}$. One is by directly setting the input parameters of $\mathcal{U}$ (corollary 10) and the other is by indirectly adjusting the output parameters of some other units in the layers shallower than that of $\mathcal{U}$ (corollary 9).

\item[\Romannum{3}] The principles of function implementation. The mechanism of realizing $f(\boldsymbol{x})$ via $\mathfrak{N}$ is similar to that of a two-layer ReLU network, including a recurrence formula $s_{\nu}(\boldsymbol{x}) = s_{\nu-1}(\boldsymbol{x}) + \lambda_{\nu}\sigma(\boldsymbol{w}_{\nu}^T\boldsymbol{x} + b_{\nu})$ for the linear functions on adjacent regions (theorem 13), multiple strict partial orders (theorem 32) and the continuity-restriction principle (theorem 21).

\item[\Romannum{4}] Hidden-layer units for both knot production and function implementation. The output-weight vector of a unit of hidden layers can be used simultaneously for the above two purpose (e.g., theorems 14 and 15).

\item[\Romannum{5}] Piecewise linear manifold of a unit. To the case of two-dimensional input, this means that a unit of $\mathfrak{N}$ can form a piecewise linear curve to divide the input pace, and even a closed one (see polygon $C_1C_2C_3C_4$ of Figure \ref{Fig.4}b), through which the formed partition could more flexibly fit the geometric feature of $f(\boldsymbol{x})$.

\item[\Romannum{6}] Two-sided solutions. This point accounts for the diversity of training solutions obtained by the back-propagation algorithm. The mechanism is to modify the one-side solution with the aid of the principles of knot production and function implementation.

\item[\Romannum{7}] The mechanism of multiple outputs. This principle is similar to that of the above item \Romannum{4} and can also be attributed to the effect of hidden-layer units (theorem 27 and corollary 14).

\item[\Romannum{8}] The combination of the preceding principles could yield complex instantiations or various concrete solutions including the training ones.

\end{itemize}

\subsection{Effect of Hidden-Layer Units}
Throughout this paper, a unit in a hidden layer of $\mathfrak{N}$ plays an important role in forming both knots and linear functions, manifested by such as equations 3.17, 3.20, 3.21 and 5.4. By an example, write
\begin{equation}
\begin{cases}
\begin{aligned}
{\boldsymbol{\alpha}_i}^T\boldsymbol{v}_i &= \lambda_{1} \\
\ \boldsymbol{\alpha}_{i1}^T\boldsymbol{v}_{i} &= \beta_{1} \\
\boldsymbol{\alpha}_{i2}^T\boldsymbol{v}_{i} &= \beta_{2}
\end{aligned}
\end{cases}
\end{equation}
generalized from equation 3.20 of theorem 14, in which, for instance,
\begin{equation}
\boldsymbol{\alpha}_i = W_{\nu+2}W_{\nu+3}\dots \boldsymbol{w}_{\Phi+1}
\end{equation}
is a constant vector, where $ W_{j}$'s for $\nu+1\le j \le \Phi$ and $\boldsymbol{w}_{\Phi+1}$ are the input matrices of the associated layers. Equation 9.1 uses the output-weight vector $\boldsymbol{v}_i$ of a unit $\mathcal{U}$ of the $\nu$th layer to realize a linear function (via parameter $\lambda_{1}$) as well as two knots (through parameters $\beta_{1}$ and $\beta_{2}$) simultaneously. For the existence of a solution of $\boldsymbol{v}_{i}$, the length $m_{\nu+1}$ of vector $\boldsymbol{v}_{i}$, also the number of the units of the $\nu+1$th layer, should be greater than or equal to 3; this is a meaning of the number of the units of a hidden layer.

Equation 9.1 can be expressed as
\begin{equation}
A\boldsymbol{v}_{i}=\boldsymbol{b},
\end{equation}
with $A=[{\boldsymbol{\alpha}_i}, {\boldsymbol{\alpha}_{i1}}, {\boldsymbol{\alpha}_{i1}}]^T$ and $\boldsymbol{b}=[\lambda_{i}, \beta_{1}, \beta_{2}]^T$. The input matrices of the layers determine the rank of matrix $A$, which is also related to the existence of a solution of $\boldsymbol{v}_{i}$; and this is one meaning of $W_j$'s and $\boldsymbol{w}_{\Phi+1}$ of equation 9.2, the input matrices of the hidden layers.

Notice that $\boldsymbol{v}_{i}$ and $A$ of equation 9.3 are independent of the input space $\mathbb{R}^n$, that is, the solution space of $\boldsymbol{v}_{i}$ is not related to the input space, such that the complexity of the solution of network $\mathfrak{N}$ is not restricted by the input dimensionality.

Another example
\begin{equation}
\begin{cases}
\begin{aligned}
{\boldsymbol{\alpha}_i}^T\boldsymbol{v}_i &= \lambda_{1} \\
\ \boldsymbol{\alpha}_{i1}^T\boldsymbol{v}_{i} &= \lambda_{2} \\
\boldsymbol{\alpha}_{i2}^T\boldsymbol{v}_{i} &= \beta_{1}
\end{aligned}
\end{cases}
\end{equation}
is the combination of theorems 27 and 14, whose first two formulas are for two linear functions output by multiple units as in equation 5.5 of theorem 27 and the third formula is for a knot. Equation 9.4 can be explained similarly to equation 9.1.

The above two examples indicate that more units in a hidden layer are correlated with the expressive capability of $\mathfrak{N}$, since the larger the length of vector $\boldsymbol{v}_i$, the more linear functions or knots that a unit $\mathcal{U}$ may produce.

Under all the paths of $\mathfrak{N}$, the effect of the equations similar to equations 9.1 and 9.4 could be complicated in forming various knots and linear functions, for which the hidden-layer units account for a significant part of the mechanism of deep ReLU networks.

\subsection{Effect of Deep Layers}

The theory of network $\mathfrak{N}$ developed in this paper can include that of a two-layer ReLU network $\mathcal{N}$ \citep*{Huang2024} as a special case, in terms of the concept of a path. A path of $\mathcal{N}$ is the simplest type with only one hidden layer, such that the adjacent-path condition can be neglected. This simplicity on one side may be useful in more easily finding a training solution, but on the other side restricts the solution complexity.

The path complexity of $\mathfrak{N}$, especially in terms of the adjacent-path condition, may be the reason of the difficulty in training $\mathfrak{N}$, for which even a pre-training step is required \citep*{Bengio2009}. However, the solution of $\mathfrak{N}$ is usually better than that of $\mathcal{N}$ such that deep learning is predominant nowadays.

Network $\mathfrak{N}$ can form piecewise linear manifolds to divide the input space $\mathbb{R}^n$, while $\mathcal{N}$ can only use $n-1$-dimensional hyperplanes. The former is thus more flexible and powerful in generating complicated partitions to fit the geometric feature of a function to be approximated.

As the depth $\Phi$ of $\mathfrak{N}$ grows, a unit of deeper layers is more likely to encounter a unit of shallower layers and influenced by it in terms of changing the normal vector of the associated knot. When this effect is intensive enough, a piecewise linear manifold would appear as a smooth manifold or the former could approximate the latter with a desired accuracy, and it can be imagined that when the depth $\Phi$ and the number of units are sufficiently large, $\mathfrak{N}$ can nearly use a smooth manifold to divide the input space.

By the remarks of theorem 7, it is possible that a deeper network $\mathfrak{N}$ leads to smaller regions or finer partitions of the input space. First, the adjacent-path condition tends to force a region to be divided when introducing units in a shallow layer. Second, a new unit introduced in a deeper layer could lead to the subdivision of the regions formed in the shallower layers. Both of the above two cases result in finer partitions.

\subsection{Effect of Continuity-Restriction Principle}
This mechanism accounts for one of the ingredients of parameter sharing for function construction---that is, a set of parameters can simultaneously generate multiple desired linear functions by this principle. Without continuity restriction, one can hardly imagine how to adjust the parameters to simultaneously fulfil so many linear functions of a piecewise linear function.

When only considering strict partial orders, the solution is trivial and simple. However, when coupled with continuity restriction as well as the geometric feature of a partition, the combined effect could be rich and complicated, leading to various concrete solutions; and this is one of the sources of the expressive capability of network $\mathfrak{N}$.

Continuity restriction plays a central role in forming a complex solution and is the essence that distinguishes a ReLU network from other approximators (such as Fourier series), without which a solution can hardly be formed.

\subsection{Embedding of Data Information}
It's enlightening that \citet*{Simon2026} emphasized the importance of the data to be fitted. The geometric information of data set $D$ is embedded in network $\mathfrak{N}$ through two forms. One is of the parameters for partition $\mathcal{P}$ of the input space and the other is of the parameters implementing the linear functions on the regions of $\mathcal{P}$. Different partitions yield different piecewise linear functions, leading to distinct generalization capability or property. Since deep network $\mathfrak{N}$ can use piecewise linear manifolds rather than only hyperplanes to divide the input space, more suitable information can be extracted from $D$ to make the generalization more powerful or precise.

\section{Discussion}
This paper tried to exhaust all the basic principles of deep feedforward ReLU networks for function approximation or data interpolation. If the principles given are complete---that is, any concrete solution can be derived from them or explained by them, the theoretical framework is accomplished; otherwise, more mechanisms need to be discovered.

Despite the simplicity of the principles, their combination could yield complicated instantiations including the training solution obtained by the back-propagation algorithm. How to manually construct a solution is a direction of future researches. Specifically, given a multivariate function $f(\boldsymbol{x})$ or a data set $D$ discretized from $f(\boldsymbol{x})$, we want to know what a partition $\mathcal{P}$ via piecewise linear manifolds fits it and how to realize $\mathcal{P}$ as well as a desired piecewise linear function over $\mathcal{P}$. A manually constructed solution, instead of the one obtained by training methods, is controllable, interpretable and may be much more economical.

Throughout the paper, we can see the rich content of the solution space of deep feedforward ReLU networks, which is related to high-dimensional geometries. A deep ReLU network is a new representation of multivariate function $f(\boldsymbol{x})$, fitting its geometric feature in a novel way. The property of this new representation may be rich and interesting.

The research methodology is essentially of theoretical physics. Our success proves that the complexity of neural networks is by no means beyond the capability of a traditional way of obtaining knowledge---deduction, whose root dates back to Euclidean geometry of ancient Greece.


\begin{thebibliography}{100}
\providecommand{\natexlab}[1]{#1}
\expandafter\ifx\csname urlstyle\endcsname\relax
  \providecommand{\doi}[1]{doi:\discretionary{}{}{}#1}\else
  \providecommand{\doi}{doi:\discretionary{}{}{}\begingroup
  \urlstyle{rm}\Url}\fi

%
%

\bibitem[{Argerich \& Pati\~{n}o-Mart\'{i}nez(2024)Argerich \& Pati\~{n}o-Mart\'{i}nez}]{Argerich2024}
Argerich, M. F. \& Pati\~{n}o-Mart\'{i}nez, M. (2024).
\newblock Measuring and improving the energy efficiency of large language models inference.
\newblock \emph{IEEE Access}, 12, 80194--80207.

\bibitem[{Bengio (2009)Bengio}]{Bengio2009}
Bengio Y. (2009).
\newblock Learning deep architectures for AI.
\newblock \emph{Foundations and Trends in Machine Learning}, 2(1), 1--127.

\bibitem[{Bengio et~al.(2025)Bengio et~al.}]{Bengio2025}
Bengio, Y., Mindermann, S., Privitera, D., Besiroglu, T., Bommasani, R., Casper, S., \dots \& Zeng, Y. (2025).
\newblock International ai safety report.
\newblock \emph{arXiv:2501.17805}.

\bibitem[{Bunge(1973)Bunge}]{Bunge1973}
Bunge, M. (1973).
\newblock Philosophy of Physics.
\newblock \emph{D. Reidel Publishing Company}.

\bibitem[{Daubechies et~al.(2022)Daubechies et~al.}]{Daubechies2022}
Daubechies, I., DeVore, R., Foucart, S., Hanin, B., \& Petrova, G. (2022).
\newblock Nonlinear approximation and (deep) ReLU networks.
\newblock \emph{Constructive Approximation}, 55(1), 127-172.

\bibitem[{DeVore, Hanin, \& Petrova(2021)DeVore, Hanin, \& Petrova}]{DeVore2021}
DeVore, R., Hanin, B., \& Petrova, G. (2021).
\newblock Neural network approximation.
\newblock \emph{Acta Numerica}, 327--444.

\bibitem[{Du et~al.(2022)Du et~al.}]{Du2022}
Du, K. L., Leung, C. S., Mow, W. H., \& Swamy, M. N. S. (2022).
\newblock Perceptron: Learning, generalization, model selection, fault tolerance, and role in the deep learning era.
\newblock \emph{Mathematics}, 10(24), p.4730.

\bibitem[{Erhan et~al.(2010)Erhan et~al.}]{Erhan2010}
Erhan, D., Courville, A., Bengio, Y., \& Vincent, P. (2010).
\newblock Why does unsupervised pre-training help deep learning?.
\newblock \emph{In proceedings of the 13th international conference on artificial intelligence and statistics} (AISTATS), 201--208.

\bibitem[{Gauch(2003)Gauch}]{Gauch2003}
Gauch, H. G. (2003).
\newblock Scientific method in practice.
\newblock Cambridge University Press, 269--326.

\bibitem[{Girin et~al.(2022)Girin et~al.}]{Girin2022}
Girin, L., Leglaive, S., Bie, X., Diard, J., Hueber, T., \& Alameda-Pineda, X., (2022).
\newblock Dynamical variational autoencoders: A comprehensive review.
\newblock \emph{Foundations and Trends in Machine Learning}, 15(1-2), 1--175.

\bibitem[{Glorot\& Bengio(2010)Glorot \& Bengio}]{Glorot2010}
Glorot, X \& Bengio, Y. (2010).
\newblock Understanding the difficulty of training deep feedforward neural networks.
\newblock \emph{In proceedings of the 13th international conference on artificial intelligence and statistics} (AISTATS), 249--256.

\bibitem[{Guth et~al.(2024)Guth et~al.}]{Guth2024}
Guth, F., M\'{e}nard, B., Rochette, G., \& Mallat, S. (2024).
\newblock A rainbow in deep network black boxes.
\newblock \emph{Journal of Machine Learning Research}, 25(350), 1--59.

\bibitem[{Haykin(2009)Haykin}]{Haykin2009}
Haykin, S. (2009).
\newblock Neural networks and learning machines (3rd ed.).
\newblock Pearson Prentice Hall, 122--221.

\bibitem[{Hendrycks(2025)Hendrycks}]{Hendrycks2025}
Hendrycks, D. (2025).
\newblock Introduction to AI safety, ethics, and society.
\newblock CRC Press.

\bibitem[{Huang(2020)Huang}]{Huang2020}
Huang, C. (2020).
\newblock ReLU networks are universal approximators via piecewise linear or constant functions.
\newblock \emph{Neural Computation}, \emph{32(11)}, 2249--2278.

\bibitem[{Huang(2024)Huang}]{Huang2024}
Huang, C. (2024).
\newblock On the principles of ReLU networks with one-hidden layer.
\newblock arXiv:2411.06728.

\bibitem[{Jagtap, Kawaguchi, \& Karniadakis(2020)Jagtap, Kawaguchi, \& Karniadakis}]{Jagtap2020}
Jagtap, A. D., Kawaguchi, K., \& Karniadakis, G. E. (2020).
\newblock Adaptive activation functions accelerate convergence in deep and physics-informed neural networks.
\newblock \emph{Journal of Computational Physics}, 404, 109--136.

\bibitem[{Kandel et~al.(2021)Kandel et~al.}]{Kandel2021}
Kandel, E. R., Koester, J. D., Mack, S. H., \& Siegelbaum, S. A. (Eds.). (2021).
\newblock Principles of neural science (6th ed.).
\newblock New York: McGraw-hill, 84--88.

\bibitem[{Ramanujan et~al.(2020)Ramanujan et~al.}]{Ramanujan2020}
Ramanujan, V., Wortsman, M., Kembhavi, A., Farhadi, A., \& Rastegari, M. (2020).
\newblock What's hidden in a randomly weighted neural network?.
\newblock \emph{In proceedings of the IEEE/CVF conference on computer vision and pattern recognition} (CVPR), 11893--11902.

\bibitem[{Raissi, Perdikaris, \& Karniadakis(2019)Raissi, Perdikaris, \& Karniadakis}]{Raissi2019}
Raissi, M., Perdikaris, P., \& Karniadakis, G. E. (2019).
\newblock Physics-informed neural networks: A deep learning framework for solving forward and inverse problems involving nonlinear partial differential equations.
\newblock \emph{Journal of Computational physics}, 378, 686--707.

\bibitem[{Rumelhart, Hinton, \& Williams(1986)Rumelhart, Hinton, \& Williams}]{Rumelhart1986}
Rumelhart, D. E., Hinton, G. E., \& Williams, R. J. (1986).
\newblock Learning representations by back-propagating errors.
\newblock \emph{Nature}, 323, 533--536.

\bibitem[{Shen, Yang, \& Zhang(2021)Shen, Yang, \& Zhang}]{Shen2021}
Shen, Z., Yang, H., \& Zhang, S. (2021).
\newblock Deep network with approximation error being reciprocal of width to power of square root of depth.
\newblock \emph{Neural Computation}, 33(4), 1005--1036.

\bibitem[{Simon et~al.(2026)Simon et~al.}]{Simon2026}
Simon, J., Kunin, D., Atanasov, A., Boix-Adser\`{a}, E., Bordelon, B., Cohen, J., Ghosh, N., Guth, F., Jacot, A., Kamb, M., Karkada, D., Michaud, E. J., Ottlik, B., \& Turnbull, J. 2026.
\newblock There will be a scientific theory of deep learning.
\newblock arXiv:2604.21691.

\bibitem[{Strubell, Ganesh, \& McCallum(2019)Strubell, Ganesh, \& McCallum}]{Strubell2019}
Strubell, E., Ganesh, A., \& McCallum, A. (2019).
\newblock Energy and policy considerations for deep learning in NLP.
\newblock \emph{In proceedings of the 57th annual meeting of the association for computational linguistics}, 3645--3650.

\bibitem[{Vaswani et~al.(2017)Vaswani et~al.}]{Vaswani2017}
Vaswani, A., Shazeer, N., Parmar, N., Uszkoreit, J., Jones, L., Gomez, A. N., \& Kaiser, L. (2017).
\newblock Attention is all you need.
\newblock \emph{In proceedings of advances in neural information processing systems} (NIPS).

\bibitem[{Yang \& Zhou(2025)Yang \& Zhou}]{Yang2025}
Yang, Y. \& Zhou, D. X. (2025).
\newblock Optimal rates of approximation by shallow $\text{ReLU}^k$ neural networks and applications to nonparametric regression.
\newblock \emph{Constructive Approximation}, 62(2), 329--360.

\bibitem[{Yarotsky(2017)Yarotsky}]{Yarotsky2017}
Yarotsky, D. (2017).
\newblock Error bounds for approximations with deep ReLU networks.
\newblock \emph{Neural Networks}, \emph{94}, 103--114.

\bibitem[{Zhang et~al.(2016)Zhang et~al.}]{Zhang2016}
Zhang, C., Bengio, S., Hardt, M., Recht, B. \& Vinyals, O. (2016).
\newblock Understanding deep learning requires rethinking generalization.
\newblock arXiv:1611.03530.


\end{thebibliography}
\end{document}